\documentclass[10pt]{article}
\usepackage[margin=1in]{geometry}
\usepackage[T1]{fontenc}
\usepackage[english]{babel}
\usepackage[round]{natbib}
\usepackage[dvipsnames]{xcolor}
\usepackage{amsmath, amssymb, amsthm}
\usepackage{mathtools}
\mathtoolsset{showonlyrefs}
\usepackage{bbm}
\usepackage{blkarray}
\usepackage{cancel}
\usepackage{enumitem}
\usepackage{float}
\usepackage{bm}
\usepackage{graphicx}
\usepackage{mathrsfs}
\usepackage{ragged2e}
\usepackage{thmtools}
\usepackage{tikz}
\usepackage[Symbolsmallscale]{upgreek} 
\usepackage{caption}
\usepackage{subcaption}
\usepackage[pagebackref]{hyperref}
\hypersetup{citecolor=purple, colorlinks=true, linkcolor=purple}
\usepackage{fancyhdr}
\usepackage{algorithm}
\usepackage{algpseudocode}
\usepackage{comment}
\usepackage{subfiles}

\DeclareMathOperator*{\argmax}{arg\,max}
\DeclareMathOperator*{\argmin}{arg\,min}

\DeclareUnicodeCharacter{2161}{II} 

\newtheorem{theorem}{Theorem}[section]
\newtheorem{proposition}[theorem]{Proposition}
\newtheorem{lemma}[theorem]{Lemma}
\newtheorem{corollary}[theorem]{Corollary}
\newtheorem{definition}[theorem]{Definition}
\newtheorem{assumption}[theorem]{Assumption}
\newtheorem{remark}[theorem]{Remark}

\title{Wasserstein Gradient Flows for Batch Bayesian Optimal Experimental Design}
\author{
Louis Sharrock\thanks{Department of Statistical Science, University College London. \texttt{l.sharrock@ucl.ac.uk}}
}
\date{}

\begin{document}

\maketitle

\begin{abstract}
Bayesian optimal experimental design (BOED) provides a powerful, decision-theoretic framework for selecting experiments so as to maximise the expected utility of the data to be collected. In practice, however, its applicability can be limited by the difficulty of optimising the chosen utility. The expected information gain (EIG), for example, is often high-dimensional and strongly non-convex. This challenge is particularly acute in the batch setting, where multiple experiments are to be designed simultaneously. In this paper, we introduce a new approach to batch EIG-based BOED via a probabilistic lifting of the original optimisation problem to the space of probability measures. In particular, we propose to optimise an entropic regularisation of the expected utility over the space of design measures. Under mild conditions, we show that this objective admits a unique minimiser, which can be explicitly characterised in the form of a Gibbs distribution. The resulting design law can be used directly as a randomised batch-design policy, or as a computational relaxation from which a deterministic batch is extracted. To obtain scalable approximations when the batch size is large, we then consider two tractable restrictions of the full batch distribution: a mean-field family, and an i.i.d. product family. For the i.i.d. objective, and formally for its mean-field extension, we derive the corresponding Wasserstein gradient flow, characterise its long-time behaviour, and obtain particle-based algorithms via space-time discretisations. We also introduce doubly stochastic variants that combine interacting particle updates with Monte Carlo estimators of the EIG gradient. Finally, we illustrate the performance of the proposed methods in several numerical experiments, demonstrating their ability to explore multimodal optimisation landscapes and obtain high-utility batches in challenging examples.
\end{abstract}

\section{Introduction}
Bayesian optimal experimental design (BOED) provides a principled decision-theoretic framework for selecting experiments so as to maximise the expected inferential value of the data to be collected \citep{chalonerverdinelli1995review, ryan2016review}. In its classical formulation, BOED chooses a design variable $\xi\in\Xi\subseteq\mathbb{R}^d$ in order to maximise an expected utility. Among the most widely used utilities is the expected information gain (EIG), which is equivalent to the mutual information between the parameter $\theta\in\Theta\subseteq\mathbb{R}^p$ and the prospective observation $y\in\mathcal{Y}\subseteq\mathbb{R}^q$ under the prior predictive model \citep{lindley1956information, chalonerverdinelli1995review}. BOED is increasingly central in domains where experiments are expensive, slow, or ethically constrained. Examples include Bayesian adaptive clinical trials (e.g., response-adaptive randomisation and interim decision rules), where one seeks to learn efficiently while maintaining safety and power \citep{giovagnoli2021bayesian}; sensor placement and data acquisition for large-scale inverse problems governed by PDEs, where measurements must be deployed under severe budget constraints \citep{alexanderian2021review}; and systems biology and pharmacometrics, where non-linear dynamical models are calibrated from limited, noisy time-series data and design choices strongly determine identifiability \citep{kreutz2009systems}.

Despite its conceptual appeal, EIG-based BOED is computationally challenging. Evaluating $\mathrm{EIG}(\xi)$ and its gradient entails nested expectations over $(\theta,y)$ that are rarely available in closed form. As a result, practical BOED algorithms often hinge on Monte Carlo approximations and their refinements (e.g.\ multilevel methods), whose bias--variance-cost trade-offs are subtle in nested settings \citep{rainforth2018nesting, goda2020mlmc, huan2024acta}. Moreover, even when the design space $\Xi$ is low- or moderate-dimensional, the expected-utility landscape is typically multimodal and strongly non-convex. These issues are amplified in the batch setting, where one must choose $\xi_{1:m} = (\xi_1,\dots,\xi_m)\in\Xi^m$ experiments simultaneously: the ambient dimension grows to $md$, and the utility landscape becomes increasingly complex due to interactions between design points. These difficulties have motivated a broad spectrum of methods, including simulation-based ``design by sampling'' schemes targeting Gibbs-type design distributions \citep{muller2005simulation, amzal2006bayesian}, surrogate-assisted and stochastic-approximation approaches for high-dimensional designs \citep{huan2013simulation, overstallwoods2017ace}, and more recent variational and amortised estimators of the EIG, based on tractable lower bounds on mutual information \citep{barberagakov2003information, foster2019variational, foster2020unified, foster2021dad} as well as neural ratio or mutual information estimation for implicit or likelihood-free models \citep{belghazi2018mutual, kleinegesse2020bayesian, kleinegesse2021sequential}.

In this paper, we introduce a new, distributional formulation of EIG-based batch BOED by lifting the design variable from a point design ${\xi}_{1:m} = (\xi_1,\dots,\xi_m)$ to a design measure $\nu_m\in \mathcal{P}(\Xi^m)$. In particular, we propose to minimise the free energy functional
\begin{equation}
\label{eq:intro-free-energy}
\mathcal F_{m}^{\lambda}(\nu_m)
\;=\;
-\int_{\Xi^m} \mathrm{EIG}_m(\xi_{1:m})\,\nu_m(\mathrm d\xi_{1:m})
\;+\;
\lambda_m\,\mathrm{KL}(\nu_m\|\rho_m),
\end{equation}
where $\rho_m\in\mathcal{P}(\Xi^m)$ is a reference measure, and $\lambda_m>0$ is a regularisation parameter which plays the role of a temperature: large values of $\lambda_m$ favour exploration, while small values concentrate the design measure around batches that achieve a high value of the EIG. Under an exponential-integrability condition ensuring the normaliser is finite, the objective $\mathcal F_m^\lambda$ is strictly convex in $\nu_m$, and admits a unique minimiser with explicit Gibbs form
\begin{equation}
\label{eq:intro-gibbs}
\frac{\mathrm d\nu_m^{\lambda,\star}}{\mathrm d\rho_m}(\xi_{1:m})
\;=\;
\frac{1}{Z_m^\lambda}\exp\!\left(\frac{\mathrm{EIG}_m(\xi_{1:m})}{\lambda_m}\right),
\qquad
Z_m^\lambda \coloneqq \int_{\Xi^m}\exp\!\left(\frac{\mathrm{EIG}_m(\xi_{1:m})}{\lambda_m}\right)\rho_m(\mathrm d\xi_{1:m}).
\end{equation}
This distributional viewpoint is illustrated in Figure~\ref{fig:1}: we lift optimisation over a design point (Fig.~\ref{fig:1a}) to optimisation over a design distribution (Fig.~\ref{fig:1b}), and incorporate an entropic regularisation that renders the variational problem well-posed and tunably concentrated (Fig.~\ref{fig:1c}). While this perspective is new to BOED, it has previously proved fruitful in other fields \citep[e.g.,][]{wild2023rigorous}. 

\begin{figure}[t!]
    \centering
    \begin{subfigure}[b]{0.28\textwidth}
        \centering
        \includegraphics[width=\linewidth]{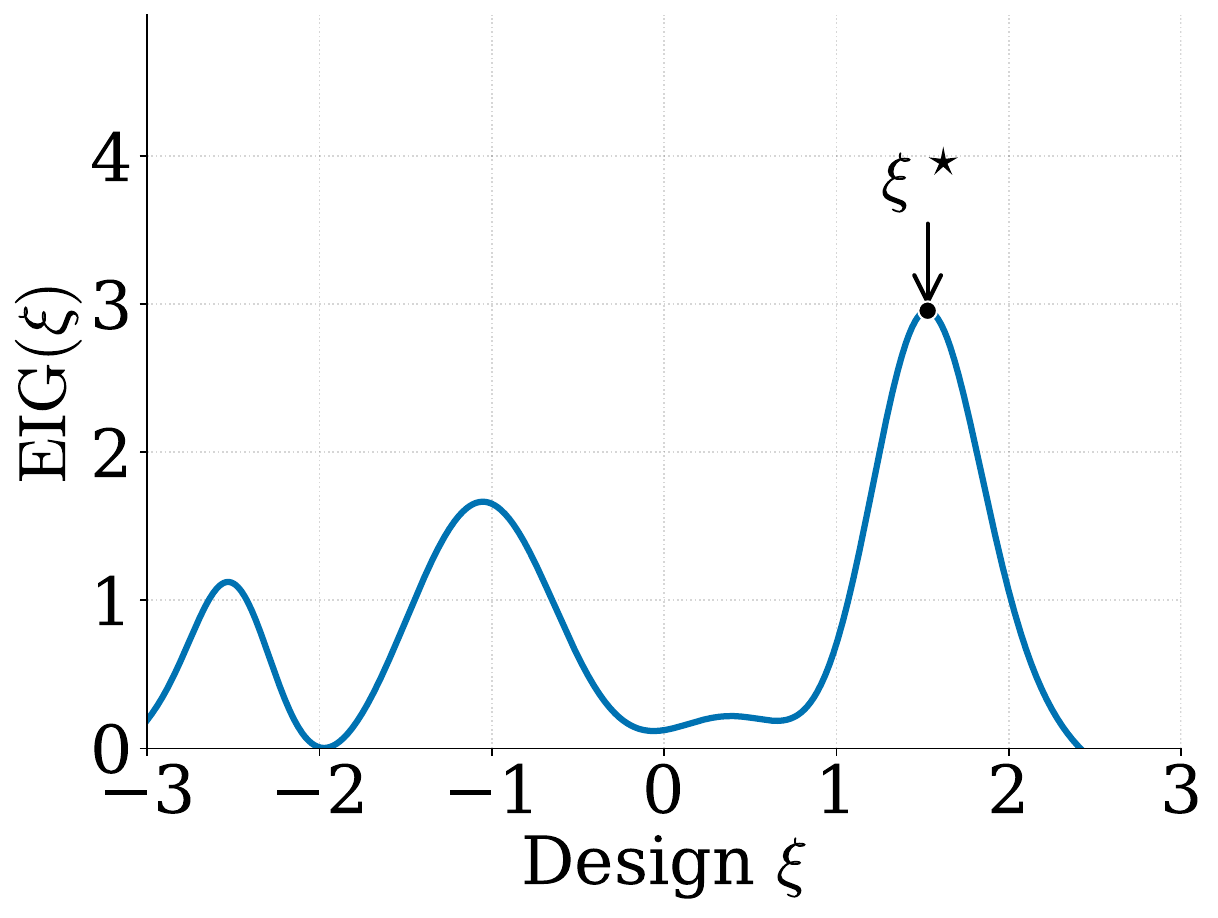}
        \caption{\textbf{Pointwise Optimisation}. Find $\xi^{*}=\argmax F(\xi)$, where $F(\xi)=\mathrm{EIG}(\xi)$.}
        \label{fig:1a}
    \end{subfigure}
    \hfill 
    \begin{subfigure}[b]{0.28\textwidth}
        \centering
        \includegraphics[width=\linewidth]{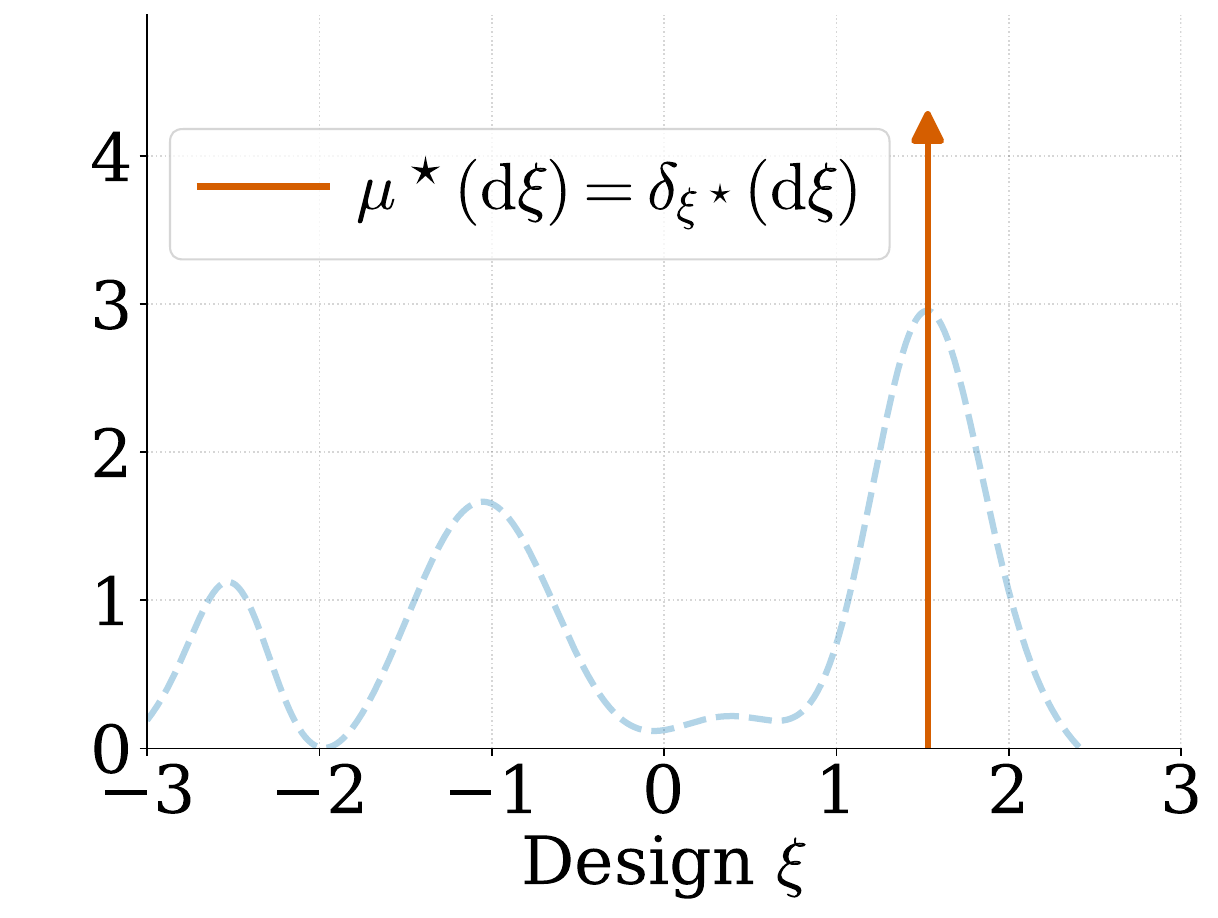}
        \caption{\textbf{Probabilistic Lifting}. Find $\mu^{*}\in\argmin \mathcal{F}(\mu)$, where $\mathcal F(\mu)=-\int_{\Xi} F(\xi)\mu(\mathrm{d}\xi)$.}
        \label{fig:1b}
    \end{subfigure}
    \hfill 
    \begin{subfigure}[b]{0.28\textwidth}
        \centering
        \includegraphics[width=\linewidth]{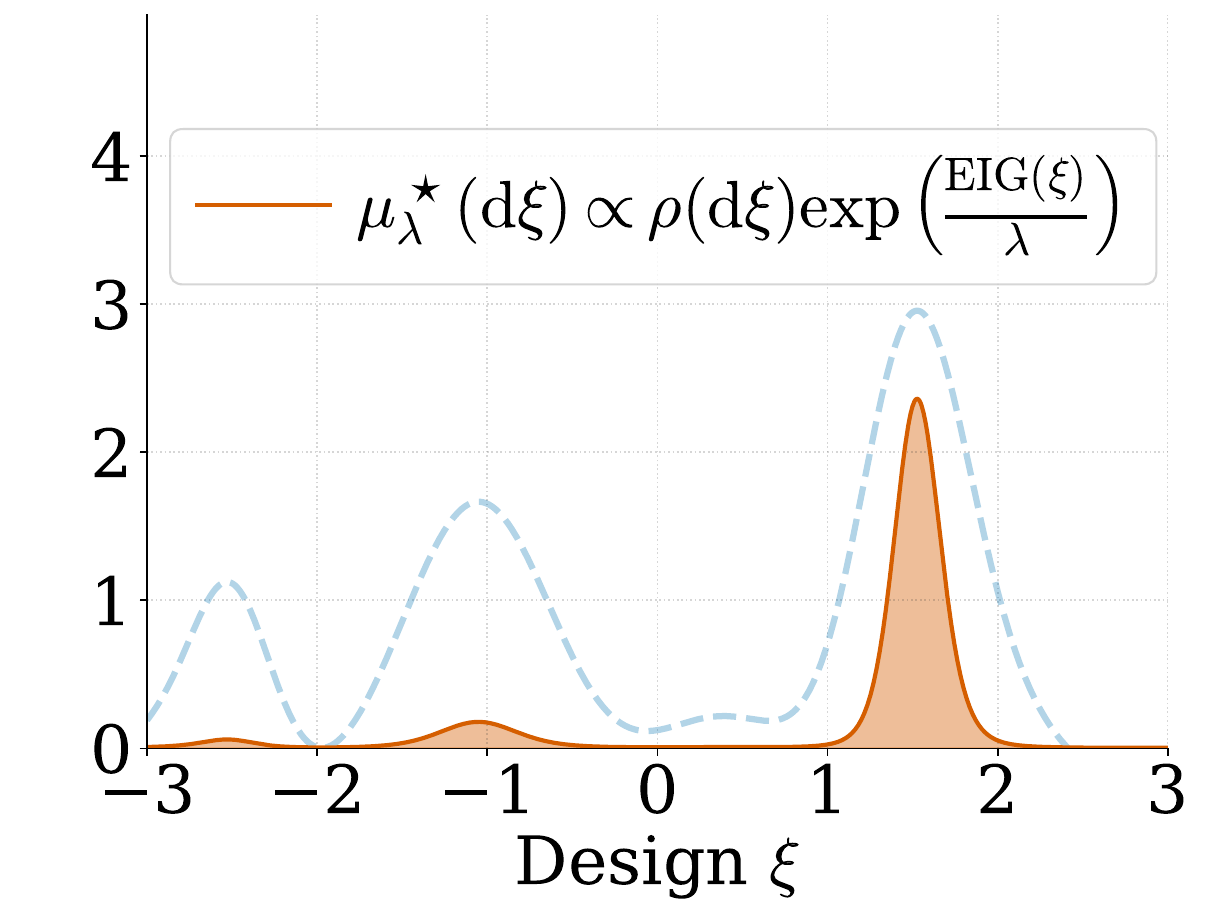}
        \caption{\textbf{Entropic Regularisation}. Find $\mu_{\lambda}^{*}=\argmin\mathcal{F}_{\lambda}(\mu)$, where $\mathcal{F}_{\lambda}(\mu) = \mathcal{F}(\mu) + \lambda \mathrm{KL}(\mu\|\rho)$.}
        \label{fig:1c}
    \end{subfigure}
    \caption{\textbf{Bayesian optimal experimental design as an optimisation problem over the space of probability measures}. We lift the original optimisation problem over a \emph{design point} $\xi\in\Xi$ (Fig. \ref{fig:1a}) to an optimisation problem over a \emph{design distribution} $\mu\in\mathcal{P}(\Xi)$ (Fig. \ref{fig:1b}), before incorporating an entropic regulariser to ensure that this optimisation problem is strictly convex, and thus admits a unique optimum (Fig.~\ref{fig:1c}).}
    \label{fig:1}
\end{figure}

Reformulating BOED as an optimisation problem over $\nu_m\in\mathcal{P}(\Xi^m)$ is meaningful in two distinct ways. First, in a decision-theoretic framework, it is valid to treat \emph{randomised designs} as decisions. Thus, one can deploy the learned design law directly by sampling $\xi_{1:m}\sim\nu_m^{\lambda,\star}$. In this case, it should be noted that $\nu_m^{\lambda,\star}$ is optimal with respect to the \emph{regularised} criterion, and not for the unregularised expected utility alone. Accordingly, $\lambda_m$ may be interpreted either as an explicit preference for exploration or as a computational relaxation parameter whose influence disappears in the zero-temperature limit. Second, in many BOED applications, one ultimately requires a deterministic batch $\hat{\xi}_{1:m}$. In this case, $\nu_m^{\lambda,\star}$ is best viewed as a computational surrogate that supports a principled extraction step, e.g., a best-of-$n$ (BoN) approach which samples $n$ candidate batches from $\nu_m^{\lambda,\star}$, and selects the one with the largest (estimated) EIG.

In practice, directly optimising the entropy-regularised objective over $\mathcal P(\Xi^m)$ is challenging when $m$ is (very) large. We thus study two tractable restrictions of the batch law. The first is a mean-field product family, which allows independent but non-identical coordinates, $\nu_m=\mu_1\otimes\cdots\otimes\mu_m$, with each $\mu_i\in\mathcal P(\Xi)$. The second is an i.i.d.\ family enforcing exchangeability, $\nu_m=\mu^{\otimes m}$, for a single $\mu\in\mathcal P(\Xi)$. These restrictions reduce the computational cost of the optimisation problem, while retaining enough flexibility to represent diverse batches, particularly when combined with an appropriate extraction step. They may also be viewed as structured variational approximations to the joint Gibbs law, and lead to explicit stationary conditions in the form of fixed-point (or self-consistency) equations, which clarify exactly how the full batch dependence is being approximated.

For the i.i.d.\ objective and, formally, for the coordinate-wise product objective and the joint objective, we identify the corresponding Wasserstein gradient flow (WGF). In the joint convex case, its unique equilibrium is the Gibbs minimiser of the free energy. In the mean-field and i.i.d.\ cases, the resulting non-linear flows have equilibria characterised by the corresponding self-consistency equations. These gradient flows are non-linear in the sense of McKean \citep[e.g.,][]{sznitman1991topics,meleard1996asymptotic,malrieu2001logarithmic}, but can be approximated via a (stochastic) interacting particle system (IPS). In practice, since $\nabla \mathrm{EIG}_m$ is itself often intractable, this in fact leads to doubly stochastic algorithms, which combine particle updates with Monte Carlo estimators of $\nabla \mathrm{EIG}_m$, enabling scalable implementation in nested-expectation settings. This structure supports a theoretical analysis in which the overall error separates into finite-particle effects (i.e., propagation of chaos), time discretisation error, and stochastic-gradient error.

\paragraph{Contributions} Our main contributions are summarised below.
\begin{itemize}
    \item We formulate batch EIG-based BOED as an entropy-regularised variational optimisation problem over $\mathcal P(\Xi^{m})$, and establish existence and uniqueness of the optimal design law. 
    \item We introduce two scalable approximations to the batch design law, namely,  a mean-field product family $\nu_m=\mu_1\otimes\cdots\otimes \mu_m$, and a homogeneous i.i.d.\ product family $\nu_m = \mu^{\otimes m}$, and derive the corresponding fixed point equations satisfied by their global minimisers.
    \item For the i.i.d.\ design-law objective and, formally, for the coordinate-wise product objective, we derive the associated WGFs, identify them with non-linear (i.e., McKean--Vlasov) Fokker--Planck PDEs, and obtain the corresponding mean-field SDEs.
    \item We obtain scalable particle-based algorithms as space-time discretisations of these dynamics, as well as doubly stochastic variants that accommodate (nested) Monte Carlo estimators of the intractable gradient of the EIG. 
    \item For the i.i.d.\ objective under an unbiased gradient oracle, we provide a finite-horizon error decomposition which separates the effects of finite particle number, time-discretisation, and stochastic-approximation.
    \item We illustrate the application of our proposed approach in several numerical experiments, demonstrating its efficacy in multimodal and non-convex settings. 
\end{itemize}

\subsection{Related Work}
\label{sec:related-work}

Bayesian optimal experimental design (BOED) has a long history, with classical roots in Bayesian decision theory and (approximate) optimal design, and a large modern literature driven by the computational demands of non-linear, high-dimensional, and simulation-based models. In this section, we position the current work relative to several distinct themes. Broader overviews of BOED can be found in, e.g., \citet{chalonerverdinelli1995review,ryan2016review,huan2024acta,rainforth2023modern}.

\paragraph{Approximate design theory and optimisation over design measures} Optimisation over design measures is classical in approximate optimal design theory, where a design is represented by a probability measure over candidate points and optimality criteria are optimised over a convex set; see \citet{kiefer1959optimum,kieferwolfowitz1960equivalence,fedorov1972theory,pukelsheim2006optimal,atkinsondonevtobias2007sas}. Our measure theoretic viewpoint is philosophically aligned with this tradition, but the specific setting is different. In particular, classical approximate-design literature typically focuses on criteria derived from (linearised) information matrices and often enjoys structure (e.g., convexity, duality) that is absent from BOED with EIG \citep{chalonerverdinelli1995review, ryan2016review}.

\paragraph{Simulation-based optimal design and design by sampling}
Somewhat closer in spirit to our work is a series of papers which replace direct optimisation of the expected utility over the design space by sampling from an augmented design distribution that concentrates in high-utility regions \citep[e.g.,][]{clyde1995exploring,bielza1999decision,muller1999bayesstat6,muller2004inhomogeneous,muller2005simulation,amzal2006bayesian}. In this line of work, the design variable is treated as a random variable, and various sampling methods are used to explore the induced ``utility landscape'' \citep{clyde1995exploring,bielza1999decision,muller2004inhomogeneous,kuck2006smc,amzal2006bayesian}. Our approach can, in some sense, be viewed as a particular instantiation of this general framework for the EIG: we also propose to sample from a particular design distribution, namely, the optimiser of the entropy-regularised version of the expected utility, which concentrates in high-utility regions. On the other hand, our perspective is rather different from the classical one: we view this distribution explicitly as the optimiser of an entropy-regularised functional over the space of probability measures. This shift in viewpoint is not merely philosophical; rather, it has algorithmic and theoretic consequences. While the sampling-based design literature typically focuses on MCMC, SMC, or annealing-based schemes \citep{muller2004inhomogeneous,muller2005simulation,kuck2006smc,amzal2006bayesian}, we instead construct WGFs which converge to the target distribution in the long-time limit \citep[e.g.,][]{ambrosio2008gradientflows}. Our approach leads naturally to scalable particle-based algorithms via a space-time discretisation, which in turn provides a direct route to a principled and modular theoretical analysis. It also accommodates scalable product-measure restrictions for large batch sizes.

\paragraph{Wasserstein gradient flows in optimal design} The study of WGFs dates back to the seminal work of \citet{jordan1998variational}; see also \cite{otto2001geometry,ambrosio2008gradientflows} for other classical references. In the context of experimental design, several recent papers have begun to explore gradient-flow formulations for {classical} optimal design criteria directly in measure space, both on finite candidate sets and on continuous design spaces \citep[e.g.,][]{piazzon2022logdetgf,jin2024oedgf,shi2026eoptimalgf,jin2024nonlinearoedgf}. In particular, \citet{shi2026eoptimalgf} develops WGFs for E-optimal experimental designs in regression models, while \citet{jin2024oedgf,jin2024nonlinearoedgf} consider A- and D- optimal designs in linear and non-linear settings, respectively. Even more recently, concurrent work by \citet{makinen2026batch} introduces a WGF-based approach to batch Bayesian A-optimal design in linear inverse problems via a design-measure relaxation, including a practical regularisation scheme that guarantees convergence to separated point designs. Adjacent ideas also appear in batch Bayesian optimisation, where convex acquisition functionals over probability measures lead naturally to particle gradient flows \citep{crovini2022batchpgf}. Our work is complementary: we focus on EIG-based BOED, whose nested-expectation structure is qualitatively different from classical matrix-based criteria; introduce an entropy-regularised objective that, for the full joint batch law, admits a unique Gibbs optimiser; and develop scalable mean-field and i.i.d.\ design-law parameterisations, together with (doubly stochastic) interacting-particle algorithms amenable to nested Monte Carlo gradient estimation.

\paragraph{Particle-based and diffusion-based experimental design}
Recent BOED methods also use particle or diffusion machinery, but without adopting a measure-valued design variable. In particular, \citet{iollo2024pasoa} combine stochastic optimisation with tempered SMC for sequential EIG-based design, while \citet{iollo2025contrastive} use diffusion-based samplers together with a pooled-posterior construction and a new EIG-gradient representation. These methods are complementary to ours: they target pointwise sequential design rules, rather than optimisation over a design law. Another adjacent contribution is \citet{helin2025wasserstein}, who replace the KL-based EIG utility by expected Wasserstein information criteria. In contrast, we retain the EIG utility and alter the optimisation geometry.

\paragraph{EIG estimation and gradient estimation}
The nested-expectation structure of EIG is central to BOED computation and leads to subtle bias--variance trade-offs \citep[e.g.,][]{rainforth2018nesting}. Several approaches have been proposed to address this, including multilevel and de-biasing ideas \citep[e.g.,][]{goda2020mlmc}, alternative EIG gradient representations \citep{ao2024eiggrad}, transport or density-approximation approaches \citep[e.g.,][]{li2024eigdensity}, and variational or ratio-estimation strategies \citep[e.g.,][]{barberagakov2003information,foster2019variational,foster2020unified,foster2021dad,kleinegesse2020bayesian,huan2024acta,rainforth2023modern}. Our methodology is modular at the level of the \emph{inner} stochastic approximation. The particle updates require estimates of $\nabla_{\xi}\mathrm{EIG}(\xi)$ (or its batch analogue), which can be instantiated using nested Monte Carlo, multilevel or de-biased estimators, Laplace or variational approximations, transport-map density surrogates, or likelihood-free ratio or mutual information estimators, provided appropriate moment bounds and, where relevant, bias control conditions hold.

\subsection{Paper Organisation}
The remainder of this paper is organised as follows. In Section~\ref{sec:background}, we introduce notation and define the problem setup. In Section~\ref{sec:method}, we introduce our main methodology: an entropy-regularised lifting of batch EIG maximisation to an optimisation problem over design laws on $\Xi^m$, together with mean-field and i.i.d.\  product-measure restrictions for scalability. We then derive the associated WGFs and their interacting-particle discretisations, and state our main theoretical guarantees. In Section~\ref{sec:numerics}, we provide numerical results which demonstrate the efficacy of our proposed approach. Finally, in Section~\ref{sec:conclusions}, we present some concluding remarks.

\section{Background and Problem Setup}
\label{sec:background}

\paragraph{Model and Notation}
We adopt the following notation. The experimental design is characterised by a continuous parameter $\xi\in\Xi\subseteq\mathbb{R}^d$. We aim to choose the design so as to maximise the expected information gain (EIG) about a parameter of interest $\theta\in\Theta\subseteq\mathbb{R}^p$, given prospective data $y\in\mathcal{Y}\subseteq\mathbb{R}^q$. Let $\pi(\theta)$ denote the prior, $\pi_{\xi}(y\mid\theta)$ the likelihood, and $\pi_{\xi}(\theta\mid y)\propto \pi(\theta)\pi_{\xi}(y\mid\theta)$ the posterior.\footnote{For convenience, we work with densities with respect to Lebesgue measure; the definitions extend verbatim to general dominating measures.}

\paragraph{The Expected Information Gain} The expected information gain (EIG) is the expected Kullback-Leibler (KL) divergence between the posterior and the prior, viz
\begin{equation}
    \mathrm{EIG}(\xi)
    :=
    \int \mathrm{KL}\big(\pi_\xi(\cdot\mid y)\,\|\,\pi\big)\, \pi_\xi(y)\,\mathrm dy
    =
    \iint \log \left[\frac{\pi_{\xi}(\theta\mid y)}{\pi(\theta)}\right]\,
    \pi_{\xi}(\theta\mid y)\,\pi_{\xi}(y)\,
    \mathrm{d}\theta \, \mathrm{d}y,
\end{equation}
where $\pi_{\xi}(y):=\int \pi_\xi(y\mid\theta)\pi(\theta)\mathrm{d}\theta$ denotes the prior predictive (evidence). Equivalently, the EIG is the mutual information between $\theta$ and $y$ under the joint $\pi_{\xi}(\theta,y):=\pi(\theta)\pi_{\xi}(y|\theta)$, namely, 
\begin{align}
\mathrm{EIG}(\xi)=\iint \log\left[\frac{\pi_\xi(y\mid \theta)}{\pi_\xi(y)}\right]\,\pi_\xi(y\mid \theta)\,\pi(\theta)\,\mathrm{d}\theta\,\mathrm{d}y = \mathbb E_{\theta\sim\pi}\,
\mathbb E_{y\sim \pi_\xi(\cdot\mid\theta)}
\Big[\log \pi_\xi(y\mid\theta)-\log \pi_\xi(y)\Big]. \label{eq:eig-mi}
\end{align}
This form is often the most useful for computation, since it makes explicit that $\mathrm{EIG}(\xi)$ is an expected log-likelihood ratio between $\pi_\xi(y\mid \theta)$ and the prior-predictive $\pi_\xi(y)$.

\paragraph{Bayesian Optimal Experimental Design} The (point) Bayesian optimal design is then defined as the (possibly non-unique) solution of
\begin{equation}
    \xi^{*} \in \argmax_{\xi\in\Xi} \mathrm{EIG}(\xi).
\end{equation}

\subsection{Batch Design}
\label{sec:batch}
Let $m\in\mathbb N$ denote the number of experiments to be performed without adaptation, and write $\xi_{1:m}:=(\xi_1,\dots,\xi_m)\in \Xi^m$ and $y_{1:m}:= (y_1,\dots,y_m)\in \mathcal Y^m$. We assume a standard conditionally independent observation model, so that $\pi_{{\xi}_{1:m}}(y_{1:m}\mid \theta)=\prod_{b=1}^m \pi_{\xi_b}(y_b\mid \theta)$ and $\smash{\pi_{\xi_{1:m}}(\theta\mid y_{1:m}) \propto \pi(\theta)\prod_{b=1}^m \pi_{\xi_b}(y_b\mid\theta)}$. The batch EIG is then given by
\begin{align}
\label{eq:eig-batch-kl}
\mathrm{EIG}_m(\xi_{1:m})
&:=
\iint \log \left[\frac{\pi_{\xi_{1:m}}(\theta\mid y_{1:m})}{\pi(\theta)}\right]\,
    \pi_{\xi_{1:m}}(\theta\mid y_{1:m})\,\pi_{\xi_{1:m}}(y_{1:m})\,
    \mathrm{d}\theta \, \mathrm d y_{1:m} \\
&:=
    \iint
    \log\left[
        \frac{\pi_{\xi_{1:m}}(y_{1:m}\mid \theta)}{\pi_{\xi_{1:m}}(y_{1:m})}
    \right]\,
    \pi_{\xi_{1:m}}(y_{1:m}\mid \theta)\,\pi(\theta)\,
    \mathrm{d}\theta\,\mathrm{d}y_{1:m}, \label{eq:eig-batch}
\end{align}
where, similar to before, $\pi_{\xi_{1:m}}(y_{1:m})=\int \pi_{\xi_{1:m}}(y_{1:m}\mid \theta)\,\pi(\theta)\,\mathrm{d}\theta$. The Bayesian optimal batch design is then given by any solution of 
\begin{equation}
{\xi}_{1:m}^\star \in \argmax_{{\xi}_{1:m}\in \Xi^m}\ \mathrm{EIG}_m({\xi}_{1:m}). \label{eq:eig-batch-opt}
\end{equation}

\section{Methodology}
\label{sec:method}
A fundamental challenge in EIG-based BOED is that the resulting optimisation problem is typically non-convex and often ill-conditioned, even for moderate-dimensional design spaces $\Xi$. This difficulty is intrinsic to the nested structure of the EIG, and particularly acute in the batch setting. In particular, the EIG consists of an expectation of a log-marginal likelihood term (i.e., a log-sum or a log-integral), evaluated under the corresponding prior predictive $\pi_{\xi_{1:m}}(y_{1:m})$. In practice, this commonly yields multimodal landscapes with many local optima, rendering global convergence guarantees for first-order methods unrealistic outside special cases. Moreover, in most models the gradient $\nabla_\xi \mathrm{EIG}(\xi)$ is only available via nested Monte Carlo (or related) estimators, so optimisation must contend simultaneously with non-convexity and stochastic (often biased) gradient information \citep[e.g.,][]{rainforth2018nesting}.

\subsection{A Distributional Objective via Entropic Regularisation}
\label{sec:method:objective}
To mitigate these issues, we propose to lift the optimisation variable from a point design ${\xi}_{1:m} = (\xi_1,\dots,\xi_m)$ to a design measure $\nu_m \in \mathcal{P}(\Xi^m)$, and introduce an entropic regularisation. The resulting formulation replaces a non-convex point optimisation problem by a strictly convex optimisation problem on the space of probability measures; naturally accommodates exploration through randomness in the design; and provides a canonical route to scalable particle algorithms via WGFs.

Let $\nu_m\in\mathcal P(\Xi^m)$ be a batch design measure, and write $G(\xi_{1:m}):=\mathrm{EIG}_m(\xi_{1:m})$ for the (deterministic) batch EIG associated with a fixed design vector $\xi_{1:m}\in\Xi^m$. We can then define the expected batch utility under $\nu_m$ as
\begin{equation}
\mathcal J_m^{\mathrm{joint}}(\nu_m)
:=
\int_{\Xi^m} G(\xi_{1:m})\,\nu_m(\mathrm d\xi_{1:m}).
\label{eq:Jm-joint-method}
\end{equation}
For now, we will not impose any restrictions on the space of probability measures over which we optimise. The measure-valued design problem is thus 
\begin{equation}
    \nu_m^{*}\in \argmax_{\nu_m\in\mathcal{P}(\Xi^m)} \mathcal{J}_m^{\mathrm{joint}}(\nu_m). \label{eq:measure-opt}
\end{equation}
Under the assumption that $G(\xi_{1:m}):=\mathrm{EIG}_m(\xi_{1:m})$ attains its maximum on $\Xi^m$, the measure-valued optimisation problem in \eqref{eq:measure-opt} is a value-preserving relaxation of the pointwise optimisation problem in \eqref{eq:eig-batch-opt}. In particular, $\max_{\nu_m\in\mathcal{P}(\Xi^m)} \mathcal{J}_m^{\mathrm{joint}}(\nu_m)=\max_{\xi_{1:m}\in\Xi^m} G(\xi_{1:m})$, and every $\nu_m\in\mathcal{P}(\Xi^m)$ supported on the set $\argmax_{\xi_{1:m}\in\Xi^m} G(\xi_{1:m})$ is optimal (see Lemma~\ref{lem:lift-value-preserving}, Appendix~\ref{sec:joint-batch-basic-results}). Consequently, the lifted problem is {highly non-identifiable}: it admits an entire simplex of global maximisers, and provides no intrinsic mechanism to select between them. 

From an algorithmic perspective, this degeneracy is problematic. First, the objective $\nu_m\mapsto\mathcal J_m^{\mathrm{joint}}(\nu_m)$ is linear (indeed, affine), and thus is not strictly convex (nor strictly concave). This means that measure-valued gradient-based dynamics on $\mathcal{P}(\Xi^m)$ are not stabilised by curvature, and may drift along flat directions. Second, in multimodal landscapes, the global maximisers are often accompanied by many local maximisers separated by energy barriers. In practice, the result is that particle-based approximations of the gradient-flow dynamics may become trapped in basins of attraction determined by the initialisation (see Proposition~\ref{prop:joint-wgf-basins},  Appendix~\ref{sec:joint-batch-basic-results}).

In this context, we now introduce an entropic regularisation, which renders the variational problem {strictly convex} and yields a {unique} optimiser with explicit Gibbs form. Let $\rho_m\in\mathcal{P}_{2,\mathrm{ac}}(\Xi^m)$ be a reference probability measure on $\Xi^m$, with density $\rho_m\propto e^{-V_m(\xi)}$ for some confining potential $V_{m}:(\mathbb{R}^d)^m\rightarrow \mathbb{R}$.\footnote{We write $\mathcal{P}_{2,\mathrm{ac}}(\Xi^m) = \mathcal{P}_2(\Xi^m)\cap\mathcal{P}_{\mathrm{ac}}(\Xi^m)$, where $\mathcal{P}_2(\Xi^m) = \{\nu_m\in\mathcal{P}(\Xi^m):\int \|\xi_{1:m}\|^2 \nu_m(\mathrm{d}\xi_{1:m})<\infty\}$ denotes the space of measures over $\Xi^m$ with finite second moment, and $\mathcal{P}_{\mathrm{ac}}(\Xi^m)$ the space of measures over $\Xi^m$ which are absolutely continuous w.r.t. the Lebesgue measure.} We then define a regularised version of our objective function as 
\begin{equation}
    \mathcal{F}_m^{\lambda,\mathrm{joint}}(\nu_m)
    =
    - \mathcal{J}_m^{\mathrm{joint}}(\nu_m) + \lambda_m \mathrm{KL}(\nu_m\| \rho_m).
    \label{eq:Fm-joint}
\end{equation}
where $\lambda_m>0$ denotes a regularisation parameter, and  $\mathrm{KL}(\cdot\|\cdot)$ denotes the Kullback--Leibler divergence, defined by $\mathrm{KL}(\nu\|\rho) = \int \log [\frac{\mathrm{d}\nu}{\mathrm{d}\rho}] \nu(\mathrm{d}\xi)$ if $\nu\ll \rho$, and $+\infty$ otherwise. The parameter $\lambda_m$ plays the role of a temperature: as $\lambda_m\downarrow 0$, the optimiser concentrates on high-utility regions, while larger $\lambda_m$ yields more exploratory designs. This temperature interpretation can be made rigorous: as $\lambda_m\downarrow 0$, the joint Gibbs optimiser $\nu_m^{\lambda,\star}$ (see below) concentrates on the set of global maximisers of the EIG; in particular, if the batch maximiser is unique, then $\nu_m^{\lambda,\star}\Rightarrow \delta_{\xi_{1:m}^\star}$ (see Theorem~\ref{thm:joint-zero-temp}, Appendix~\ref{sec:zero-temp-joint}). In any case, the entropy-regularised batch design problem is then given by
\begin{equation}
    \nu_m^{\lambda,\star} \in \argmin_{\nu\in\mathcal{P}(\Xi^m)} \mathcal{F}_m^{\lambda,\mathrm{joint}}(\nu).
    \label{eq:nu-joint-opt}
\end{equation}
 Under a mild integrability assumption, the functional $\nu\mapsto \mathcal{F}_{m}^{\lambda,\mathrm{joint}}(\nu)$ is proper and strictly convex on its effective domain $\{\nu_m:\mathrm{KL}(\nu_m\|\rho_m)<\infty\}$. Moreover, it admits a unique minimiser $\nu_m^{\lambda,\star}$, given by the Gibbs change of measure
\begin{equation}
    \nu_m^{\lambda,\star}(\mathrm{d}\xi_{1:m})
    =
    \frac{1}{Z_{m}^{\lambda}}
    \exp \left(\frac{1}{\lambda_m}G(\xi_{1:m})\right)\, \rho_m(\mathrm{d}\xi_{1:m}),
    \label{eq:nu-joint}
\end{equation}
where $\smash{Z}_m^{\lambda}$ is the normalising constant $\smash{Z_{m}^{\lambda}=\int \exp(\frac{1}{\lambda_m}G(\xi_{1:m}))\rho_m(\mathrm{d}\xi_{1:m})}$ (see Proposition~\ref{prop:joint-gibbs}, Appendix~\ref{sec:joint-batch-basic-results}). This follows from standard variational arguments \citep[e.g.,][]{donsker1975asymptotic}. In particular, using the Gibbs variational principle, the batch objective can be rewritten in the form
\begin{equation}
    \mathcal{F}_m^{\lambda,\mathrm{joint}}(\nu)
    =
    \lambda_m \mathrm{KL}(\nu\| \nu_m^{\lambda,\star}) -\lambda_m \log Z_m^\lambda.
    \label{eq:Fm-joint-rewrite}
\end{equation}
Thus, restricting $\nu_m$ to a tractable family (e.g., product measures) is precisely a reverse-KL variational approximation to the Gibbs law $\nu_m^{\lambda,\star}$.

\subsubsection{Mean-field restriction to independent designs}
\label{sec:method:batch-std-mf}
While the entropy-regularised batch design problem yields an explicit Gibbs solution $\nu_m^{\lambda,\star}$ on $\Xi^m\subseteq(\mathbb{R}^d)^m$, sampling from this joint law may be computationally prohibitive when the batch size $m$, and hence the ambient dimension $md$, is moderate or large. In this context, we now consider tractable approximations to the full batch law $\nu_m\in\mathcal P(\Xi^m)$. We begin by considering the restriction to the \emph{mean-field} family of \emph{product} measures, viz
\begin{equation}
\mathcal P_{\mathrm{mf}}(\Xi^m)
:=
\Big\{
\mu_1\otimes\cdots\otimes\mu_m:\ \mu_b\in\mathcal P(\Xi)\ \text{for }b=1,\dots,m
\Big\}.
\label{eq:P-mf}
\end{equation}
This family enforces independence across batch coordinates but allows non-identical marginals, so different batch elements may specialise to different regions of $\Xi$ while retaining tractability. Explicitly, the mean-field restriction of the entropy-regularised batch design objective in \eqref{eq:Fm-joint} is given by
\begin{align}
\mathcal F_{m}^{\lambda,\mathrm{mf}}(\mu_1,\dots,\mu_m)
&:=
\mathcal F_m^{\lambda,\mathrm{joint}}\!\left(\mu_1\otimes\cdots\otimes\mu_m\right) \\
&=
-\!\int_{\Xi^m} G(\xi_{1:m})\,(\otimes_{b=1}^m \mu_b)(\mathrm d\xi_{1:m})
+\lambda_m\,\mathrm{KL}\!\left(\otimes_{b=1}^m \mu_b\,\middle\|\,\rho_m\right),
\label{eq:F-mf-def}
\end{align}
where $\lambda_m$ once again plays the role of a temperature. As $\lambda_m\downarrow 0$, the optimal mean-field value approaches the joint optimum $G_m^\star:= \max_{\xi_{1:m}\in\Xi^m} G(\xi_{1:m})$. Under an additional isolation condition on a maximising batch $\xi_{1:m}^\star$, the coordinate marginals concentrate on the corresponding coordinates $(\xi_1^\star,\dots,\xi_m^\star)$ (see Proposition~\ref{prop:mf-zero-temp}, Appendix~\ref{sec:zero-temp-mf}).

Under the assumption that the reference distribution factorises as $\rho_m=\rho^{\otimes m}$, for some $\rho\in\mathcal{P}(\Xi)$, the entropic regulariser decomposes as $\mathrm{KL}(\mu_1\otimes\cdots\otimes\mu_m \|\rho^{\otimes m}) =\sum_{b=1}^m \mathrm{KL}(\mu_b\|\rho)$, while the expected utility $\mathbb E_{\xi_{1:m}\sim \mu_1\otimes\cdots\otimes\mu_m}[G(\xi_{1:m})]$ retains the coupling induced by the batch EIG. In any case, the mean-field restriction of the entropy-regularised batch design problem in \eqref{eq:nu-joint-opt} can be written as 
\begin{equation}
(\mu_1^{\lambda,\star},\dots,\mu_m^{\lambda,\star}) 
\in 
\argmin_{(\mu_1,\dots,\mu_m)\in\mathcal P(\Xi)^m}
\mathcal F_m^{\lambda,\mathrm{mf}}\!\left(\mu_1,\dots,\mu_m\right).
\label{eq:mf-opt}
\end{equation}

\noindent Unlike before, the mean-field objective is generally not jointly convex in $(\mu_1,\dots,\mu_m)$, as the expected utility couples batch coordinates. This being said, provided $\rho_m=\rho^{\otimes m}$ factorises, the objective is strictly convex in each coordinate $\mu_b$, conditional on the other coordinates $\mu_{-b}$, since the utility is linear in $\mu_b$ and the KL is strictly convex. We can also characterise its global minimisers. In particular, assuming that $\rho_m =\rho^{\otimes m}$, one can show that each marginal satisfies a self-consistency equation of the form (see Proposition~\ref{prop:mf-coordinate-gibbs}, Appendix~\ref{sec:mean-field-basic-results})
\begin{equation}
\mu_b^{\lambda,\star}(\mathrm d\xi)
=
\frac{1}{Z_{b}^{\lambda}}
\exp\!\left(\frac{1}{\lambda_m}\,\Phi_{b}(\xi;\mu^{\lambda,\star}_{-b})\right)\rho(\mathrm d\xi),
\qquad b=1,\dots,m,
\label{eq:mf-fixedpoint-b}
\end{equation}
where $Z_b^{\lambda}<\infty$ denotes the normalisation constant, $\smash{\mu_{-b}:=\otimes_{j\neq b}\mu_j}$ denotes the product of all marginals except the $b^{\text{th}}$, and $\smash{\Phi_b(\xi;\mu_{-b})}$ denotes the expected batch utility when the $b^{\text{th}}$ coordinate is fixed at $\xi$ and the remaining coordinates are drawn from their current marginals.
\begin{equation}
\Phi_b(\xi;\mu_{-b})
=\int_{\Xi^{m-1}} G(\xi_1,\dots,\xi_{b-1},\xi,\xi_{b+1},\dots,\xi_m)\,\mu_{-b}(\mathrm{d}\xi_{-b}).
\label{eq:Phi-b}
\end{equation}

\paragraph{Why prefer the mean-field approximation?}
While the joint, entropy-regularised batch problem admits an explicit Gibbs solution $\nu_m^{\lambda,\star}$ on $\Xi^m$, sampling from this joint law becomes rapidly impractical as $m$ grows, as it requires simulation in the ambient space $\Xi^m\subseteq (\mathbb R^d)^m$. For example, simulating the overdamped Langevin diffusion w.r.t. $\nu_m^{\lambda,\star}$ requires evaluating or estimating all partial gradients $\{\nabla_{\xi_b}G(\xi_{1:m})\}_{b=1}^m$ at each step. Thus, each MCMC step entails $m$ evaluations of a typically expensive (e.g., nested Monte Carlo) gradient oracle. In addition, issues with slow mixing or stability are typically exacerbated in the higher-dimensional state space. The mean-field restriction $\nu_m=\mu_1\otimes\cdots\otimes\mu_m$ provides a tractable compromise: it replaces sampling in $\Xi^m$ with sampling from a set of $m$ coupled marginal laws on $\Xi\subseteq\mathbb{R}^d$, retaining the essential coupling through the utility term while keeping the entropic regularisation separable when $\rho_m=\rho^{\otimes m}$. Practically, the mean-field formulation also permits within-batch specialisation: the marginals $\mu_b$ may concentrate on different high-utility regions, encouraging diversity without introducing explicit repulsive potentials.

\subsubsection{Restriction to i.i.d.\ product designs}
\label{sec:method:batch-mf}
We can further restrict the standard mean-field variational family $\mathcal P_{\mathrm{mf}}(\Xi^m)$ in \eqref{eq:P-mf} by considering i.i.d.\ product laws, namely, 
\begin{equation}
\mathcal P_{\mathrm{iid}}(\Xi^m)
:=
\{\mu^{\otimes m}:\mu\in\mathcal P(\Xi)\}.
\end{equation}
In this case, we will optimise for a single design law $\mu\in\mathcal P(\Xi)$, and generate a batch $\xi_{1:m}$ by i.i.d.\ draws $\xi_{1:m}\sim\mu^{\otimes m}$. Suppose we define the expected batch utility induced by $\mu$ as
\begin{equation}
\mathcal J_m(\mu)
:=
\int_{\Xi^m} G(\xi_{1:m})\,\mu^{\otimes m}(\mathrm d\xi_{1:m})
=
\mathbb E_{\xi_{1:m}\sim \mu^{\otimes m}}\big[G(\xi_{1:m})\big].
\label{eq:Jm-def}
\end{equation}
Once again, we will introduce an entropic regularisation. Let $\rho\in\mathcal{P}_{2,\mathrm{ac}}(\Xi)$ be a reference measure on $\Xi$, with density $\rho(\xi)\propto e^{-V(\xi)}$ for some confining potential $V:\mathbb{R}^d\to\mathbb{R}$. We can then define
\begin{equation}
    \mathcal{F}_{m}^{\lambda}(\mu):= -\mathcal{J}_m(\mu) + \lambda \mathrm{KL}(\mu\|\rho),
    \label{eq:f-m}
\end{equation}
where, similar to before, $\lambda>0$ is a regularisation parameter that controls the exploration-concentration trade-off. In this case, convergence as $\lambda\downarrow 0$ to the \emph{true} joint optimum requires an additional structural assumption, namely that a globally optimal batch lies on the diagonal (see Proposition~\ref{prop:iid-zero-temp}, Section~\ref{sec:zero-temp-iid}). Returning to \eqref{eq:f-m}, the corresponding measure-valued design problem is then given by
\begin{equation}
    \mu_m^{\lambda,\star} \in \argmin_{\mu\in\mathcal{P}(\Xi)} \mathcal{F}_m^{\lambda}(\mu),
    \qquad \xi_{1:m}\sim (\mu_m^{\lambda,\star})^{\otimes m}.
    \label{eq:mu-opt}
\end{equation}
This formulation decouples the optimisation problem over a design space from sampling an actual batch of designs. While the batch elements are independent given the learned design law, the objective $\mu\mapsto\mathcal J_m(\mu)$ remains non-linear. Thus, the induced optimisation problem still encodes interactions between designs through the batch EIG.

Similar to the general mean-field case, this objective is not convex. However, under suitable regularity conditions (see Theorem \ref{thm:fixedpoint}, Appendix \ref{sec:exist-unique-gibbs}), any minimiser of $\mathcal{F}_m^{\lambda}$ satisfies the self-consistency equation
\begin{equation}
    \mu_m^{\lambda,\star}(\mathrm{d}\xi)
    =
    \frac{1}{Z_m^{\lambda}}
    \exp\left(\frac{m}{\lambda} \Phi_m(\xi;\mu_m^{\lambda,\star}) \right)
    \rho(\mathrm{d}\xi),
    \label{eq:mf-fixed-point}
\end{equation}
where $\smash{Z_m^{\lambda}}<\infty$ denotes the normalisation constant, and $\Phi_m(\xi;\mu)$ denotes the expected batch utility when one design is fixed at $\xi$, and the remaining designs are sampled i.i.d.\ from $\mu$:
\begin{equation}
    \Phi_{m}(\xi;\mu):= \int_{\Xi^{m-1}}G(\xi,\xi_{2:m}) \mu^{\otimes (m-1)}(\mathrm{d}\xi_{2:m}).
    \label{eq:conditional-batch-eig-v0}
\end{equation}
Clearly, the i.i.d.\ restriction is a special case of the original, unconstrained, batch design problem, as well as the mean-field approximation. In fact, if $\rho_m = \rho^{\otimes m}$, and $\lambda_m =\frac{\lambda}{m}$, then the original batch objective in \eqref{eq:Fm-joint} reduces to the objective in \eqref{eq:f-m}, viz
\begin{align}
    \mathcal{F}_m^{\lambda,\mathrm{joint}}(\mu^{\otimes m})
    &=
    - \mathcal{J}_m^{\mathrm{joint}}(\mu^{\otimes m})
    + \frac{\lambda}{m}\mathrm{KL}(\mu^{\otimes m}\| \rho^{\otimes m})
    \label{eq:f-m-joint-simp-1} \\[1mm]
    &=
    -\mathcal{J}_m(\mu) + \lambda \mathrm{KL}(\mu\|\rho)
    =: \mathcal{F}_{m}^{\lambda}(\mu).
    \label{eq:f-m-joint-simp-2}
\end{align}

\paragraph{Encouraging diversity via repulsive interactions.}
A limitation of the i.i.d.\ batch restriction is that it does not prevent duplicates and need not explicitly encourage within-batch diversity. In many applications, a high-quality batch should balance informativeness and diversity, covering complementary regions of the design space. We incorporate this desideratum directly at the level of the design law by adding a repulsive interaction term to the mean-field objective. Let $r:\mathbb{R}^d\to \mathbb{R}\cup\{+\infty\}$ be a symmetric repulsive interaction potential. We can then define the normalised pairwise repulsion as
\begin{equation}
    R(\xi_{1:m})
    =
    \frac{1}{2m(m-1)}
    \sum_{1\le i\neq j\le m} r(\xi_i-\xi_j).
    \label{eq:batch-repulsion-R}
\end{equation}
By incorporating this additional repulsion term into the original joint (i.e., batch) free energy functional, we obtain
\begin{equation}
    \mathcal{F}_m^{\lambda,\mathrm{joint},\mathrm{rep}}(\nu)
    =
    - \mathbb{E}_{\nu}\!\left[G(\xi_{1:m})\right]
    + \eta\, \mathbb{E}_{\nu}\!\left[R(\xi_{1:m})\right]
    + \lambda_m \mathrm{KL}(\nu\|\rho_m),
    \label{eq:f-m-joint-rep-fixed}
\end{equation}
where $\eta\ge 0$ is a parameter which tunes the strength of the diversity penalty: large values of $\eta$ encourage significant diversity, while $\eta=0$ recovers the original joint objective. Arguing as before (see Proposition \ref{prop:joint-gibbs}, Appendix \ref{sec:joint-batch-basic-results}), the minimiser of this objective functional admits an explicit Gibbs form, namely, 
\begin{equation}
    \nu_{m}^{\lambda,\eta,\star}(\mathrm{d}\xi_{1:m})
    =
    \frac{1}{Z_m^{\lambda,\eta}}
    \exp\!\left(\frac{1}{\lambda_m}\left(G(\xi_{1:m})-\eta R(\xi_{1:m})\right)\right)
    \rho_m(\mathrm{d}\xi_{1:m}),
\end{equation}
with normalisation constant $Z_m^{\lambda,\eta}<\infty$. By restricting this objective to product laws, we can obtain a free energy functional for the design law $\mu\in\mathcal P(\Xi)$ which explicitly incorporates diversity. In particular, substituting $\nu_m=\mu^{\otimes m}$, $\rho_m=\rho^{\otimes m}$, and $\lambda_m = \frac{\lambda}{m}$, we have that
\begin{align}
    \mathcal{F}_m^{\lambda,\mathrm{rep}}(\mu)
    &:=
    \mathcal{F}_m^{\lambda,\mathrm{joint},\mathrm{rep}}(\mu^{\otimes m})
    \\
    &=
    -\mathcal{J}_m(\mu)
    + \eta\,\mathcal{R}(\mu)
    + \lambda \mathrm{KL}(\mu\|\rho),
    \label{eq:F-rep-mu-moved}
\end{align}
where $\smash{\mathcal{R}(\mu):=
    \frac{1}{2}\int_{\Xi}\int_{\Xi} r(\xi-\chi)\,\mu(\mathrm d\xi)\,\mu(\mathrm d\chi)}$
induces repulsion between particles in the mean-field approximation, thereby discouraging collapse of the design law onto a small set of atoms.

\paragraph{Why prefer the i.i.d.\ design-law formulation?}
The standard mean-field family $\nu_m=\mu_1\otimes\cdots\otimes\mu_m$ provides a tractable surrogate for the joint Gibbs batch law, but also introduces $m$ coupled marginal laws, each of which must be separately approximated. The i.i.d.\ restriction $\nu_m=\mu^{\otimes m}$ further simplifies the parameterisation to a \emph{single} design law on $\Xi$, yielding an exchangeable random batch by construction, and reducing the computational overhead.  While the resulting optimum $\mu_m^{\lambda,\star}\in\mathcal P(\Xi)$ does not admit an explicit Gibbs representation unless $m=1$, it can be sampled from efficiently, even when the batch size $m$ is large. In particular, each particle update only requires estimates of $\nabla_{\xi_1} G(\xi_{1:m})$, evaluated on a small number of randomly sampled tuples, rather than all coordinate gradients $\{\nabla_{\xi_b}G(\xi_{1:m})\}_{b=1}^m$. The trade-off is expressiveness: unlike the mean-field designs, i.i.d.\ designs cannot allocate distinct marginals to different batch positions. Thus, explicit diversity terms (or an appropriate extraction mechanism; see Appendix~\ref{sec:bon}) may be necessary in practice.

\subsection{Optimising the i.i.d.\ Objective via Wasserstein Gradient Flows}
\label{sec:method:optimisation}
We now develop gradient-based methods for optimising the i.i.d.\ design-law free energy (with repulsion) defined in \eqref{eq:F-rep-mu-moved}, namely
\begin{equation}
\mathcal F_m^{\lambda,\mathrm{rep}}(\mu)=-\mathcal J_m(\mu)+\eta\mathcal{R}(\mu) + \lambda\,\mathrm{KL}(\mu\|\rho),
\end{equation}
where, for convenience, we recall that
\begin{equation}
\mathcal J_m(\mu)= \int_{\Xi^m}\!\! G(\xi_{1:m})\,\mu^{\otimes m}(\mathrm d\xi_{1:m}), 
\qquad
\mathcal{R}(\mu):=
\frac{1}{2}\int_{\Xi}\int_{\Xi} r(\xi-\chi)\,\mu(\mathrm d\xi)\,\mu(\mathrm d\chi).
\end{equation}

\noindent The minimiser of this objective does not admit a closed form solution, but rather can be characterised implicitly by a fixed-point equation; see \eqref{eq:mf-fixed-point} for the $\eta=0$ case. Accordingly, one cannot directly apply an off-the-shelf MCMC scheme targeting a known static density. Instead, we will optimise this free energy by simulating a WGF whose stationary solutions satisfy the relevant fixed point equation.

\begin{remark}
\label{remark:mean-field-wgf}
The minima of the mean-field objective $\mu_{1:m}\mapsto \mathcal F_{m}^{\lambda,\mathrm{mf}}(\mu_{1:m})$, defined in \eqref{eq:F-mf-def}, also lack an explicit Gibbs characterisation. They can be computed using a similar approach to the one developed in this section. In this case, rather than considering a single WGF which converges to $\mu_m^{\lambda,\star}\in\argmin \mathcal{F}_{m}^{\lambda}(\mu)$, one would consider an ensemble of coordinate-wise WGFs whose stationary laws coincide with $(\mu_1^{\lambda,\star},\dots,\mu_m^{\lambda,\star})\in\argmin \mathcal{F}_{m}^{\lambda,\mathrm{mf}}(\mu_1,\dots,\mu_m)$ \citep[e.g.,][]{yao2022mean,tran2023particle,lacker2023independent}. These WGFs will rely on the coordinate-wise conditional utilities $\Phi_{b}(\cdot;\mu_{-b})$, which serve as the analogues of the conditional utility $\Phi_m(\cdot;\mu)$ which appears in the i.i.d.\ setting. Since this extension is notationally inconvenient but otherwise direct, we here develop the algorithms and analysis in detail only for the i.i.d.\ case. 
\end{remark}

\begin{remark}
\label{remark:joint-wgf}
The minimiser of the joint objective $\nu\mapsto \mathcal F_m^{\lambda,\mathrm{joint}}(\nu)$, defined in \eqref{eq:Fm-joint}, does have an explicit Gibbs characterisation, unlike the i.i.d.\ design objectives $\mu\mapsto \mathcal{F}_m^{\lambda}(\mu)$ or  $\mu\mapsto \mathcal{F}_m^{\lambda,\mathrm{rep}}(\mu)$,  or the mean-field objective $\mu_{1:m}\mapsto \mathcal{F}_m^{\lambda,\mathrm{mf}}(\mu_{1:m})$; see \eqref{eq:nu-joint}. In principle, it can therefore be sampled by any standard MCMC method over the batch space $\Xi^m$. Amongst the various choices, the most natural analogue of the algorithm developed in this section is (stochastic gradient) Langevin dynamics on the batch space $\Xi^m$. Indeed, this is precisely the WGF of the joint free energy $\mathcal F_m^{\lambda,\mathrm{joint}}$ over $\mathcal{P}_{2}(\Xi^m)$ \citep[see, e.g.,][]{jordan1998variational}.
\end{remark}

The remainder of this section proceeds as follows. We first compute the first variation of $\mathcal F_m^{\lambda,\mathrm{rep}}$ (Section~\ref{sec:method:firstvariation}), which in turn allows us to obtain its Wasserstein ($\mathsf{W}_2$) gradient (Section~\ref{sec:method:wassgrad}). We then derive the associated WGF, which can be represented as a McKean--Vlasov Fokker--Planck PDE (Section~\ref{sec:method:wgf}). We next show how to approximate these non-linear dynamics via a space-time discretisation, which yields an interacting particle system (Section~\ref{sec:method:ips}). Finally, to obtain a method which remains scalable when $m$ is large, and $\nabla_1\mathrm{EIG}_m$ is only available through (nested) Monte Carlo or related estimators, we introduce a doubly stochastic IPS that combines tuple subsampling for the interaction term with stochastic gradient estimation (Section~\ref{sec:method:doubly-stochastic-ips}).

In the interest of readability, the presentation in this section will remain formal; detailed theoretical results are deferred to the appendices.

\subsubsection{The First Variation}
\label{sec:method:firstvariation}
Let $\mu,\nu\in\mathcal{P}(\Xi)$, and consider the mixture path $\mu_\varepsilon := (1-\varepsilon)\mu+\varepsilon \nu$ for $\varepsilon\in[0,1]$. We can then compute, via standard calculations, the G\^ateaux derivative (see Lemma~\ref{lem:deltaJ}, Appendix~\ref{sec:first-var-theory}) 
\begin{equation}
    \left.\frac{\mathrm{d}}{\mathrm{d}\varepsilon} \mathcal{J}_m(\mu_\varepsilon)\right|_{\varepsilon=0} = m \int_{\Xi} \Phi_{m}(\xi;\mu)(\nu-\mu)(\mathrm{d}\xi),
    \label{eq:Jm-gatederiv}
\end{equation}
where $\Phi_m(\xi;\mu)$ is the expected batch utility defined in Section~\ref{sec:method:batch-mf}, cf. \eqref{eq:conditional-batch-eig-v0}.\footnote{This representation is specific to the conditionally independent observation model in Section~\ref{sec:batch}. In particular, this means that $G(\xi_{1:m})=\mathrm{EIG}_m(\xi_{1:m})$ is invariant under permutations of $(\xi_1,\dots,\xi_m)$, and thus symmetric in its $m$ arguments. This symmetry is what permits the factor $m$ and the single conditional utility $\Phi_m(\xi;\mu)$ in the first-variation formula above. For a general non-symmetric batch utility one would instead obtain a sum of coordinate-wise conditional utilities.} It follows, in particular, that the first variation of $\mathcal{J}_m$ admits the unique (up to an additive constant) pointwise representation (see Lemma~\ref{lem:deltaJ}, Appendix~\ref{sec:first-var-theory}) 
\begin{equation}
    \frac{\delta \mathcal{J}_m(\mu)}{\delta \mu}(\xi) = m\Phi_m(\xi;\mu).
    \label{eq:first-variation-Jm}
\end{equation}
Meanwhile, for the additional repulsion term, standard results \cite[e.g.,][Section 10.4.5]{ambrosio2008gradientflows} yield (see Lemma~\ref{lem:deltaR}, Appendix~\ref{sec:first-var-theory})
\begin{equation}
\frac{\delta \mathcal R(\mu)}{\delta \mu}(\xi)
=
\Psi_r(\xi;\mu), \qquad \Psi_r(\xi;\mu)
:=
\int_{\Xi} r(\xi-\chi)\,\mu(\mathrm d\chi).
\label{eq:Psi-r-def}
\end{equation}
Combining these two displays with standard results for the KL divergence,\footnote{In particular, we recall that the first variation of the KL divergence is given by $\smash{\frac{\delta}{\delta\mu} \mathrm{KL}(\mu\|\rho)(\xi) =  \log(\frac{\mathrm{d}\mu}{\mathrm{d}\rho}(\xi))+ 1}$ for $\mu\ll \rho$ \citep[e.g.,][Lemma 10.4.1]{ambrosio2008gradientflows}.} it follows that the first variation of $\mathcal{F}_m^{\lambda,\mathrm{rep}}$ is given (up to an additive constant) by (see Corollary~\ref{cor:deltaF}, Appendix~\ref{sec:first-var-theory})
\begin{equation}
    \frac{\delta \mathcal{F}_m^{\lambda,\mathrm{rep}}(\mu)}{\delta \mu}(\xi) = -m\Phi_m(\xi;\mu) + \eta \Psi_{r}(\xi;\mu) + \lambda \left(\log \left[\frac{\mathrm{d}\mu}{\mathrm{d}\rho}(\xi)\right]+ 1\right).
    \label{eq:first-variation-Fm}
\end{equation}

\subsubsection{The Wasserstein Gradient}
\label{sec:method:wassgrad}
We will work on $\mathcal{P}_2(\Xi)$ equipped with the ${W}_2$ geometry. Suppose that $\Xi\subseteq\mathbb{R}^d$ is open, and that $\xi \mapsto \Phi_m(\xi;\mu)$ is differentiable. Then, differentiating under the integral sign, we have (see Lemma~\ref{lem:Phi-diff-Lip}, Appendix~\ref{sec:wgf})
\begin{equation}
\nabla_\xi \Phi_m(\xi;\mu)
=
\int_{\Xi^{m-1}} \nabla_1 G(\xi,\xi_{2:m})\,\mu^{\otimes(m-1)}(\mathrm d\xi_{2:m}).
\label{eq:grad-Phi}
\end{equation}
Using classical results \citep[e.g.,][]{ambrosio2008gradientflows}, it follows under mild regularity conditions that the Wasserstein gradient of $\mathcal{J}_m$ at $\mu$ is given by the vector field
\begin{equation}
\nabla_{\mathsf{W}_2}\mathcal{J}_m(\mu)(\xi)
=
\nabla_\xi \frac{\delta \mathcal{J}_m}{\delta \mu}(\mu)(\xi)
=
m\,\nabla_\xi \Phi_m(\xi;\mu).
\label{eq:wass-grad}
\end{equation}
Similarly, if \(r\) is differentiable and differentiation under the integral sign is justified, then we have that $\nabla_\xi \Psi_r(\xi;\mu)
=
\int_{\Xi}\nabla r(\xi-\chi)\,\mu(\mathrm d\chi)$, and consequently that (see Lemma~\ref{lem:Psi-diff-Lip}, Appendix~\ref{sec:wgf})
\begin{equation}
\nabla_{\mathsf W_2}\mathcal{R}(\mu)(\xi) 
= \nabla_\xi \frac{\delta \mathcal{R}}{\delta \mu}(\mu)(\xi)
= \nabla_{\xi}\Psi_{r}(\xi;\mu).
\end{equation}
Thus, whenever the indicated derivatives are well-defined, the Wasserstein gradient of the regularised objective function $\mathcal{F}_m^{\lambda,\mathrm{rep}}$ is given by (see Proposition~\ref{prop:EDI-smooth}, Appendix \ref{sec:wgf})
    \begin{equation}
\nabla_{\mathsf{W}_2}\mathcal{F}_m^{\lambda,\mathrm{rep}}(\mu)(\xi)
=
\nabla_\xi \frac{\delta \mathcal F_m^{\lambda,\mathrm{rep}}}{\delta\mu}(\mu)(\xi)
=
- m\,\nabla_\xi \Phi_m(\xi;\mu)
+ \eta \nabla_{\xi}\Psi_{r}(\xi;\mu) 
+\lambda \nabla_{\xi} \log \left(\frac{\mathrm{d}\mu}{\mathrm{d}\rho}(\xi)\right).
\label{eq:wass-grad-regularised}
\end{equation}

\subsubsection{The Wasserstein Gradient Flow}
\label{sec:method:wgf}
The WGF corresponds to the steepest-descent dynamics in $(\mathcal{P}_{2,\mathrm{ac}}(\Xi),\mathsf{W}_2)$. It is defined as the weak solution $\smash{\mu:[0,\infty)\rightarrow\mathcal{P}_2(\Xi})$ of the continuity equation \citep[e.g.,][Chapter 11]{ambrosio2008gradientflows}
\begin{equation}
    \frac{\partial \mu_t}{\partial t} + \nabla \cdot(v_t\mu_t) = 0, \qquad v_t = -\nabla_{\xi}\frac{\delta \mathcal{F}_m^{\lambda,\mathrm{rep}}}{\delta \mu}(\mu_t)(\xi).
    \label{eq:wgf}
\end{equation}
In our case, substituting the Wasserstein gradient from above, and recalling that $\rho(\xi)\propto e^{-V(\xi)}$ for some confining potential $V(\xi)$, we can rewrite the WGF as 
\begin{equation}
    \frac{\partial \mu_t}{\partial t} = -\mathrm{div}\left(\mu_t\left(m \nabla \Phi_m(\cdot;\mu_t) - \eta\nabla \Psi_{r}(\cdot;\mu_t) + \lambda \nabla \log \rho\right)\right) + \lambda \Delta \mu_t. \label{eq:mv-pde}
\end{equation}
This is a {non-linear} or {McKean--Vlasov} Fokker-Planck equation, the {non-linearity} arising due to the dependence of $\Phi_{m}(\cdot;\mu_t)$ on the current distribution. The Fokker-Planck equation admits a corresponding probabilistic (or \emph{Lagrangian}) representation as a {mean-field} or {McKean--Vlasov} SDE, given by 
\begin{equation}
\mathrm d\xi_t
=
\left(
m\,\nabla \Phi_m(\xi_t;\mu_t)
-\eta\,\nabla \Psi_{r}(\xi_t;\mu_t)
+\lambda\,\nabla \log\rho(\xi_t)
\right)\mathrm dt
+
\sqrt{2\lambda}\,\mathrm dw_t,
\label{eq:mv-sde}
\end{equation}
where $\mu_t=\mathrm{Law}(\xi_t)$ and $(w_t)_{t\ge 0}$ is a standard $\mathbb R^d$-valued Brownian motion. In particular, under mild regularity conditions, $\mu_t=\mathrm{Law}(\xi_t)$ is the solution of \eqref{eq:mv-pde} (see Theorem~\ref{thm:MV-wellposed}, Appendix~\ref{sec:wgf}). 

\paragraph{Long-time behaviour}
It is well known that every minimiser of $\mathcal{F}_m^{\lambda,\mathrm{rep}}$ is a stationary solution of the WGF defined in \eqref{eq:wgf}, \eqref{eq:mv-pde}, or \eqref{eq:mv-sde} (see Corollary~\ref{cor:stationary-minimiser}, Appendix~\ref{sec:wgf}). Conversely, under certain regularity assumptions, the stationarity condition reduces to the fixed-point (i.e., self-consistency) equation
\begin{equation}
\mu_m^{\lambda,\star}(\mathrm d\xi)
=
\frac{1}{Z_m^{\lambda,\eta}}
\exp\!\left(\frac{1}{\lambda}(m\Phi_m(\xi;\mu_m^{\lambda,\star}) - \eta \Psi_{r}(\xi;\mu_m^{\lambda,\star}))\right)\rho(\mathrm d\xi).
\label{eq:fixed-point-stationary}
\end{equation}
The WGF enjoys a number of other long-time properties. For example, along sufficiently regular solutions, the free energy $\mu_t\mapsto \mathcal F_m^{\lambda,\mathrm{rep}}(\mu_t)$ dissipates monotonically in time, so that any limit point of the WGF must be stationary \citep[e.g.,][]{ambrosio2008gradientflows}. 

Under additional assumptions (e.g., geodesic $\alpha$-convexity of $\mathcal F_m^{\lambda,\mathrm{rep}}$), one can establish uniqueness of, and exponential convergence to, the minimiser $\mu_m^{\lambda,\star}$ \citep{mccann1997convexity,ambrosio2008gradientflows,villani2009optimal}. For the McKean–Vlasov SDE in \eqref{eq:mv-sde}, a more direct coupling argument yields exponential contractivity under a slightly stronger dissipativity condition (see, e.g., Lemma~\ref{lem:MV-contractivity}, Appendix~\ref{sec:wgf}; Appendix~\ref{sec:evi}).  In the classical non-linear setting, complementary sufficient conditions based on strong confinement, together with a sufficiently small Lipschitz mean-field interaction yield explicit exponential convergence rates \citep{malrieu2001logarithmic,carrillo2006contractions,bolley2010trend}. Alternatively, one can establish quantitative convergence rates via functional inequalities: in particular, a logarithmic Sobolev inequality (LSI) at equilibrium yields exponential decay of relative entropy \citep{bakry2014analysis}, which implies Wasserstein convergence via standard transport inequalities (e.g.\ Talagrand's $T_2$), and the implication $\mathrm{LSI}\Rightarrow T_2$ due to Otto and Villani \citep{talagrand1996transport,ottoVillani2000talagrand}. 

It is worth noting that the convergence results obtained in our current analysis require a strong-confinement regime, namely that the entropic regularisation dominates the curvature of the interaction terms; see Assumption~\ref{ass:strong} in Appendix~\ref{sec:assumptions}. This condition is somewhat conservative, and may not hold in low-temperature, highly multimodal settings. Accordingly, our quantitative long-time guarantees should be interpreted as stability results for a regularised regime, rather than as a complete global theory for the non-convex operating regime explored in the numerical section.

\subsubsection{The Interacting Particle System}
\label{sec:method:ips}
The McKean--Vlasov SDE in \eqref{eq:mv-sde} cannot be simulated directly, since its drift depends on the unknown distribution $\smash{\mu_t = \mathrm{Law}(\xi_t)}$. One approach is to approximate the mean-field SDE using an {interacting particle system} (IPS), viz 
\begin{equation}
    \mathrm{d}\xi_t^{i,N} = \left(m\nabla\Phi_m(\xi_t^{i,N}; \mu_t^N) - \eta\nabla\Psi_r(\xi_t^{i,N}; \mu_t^N) +\lambda \nabla \log \rho(\xi_t^{i,N})\right)\mathrm{d}t + \sqrt{2\lambda}\mathrm{d}w_t^{i,N}, \qquad i\in[N],
    \label{eq:ips-sde}
\end{equation}
where $\smash{\mu_t^N:=\frac{1}{N}\sum_{j=1}^N \delta_{\xi_t^{j,N}}}$ denotes the empirical distribution of the particles, and $\smash{(w_t^{i,N})_{t\geq 0}^{i\in[N]}}$ are a collection of independent $\mathbb{R}^d$-valued Brownian motions. Under certain conditions, the empirical measure $\smash{\mu_t^N\rightarrow\mu_t}$ as $\smash{N\rightarrow\infty}$, a phenomenon known as the {propagation of chaos} (see Theorems~\ref{thm:PoC}, \ref{thm:PoC-uniform}; Appendix \ref{sec:poc}).

In order to obtain an implementable algorithm, we will also need to discretise \eqref{eq:ips-sde} in time. Let $\mu_n^N:=\frac1N\sum_{j=1}^N\delta_{\xi_{n}^{j,N}}$. Then, applying an Euler-Maruyama discretisation, we arrive at
\begin{equation}
\xi_{n+1}^{i,N}
=
\xi_{n}^{i,N}
+
\gamma_n\left(
m\,\nabla\Phi_m\big(\xi_{n}^{i,N};\mu_n^N\big)
- \eta\,\nabla\Psi_r(\xi_n^{i,N}; \mu_n^N)
+ \lambda\,\nabla\log\rho\big(\xi_{n}^{i,N}\big)
\right)
+
\sqrt{2\lambda\gamma_n}\,Z_n^{i},
\label{eq:ips-em}
\end{equation}
where $\smash{(\gamma_n)_{n\geq 0}}$ denotes the {step-size schedule}, and $\smash{(Z_n^{i})_{n\ge 0}^{i\in[N]}}$ is a collection of i.i.d.\ standard normal random variables in $\mathbb R^d$. Classical results imply that, on any finite time horizon, the continuous-time interpolation of \eqref{eq:ips-em} converges to the solution of the continuous-time IPS dynamics in \eqref{eq:ips-sde} as $\max_n \gamma_n\to 0$  \citep[e.g.,][]{kloedenplaten1992,higham2001sde,chen2024euler} (see Theorem~\ref{thm:Euler}, Appendix~\ref{sec:Euler}). In our numerics, we will generally use a constant step size, although adaptive choices are also possible \citep[e.g.,][]{sharrock2025tuning}.

\begin{figure}[t!]
    \centering
    \includegraphics[width=\linewidth]{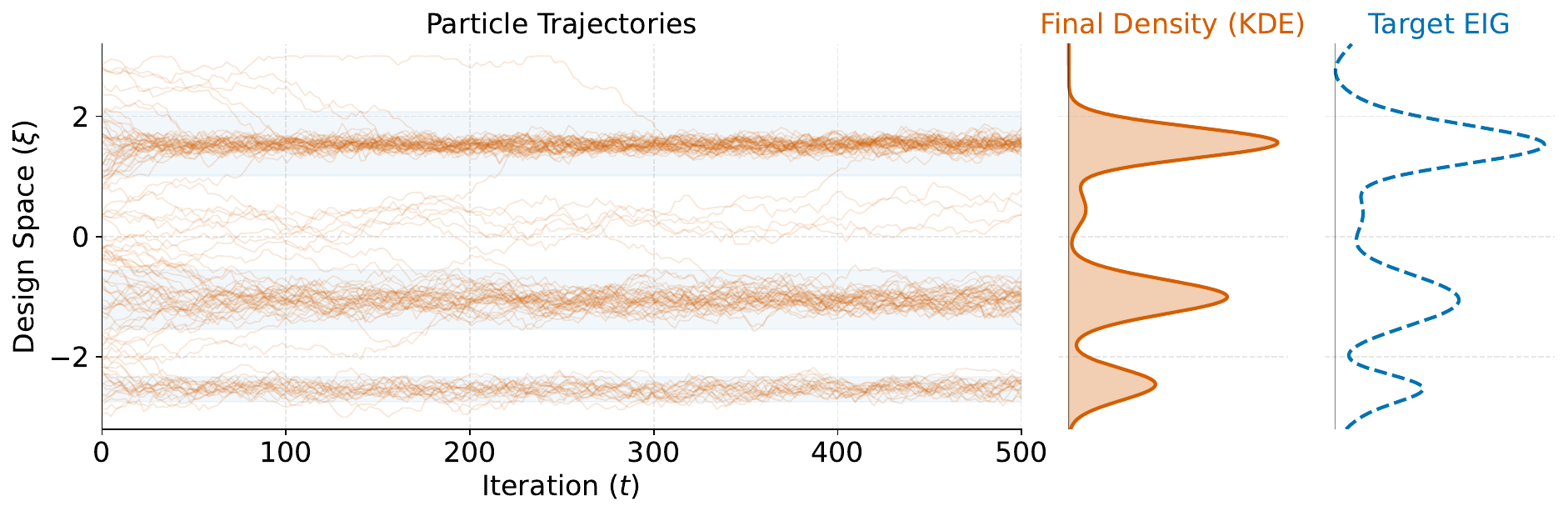}
    \caption{\textbf{The interacting particle system (IPS)}. We plot the trajectories of $N=100$ particles over $T=500$ iterations (orange), the kernel density estimate of the final particle distribution (orange), and the target expected information gain (EIG) (blue dashed).}
    \label{fig:wgf_toy}
\end{figure}

\subsubsection{The (Doubly) Stochastic Interacting Particle System}
\label{sec:method:doubly-stochastic-ips}
The IPS update defined in the previous section is still not directly implementable for two reasons. The first is interaction cost. Even if $\xi_{1:m}\mapsto G(\xi_{1:m})$ and $\xi_{1:m}\mapsto \nabla_1 G(\xi_{1:m})$ could be evaluated exactly, computing the gradient
\begin{equation}
\nabla \Phi_m(\xi;\mu_n^N)
=
\int_{\Xi^{m-1}} \nabla_1 G(\xi,\xi_{2:m})\,(\mu_n^N)^{\otimes(m-1)}(\mathrm d\xi_{2:m})
\end{equation}
requires summing over all $(m-1)$ tuples of particles, at a cost of $\mathcal{O}(N^{m-1})$ per particle. This is prohibitive for $m\geq 3$. We therefore approximate this integral by Monte Carlo over random index tuples: conditional on $(\xi_n^{j,N})_{j=1}^N$, draw $K$ i.i.d.\ tuples
$(I_{2,k},\dots,I_{m,k})\sim \mathrm{Unif}([N]^{m-1})$ and estimate
\begin{equation}
\widehat{\nabla \Phi}_m(\xi_n^{i,N};\mu_n^N)
:=
\frac{1}{K}\sum_{k=1}^K
\nabla_1 G \Big(
\xi_n^{i,N},\,\xi_n^{I_{2,k},N},\,\dots,\,\xi_n^{I_{m,k},N}
\Big).
\label{eq:phi-hat}
\end{equation}
This is conditionally unbiased for $\nabla \Phi_m(\xi_n^{i,N};\mu_n^N)$, and reduces the interaction cost to $\mathcal{O}(K)$ evaluations of $\nabla_1 G$ per particle per iteration. We also use an analogous Monte Carlo estimator for the repulsion term. In particular, conditional on the current particles, draw $K_{\mathrm{rep}}$ i.i.d.\ indices
$J_{n,\ell}^i\sim \mathrm{Unif}([N])$, independently across $i$ and $\ell$, and define
\begin{equation}
\widehat{\nabla \Psi}_r(\xi_n^{i,N};\mu_n^N)
:=
\frac{1}{K_{\mathrm{rep}}}\sum_{\ell=1}^{K_{\mathrm{rep}}}
\nabla r\big(\xi_n^{i,N}-\xi_n^{J_{n,\ell}^i,N}\big).
\label{eq:psi-grad-est}
\end{equation}
This estimator is conditionally unbiased for $\nabla\Psi_r(\xi_n^{i,N};\mu_n^N)$ and reduces the repulsion cost from $\mathcal O(N)$ to $\mathcal O(K_{\mathrm{rep}})$ per particle per iteration. 

The second reason is intractable gradients: in general, $\nabla_{1} G(\xi_{1:m})=\nabla_{1}\mathrm{EIG}_m(\xi_{1:m})$ is defined via a (nested) expectation, and does not admit a closed form. Thus, $\nabla_1 G(\xi_{1:m})$ is typically approximated via a (nested) Monte Carlo estimator. Accordingly, we assume access to a (possibly biased) stochastic gradient oracle
$\smash{\widehat{\nabla_1 G}(\xi_{1:m};U)}$, based on auxiliary randomness $U$, such that
\begin{equation}
\mathbb{E}\big[\widehat{\nabla_1 G}(\xi_{1:m};U)\big]
=
\nabla_1 G(\xi_{1:m}) + b(\xi_{1:m}),
\qquad
\mathbb{E}\big[\|\widehat{\nabla_1 G}(\xi_{1:m};U)\|^2\big]<\infty,
\label{eq:stoch-gradG}
\end{equation}
where $b(\xi_{1:m})$ denotes the (algorithm-dependent) bias. 
We can then replace each occurrence of $\nabla_1 G$ in the interaction estimator above by the estimator $\smash{\widehat{\nabla_1 G}}$. 

\begin{remark}
In practice, this estimator is typically constructed via Monte Carlo simulation under the joint model $\smash{(\theta,y_{1:m}) \sim \pi(\theta)\,\pi_{\xi_{1:m}}(\cdot\mid\theta)}$, together with an approximation of an evidence ratio or a posterior expectation. This yields a (typically biased) nested Monte Carlo estimator \citep[e.g.,][]{rainforth2018nesting}. Alternatives include multi-level Monte Carlo, variational approximations, SMC-based estimators, or differentiable density-ratio or mutual-information estimators in likelihood-free settings. The outer IPS mechanism is agnostic to the choice of inner approximation, provided appropriate moment bounds and, where relevant, bias-control conditions hold.
\end{remark}

Combining \eqref{eq:phi-hat} and  \eqref{eq:stoch-gradG}, we can estimate the interaction drift via tuple subsampling, together with an inner Monte Carlo estimator. In particular, conditional on the current IPS $(\xi_n^{j,N})_{j=1}^N$, draw $K$ i.i.d.\ index tuples $(I_{2,k},\dots,I_{m,k})\sim \mathrm{Unif}([N]^{m-1})$ for $k=1,\dots,K$, and let $(U_{n,k}^i)$ be i.i.d.\ auxiliary random variables, independent of the tuple draws and the Gaussian noises. We then define
\begin{equation}
\widehat{\nabla \Phi}_m(\xi_n^{i,N};\mu_n^N)
:=
\frac{1}{K}\sum_{k=1}^K
\widehat{\nabla_1 G}(
\xi_n^{i,N},\,\xi_n^{I_{2,k},N},\,\dots,\,\xi_n^{I_{m,k},N};\,U_{n,k}^{i}
).
\label{eq:phi-grad-est}
\end{equation}
The parameter $K$ controls the variance of the interaction estimator and, in practice, can be taken very small (e.g., $K=1$) for scalability. 

Finally, replacing $\nabla\Phi_m(\xi_n^{i,N};\mu_n^N)$ by $\widehat{\nabla\Phi}_m(\xi_n^{i,N};\mu_n^N)$ and $\nabla\Psi_r(\xi_n^{i,N};\mu_n^N)$ by $\widehat{\nabla\Psi}_r(\xi_n^{i,N};\mu_n^N)$ in the Euler--Maruyama discretisation, cf. \eqref{eq:ips-em}, yields a fully implementable, doubly stochastic algorithm, viz
\begin{equation}
\xi_{n+1}^{i,N}
=
\xi_n^{i,N}
+
\gamma_n\left(
m\,\widehat{\nabla\Phi}_m(\xi_n^{i,N};\mu_n^N)
- \eta\,\widehat{\nabla\Psi}_r(\xi_n^{i,N}; \mu_n^N)
+
\lambda\,\nabla\log\rho(\xi_n^{i,N})
\right)
+
\sqrt{2\lambda\gamma_n}\,Z_n^{i},
\label{eq:algorithm-stochastic}
\end{equation}
where $\mu_n^N=\frac1N\sum_{j=1}^N\delta_{\xi_n^{j,N}}$ and $(Z_n^i)_{n\geq 0}^{i\in[N]}$ are i.i.d.\ standard Gaussians in $\mathbb{R}^d$. This update is \emph{doubly stochastic}: it uses Monte Carlo both to approximate the mean-field interaction (via the tuples) and to approximate the intractable gradient $\nabla_1 G$ (via the auxiliary variables $U_{n,k}^i$).

\begin{remark} 
If the gradient estimator is unbiased, this algorithm can be analysed using stochastic approximation techniques under suitable stability conditions. Indeed, our end-to-end error bounds in the appendices are proved for this case (see Theorem~\ref{thm:stoch}, Appendix~\ref{sec:stoch} and Theorem~\ref{thm:end2end}, Appendix~\ref{sec:end2end}). In several of our numerical experiments (see Section~\ref{sec:numerics}), we instead use a biased, fixed-budget nested Monte Carlo estimator. These experiments should therefore be interpreted as empirical evaluations of the practical, biased-gradient extension of the method, rather than as direct numerical confirmations of the unbiased-oracle theory. Establishing analogous non-asymptotic guarantees for biased inner estimators remains an important open problem.
\end{remark}

\begin{remark}
The analysis above is stated for absolutely continuous laws on open subsets of $\mathbb{R}^d$. Several numerical examples in Section~\ref{sec:numerics} impose box constraints, periodicity, or ordering/minimum-gap constraints. In those cases, we implement projected, wrapped, or repaired variants of the particle updates. These should be viewed as practical approximations of the idealised unconstrained dynamics developed here, rather than as direct discretisations covered by the present theory. A rigorous treatment on constrained domains or manifolds (e.g., via reflected diffusions, projected Wasserstein flows, or suitable reparameterisations) is left to future work.
\end{remark}

\subsection{Summary}
\label{sec:implemented-algorithms}

In Section~\ref{sec:method:objective}, we introduced four entropy-regularised objectives: the joint batch objective on $\Xi^m$, a mean-field product approximation with coordinate-wise marginals, an i.i.d.\ product approximation, and an i.i.d.\ approximation with explicit repulsion. In Section~\ref{sec:method:optimisation}, we showed in detail how to minimise one of these objectives (the i.i.d.\ product approximation) using a doubly stochastic approximation of the space-time discretisation of the corresponding WGF. The other three objectives give rise to analogous optimisation schemes, with differences arising from the choice of variational family and, consequently, the form of the interaction drift. Below, we provide explicit formulations for each of these schemes.

\paragraph{WGF (Joint).}
This method evolves a full batch $\boldsymbol\xi_n^r=(\xi_{1,n}^r,\dots,\xi_{m,n}^r)\in\Xi^m$ directly in the batch space, and corresponds to i.i.d.\ copies of stochastic gradient Langevin dynamics w.r.t. the solution of the joint entropy-regularised batch objective $\mathcal F_m^{\lambda,\mathrm{joint}}$; see Remark~\ref{remark:joint-wgf}. Its update is
\begin{equation}
\label{eq:joint_batch_langevin}
\boldsymbol{\xi}^{r}_{n+1}
=
\boldsymbol{\xi}^{r}_n
+
\gamma_n\Big(
\widehat{\nabla G}(\boldsymbol{\xi}^{r}_n)
+
\lambda_m \,\nabla \log \rho_m(\boldsymbol{\xi}^{r}_n)
\Big)
+
\sqrt{2\lambda_m\gamma_n}\,Z^{r}_n,
\end{equation}
where $\widehat{\nabla G}$ denotes a stochastic gradient estimator of the full batch utility $G=\mathrm{EIG}_m$, and $Z_n^r\sim\mathcal N(0,I_{md})$. The deterministic drift ascends the batch EIG, while the Gaussian perturbation induces exploration at temperature $\lambda_m$.

\paragraph{WGF (MF).}
Under the mean-field restriction $\nu_m=\mu_1\otimes\cdots\otimes\mu_m$, we evolve $m$ coupled particle systems, one for each coordinate marginal; see Remark~\ref{remark:mean-field-wgf}. For $b\in\{1,\dots,m\}$ and $i\in[N_{\mathrm{mf}}]$, the update reads
\begin{equation}
\label{eq:mf_ips_update_main}
\xi_{b,n+1}^{i,N_{\mathrm{mf}}}
=
\xi_{b,n}^{i,N_{\mathrm{mf}}}
+
\gamma_n\Big(
\widehat{\nabla \Phi}_b(\xi_{b,n}^{i,N_{\mathrm{mf}}};\mu_{-b,n}^{N_{\mathrm{mf}}})
+
\lambda_m\,\nabla\log\rho(\xi_{b,n}^{i,N_{\mathrm{mf}}})
\Big)
+
\sqrt{2\lambda_m\gamma_n}\,Z_{b,n}^{i},
\end{equation}
where $\smash{\widehat{\nabla \Phi}_b}$ denotes the coordinate-wise analogue of $\smash{\widehat{\nabla \Phi}_m}$, $\smash{\mu_{-b,n}^{N_{\mathrm{mf}
}}}$ denotes the empirical product law of all coordinates except the $b^{\text{th}}$, and $\smash{Z_{b,n}^{i}\sim\mathcal N(0,I_d)}$. This formulation retains within-batch heterogeneity by allowing different marginals to specialise to different regions of the design space.

\paragraph{WGF (MF-IID).}
Under the i.i.d.\ restriction $\nu_m=\mu^{\otimes m}$, we evolve a single particle system for the shared design law $\mu$; see Section~\ref{sec:method:optimisation}. The corresponding update is
\begin{equation}
\label{eq:iid_ips_update_main}
\xi_{n+1}^{i,N}
=
\xi_n^{i,N}
+
\gamma_n\Big(
m\,\widehat{\nabla \Phi}_m(\xi_n^{i,N};\mu_n^N)
+
\lambda\,\nabla\log\rho(\xi_n^{i,N})
\Big)
+
\sqrt{2\lambda\gamma_n}\,Z_n^{i},
\end{equation}
where $\mu_n^N=\frac1N\sum_{j=1}^N\delta_{\xi_n^{j,N}}$ is the empirical measure and $Z_n^i\sim\mathcal N(0,I_d)$. This is the simplest structured approximation and is particularly attractive when the batch size $m$ is large.

\paragraph{WGF (MF-IID-REP).}
To encourage within-batch diversity, we augment the i.i.d.\ formulation with an explicit repulsive interaction term; see Section~\ref{sec:method:optimisation}. The resulting update becomes
\begin{equation}
\label{eq:iid_rep_ips_update_main}
\xi_{n+1}^{i,N}
=
\xi_n^{i,N}
+
\gamma_n\Big(
m\,\widehat{\nabla \Phi}_m(\xi_n^{i,N};\mu_n^N)
- \eta\,\widehat{\nabla\Psi}_r(\xi_n^{i,N}; \mu_n^N)
+
\lambda\,\nabla\log\rho(\xi_n^{i,N})
\Big)
+
\sqrt{2\lambda\gamma_n}\,Z_n^{i},
\end{equation}
where $\eta\geq 0$ controls the strength of repulsion. Setting $\eta=0$ recovers \texttt{WGF (MF-IID)}. The additional repulsive drift discourages particle collapse and promotes more diverse candidate batches.

\begin{table}[t!]
\centering
\footnotesize
\setlength{\tabcolsep}{5pt}
\begin{tabular}{p{1.8cm} p{2.5cm} p{2.5cm} p{2.5cm} p{2.4cm} p{2.6cm}}
\hline
\textbf{Method} & \textbf{Design-law ansatz} & \textbf{State evolved} & \textbf{Main advantage} & \textbf{Main limitation} & \textbf{Computational scaling} \\
\hline
\texttt{WGF (Joint)}
& $\nu_m\in\mathcal P(\Xi^m)$
& full batch $\boldsymbol\xi\in\Xi^m$
& most expressive; directly targets joint batch law
& scales poorly with batch size $m$
& $\mathcal O(R\,C_{\nabla G})$ \newline per iteration \\[5mm]

\texttt{WGF (MF)}
& $\nu_m=\mu_1\otimes{\cdot}\mkern1mu{\cdot}\mkern1mu{\cdot}\otimes\mu_m$
& $m$ particle systems $(\xi_{b}^{i,N})^{i\in[N_{\mathrm{mf}
}]}_{b\in[m]}$
& allows within-batch specialisation via distinct marginals
& must learn $m$ coupled marginals
& $\mathcal O(m N_{\mathrm{mf}
} K\,C_{\nabla_1 G})$ \newline per iteration \\[5mm]

\texttt{WGF (MF-IID)}
& $\nu_m=\mu^{\otimes m}$
& one particle system $(\xi^{i,N})^{i\in[N]}$
& simplest and most scalable approximation
& does not explicitly encourage within-batch diversity
& $\mathcal O(N K\,C_{\nabla_1 G})$ \newline per iteration \\[5mm]

\texttt{WGF (MF-IID-REP)}
& $\nu_m=\mu^{\otimes m}$
& one particle system $(\xi^{i,N})^{i\in[N]}$
& scalable and explicitly promotes diversity
& introduces an additional repulsion hyperparameter
& $\mathcal O (N(K\,C_{\nabla_1 G}+K_{\mathrm{rep}}\,C_{\nabla r}))$ \newline per iteration \\[.3mm]
\hline
\end{tabular}
\caption{\textbf{Summary of the four optimisation schemes considered in this paper.} The methods are ordered from the most expressive but least scalable formulation to the most scalable structured approximation. We write $R$ for the number of joint chains, $N_{\mathrm{mf}}$ for the number of particles per-coordinate for the mean-field approximation, $N$ for the number of particles in the i.i.d.\ approximation, $K$ for the number of sampled partner-tuples used to approximate the interaction term, $K_{\mathrm{rep}}$ for the number of repulsion samples, $C_{\nabla G}$ for the cost of one batch-gradient evaluation, $C_{\nabla_1 G}$ for the cost of one partial-gradient evaluation, and $C_{\nabla r}$ for the cost of one repulsion-gradient evaluation.}
\label{tab:algorithm-summary}
\end{table}

\paragraph{Discussion.}
Together, these four algorithms define a natural progression from the full batch-space formulation to increasingly structured and scalable approximations. The joint method is the most expressive, but also the most computationally demanding, since it evolves directly on $\Xi^m$. The mean-field formulation reduces this burden by evolving $m$ coupled marginal laws on $\Xi$, while still allowing different batch coordinates to specialise to different regions of the design space. The i.i.d.\ formulation goes one step further by learning a single shared design law, thereby providing the greatest scalability, at the cost of reduced expressiveness. Finally, the repulsive i.i.d.\ variant partially restores diversity at the level of the learned design law. In our numerical experiments, we compare all four of these algorithms in order to assess the trade-off between expressiveness, computational tractability, and the ability to discover diverse high-utility batches in examples of practical interest. To ensure a consistent regularisation across all algorithms, we fix $\lambda_m:=\frac{\lambda}{m}$ throughout (see Section~\ref{sec:method:batch-mf}).

\section{Numerical Experiments}
\label{sec:numerics}

We now present numerical experiments to illustrate the performance of our proposed methods. We also include comparisons to natural pointwise optimisation baselines.  We begin with two experiments in the single-design setting $m=1$, where the various structured design-law formulations coincide and the resulting dynamics reduce to Langevin sampling from an entropy-regularised design law on $\Xi$. These experiments therefore primarily isolate the effect of entropic regularisation and injected diffusion on exploration in multimodal landscapes, rather than the structured batch-law approximations that are specific to $m>1$. We then turn to genuine batch-design problems with $m>1$, where the different formulations introduced in Section~\ref{sec:method:objective} no longer coincide. This allows us to compare the full joint, mean-field, i.i.d., and repulsive i.i.d.\ approaches, and to assess the trade-off between expressiveness, scalability, and within-batch diversity. We perform all experiments on a MacBook Pro 16'' (2021) with an Apple M1 Pro chip and 16GB of RAM.

\subsection{1D Benchmark with Multimodal Observation Model}
\label{sec:exp1-toy}

\paragraph{Experimental Details} We first consider a one--dimensional BOED problem with scalar design variable $\xi \in [\xi_{\min},\xi_{\max}] \subset \mathbb{R}$, with batch size $m=1$. The parameter of interest is a binary latent variable \(\theta \in \{-1,+1\}\) with symmetric prior $\pi(\theta=+1)=\pi(\theta=-1)=\tfrac12$.  Given a design \(\xi\) and parameter \(\theta\), observations $y\in\mathbb{R}$ are generated according to the Gaussian likelihood $y \mid \theta,\xi \;\sim\; \mathcal{N}\!\big(\theta\,a(\xi),\,\sigma_y^2\big)$, where $\sigma_y>0$ is fixed and \(a(\xi)\ge 0\) is a design-dependent {sensitivity} (signal amplitude). To induce a non-convex objective with multiple separated optima, we construct \(a(\xi)\) as a positive baseline plus a mixture of localised Gaussian bumps:
\begin{equation}
a(\xi) = d_0 + \sum_{i=1}^4 d_i \,\exp\big(-\tfrac{(\xi-c_i)^2}{0.4}\big),
\label{eq:toy_sensitivity}
\end{equation}
where the amplitudes $(d_0,\dots,d_4)=(0.2,0.4,0.8,1.4,0.9)$, and the centres $c_1,\dots,c_4$ are evenly spaced across $[\xi_{\min},\xi_{\max}]:=[-3.5,3.5]$. This construction yields distinct regions with different sensitivities, resulting in a multimodal EIG landscape. In this case, \(\mathrm{EIG}(\xi)\) admits a one-dimensional integral representation, which we can evaluate accurately via a Gauss--Hermite quadrature scheme on a dense grid $\{\xi_i\}_{i=1}^{n}\subset[\xi_{\min},\xi_{\max}]$. This yields an effectively exact EIG landscape. We can then obtain $\nabla_\xi \mathrm{EIG}(\xi)$ via a finite difference scheme on the same grid, and use linear interpolation during optimisation. In our experiments, we compare gradient ascent (GA) with multiple restarts against i.i.d.\ copies of our proposed WGF. We provide further experimental details in Appendix~\ref{sec:add-experimental-details-1d-benchmark}.

\paragraph{Results.} Our first set of results is shown in Figure \ref{fig:3}. The top row (Fig. \ref{fig:3a} - Fig. \ref{fig:3c}) shows that gradient ascent is strongly basin-dependent in this multimodal landscape.  In particular, Figure \ref{fig:3a} indicates that the empirical distribution of the final designs  exhibits mode collapse, placing substantial mass at the suboptimal local maximiser. Figure \ref{fig:3b} shows that, given a poorly chosen initialisation, trajectories contract rapidly toward the local maximiser. Figure \ref{fig:3c}, which plots the maps $\xi_0\mapsto \xi_T$, further illustrates this point, revealing four attractor regions corresponding to the local and global optima.

\begin{figure}[t!]
\vspace{-2mm}
  \centering
{\small\bfseries Gradient Ascent} \\[1em]
  \begin{subfigure}[b]{0.31\textwidth}
    \centering
    \includegraphics[width=\linewidth]{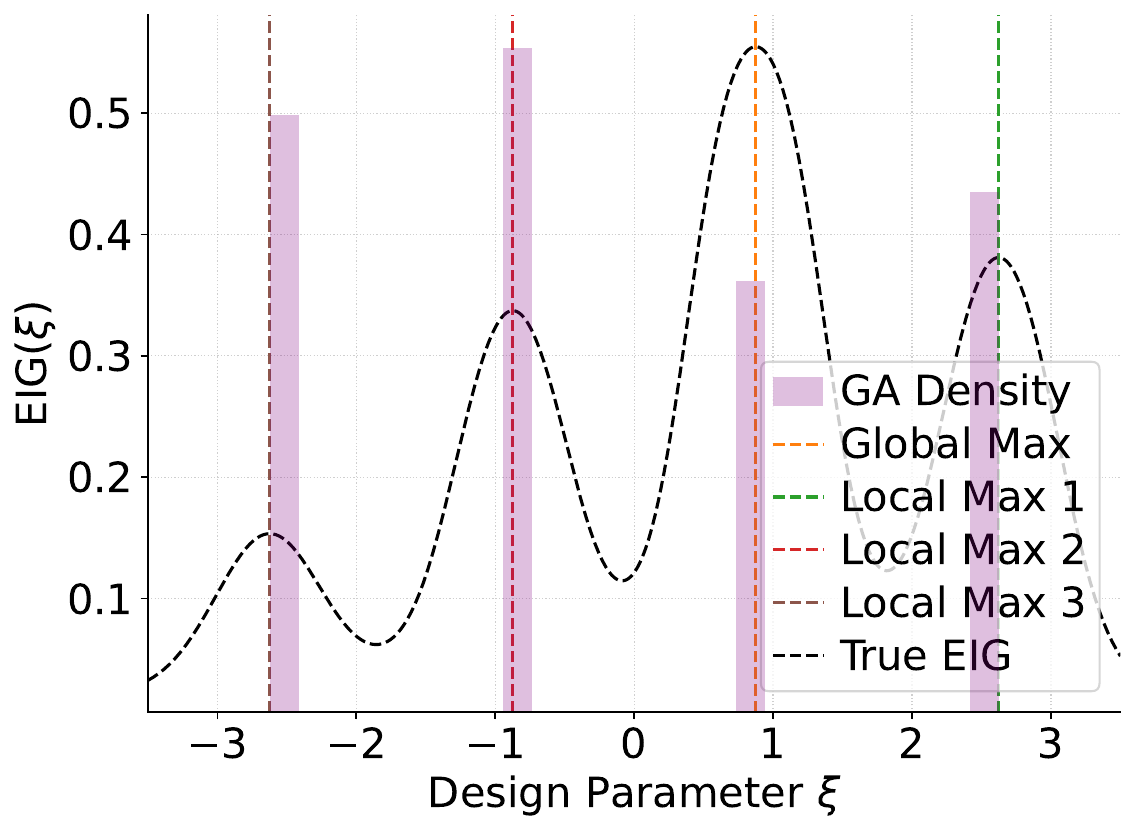}
    \caption{Final Distribution.}
    \label{fig:3a}
  \end{subfigure}
  \hfill
  \begin{subfigure}[b]{0.31\textwidth}
    \centering
    \includegraphics[width=\linewidth]{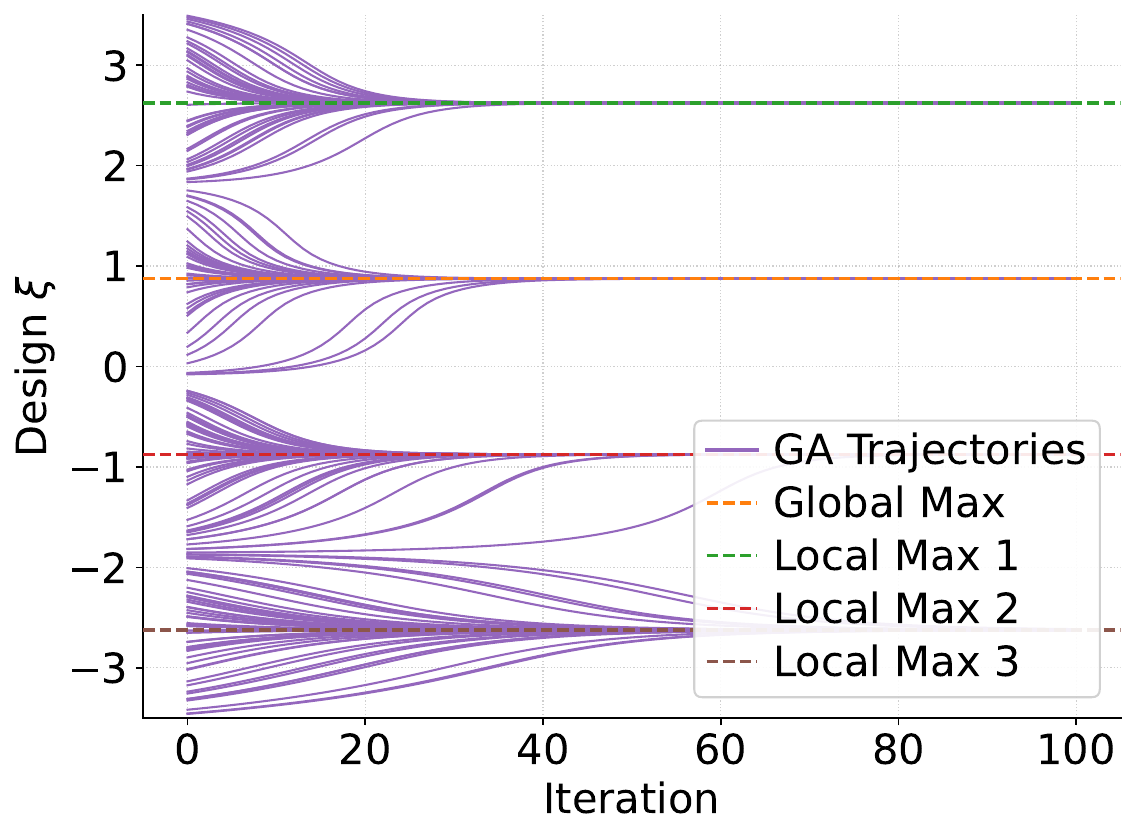}
    \caption{Trajectories  (Uniform Init).}
    \label{fig:3b}
  \end{subfigure}\hfill
  \begin{subfigure}[b]{0.31\textwidth}
    \centering
    \includegraphics[width=\linewidth]{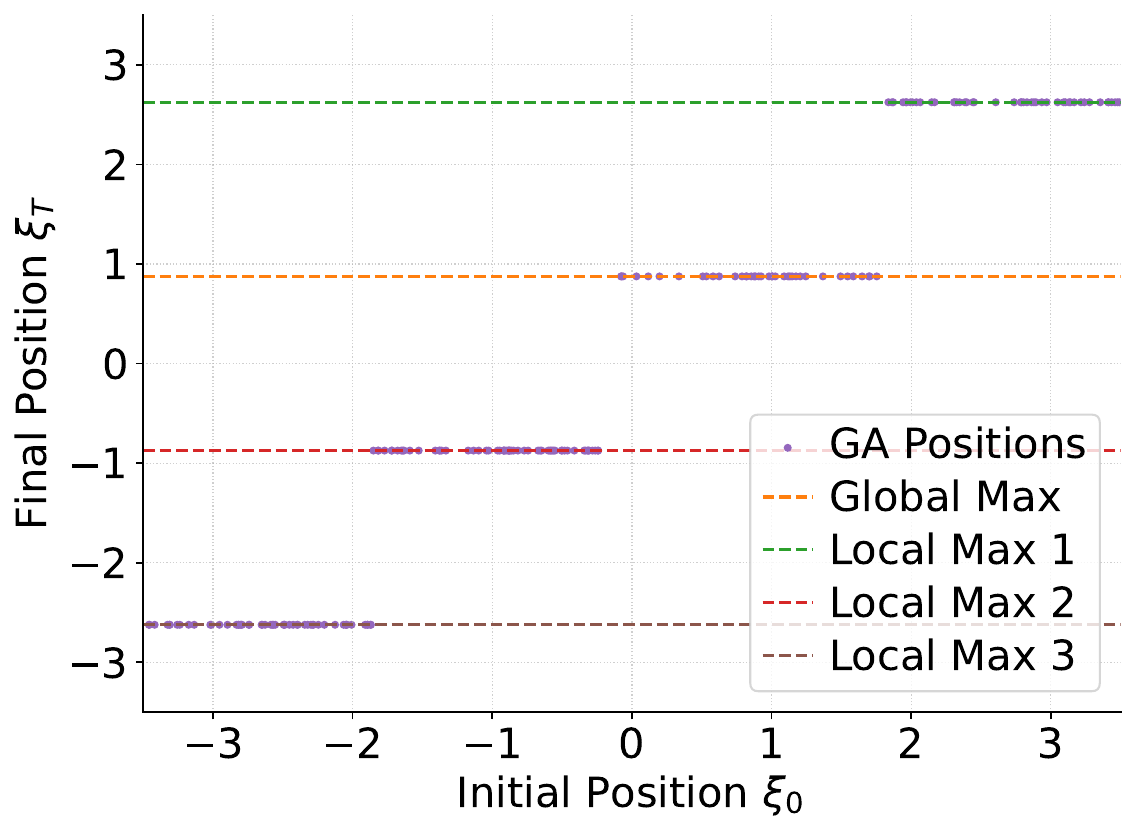}
    \caption{Final vs Initial Positions.}
    \label{fig:3c}
  \end{subfigure}
  ~\\[3mm]
{\small\bfseries Wasserstein Gradient Flow}\\[1em]
  \begin{subfigure}[b]{0.31\textwidth}
    \centering
    \includegraphics[width=\linewidth]{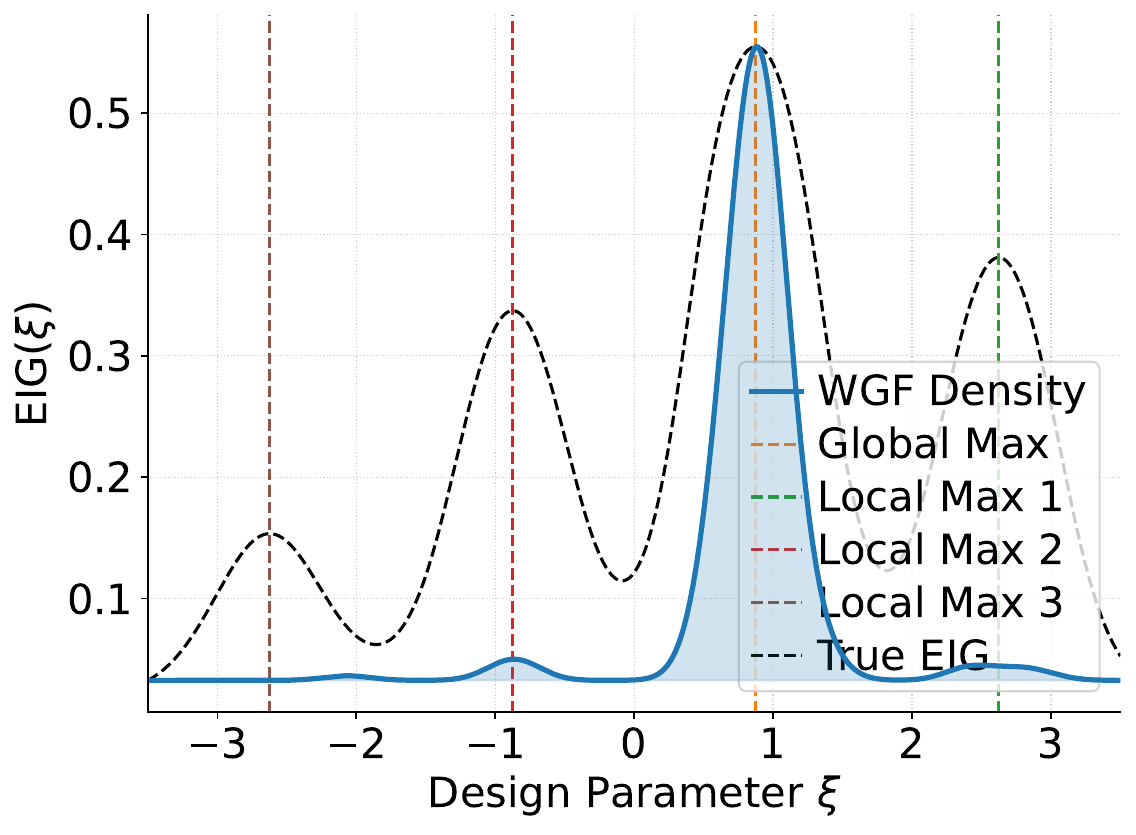}
    \caption{Final Distribution.}
    \label{fig:3d}
  \end{subfigure}
  \hfill
  \begin{subfigure}[b]{0.31\textwidth}
    \centering
    \includegraphics[width=\linewidth]{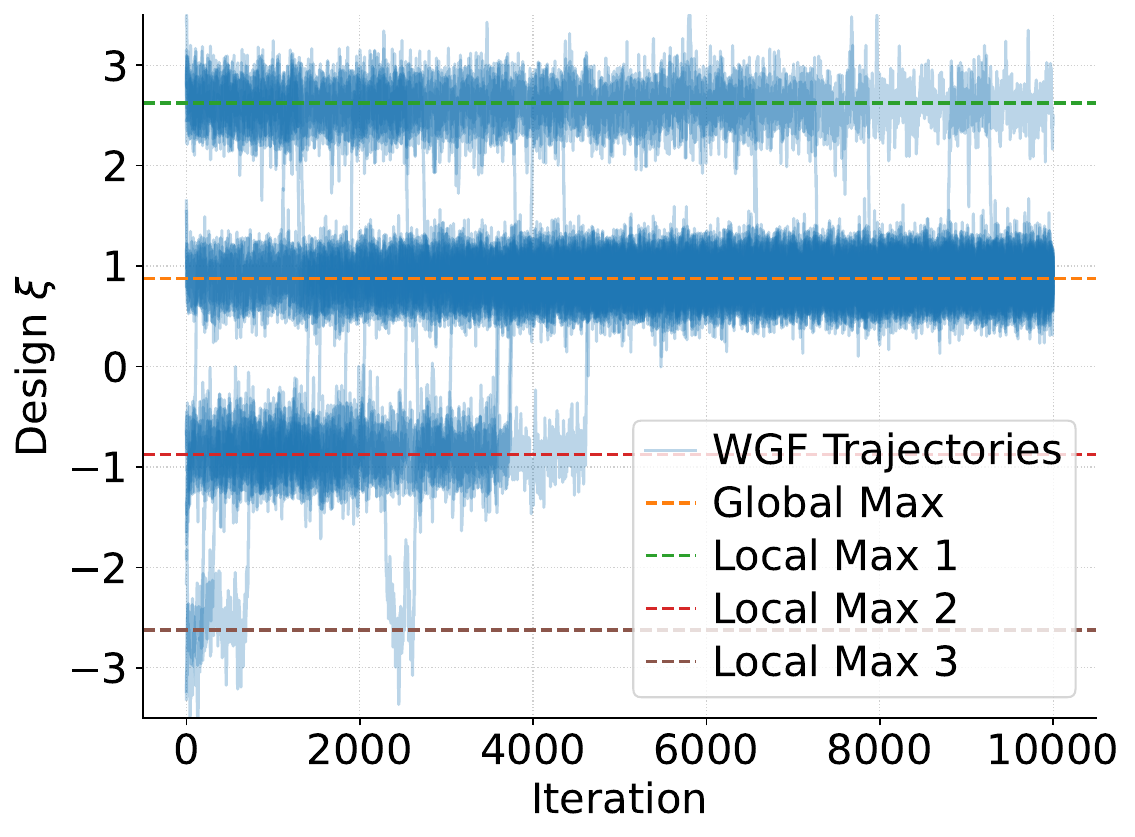}
    \caption{Trajectories (Uniform Init).}
    \label{fig:3e}
  \end{subfigure}\hfill
  \begin{subfigure}[b]{0.31\textwidth}
    \centering
    \includegraphics[width=\linewidth]{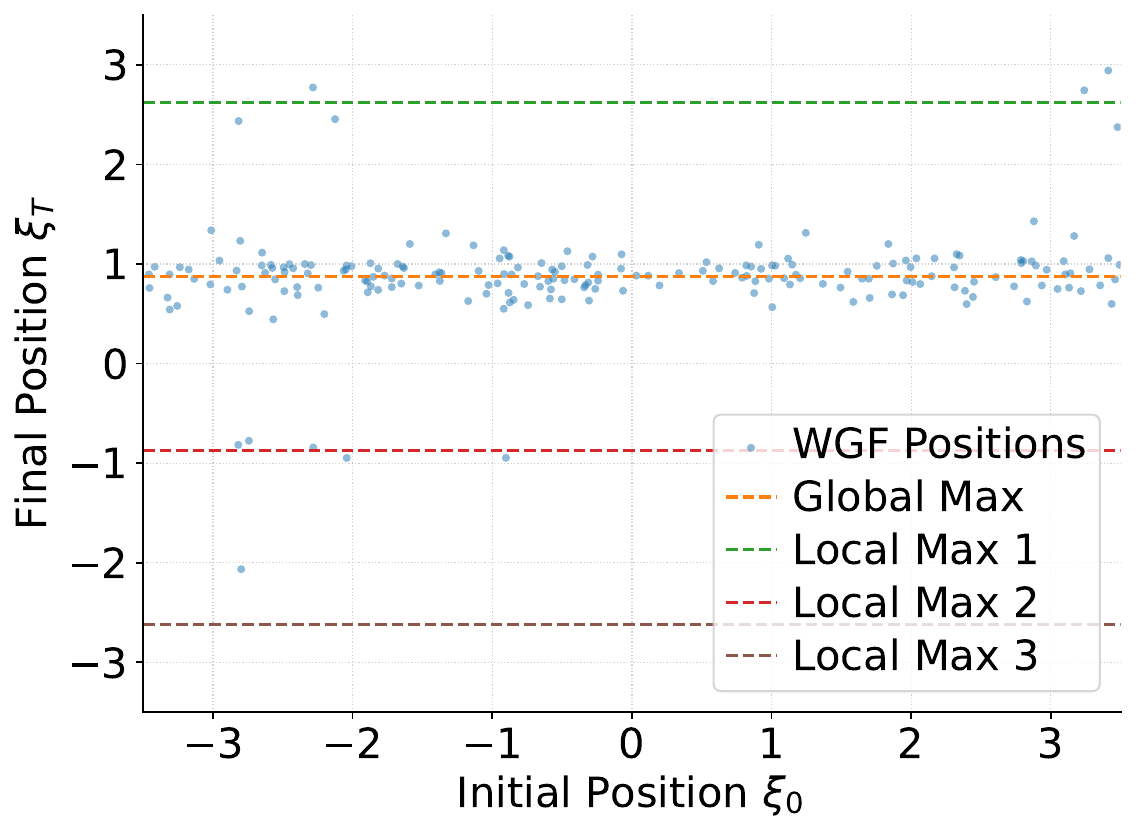}
    \caption{Final vs Initial Positions.}
    \label{fig:3f}
  \end{subfigure}\hfill
  \caption{\textbf{Comparison of pointwise optimisation and distributional optimisation for a one-dimensional experimental design problem.} The top row (Fig. \ref{fig:3a} - Fig. \ref{fig:3c}) shows the results of directly optimising the EIG using GA (purple); the bottom row (Fig. \ref{fig:3d} - Fig. \ref{fig:3f}) shows the results of optimising the entropy-regularised objective using the WGF (blue). To be specific, Fig. \ref{fig:3a} and Fig. \ref{fig:3d} show the empirical distribution of the final designs generated by the two approaches, given a uniform initialisation over the interval $[-3.5,3.5]$. Fig. \ref{fig:3b} and Fig. \ref{fig:3e} show the corresponding trajectories; while Fig. \ref{fig:3c} and Fig. \ref{fig:3f} show the mapping from initial designs $\xi_0$ to final designs $\xi_T$. In this example, gradient ascent converges to the local maximisers associated with its basins of attraction (Fig. \ref{fig:3a} - Fig. \ref{fig:3c}). Conversely, the additional noise allows the WGF to discover the global maximum (Fig. \ref{fig:3d} - Fig. \ref{fig:3f}).}
  \label{fig:3}
  \vspace{-2mm}
\end{figure}

The bottom row (Fig. \ref{fig:3d} - Fig. \ref{fig:3f}) demonstrates how distributional optimisation can mitigate these pathologies. Figure \ref{fig:3d} shows that the WGF results in a final design distribution that concentrates around the global optimum, while retaining sufficient spread to capture residual multimodality. Figure \ref{fig:3e} shows how the WGF maintains exploration through the injected noise, allowing particles initialised near local modes to escape and discover the global maximiser. Figure \ref{fig:3f} confirms this observation, illustrating that the injected noise yields markedly weaker dependence of $\xi_T$ on $\xi_0$, with most particles converging to the global optimum. Together, these results provide evidence that optimising over a design distribution, rather than a single design point, provides a principled mechanism for improved robustness to initialisation and better mode coverage. 

\begin{figure}[t!]
  \centering
  \begin{subfigure}[b]{0.31\textwidth}
    \centering
    \includegraphics[width=\linewidth]{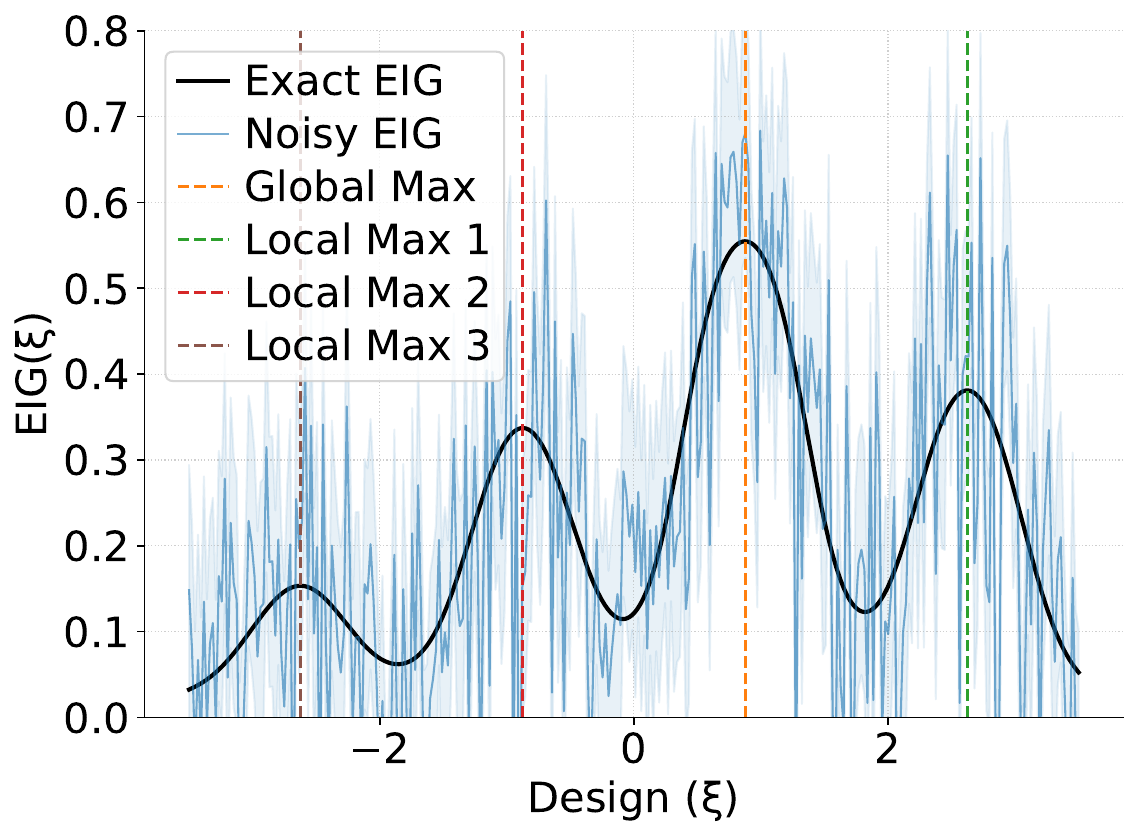}
    \caption{Noisy EIG Landscape.}
    \label{fig:4a}
  \end{subfigure}\hfill
  \begin{subfigure}[b]{0.31\textwidth}
    \centering
    \includegraphics[width=\linewidth]{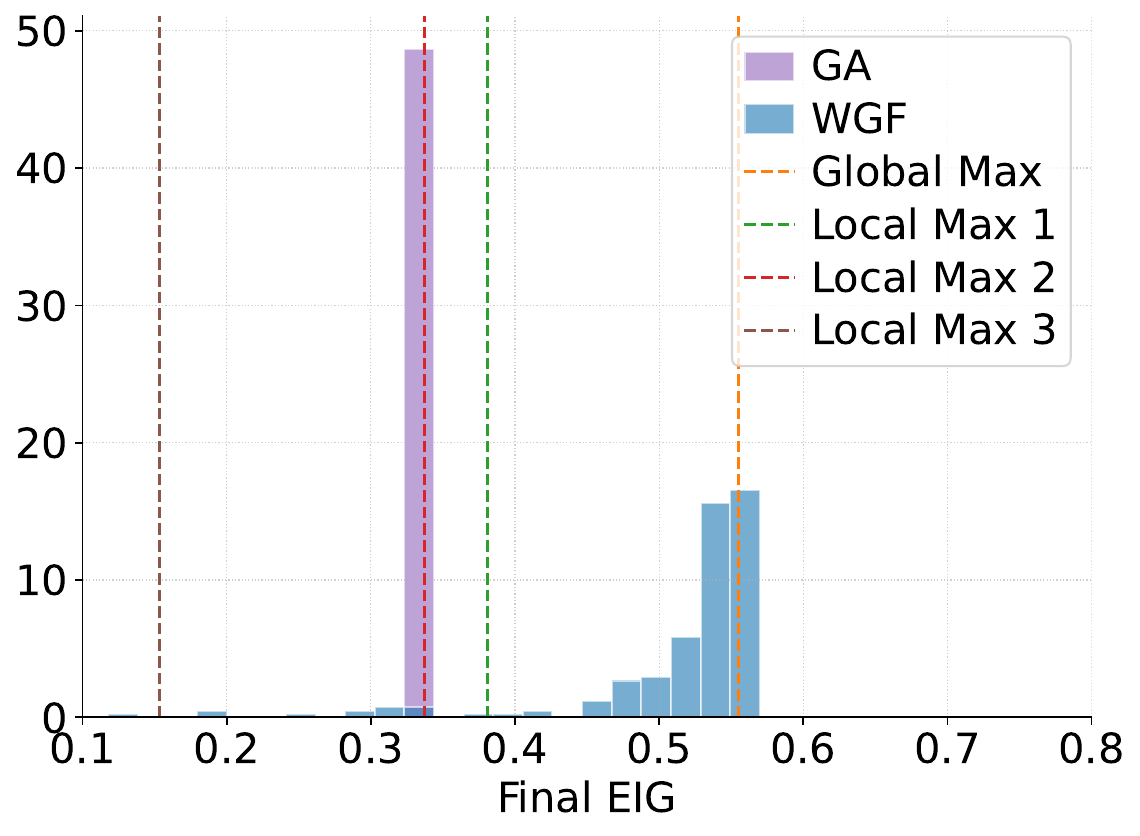}
    \caption{Final Obtained EIG.}
    \label{fig:4b}
  \end{subfigure}\hfill
  \begin{subfigure}[b]{0.31\textwidth}
    \centering
    \includegraphics[width=\linewidth]{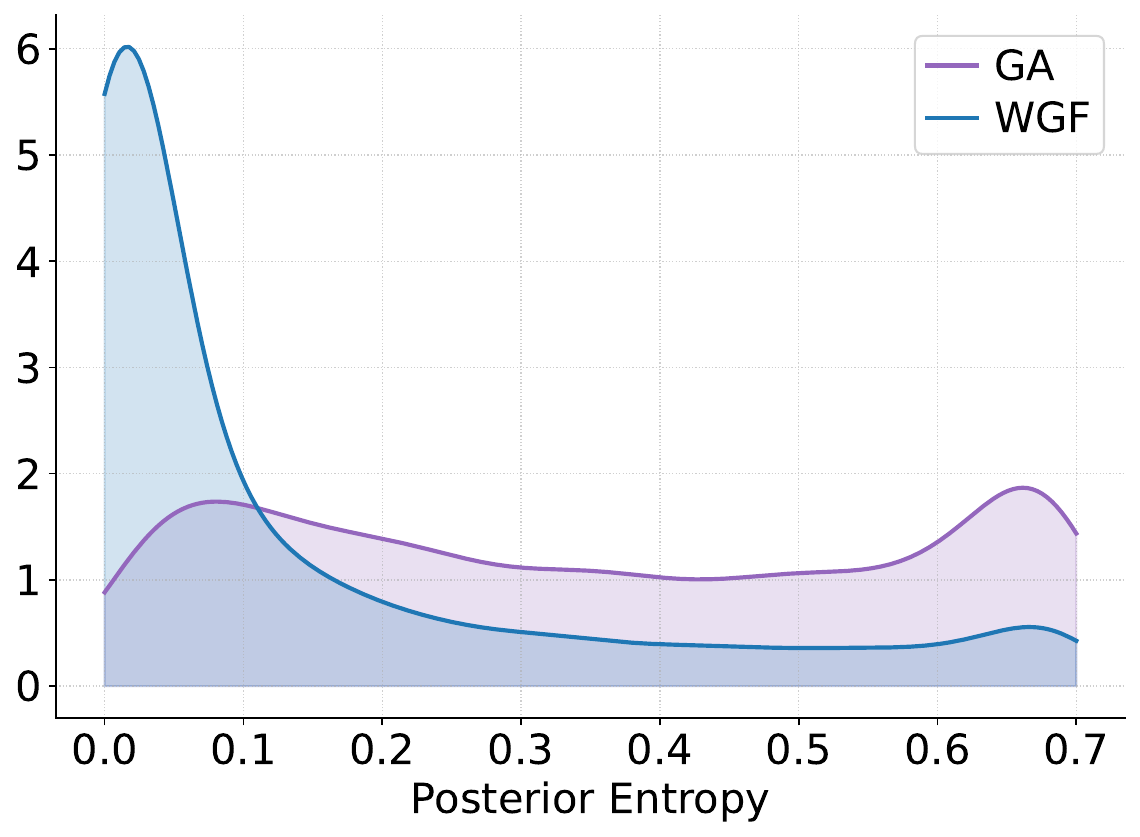}
    \caption{Posterior Entropy.}
    \label{fig:4c}
  \end{subfigure}
  \caption{\textbf{Comparison of stochastic pointwise optimisation and stochastic distributional optimisation for a one-dimensional experimental design problem.} Fig. \ref{fig:4a} displays the stochastic estimate of the EIG landscape. Fig. \ref{fig:4b} reports a histogram of the final EIG values obtained via SGA trajectories (purple) and WGF particles (blue), after initialisation near one of the local maxima. Fig. \ref{fig:4c} illustrates the posterior entropy associated with the ``best'' result obtained via stochastic gradient ascent (purple) or via the WGF (blue), as measured by the EIG, after initialisation at this same local maximum.}
  \label{fig:4}
\vspace{-2mm}
\end{figure}

In Figure \ref{fig:4}, we provide additional results, now only assuming access to a stochastic estimate of the EIG (and its gradient). Figure \ref{fig:4a} overlays the exact EIG (black) with a Monte Carlo estimate (blue), illustrating the high-variance landscape associated with the stochastic estimate of the objective. Meanwhile, Figures \ref{fig:4b} - \ref{fig:4c} compare the two methods over repeated runs, this time assuming a sub-optimal initialisation (i.e., near to one of the local maxima). Specifically, Figure \ref{fig:4b} reports the empirical distribution of the $\mathrm{EIG}$ values achieved by both methods: stochastic gradient ascent concentrates a substantial fraction of runs near the local maximum, whereas the interacting particle system more reliably attains values close to the global maximum. This reflects greater robustness to the initialisation, even in the presence of stochasticity. Figure \ref{fig:4c} provides a ``downstream'' validation of both methods, plotting the distribution of the posterior entropy obtained under designs produced by each method. In this case, lower is better, corresponding to more informative experiments. Consistent with the $\mathrm{EIG}$ outcomes, the WGF yields systematically lower posterior entropies than SGA, demonstrating that distributional optimisation not only improves the nominal objective, but also results in more informative experiments in terms of posterior uncertainty reduction.

\subsection{2D Non-Linear Sensor Placement with Multimodal Priors}
\label{sec:exp52}
\paragraph{Experimental Details.} We next consider a two-dimensional sensor-placement problem, again with $m=1$. In this case the scalar observation is generated according to
\begin{equation}
y = f(\theta,\xi) + \varepsilon, 
\qquad 
\varepsilon \sim \mathcal{N}(0,\sigma_y^2),
\end{equation}
where $\theta \in \mathbb{R}^2$ represents an unknown target location, and $\xi \in \Xi \subset \mathbb{R}^2$ a sensor location. We take $\Xi=[-5,5]^2$, and use the smooth radial response $\smash{f(\theta,\xi) = \exp\!\left(-\frac{\|\theta-\xi\|^2}{2\ell^2}\right)}$, with  $\ell=0.5$ and $\sigma_y=0.1$. The prior is a two-component Gaussian mixture, namely, 
\begin{equation}
\pi(\theta)
=
w\,\mathcal{N}(\theta;\mu_{\mathrm{major}},\sigma_{\mathrm{major}}^2 I_2)
+
(1-w)\,\mathcal{N}(\theta;\mu_{\mathrm{minor}},\sigma_{\mathrm{minor}}^2 I_2),
\end{equation}
with $w=0.6$, $\mu_{\mathrm{major}}=(2.2,0)$, $\mu_{\mathrm{minor}}=(-1.5,0)$, $\sigma_{\mathrm{major}}=0.2$, and $\sigma_{\mathrm{minor}}=0.5$. Even for $m=1$, this construction induces a deliberately non-convex EIG landscape with separated informative regions associated with the two prior modes. In this case, we must approximate the EIG and its gradient. We do so using a nested Monte Carlo estimator: an {outer} loop draws $(\theta,y)$ pairs from $\pi(\theta)p(y\mid \theta,\xi)$, and an {inner} loop approximates the marginal likelihood $p(y\mid \xi)$ using Monte Carlo integration under the prior $\pi(\theta)$. We obtain gradients by differentiating through the estimator, using the analytic derivative of $f(\theta,\xi)$. Similar to the last experiment, we compare SGA with multiple restarts against i.i.d.\ copies of our proposed WGF. Now, rather than reporting the final iterate, we report the best design {visited} in the final portion of the run, as selected via a common best-of-$n_{\mathrm{eval}}$ extraction procedure. Further experimental details are provided in Appendix~\ref{sec:additional-experimental-details-2d-non-linear}.

\paragraph{Results.} In Figure \ref{fig:5}, we display the designs selected by both methods, for three different initialisation regimes. Across all three initialisations, the WGF is able to discover both the local and global maxima of the EIG. This is not true for SGA. In particular, given an initialisation around the local mode, SGA never discovers the global mode (Fig. \ref{fig:5a}). Meanwhile, given an initialisation far from either mode in an uninformative region of the design space, SGA never discovers either mode (Fig. \ref{fig:5c}). 

\begin{figure}[t!]
  \centering
  \begin{subfigure}[b]{0.325\textwidth}
    \centering
    \includegraphics[width=\linewidth]{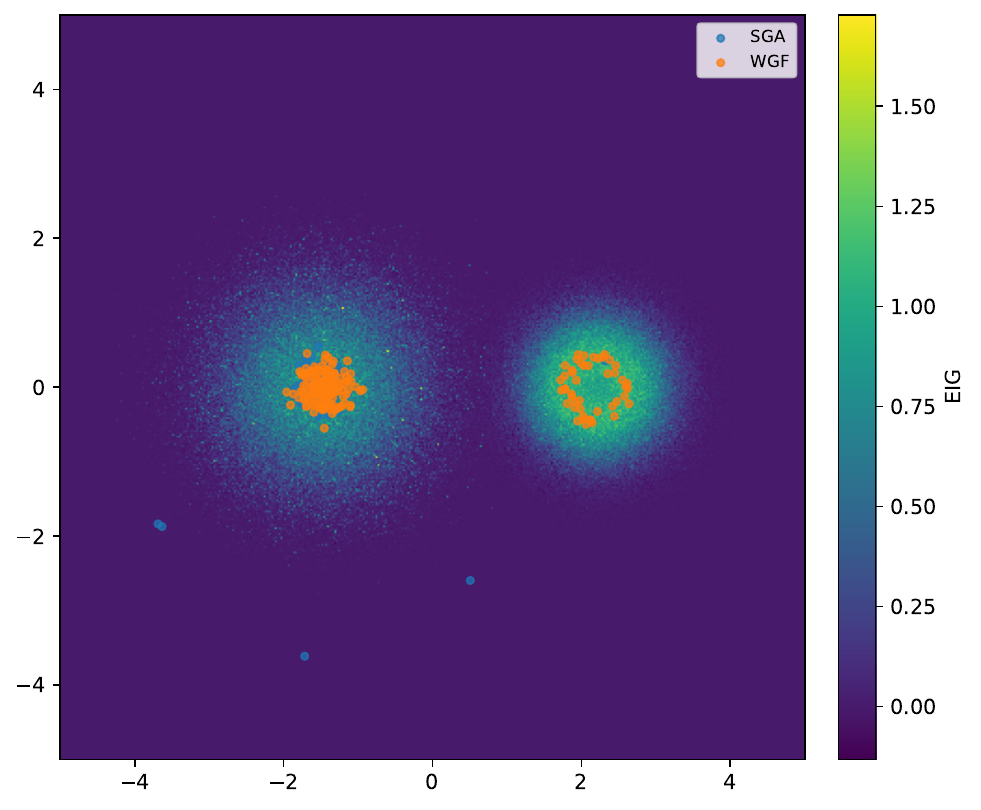}
    \caption{Local Mode Initialisation.}
    \label{fig:5a}
  \end{subfigure}\hfill
  \begin{subfigure}[b]{0.325\textwidth}
    \centering
    \includegraphics[width=\linewidth]{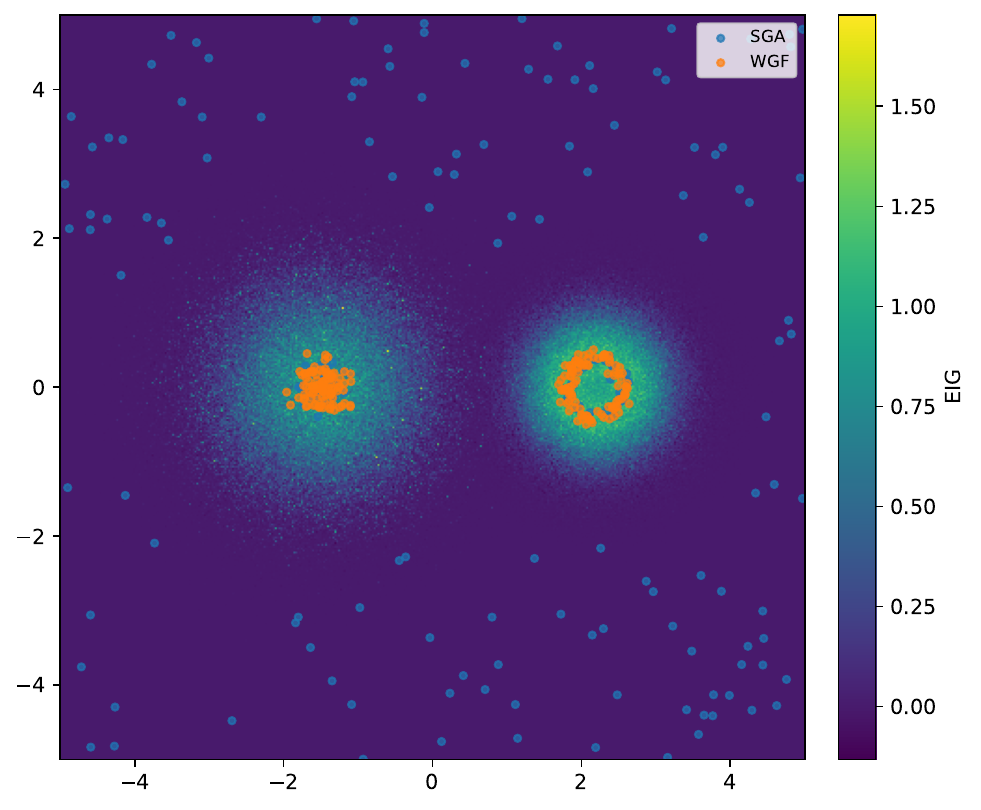}
    \caption{Global Initialisation.}
    \label{fig:5b}
  \end{subfigure}\hfill
  \begin{subfigure}[b]{0.325\textwidth}
    \centering
    \includegraphics[width=\linewidth]{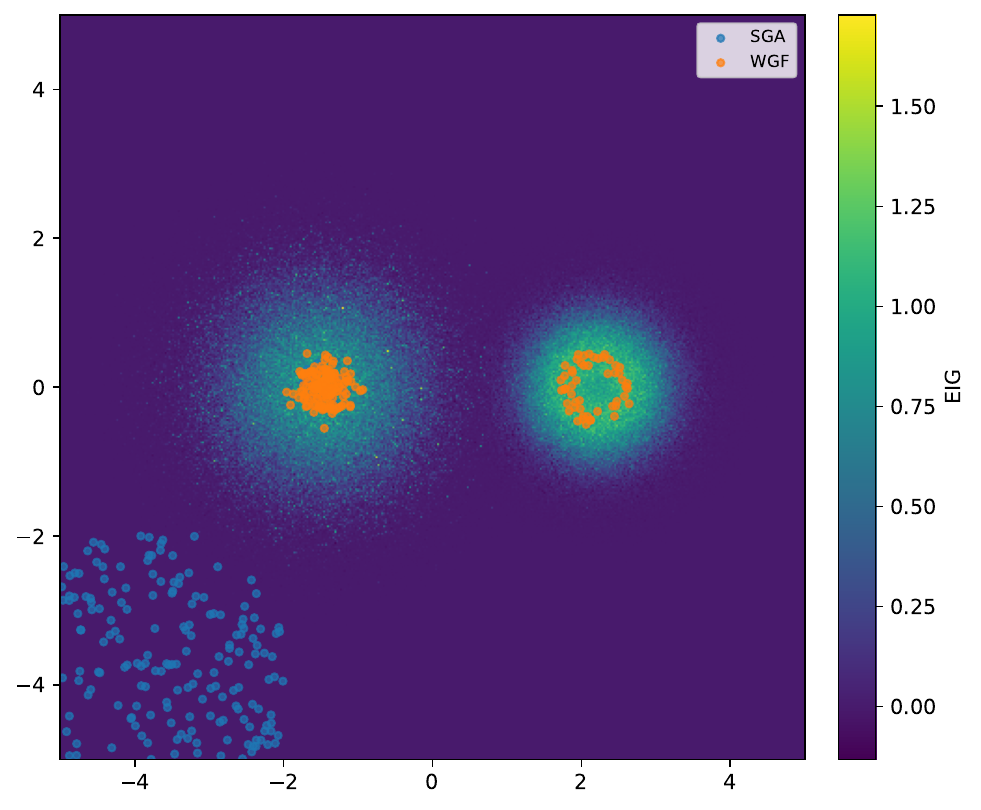}
    \caption{Misinformed Initialisation.}
    \label{fig:5c}
  \end{subfigure}
  \caption{\textbf{Comparison of the designs obtained using stochastic pointwise optimisation (blue) and stochastic distributional optimisation (orange) for a two-dimensional non-linear sensor placement problem, for three different initialisations.} Fig. \ref{fig:5a} displays the designs obtained using SGA with multiple restarts (blue) or i.i.d. copies of the WGF (orange), given a uniform initialisation around the minor mode. Fig. \ref{fig:5b} displays the corresponding results given a uniform initialisation over the entire domain $\Xi = [-5,5]^2$. Fig. \ref{fig:5c} displays the corresponding results given a uniform initialisation far from either mode.}
  \label{fig:5}
\vspace{-2mm}
\end{figure}

In Figure \ref{fig:6} we report the quality of these designs as measured by a high-fidelity EIG estimate. The results confirm our previous observations, with the WGF consistently achieving higher values of the EIG than SGA. This is particularly important when the initialisation is chosen poorly (e.g., Fig. \ref{fig:6a}, Fig. \ref{fig:6c}), in which case SGA entirely fails to discover the global maximum. These effects are expected to become even more pronounced in higher dimensions, where choosing a ``good'' (e.g., space filling) initialisation becomes exponentially more difficult.

\begin{figure}[b!]
  \centering
  \begin{subfigure}[b]{0.325\textwidth}
    \centering
    \includegraphics[width=\linewidth]{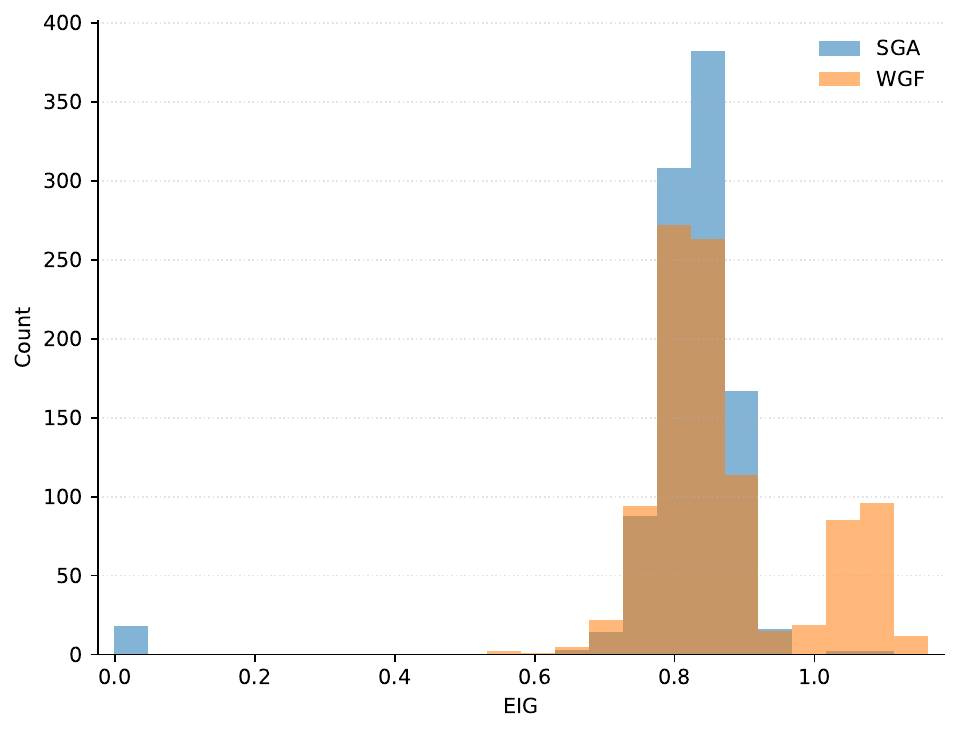}
    \caption{Local Mode Initialisation.}
    \label{fig:6a}
  \end{subfigure}\hfill
  \begin{subfigure}[b]{0.325\textwidth}
    \centering
    \includegraphics[width=\linewidth]{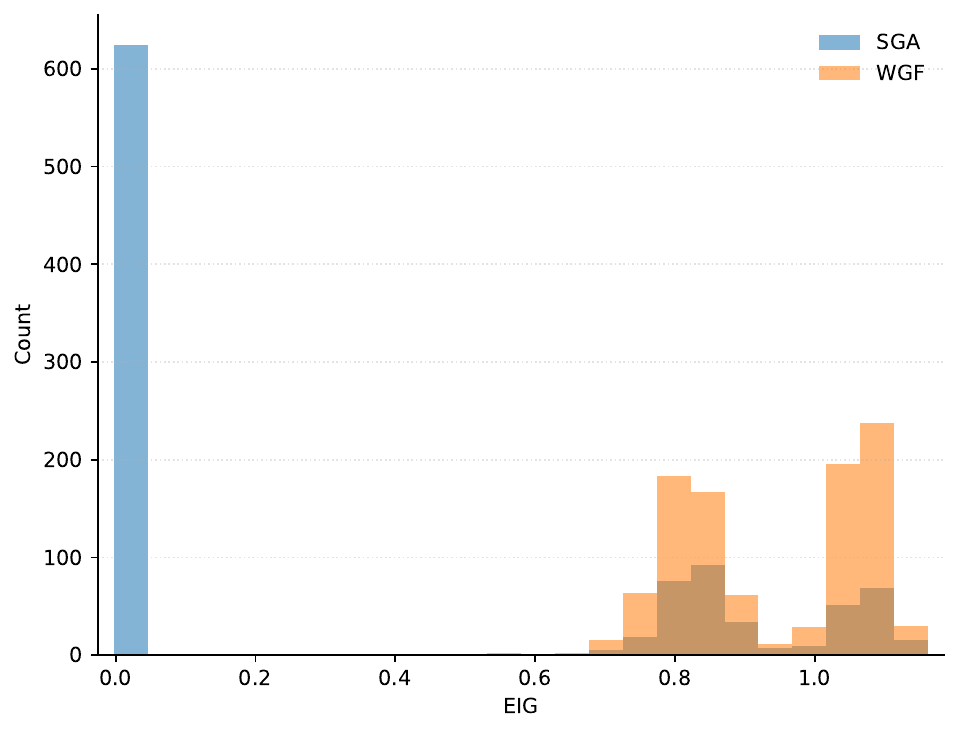}
    \caption{Global Initialisation.}
    \label{fig:6b}
  \end{subfigure}\hfill
  \begin{subfigure}[b]{0.325\textwidth}
    \centering
    \includegraphics[width=\linewidth]{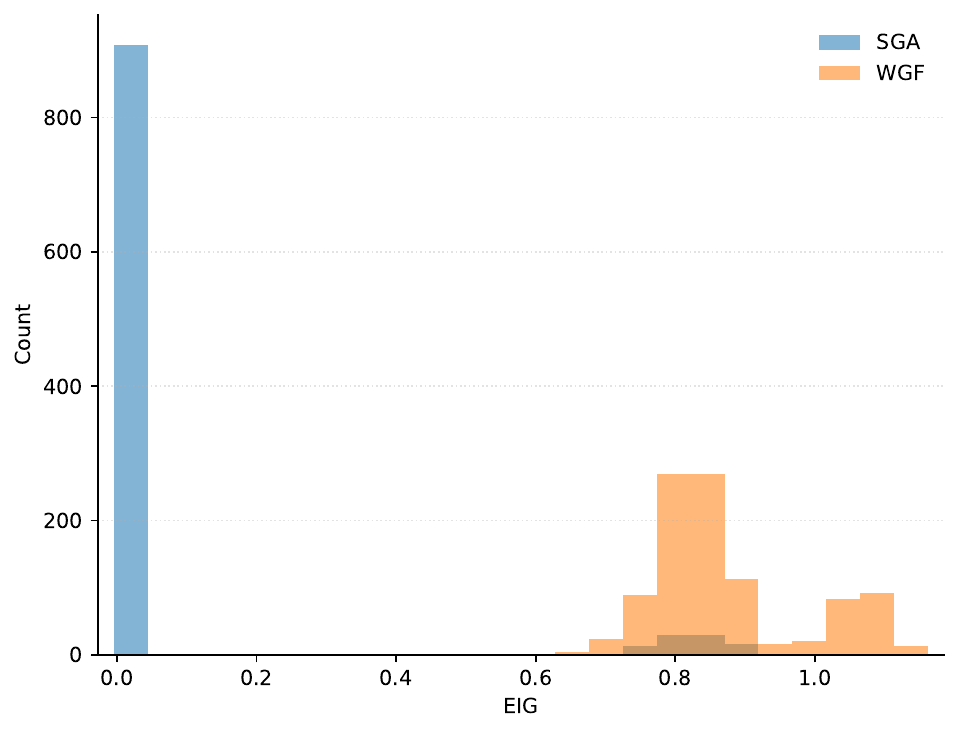}
    \caption{Misinformed Initialisation.}
    \label{fig:6c}
  \end{subfigure}
  \caption{\textbf{A comparison of the EIG (higher is better) achieved by the designs obtained using stochastic pointwise optimisation (blue) and stochastic distributional optimisation (orange) for a two-dimensional non-linear sensor placement problem, for three different initialisations.} In all three cases, the WGF outputs designs corresponding to higher values of the EIG.}
  \label{fig:6}
  \vspace{-2mm}
\end{figure}

 Finally, in Figure \ref{fig:7}, we report a downstream uncertainty proxy to verify that higher EIG corresponds to improved inferential precision. For $\theta^\star\sim\pi$, we draw $y^\star\sim p(\cdot\mid\theta^\star,\xi)$ and approximate the posterior using importance weighting of $M=5000$ prior samples $\{\theta^{(m)}\}_{m=1}^M$ with weights $w^{(m)}\propto p(y^\star\mid\theta^{(m)},\xi)$, normalised so that $\sum_{m=1}^M w^{(m)}=1$. We report $\mathrm{Tr}(\mathrm{Cov}(\theta\mid y^\star,\xi))$ computed from the weighted sample (lower is better). Consistent with the EIG comparison, the WGF yields systematically lower posterior uncertainty than SGA across all three initialisation regimes.

\begin{figure}[t!]
  \centering
  \begin{subfigure}[b]{0.325\textwidth}
    \centering
    \includegraphics[width=\linewidth]{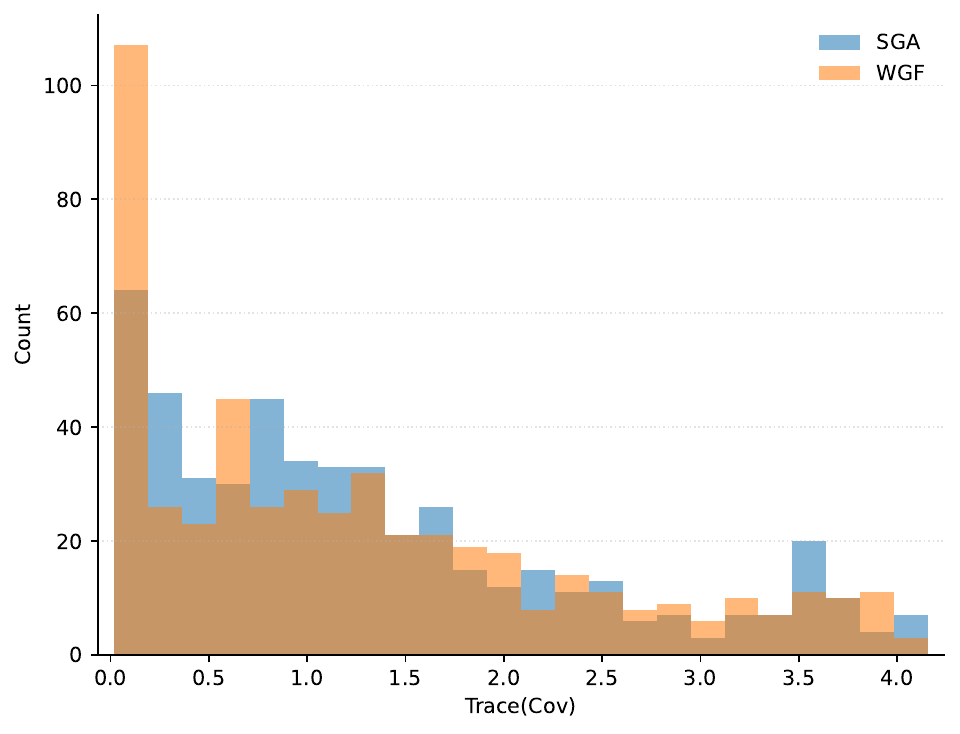}
    \caption{Local Mode Initialisation.}
    \label{fig:7a}
  \end{subfigure}\hfill
  \begin{subfigure}[b]{0.325\textwidth}
    \centering
    \includegraphics[width=\linewidth]{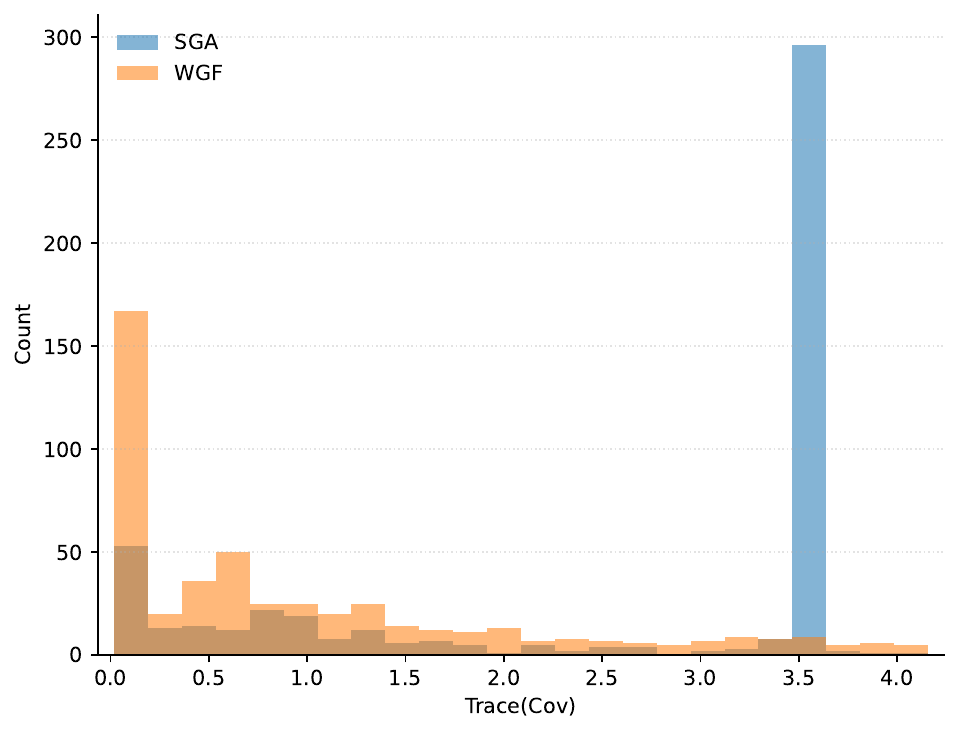}
    \caption{Global Initialisation.}
    \label{fig:7b}
  \end{subfigure}\hfill
  \begin{subfigure}[b]{0.325\textwidth}
    \centering
    \includegraphics[width=\linewidth]{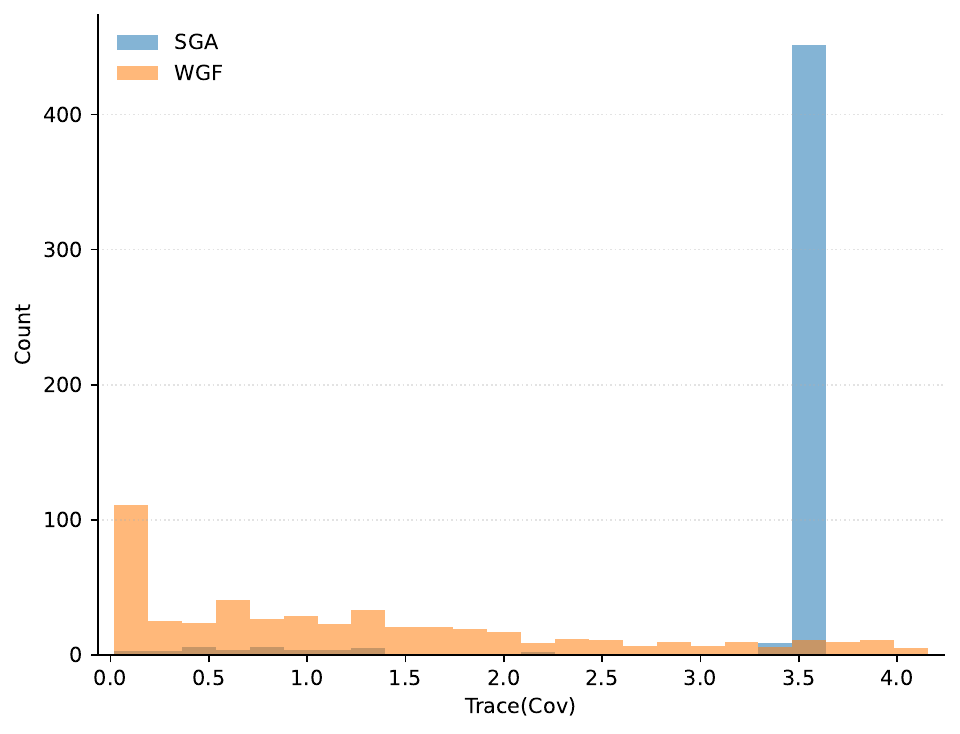}
    \caption{Misinformed Initialisation.}
    \label{fig:7c}
  \end{subfigure}
  \caption{\textbf{A comparison of the posterior uncertainty (lower is better) of the designs obtained using stochastic pointwise optimisation (blue) and stochastic distributional optimisation (orange) for a two-dimensional non-linear sensor placement problem, for three different initialisations.} In all three cases, the WGF outputs designs corresponding to lower values of the posterior uncertainty.}
  \label{fig:7}
\end{figure}

\subsection{Batch Design on the Torus}
\label{sec:toy_batch_design}

\paragraph{Experimental Details.}
We next consider a non-convex batch BOED problem in which each experiment corresponds to observing a noisy scalar response at a location on the circle. The design is therefore a batch of $m$ angles $\boldsymbol{\xi}_m=(\xi_1,\dots,\xi_m)\in[-\pi,\pi)^m$, with angles identified modulo $2\pi$. The parameter of interest is $\theta\in\mathbb{R}^2$ with prior $\theta\sim \mathcal{N}(0,I_2)$. Given a batch design $\boldsymbol{\xi}_m$ and a parameter $\theta$, we observe independent responses according to $y_j \mid \theta, \xi_j \;\sim\; \mathcal{N}\!\big(h(\xi_j)^\top \theta,\;\sigma_y^2\big)$,  where the forward map $h(\xi)\in\mathbb{R}^2$ is  defined in terms of a multimodal, periodic sensitivity profile
\begin{equation}
\label{eq:h_def}
h(\xi) \;=\; a(\xi)\begin{bmatrix}\cos(\xi)\\ \sin(\xi)\end{bmatrix},\qquad
a(\xi)
\;=\;
d_0 + \sum_{k=1}^{4} d_k \exp\!\Big(-\tfrac{1}{2}\big(\tfrac{d(\xi,c_k)}{\ell_0}\big)^2\Big),
\end{equation}
where $d(\xi,c)$ denotes the wrapped (i.e., shortest signed) circular difference in $[-\pi,\pi)$, the centres are $c=(c_1,\dots,c_4)=(0,\pi/2,-\pi/2,\pi)$, the amplitudes are $(d_0,\dots,d_4)=(0.4,2.0,1.9,1.6,1.0)$, and the width parameter is $\ell_0=\ell=0.3$. The noise level is $\sigma_y=0.35$. For this linear Gaussian model, the posterior is Gaussian with $\smash{\Sigma_{\text{post}}(\boldsymbol{\xi}) =(I_2 + \tfrac{1}{\sigma_y^2}H(\boldsymbol{\xi})^\top H(\boldsymbol{\xi}))^{-1}}$,  where $H(\boldsymbol{\xi})\in\mathbb{R}^{m\times 2}$ stacks the rows of $h(\xi_j)^\top$. The EIG thus admits a closed form,  and we can compute $\nabla_{\xi_j}\,\mathrm{EIG}(\boldsymbol\xi)$ exactly. In any case, this construction yields an objective with multiple separated optima and strong within-batch dependencies: repeating a highly informative angle is typically redundant for $m>1$.

We report results for the four methods summarised in Section~\ref{sec:implemented-algorithms}. As in the previous experiment, we report deterministic batches obtained by a common best-of-$n_{\mathrm{eval}}$ extraction rule. We also report results for two additional baselines. The first is the repeated best single design: $\smash{\boldsymbol{\xi}_{\text{rep}}=(\xi_\star^{(1)},\dots,\xi_\star^{(1)})}$, where  $\smash{\xi_\star^{(1)}=\argmax_{\xi\in[-\pi,\pi)}\mathrm{EIG}(\xi)}$. This design is intentionally naive, ignoring correlation among batch elements, but can be competitive when the objective is dominated by a single highly informative region. The second is gradient ascent (GA) on $\mathrm{EIG}(\boldsymbol{\xi})$ in the $m$-dimensional batch space, with multiple random restarts. We provide further experimental details in Appendix~\ref{sec:additional-experimental-details-batch-design}.

\paragraph{Results.} Figure~\ref{fig:9} reports $\mathrm{EIG}_m$ versus batch size under local and global initialisation.
In both regimes, repeating the best single design is increasingly suboptimal as $m$ grows, reflecting strong within-batch redundancy. Under a {global initialisation} (Fig.~\ref{fig:9b}), the remaining methods perform similarly, consistent with a good initial coverage of $\Xi$. On the other hand, under a {local initialisation} (Fig.~\ref{fig:9a}), GA is worse than the WGF-based approaches, for all values of $m$. This is consistent with the multimodal objective: trajectories initialised in local basins tend to remain trapped, with the resulting batches failing to exploit alternative informative regions. In contrast, the WGF-based methods are substantially more robust. Among these, we observe a consistent ordering: \texttt{WGF (Joint)} (green) is outperformed by \texttt{WGF (MF)} (red), \texttt{WGF (MF-IID)} (purple) attains further improvements, and \texttt{WGF (MF-IID-REP)} (brown) performs best overall. 

\begin{figure}[t!]
\vspace{-2mm}
    \centering
    \begin{subfigure}[b]{0.475\textwidth}
    \includegraphics[width=\linewidth]{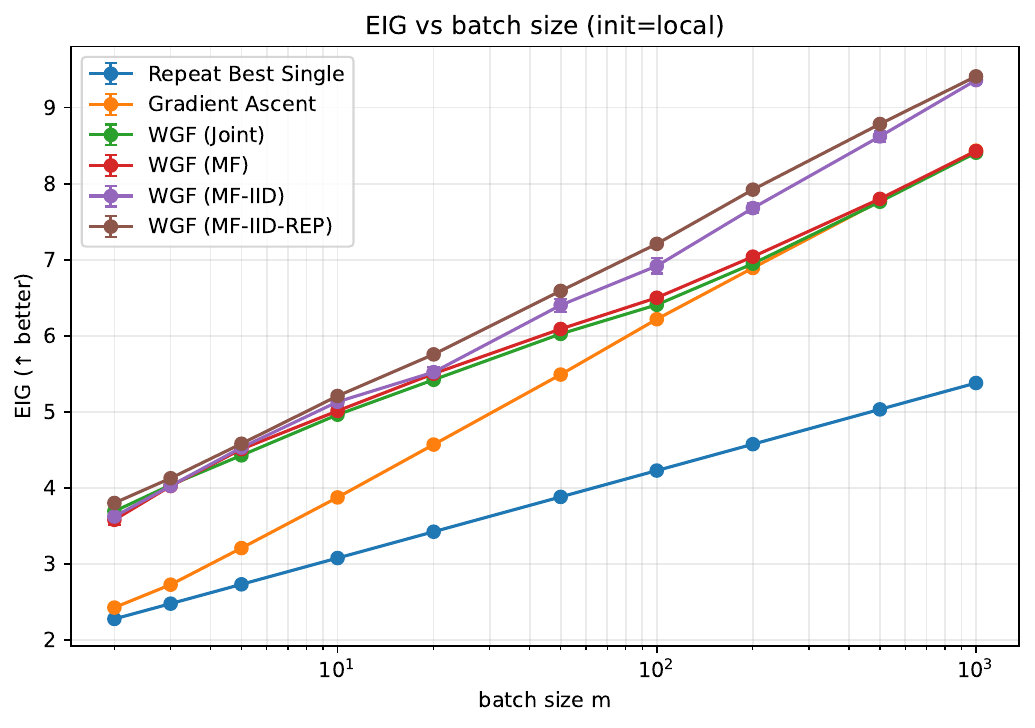}
    \caption{Local Initialisation.}
    \label{fig:9a}
    \end{subfigure}\hfill
    \begin{subfigure}[b]{0.475\textwidth}
    \includegraphics[width=\linewidth]{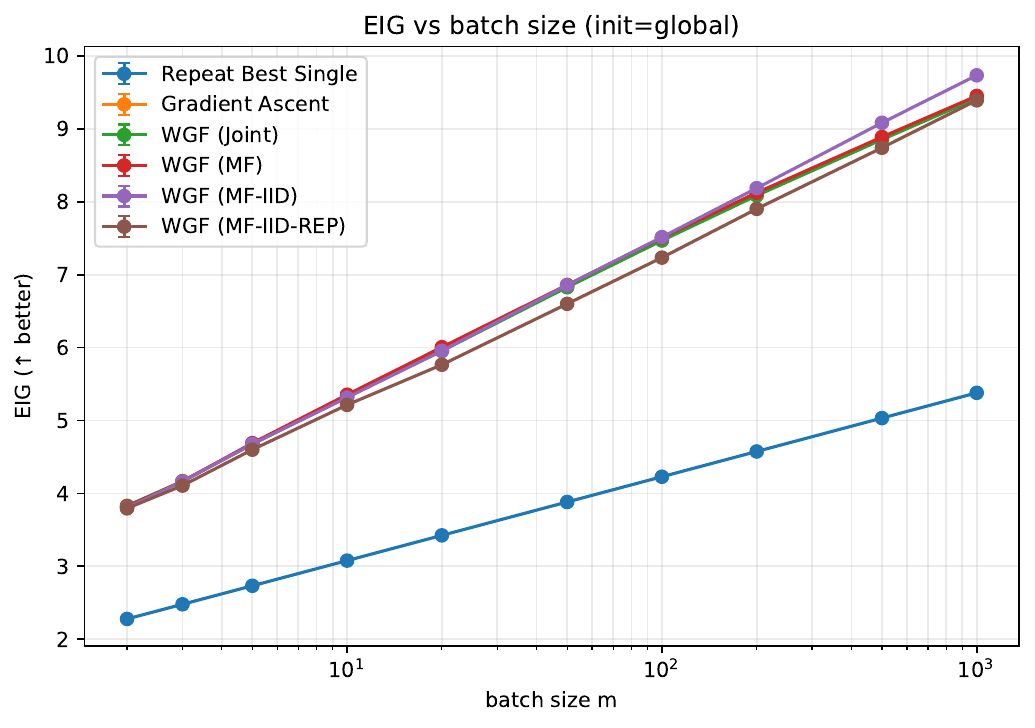}
    \caption{Global Initialisation.}
    \label{fig:9b}
    \end{subfigure}
    \caption{\textbf{A comparison of the EIG obtained using pointwise optimisation and distributional optimisation, including both joint and mean-field approaches, as a function of the batch size.} We plot the achieved EIG as a function of the batch size $m\in\{2,3,5,10,20,50,100,200,500,1000\}$ for six design strategies: repeating the best single design, gradient ascent, \texttt{WGF (Joint)}, \texttt{WGF (MF)},  \texttt{WGF (MF-IID)}, and \texttt{WGF (MF-IID-REP)}. The points show the mean EIG over 5 independent random runs, while the error bars denote $\pm$ one standard error. The ``distributional'' methods (green, red, purple, brown) are superior to the ``pointwise'' method (orange) across all batch sizes; while the repeat-single baseline is clearly suboptimal due to redundancy in repeated measurements (blue). For larger batch sizes, the single-law i.i.d.\ approximations (purple, brown) display an increasing advantage over the joint and coordinate-wise mean-field methods (green, red).}
    \label{fig:9}
\vspace{-3mm}
\end{figure}

\paragraph{Discussion.} It may at first seem counter-intuitive that the i.i.d.\ mean-field methods (purple, brown) can outperform the joint method (green) for large batch sizes. Indeed, by definition, the optimum over the full space $\mathcal{P}_2(\Xi^m)$ is at least as good as the optimum over the restricted space $\mathcal{P}_{2,\mathrm{i.i.d.}}(\Xi^m):=\{\mu^{\otimes m}:\mu\in\mathcal P_2(\Xi)\}$. Thus, our results do not (and cannot) illustrate that a product family can exceed the {true} batch optimum. Rather, they provide evidence that, under fixed iteration budgets and the matched temperature scaling $\lambda_m = \frac{\lambda}{m}$, the restricted design-law formulations can sometimes yield better empirical solutions than the full joint analogue. There are several plausible explanations. First, joint methods operate in the $m$-dimensional space $\Xi^m$, and must explore a high-dimensional landscape with many symmetries, e.g., permutations of design coordinates, and potential energy barriers. As the batch size increases, it becomes increasingly hard to explore $\Xi^m$, as mixing degrades rapidly. In contrast, the mean-field method always operates on $\Xi$, a space which is much easier to explore. Second, once we have learned a design law, we form candidate batches by sampling $\xi_{1:m}\sim \mu^{\otimes m}$, before reporting the design with best utility. This mechanism can be viewed as a global search over combinatorial combinations of the modes of the learned design law. If this law concentrates non-trivial mass on several high-quality regions of $\Xi$, then i.i.d.\ batching in this fashion generates many possible multimodal configurations, and our post-selection procedure can reliably extract a strong batch even when direct joint exploration of $\Xi^m$ fails to locate such configurations within the same budget.

\subsection{Pharmacokinetic Benchmark}
\label{sec:exp:pk-fhn}

\paragraph{Experimental Details.}
We next consider an established benchmark from the BOED literature: pharmacokinetic (PK) sampling-time design \citep[e.g.,][]{overstall2020bayesian}. This is a batch design problem in which $\xi_{1:m}=(t_1,\dots,t_m)$ are observation times over a fixed horizon $[0,T_{\max}]$, to be chosen in order to measure the concentration of a previously administered drug. The concentrations $y_{1:m}=(y(t_1),\dots,y(t_m))$ at these times are assumed to be conditionally independent given $(\theta,\xi_{1:m})$, with
\begin{equation}
\label{eq:pk_mean}
y(t_j)\mid \theta,\xi_{1:m}\ \sim\ \mathcal{N}(a(\theta)\mu_\theta(t_j),\ \sigma^2\,b_\theta(t_j)),
\qquad \sigma^2=0.1,
\end{equation}
where $\theta=(\theta_1,\theta_2,\theta_3)\in\mathbb{R}^3_{+}$ denotes the parameter of interest, $\sigma^2$ is a fixed noise variance, $a(\cdot)$ and $b(\cdot,\cdot)$ are application-dependent functions, and $\mu_\theta(t)=e^{-\theta_1 t}-e^{-\theta_2 t}$. In this case, as in \citet{ryan2014towards,overstall2020bayesian}, we assume that 
\begin{equation}
\label{eq:pk_mu_var}
a(\theta)\;=\;\frac{400\,\theta_2}{\theta_3(\theta_2-\theta_1)},\qquad b_\theta(t)\;=\;1+\frac{a(\theta)^2}{10}(e^{-\theta_1 t}-e^{-\theta_2 t})^2.
\end{equation}
Finally, the parameters are assigned independent log-normal priors: $\log \theta \sim \mathcal{N}(\mu_{\log},\,\sigma^2_{\log} I_3)$, where $\mu_{\log} = (\log 0.1,\ \log 1.0,\ \log 20.0)$ and $\sigma^2_{\log}=0.05$. Following \cite{overstall2020bayesian}, we consider a batch size of $m=15$, a time horizon of $T_{\max}=24$ hours, and enforce ordered sampling times with a minimum spacing of $\Delta_{\min}=0.25$ hours (15 minutes), implemented by clipping to $[0,T_{\max}]$, sorting, and a deterministic minimum-gap repair. We approximate $\mathrm{EIG}_m(\xi_{1:m})$ using a fixed-sample NMC estimator \citep[e.g.,][]{rainforth2018nesting} with $(n_{\mathrm{outer}},n_{\mathrm{inner}})$ samples. In this model, the likelihood is conditionally Gaussian with mean and heteroscedastic variance determined by closed-form model statistics, so both $\mathrm{EIG}_m$ and its gradients can be evaluated efficiently once the fixed randomness is set. 

Similar to before, we consider the four WGF-based methods summarised in Section~\ref{sec:implemented-algorithms}: \texttt{WGF (Joint)}, \texttt{WGF (MF)}, \texttt{WGF (MF-IID)}, and \texttt{WGF (MF-IID-REP)}. We also consider two additional variants: \texttt{WGF (Joint) (FUSE)} is a version of \texttt{WGF (Joint)} which uses the FUSE adaptive step-size schedule introduced in \citet{sharrock2025tuning}, rather than a constant step size. Meanwhile, \texttt{WGF (MF) (Sub)} is a version of \texttt{WGF (MF)} which sub-samples coordinates to update at each time step.  In addition to our own methods, we also consider several natural baselines. First, we include a deterministic \texttt{Uniform} design with evenly spaced times on \([0,T_{\max}]\), together with two dimension-reduction baselines: \texttt{GeometricDRS} and \texttt{BetaDRS} \citep[e.g.,][]{ryan2014towards,overstall2020bayesian}. We also include three coordinate-exchange-type methods \citep{meyer1995coordinate,overstall2020bayesian}: a feasible-grid heuristic \texttt{CE (Feasible Grid)} and two lightweight GP-based variants: \texttt{CE (GP)} and \texttt{CE (GP-G)}. Finally, we compare against \texttt{SGA (Adam)} and an \texttt{Annealed SMC} approach. For all methods, we select the final design using an additional best-of-$n_{\mathrm{eval}}$ extraction rule. We provide further experimental details in Appendix~\ref{sec:additional-experimental-details-pk}.

\begin{figure}[b!]
\vspace{-3mm}
    \centering
    \includegraphics[width=.9\linewidth]{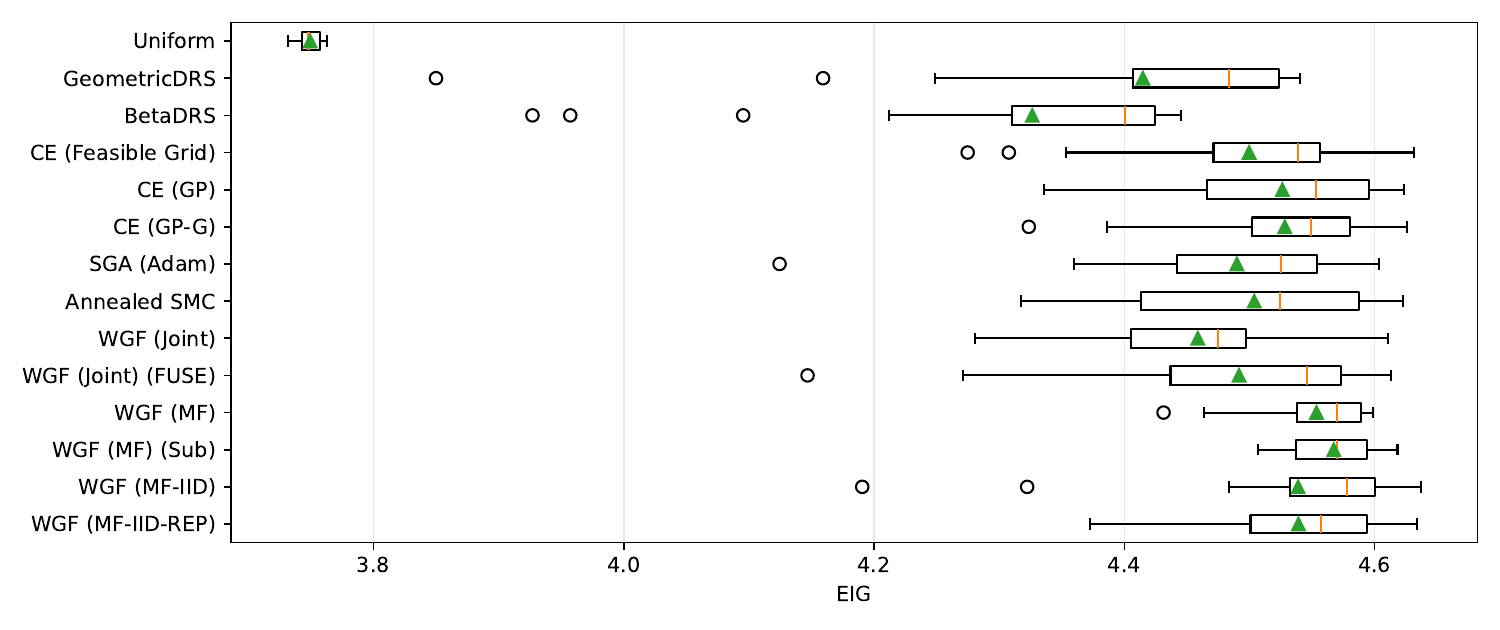}
    \vspace{-4mm}
    \caption{\textbf{EIG summaries for the pharmacokinetic (PK) sampling-time benchmark.} Each boxplot summarises the distribution of independently evaluated $\mathrm{EIG}_m$ values obtained from designs returned over multiple optimisation seeds (higher is better). Boxes show the interquartile range with median (orange line); whiskers extend to $1.5\times$ IQR and circles denote outliers; green triangles indicate the mean.}
    \label{fig:10}
\vspace{-2mm}
\end{figure}

\paragraph{Results} In Figure~\ref{fig:10}, we plot a summary of the EIG attained by the designs output by each method, with the corresponding designs shown in Figure~\ref{fig:11}. The designs attained by the most performant methods recover the characteristic structure reported in the BOED literature: sampling times concentrate in both the early phase (capturing the rapid rise and peak) and the late phase (capturing the elimination tail), with comparatively fewer mid-horizon observations \citep[e.g.,][]{overstall2020bayesian}. Consistent with this observation, the \texttt{Uniform} baseline is clearly suboptimal, while the two-parameter DRS schedules \texttt{GeometricDRS} and \texttt{BetaDRS} return improved but still deficient designs that cannot fully match the other approaches \citep[see also][Section 3.2]{overstall2020bayesian}. Among the optimisation-based methods, the CE style baselines are generally strong, outperforming both \texttt{SGA (Adam)} and \texttt{Annealed SMC}. Meanwhile, amongst the flow-based methods, \texttt{WGF (Joint)} is the weakest performing, although its performance is slightly improved by the adaptive step-size variant \texttt{WGF (Joint) (FUSE)}, or by a more carefully tuned constant step size; see Appendix~\ref{sec:additional-experimental-details-pk}.\footnote{In this experiment, we fixed the step size across all gradient-based methods to enable more direct comparisons. Additional ablations indicate that the performance of \texttt{WGF (Joint)} can be improved by using an increased step size relative to its mean-field analogue; see Appendix~\ref{sec:additional-experimental-details-pk}.} On the other hand, \texttt{WGF (MF)}, \texttt{WGF (MF) (Sub)}, \texttt{WGF (MF-IID)}, and \texttt{WGF (MF-IID-REP)} are consistently strong, attaining the highest or near-highest EIG amongst all methods considered, with a slightly reduced dispersion across seeds. Additional results indicate that this remains true for smaller or larger batch sizes, e.g., $m\in\{5,10,15\}$; see Appendix~\ref{sec:add-numerics-pharma}.

\begin{figure}[t!]
\vspace{-3mm}
\centering
    \includegraphics[width=.9\linewidth]{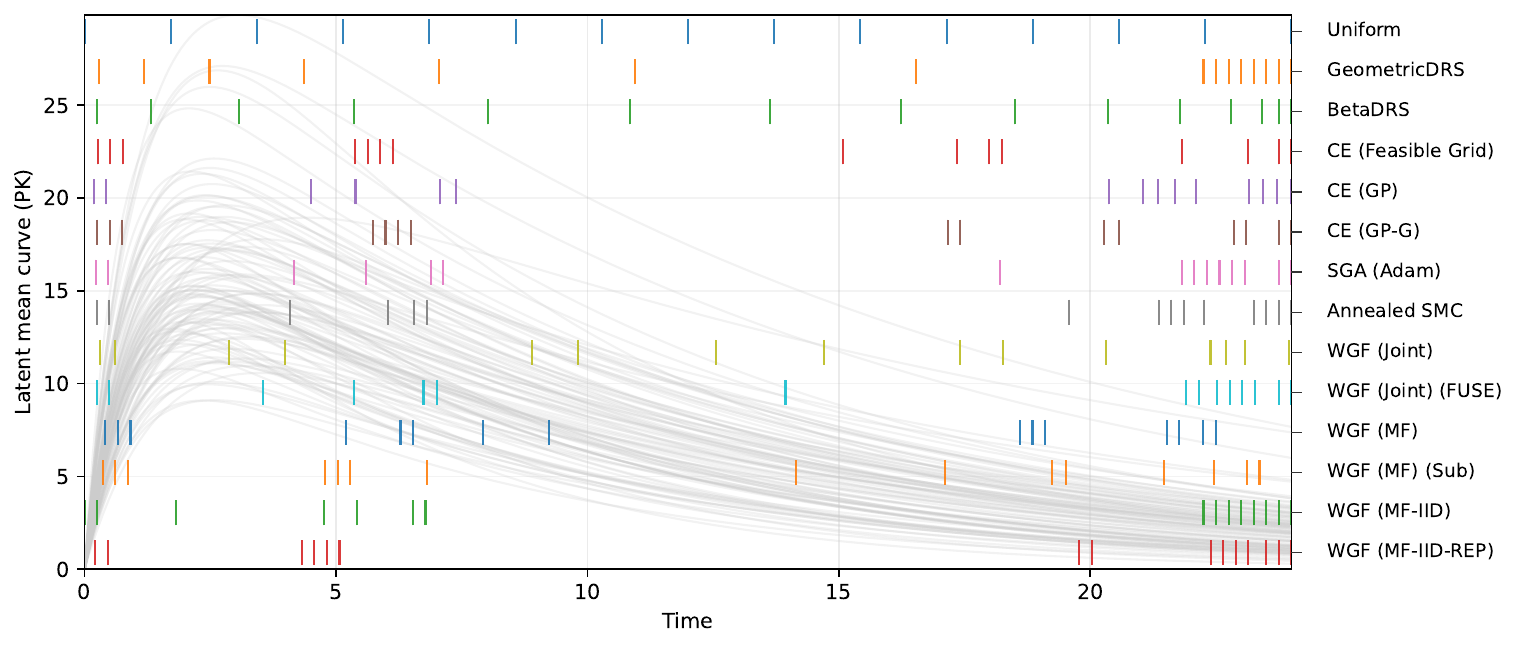}
    \vspace{-4mm}
    \caption{\textbf{Comparison of designs for the pharmacokinetic (PK) sampling-time benchmark.} The grey curves show $100$ latent PK mean trajectories  under prior draws $\theta\sim\pi$. The coloured tick marks indicate the $m$ selected sampling times for each method (one row per method).}
    \label{fig:11}
\vspace{-3mm}
\end{figure}

\subsection{FitzHugh--Nagumo Benchmark}
\label{sec:exp:fhn}

\paragraph{Experimental Details.}
Our final experiment evaluates the proposed distributional optimisation methods on a second established BOED benchmark: sampling-time design for the FitzHugh--Nagumo (FHN) model, which describes the electrical activity of a spiking neuron \citep[e.g.,][]{overstall2020bayesian}. 
Once more, this is a batch design problem in which $\xi_{1:m}=(t_1,\dots,t_m)$ are observation times over a fixed horizon $[0,T_{\max}]$ to be chosen in order to maximise $\mathrm{EIG}_m(\xi_{1:m})$. In this case, the latent state $u(t)=(u_1(t),u_2(t))$ solves the non-linear initial value problem
\begin{equation}
\label{eq:fhn_ode}
\begin{aligned}
\dot u_1(t) = \theta_3\left(u_1(t)-\frac{u_1(t)^3}{3}+u_2(t)\right), \qquad \dot u_2(t) = -\frac{u_1(t)-\theta_1+\theta_2 u_2(t)}{\theta_3},
\qquad u(0)=(-1,1)^\top,
\end{aligned}
\end{equation}
where $u_1(t)$ is the membrane potential (or voltage), $u_2(t)$ is the recovery variable, and $\theta=(\theta_1,\theta_2,\theta_3)$ are unknown model parameters. We assume noisy voltage observations at the chosen times:
\begin{equation}
\label{eq:fhn_obs}
y(t_j)\mid \theta,\sigma,\xi_{1:m}\ \sim\ \mathcal{N}\!\big(u_1(t_j;\theta), \sigma^2\big),
\qquad j=1,\dots,m,
\end{equation}
conditionally independent given $(\theta,\xi_{1:m})$, with $\sigma\sim\mathrm{Unif}[0.5,1.0]$. Following the specification in \cite{overstall2020bayesian}, we assign independent priors $\theta_1,\theta_2\sim\mathrm{Unif}[0,1]$ and $\theta_3\sim\mathrm{Unif}[1,5]$.  We take $m=21$, $T_{\max}=20$, and enforce ordered sampling times with a minimum spacing of $\Delta_{\min}=0.25$, implemented by clipping to $[0,T_{\max}]$, sorting, and a deterministic minimum-gap repair.

We approximate $\mathrm{EIG}_m(\xi_{1:m})$ using the same fixed-sample nested Monte Carlo estimator, low- and high-fidelity budgets, shortlist-and-refine protocol, and matched wall-clock tuning strategy as in the PK benchmark above. Unlike the PK model, the likelihood mean $u_1(t_j;\theta)$ is now defined implicitly via \eqref{eq:fhn_ode}. To make repeated optimisation steps efficient, we precompute forward trajectories for all parameter draws used by the estimator on a dense time grid using an RK4 solver and evaluate $u_1(t)$ at candidate times via linear interpolation; gradients with respect to the sampling times are obtained by differentiating the interpolant. Full experimental details are provided in Appendix~\ref{sec:additional-experimental-details-pk}.

\paragraph{Results.}
In Figure~\ref{fig:12}, we summarise the EIG achieved by each method, with the corresponding sampling-time designs shown in Figure~\ref{fig:13}. Several methods achieve broadly competitive performance, but clear differences emerge both in terms of the attained utility and the robustness across optimisation seeds. Similar to before, the \texttt{Uniform} baseline is consistently suboptimal, while \texttt{GeometricDRS} is competitive only intermittently and exhibits occasional clear failures, reflecting the limitations of a heavily parameterised schedule family. Among the optimisation-based baselines, the CE variants are once again strong: \texttt{CE (GP)} and \texttt{CE (GP-G)} in particular attain designs only marginally worse than the best-performing approaches. The most consistently performant methods are the mean-field WGF variants: \texttt{WGF (MF)} and \texttt{WGF (MF) (Sub)} attain the highest median EIG and exhibit a relatively small seed-to-seed variability. \texttt{WGF (Joint)} dynamics are again less competitive although, similar to before, the adaptive step-size variant \texttt{WGF (Joint) (FUSE)} substantially closes the gap, as does a more careful choice of constant step size; see Appendix~\ref{sec:add-numerics-fitzhugh}. The i.i.d.\ approximations \texttt{WGF (MF-IID)} and \texttt{WGF (MF-IID-REP)} now underperform the mean-field dynamics, and adding repulsion does not yield a significant improvement. As in the previous benchmark, these observations are largely consistent across different batch sizes, e.g., $m\in\{10,15,21\}$; see Appendix \ref{sec:add-numerics-fitzhugh}.

\begin{figure}[t!]
\vspace{-4mm}
    \centering
    \includegraphics[width=.9\linewidth]{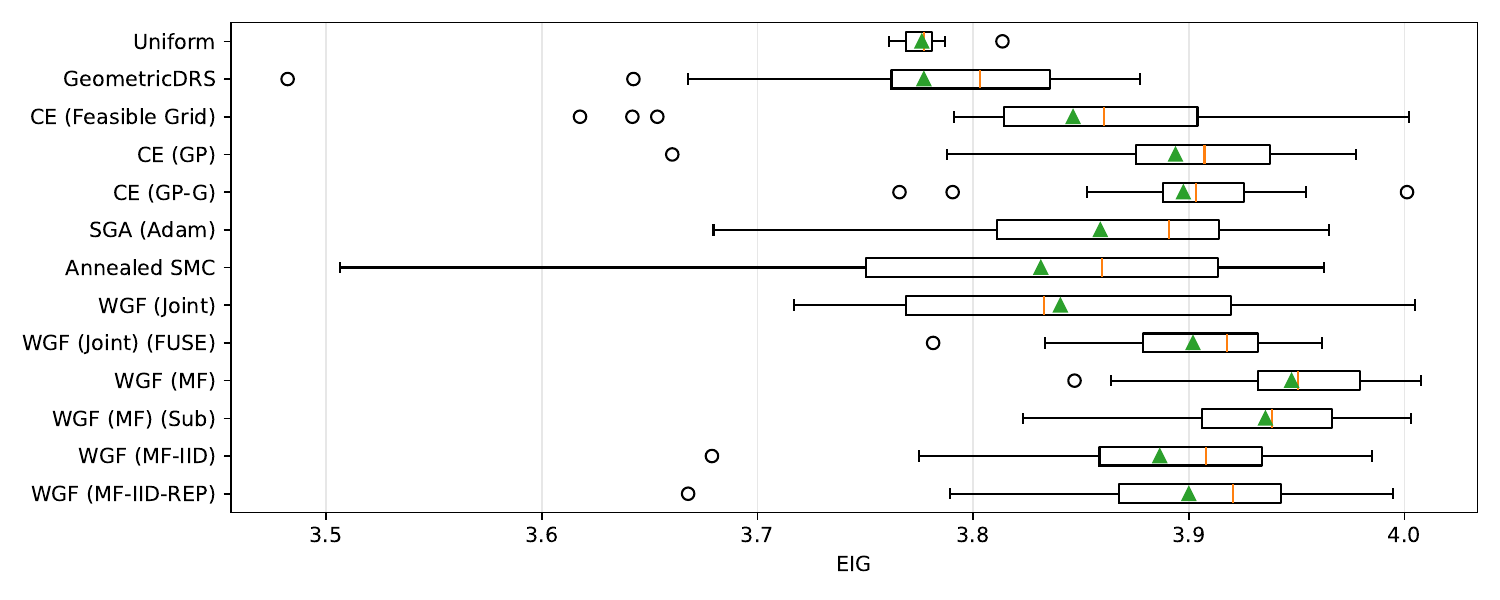}
    \vspace{-4mm}
    \caption{\textbf{EIG summaries for the FitzHugh--Nagumo (FHN) sampling-time benchmark.} Each boxplot summarises the distribution of independently evaluated $\mathrm{EIG}_m$ values obtained from designs returned over multiple optimisation seeds (higher is better). Boxes show the interquartile range with median (orange line); whiskers extend to $1.5\times$ IQR and circles denote outliers; green triangles indicate the mean.}
    \label{fig:12}
\vspace{-4mm}
\end{figure}

\begin{figure}[b!]
\vspace{-3mm}
    \centering
    \includegraphics[width=.9\linewidth]{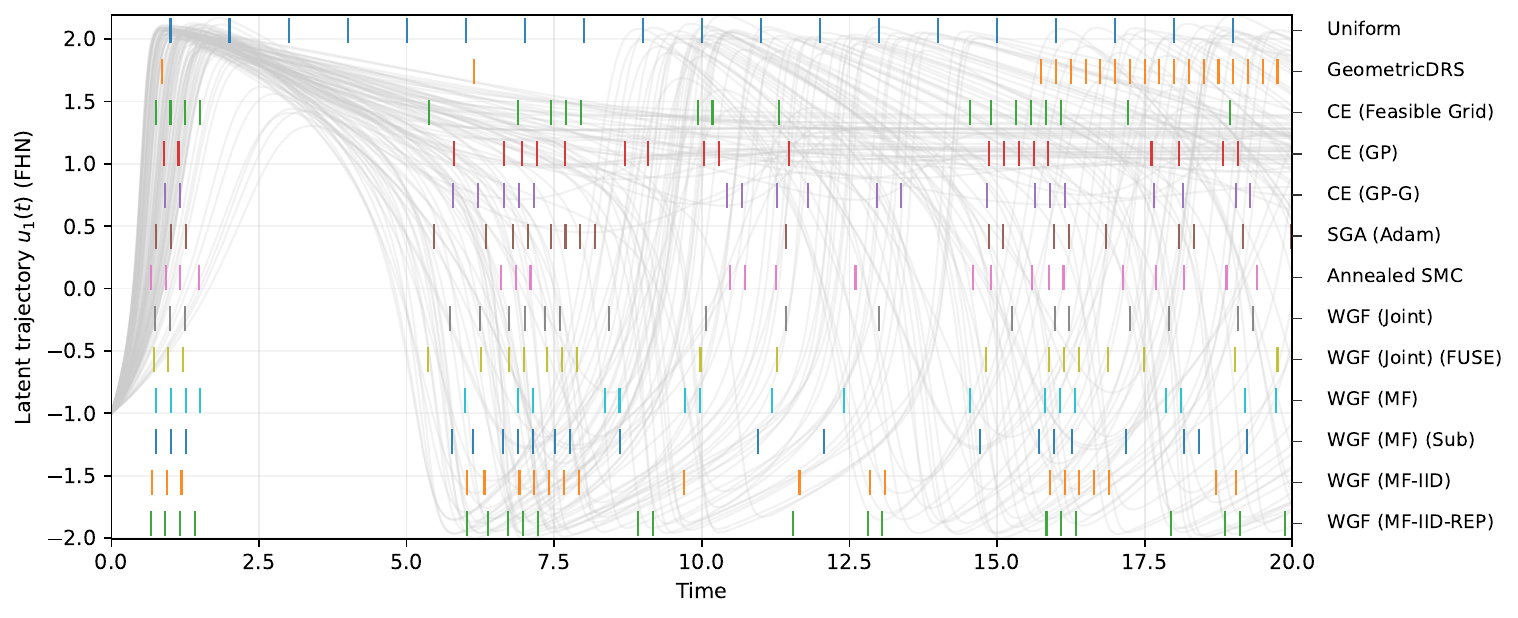}
    \vspace{-4mm}
    \caption{\textbf{Comparison of designs for the FitzHugh--Nagumo (FHN) sampling-time benchmark.} The grey curves show $100$ latent FHN mean trajectories under prior draws $\theta\sim\pi$. The coloured tick marks indicate the $m$ selected sampling times for each method (one row per method).}
    \label{fig:13}
    \vspace{-4mm}
\end{figure}

Qualitatively, Figure~\ref{fig:13} shows that high-performing designs concentrate observations into a small number of informative time windows where prior trajectories exhibit strong curvature or separation, rather than spreading samples uniformly over $[0,T_{\max}]$. In addition, none of the optimal designs places an observation between $t=2$ and $t=5$, as previously observed in \cite{overstall2020bayesian}. In comparison with these designs, \texttt{Uniform} wastes budget in less informative regions.

\section{Conclusions}
\label{sec:conclusions}

In this paper, we considered a distributional reformulation of EIG-based BOED in which pointwise optimisation of a design vector was replaced by optimisation over design measures. For the full joint batch problem on $\Xi^m$, an entropic regularisation yielded a strictly convex free-energy objective with an explicit Gibbs minimiser. For scalability, we then introduced two tractable restrictions of the batch design law, a mean-field product family and an i.i.d.\ product family, and derived the associated WGFs. These flows induced non-linear dynamics, which we approximated using interacting-particle algorithms, including doubly stochastic variants compatible with nested Monte Carlo gradient estimators. Empirically, the proposed methods consistently mitigated common pathologies of pointwise stochastic optimisation in non-convex utility landscapes, including strong basin dependence and mode collapse, and produced high-utility batches in both synthetic examples and established BOED benchmarks. In particular, for the pharmacokinetic and FitzHugh--Nagumo sampling-time problems, our particle-based methods were competitive with existing methods under matched computational budgets.

Several directions merit further study. On the theoretical side, it would be valuable to rigorously characterise the properties of the mean-field algorithm, extending our existing analysis in the i.i.d.\ setting. It would also be of interest to extend the analysis to constrained domains, and to obtain non-asymptotic guarantees for biased inner gradient estimators. On the methodological side, adaptive choices of the temperature parameter, more principled deterministic extraction rules from learned design laws, and sequential or non-myopic extensions are also promising. One could also consider gradient flows under different geometries than $\mathsf W_2$; for example, the Stein geometry would lead to alternative algorithms based on Stein variational gradient descent \citep{liu2016stein,duncan2023geometry} and its nonlinear extension \citep{wang2019nonlinear,chazal2025computable}. Another direction is to enrich the structured approximations beyond i.i.d.\ or product families, which may better capture joint batch dependencies while retaining tractability. Finally, it would be interesting to combine our approach with variational estimators of the EIG and its gradient \citep[e.g.,][]{foster2019variational}. Our framework is modular with respect to this inner approximation, and one could in principle replace the nested Monte Carlo gradient oracle in the doubly stochastic IPS by the gradient of a differentiable variational bound, potentially reducing variance and enabling higher-dimensional applications.

\section*{Acknowledgements}
The author is grateful to Prof. Christopher Nemeth for feedback on an early draft of this manuscript.

\bibliography{references}

\appendix 
\section{Theory: Assumptions, Main Results, and Proofs}
\label{sec:theory}

\subsection{General Notation}
We work on $\Xi=\mathbb{R}^d$. Let $\mathcal P_2(\Xi)$ denote the set of probability measures on $\Xi$ with finite second moment: $\{\mu\in\mathcal{P}(\Xi):M_2(\mu):=\int_{\Xi} \|\xi\|^2\mu(\mathrm{d}\xi)<\infty\}$. In addition, let $\mathcal{P}_{2,\mathrm{ac}}(\Xi)$ denote the subset of $\mathcal{P}_{2}(\Xi)$ consisting of probability measures which are absolutely continuous with respect to the Lebesgue measure, $\mathcal{L}^d$. For any $\mu\in\mathcal{P}_{2}(\Xi)$, let $L^{2}(\mu):=L^2(\mu;\Xi)$ denote the set of measurable functions $f:\Xi\rightarrow\Xi$ such that $\int_{\Xi}\|f(\xi)\|^2\mu(\mathrm{d}\xi)<\infty$. We will write $\|\cdot\|_{L^2(\mu)}$ and $\langle\cdot,\cdot\rangle_{L^2(\mu)}$ to denote, respectively, the norm and the inner product of this space.

Given a probability measure $\mu\in\mathcal{P}_2(\Xi)$ and a measurable function $T:\Xi\rightarrow\Xi$, we write $T_{\#}\mu$ for the pushforward measure of $\mu$ under $T$, that is, the measure such that $T_{\#}\mu(B)=\mu(T^{-1}(B))$ for all Borel measurable $B\in\mathcal{B}(\Xi)$. For every $\mu,\nu\in\mathcal{P}_2(\Xi)$, let $\Gamma(\mu,\nu)$ be the set of couplings (or transport plans) between $\mu$ and $\nu$, defined as $\Gamma(\mu,\nu) = \{\gamma\in\mathcal{P}(\Xi\times\Xi): Q^{1}_{\#}\gamma=\mu, Q^{2}_{\#}\gamma=\nu\}$, where $Q^{1}$ and $Q^{2}$ denote the projections onto the first and second components of $\Xi\times\Xi$. The Wasserstein $2$-distance between $\mu$ and $\nu$ is then defined according to
\begin{equation}
\mathsf{W}_2^2(\mu,\nu) = \inf_{\gamma \in\Gamma(\mu,\nu)}  \int_{\Xi\times\Xi}\|\xi-\eta\|^2 \gamma(\mathrm{d}\xi,\mathrm{d}\eta). \label{eq:wasserstein}
\end{equation}

\subsection{The Joint Batch Objective}
\label{sec:theory:joint}

Let $m\in\mathbb{N}$ be fixed. Let $\rho_m\in\mathcal P_{2,\mathrm{ac}}(\Xi^m)$ be a reference measure on the product space $\Xi^m$ with strictly positive Lebesgue density $\smash{\rho_m(\xi_{1:m}) = Z_{\rho_m}^{-1} e^{-V_m(\xi_{1:m})}}$ for some confining potential $V_m:\Xi^m\to\mathbb{R}$.\footnote{In a slight abuse of notation, we use $\rho_m$ to denote both the measure and its density w.r.t. the Lebesgue measure.}
The KL divergence on $\Xi^m$ is given by
\begin{equation}
\mathrm{KL}(\nu_m\|\rho_m):=
\left\{
\begin{array}{lll}
\int_{\Xi^m}\log\Big(\frac{\mathrm d\nu_m}{\mathrm d\rho_m}\Big) \nu_m(\mathrm d\xi_{1:m})
& , &
\nu_m\ll\rho_m,\\
+\infty & , & \text{otherwise.}
\end{array}
\right.
\end{equation}
Let $G:\Xi^m\to\mathbb R$ denote a permutation-invariant batch utility, e.g.\ $G(\xi_{1:m})=\mathrm{EIG}_m(\xi_{1:m})$. For any batch design law $\nu_m\in\mathcal P_2(\Xi^m)$ we define the
\emph{joint} expected batch utility
\begin{equation}
\mathcal J_m^{\mathrm{joint}}(\nu_m)
:=
\int_{\Xi^m} G(\xi_{1:m}) \nu_m(\mathrm d\xi_{1:m}).
\label{eq:Jm-joint-def}
\end{equation}
We then define, for $\lambda_m>0$, the entropy-regularised joint batch objective (or joint free energy) on $\mathcal P_2(\Xi^m)$ by
\begin{equation}
\mathcal F_m^{\lambda,\mathrm{joint}}(\nu_m)
:=
-\mathcal J_m^{\mathrm{joint}}(\nu_m)+\lambda_m \mathrm{KL}(\nu_m\|\rho_m).
\label{eq:F-joint-def}
\end{equation}

\subsubsection{Basic Results}
\label{sec:joint-batch-basic-results}

\begin{lemma}[Value-preserving lifting on $\mathcal P(\Xi^m)$]\label{lem:lift-value-preserving}
Let $\Xi\subseteq\mathbb R^d$ be a Borel set and let $G:\Xi^m\to\mathbb R$ be measurable and bounded above, with $\sup_{\xi_{1:m}\in\Xi^m}G(\xi_{1:m})<\infty$. Then
\begin{equation}
\sup_{\nu_m\in\mathcal P(\Xi^m)} \mathcal{J}_m^{\mathrm{joint}}(\nu_m)
=
\sup_{\xi_{1:m}\in\Xi^m}G(\xi_{1:m}).
\end{equation}
If, in addition, $G$ attains its maximum on $\Xi^m$, then for any $\nu_m$ supported on $\argmax_{\Xi^m}G$, one has $\int G \mathrm d\nu_m=\sup_{\xi_{1:m}\in\Xi^m}G(\xi_{1:m})$, hence $\nu_m$ is optimal. Conversely, if $\nu_m$ is optimal, then $\nu_m(\argmax_{\Xi^m}G)=1$.
\end{lemma}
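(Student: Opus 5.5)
Write $G^\star:=\sup_{\xi_{1:m}\in\Xi^m}G(\xi_{1:m})<\infty$. I will prove the equality of suprema by two inequalities. Note first that $\mathcal J_m^{\mathrm{joint}}(\nu_m)=\int G\,\mathrm d\nu_m$ is well defined in $[-\infty,G^\star]$ for every $\nu_m\in\mathcal P(\Xi^m)$, because $G$ is measurable and bounded above: the positive part $G^+\le (G^\star)^+$ is integrable.

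\emph{Upper bound.} Since $G\le G^\star$ pointwise, monotonicity of the integral gives $\int G\,\mathrm d\nu_m\le G^\star$ for every $\nu_m$. Taking the supremum over $\nu_m$ yields $\sup_{\nu_m}\mathcal J_m^{\mathrm{joint}}(\nu_m)\le G^\star$.

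\emph{Lower bound.} Point masses are admissible: for any $\xi_{1:m}\in\Xi^m$, the Dirac measure $\delta_{\xi_{1:m}}$ lies in $\mathcal P(\Xi^m)$ and satisfies $\mathcal J_m^{\mathrm{joint}}(\delta_{\xi_{1:m}})=G(\xi_{1:m})$. Hence the supremum over measures is at least $G(\xi_{1:m})$ for each $\xi_{1:m}$, and therefore at least $G^\star$.

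\emph{Attainment.} Now suppose $G$ attains its maximum, and let $A:=\argmax_{\Xi^m}G=G^{-1}(\{G^\star\})$. This set is measurable, being the preimage of a point under a measurable map. If $\nu_m(A)=1$ (which is how ``supported on'' should be read, to avoid closure issues), then $G=G^\star$ $\nu_m$-a.s. and $\int G\,\mathrm d\nu_m=G^\star$. By the first part this value equals the supremum, so $\nu_m$ is optimal.

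\emph{Converse.} Let $\nu_m$ be optimal, so that $\int (G^\star-G)\,\mathrm d\nu_m=0$. Here the integrand is nonnegative and the integral is finite; in particular optimality forces $\int G\,\mathrm d\nu_m=G^\star$ to be finite, so subtraction is legitimate. A nonnegative function with zero integral vanishes a.s., so $\nu_m(\{G<G^\star\})=0$, i.e.\ $\nu_m(A)=1$.

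The only step needing any care is this last one. One must handle the possibility $\int G\,\mathrm d\nu_m=-\infty$, which is excluded because optimality forces the integral to equal $G^\star$. One must also fix the meaning of ``supported on'' as full measure, rather than topological support. Everything else is routine.
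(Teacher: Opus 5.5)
Your proof is correct and follows essentially the same route as the paper. The upper bound comes from $G\le G^\star$ pointwise, the lower bound from Dirac masses (you take the supremum over points directly where the paper uses a maximising sequence, which is equivalent), and the converse from the nonnegative-integrand-with-zero-integral argument. Your added remarks on $\int G\,\mathrm d\nu_m\in[-\infty,G^\star]$ and on reading ``supported on'' as $\nu_m(\argmax_{\Xi^m}G)=1$ make explicit points the paper leaves implicit.
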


\begin{proof}
Let $G^{\star}:=\sup_{\xi_{1:m}\in\Xi^m}G(\xi_{1:m})$. For any $\nu_m\in\mathcal P(\Xi^m)$ we have $G(\xi_{1:m})\le G^\star$ for all $\xi_{1:m}$. It follows immediately that
\begin{equation}
\int_{\Xi^m}G(\xi_{1:m}) \nu_m(\mathrm d\xi_{1:m})\le \int_{\Xi^m}G^\star \nu_m(\mathrm d\xi_{1:m})=G^\star.
\end{equation}
Thus, $\sup_{\nu_m}\int G \mathrm d\nu_m\le G^\star$. For the converse, let $\smash{(\xi^{(n)}_{1:m})_{n\ge1}\subset\Xi^m}$ be a sequence such that $\smash{G(\xi^{(n)}_{1:m})\uparrow G^\star}$ as $n\to\infty$. Then $\smash{\nu_m^{(n)}:=\delta_{\xi^{(n)}_{1:m}}}$ satisfies
$\smash{\int G \mathrm d\nu_m^{(n)}=G(\xi^{(n)}_{1:m})\to G^\star}$. Thus, 
$\smash{\sup_{\nu_m}\int G \mathrm d\nu_m\ge G^\star}$. This proves equality.

Suppose now that $G$ attains its maximum, so $\argmax_{\Xi^m}G\neq\emptyset$ and $G=G^\star$ on $\argmax_{\Xi^m}G$. If $\nu_m(\argmax_{\Xi^m}G)=1$, then $\int G \mathrm d\nu_m=G^\star$, so $\nu_m$ is a maximiser. Conversely, if $\nu_m$ is a maximiser, then $\int (G^\star-G) \mathrm d\nu_m=0$ with $G^\star-G\ge0$ pointwise.
Therefore $G^\star-G=0$ $\nu_m$-a.s., i.e.\ $\nu_m(\argmax_{\Xi^m}G)=1$.
\end{proof}

\begin{proposition}[Joint WGF and trapping in basins of attraction]
\label{prop:joint-wgf-basins}
Let $\Xi=\mathbb R^d$. Let $G:\Xi^m\to\mathbb R$ be $C^2$ with globally Lipschitz gradient. Let $(\Psi_t)_{t\ge 0}$ denote the flow of the ODE
\begin{equation}
\dot \xi_t = \nabla G(\xi_t), \qquad \xi_0=\xi\in \Xi^m.
\label{eq:grad-ascent-ode}
\end{equation}
In addition,  define $\nu_t:=(\Psi_t)_\#\nu_0$ for $\nu_0\in\mathcal P_2(\Xi^m)$. Then $(\nu_t)_{t\ge 0}$ is the Wasserstein gradient flow (WGF) of $-\mathcal{J}_m^{\mathrm{joint}}$, and the weak solution of the continuity equation
\begin{equation}
\partial_t \nu_t + \nabla\cdot(\nu_t \nabla G)=0.
\label{eq:joint-unreg-cont}
\end{equation}
Suppose, in addition, that the ODE in \eqref{eq:grad-ascent-ode} admits finitely many asymptotically stable equilibria
$\xi^{(1)},\dots,\xi^{(K)}\in\Xi^m$, with corresponding basins of attraction
\begin{equation}
\mathcal B_k:=\{\xi\in\Xi^m:\ \Psi_t(\xi)\to \xi^{(k)}\ \text{as }t\to\infty\},\qquad k=1,\dots,K,
\end{equation}
Suppose also that $\nu_0(\Xi^m\setminus\cup_{k=1}^K\mathcal B_k)=0$. Then $\nu_t$ converges weakly as $t\to\infty$ to the following mixture of Dirac measures
\begin{equation}
\nu_t \ \Rightarrow\ \sum_{k=1}^K \nu_0(\mathcal B_k) \delta_{\xi^{(k)}}.
\label{eq:basin-mixture}
\end{equation}
\end{proposition}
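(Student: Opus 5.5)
The proof has two independent parts: identifying $(\nu_t)$ as the unregularised WGF, and its long-time limit.

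\textbf{Step 1 (well-posedness and pushforward).} Since $\nabla G$ is globally Lipschitz, say with constant $L$, the ODE \eqref{eq:grad-ascent-ode} has a unique global $C^1$ flow $\Psi_t$. By Gr\"onwall, $\|\Psi_t(\xi)\|\le (\|\xi\|+t\|\nabla G(0)\|)e^{Lt}$. Hence $\nu_t=(\Psi_t)_\#\nu_0\in\mathcal P_2(\Xi^m)$ for every $t$.

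\textbf{Step 2 (continuity equation).} Take $\varphi\in C_c^\infty(\Xi^m)$. By the chain rule,
\begin{equation}
\frac{\mathrm d}{\mathrm dt}\int\varphi\,\mathrm d\nu_t=\int \nabla\varphi(\Psi_t(\xi))\cdot\nabla G(\Psi_t(\xi))\,\nu_0(\mathrm d\xi)=\int\nabla\varphi\cdot\nabla G\,\mathrm d\nu_t .
\end{equation}
Differentiation under the integral is justified by dominated convergence, using the linear growth of $\nabla G$ and finite second moments. This is exactly the weak form of \eqref{eq:joint-unreg-cont}.

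\textbf{Step 3 (gradient-flow identification).} The first variation of the linear functional $-\mathcal J_m^{\mathrm{joint}}$ is $-G$, as in Section~\ref{sec:method:firstvariation} but without the product structure. Its Wasserstein gradient is therefore $-\nabla G$, so the velocity field of the WGF is $v_t=\nabla G$. A finite-energy velocity is obtained since $\int\|\nabla G\|^2\mathrm d\nu_t<\infty$. To make ``is the WGF'' rigorous I would invoke uniqueness for the continuity equation with a Lipschitz velocity field (Ambrosio--Gigli--Savar\'e, Prop.~8.1.8). Any weak solution of \eqref{eq:joint-unreg-cont} in $\mathcal P_2$ is then the pushforward by the characteristic flow. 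In particular the solution of \eqref{eq:wgf} with $\mathcal F=-\mathcal J_m^{\mathrm{joint}}$ coincides with $\nu_t$.

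\textbf{Step 4 (long-time limit).} Take $f\in C_b(\Xi^m)$ and write
\begin{equation}
\int f\,\mathrm d\nu_t=\sum_{k=1}^K\int_{\mathcal B_k} f(\Psi_t(\xi))\,\nu_0(\mathrm d\xi).
\end{equation}
The complement of $\cup_k\mathcal B_k$ is $\nu_0$-null by assumption. The basins are pairwise disjoint because limits are unique, and they are measurable. Measurability follows from $\mathcal B_k=\bigcap_{j}\bigcup_{T\in\mathbb N}\bigcap_{t\ge T,\,t\in\mathbb Q}\{\|\Psi_t(\xi)-\xi^{(k)}\|<1/j\}$, using continuity in $t$. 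On $\mathcal B_k$, $f(\Psi_t(\xi))\to f(\xi^{(k)})$ pointwise, and $|f|\le\|f\|_\infty$. Dominated convergence then gives $\int f\,\mathrm d\nu_t\to\sum_k\nu_0(\mathcal B_k)f(\xi^{(k)})$, which is \eqref{eq:basin-mixture}.

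The main obstacle is Step 3: pinning down what ``is the WGF'' means for the non-geodesically-convex functional $-\mathcal J_m^{\mathrm{joint}}$. I would state it as solving the continuity equation with velocity $-\nabla_{\mathsf W_2}(-\mathcal J_m^{\mathrm{joint}})$, consistent with the paper's formal definition \eqref{eq:wgf}. Equivalently, I would check the energy-dissipation equality $\frac{\mathrm d}{\mathrm dt}\mathcal J_m^{\mathrm{joint}}(\nu_t)=\int\|\nabla G\|^2\mathrm d\nu_t$, which follows from Step 2 with $\varphi=G$ after truncation. The remaining steps are routine.
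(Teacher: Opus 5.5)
Your proposal is correct and follows the paper's proof essentially step for step: the global flow comes from the Lipschitz gradient, the weak continuity equation comes from the chain rule plus change of variables, and the long-time limit comes from the pushforward identity with dominated convergence for bounded continuous test functions. Your additions are sound: you pin down the WGF via uniqueness for the continuity equation with a Lipschitz velocity (the paper just cites this as standard), and you justify measurability and disjointness of the basins (the paper uses these implicitly); note also that since $\|\nabla^2 G\|_{\mathrm{op}}\le L$, the functional $-\mathcal J_m^{\mathrm{joint}}$ is in fact $(-L)$-geodesically convex, so your Step 3 ``obstacle'' is milder than you suggest.
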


\begin{proof}
Since $G\in C^2$ has a globally Lipschitz gradient, the ODE in \eqref{eq:grad-ascent-ode} admits a unique flow $(\Psi_t)_{t\ge 0}$ on $\Xi^m$ \cite[e.g.,][]{hirsch2013differential}. The fact that $(\nu_t)_{t\geq 0}$ is the weak solution of \eqref{eq:joint-unreg-cont} is well known \cite[e.g.,][Chapter~8]{ambrosio2008gradientflows}. Explicitly, fix $\varphi\in C_c^\infty(\Xi^m)$. Then, differentiating in time, using $\frac{\mathrm d}{\mathrm dt}\Psi_t(\xi)=\nabla G(\Psi_t(\xi))$, and the change-of-variables formula, we have
\begin{equation}
\frac{\mathrm d}{\mathrm dt}\int \varphi \mathrm d\nu_t
=
\int_{\Xi^m}\langle \nabla\varphi(\Psi_t(\xi)), \nabla G(\Psi_t(\xi))\rangle \nu_0(\mathrm d\xi)
=
\int_{\Xi^m}\langle \nabla\varphi(\xi), \nabla G(\xi)\rangle \nu_t(\mathrm d\xi),
\end{equation}
which is exactly the weak formulation of \eqref{eq:joint-unreg-cont}. The identification of $(\nu_t)_{t\geq 0}$ with the WGF of $-\mathcal{J}_m^{\mathrm{joint}}$ is also standard \citep[e.g.,][]{ambrosio2008gradientflows,santambrogio2015optimal}.

To prove the second part of the proposition, let $\psi$ be any bounded continuous test function. By the definition of the push-forward, we have that
\begin{equation}
\int_{\Xi^m}\psi(\xi) \nu_t(\mathrm d\xi)
=
\int_{\Xi^m}\psi(\Psi_t(\xi)) \nu_0(\mathrm d\xi).
\end{equation}
By assumption, for $\nu_0$-a.e.\ $\xi$ there exists $k$ such that $\xi\in\mathcal B_k$ and hence $\Psi_t(\xi)\to \xi^{(k)}$. Therefore $\psi(\Psi_t(\xi))\to \sum_{k=1}^K \psi(\xi^{(k)})\mathbf 1_{\mathcal B_k}(\xi)$ pointwise $\nu_0$-a.s. Finally, since $\psi$ is bounded, dominated convergence gives
\begin{equation}
\lim_{t\to\infty}\int \psi \mathrm d\nu_t
=
\sum_{k=1}^K \psi(\xi^{(k)}) \nu_0(\mathcal B_k)
=
\int_{\Xi^m}\psi(\xi) \Big(\sum_{k=1}^K \nu_0(\mathcal B_k)\delta_{\xi^{(k)}}\Big)(\mathrm d\xi),
\end{equation}
This establishes the weak convergence in \eqref{eq:basin-mixture}.
\end{proof}

\begin{proposition}[Joint entropic regularisation: strict convexity and Gibbs minimiser]
\label{prop:joint-gibbs}
Define
\begin{equation}
    \nu_m^{\lambda,\star}(\mathrm d\xi_{1:m}):=\frac{1}{Z_m^\lambda}\exp \Big(\frac{1}{\lambda_m}G(\xi_{1:m})\Big) \rho_m(\mathrm d\xi_{1:m}), \qquad Z_m^\lambda:=\int_{\Xi^m}\exp(\frac{1}{\lambda_m}G(\xi_{1:m})) \rho_m(\mathrm d\xi_{1:m}).
\end{equation}
Suppose that the normalisation constant $\smash{Z_{m}^{\lambda}<\infty}$.\footnote{There are various sufficient conditions under which the normalising constant is finite. For example: Assumption~\ref{ass:G}(i) holds, and the analogue of Assumption~\ref{ass:V}(iii) holds for $\rho_m\in\mathcal{P}_{2,\mathrm{ac}}(\Xi^m)$ (see Appendix~\ref{sec:assumptions}).} In addition, suppose that  $\nu_m^{\lambda,\star}\in\mathcal P_2(\Xi^m)$ and that $\int_{\Xi^m}|G(\xi_{1:m})| \nu_m^{\lambda,\star}(\mathrm d\xi_{1:m})<\infty$. Then:
\begin{enumerate}[leftmargin=*]
\item[(i)] $\mathcal F_m^{\lambda,\mathrm{joint}}$ is proper and strictly convex on its finite-value domain
\begin{equation}
\operatorname{Dom}\big(\mathcal F_m^{\lambda,\mathrm{joint}}\big)
:=
\big\{\nu_m\in\mathcal P_2(\Xi^m):\ \mathcal F_m^{\lambda,\mathrm{joint}}(\nu_m)<\infty\big\}.
\end{equation}
\item[(ii)] $\mathcal F_m^{\lambda,\mathrm{joint}}$ has a unique minimiser on $\mathcal P_2(\Xi^m)$ given by $\nu_m^{\lambda,\star}$.
\end{enumerate}
\end{proposition}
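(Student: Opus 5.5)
The plan is to establish the Gibbs variational identity
\begin{equation}
\mathcal F_m^{\lambda,\mathrm{joint}}(\nu_m)=\lambda_m\,\mathrm{KL}(\nu_m\|\nu_m^{\lambda,\star})-\lambda_m\log Z_m^\lambda,
\end{equation}
already announced in \eqref{eq:Fm-joint-rewrite}, and to read off both (i) and (ii) from it. First, since $0<Z_m^\lambda<\infty$ (positivity because $\rho_m$ has a strictly positive density and $e^{G/\lambda_m}>0$), $\nu_m^{\lambda,\star}$ is a well-defined probability measure. It is mutually absolutely continuous with $\rho_m$, with $\log\frac{\mathrm d\nu_m^{\lambda,\star}}{\mathrm d\rho_m}=G/\lambda_m-\log Z_m^\lambda$.

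Next, for $\nu_m\ll\rho_m$ we have $\nu_m\ll\nu_m^{\lambda,\star}$. Pointwise,
\begin{equation}
\log\frac{\mathrm d\nu_m}{\mathrm d\rho_m}=\log\frac{\mathrm d\nu_m}{\mathrm d\nu_m^{\lambda,\star}}+\frac{G}{\lambda_m}-\log Z_m^\lambda .
\end{equation}
Integrating against $\nu_m$ and multiplying by $\lambda_m$ gives the identity. The delicate step, and the main obstacle, is justifying this integration when the individual integrals may be infinite, i.e.\ avoiding $\infty-\infty$. I would restrict to $\nu_m$ for which $-\mathcal J_m^{\mathrm{joint}}(\nu_m)$ is well defined, with $\int G^-\,\mathrm d\nu_m<\infty$ or $\int |G|\,\mathrm d\nu_m<\infty$. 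On the effective domain $\{\mathcal F<\infty\}$, I would argue via the Donsker--Varadhan inequality $\int G\,\mathrm d\nu_m\le\lambda_m\mathrm{KL}(\nu_m\|\rho_m)+\lambda_m\log Z_m^\lambda$ that $\int G^+\,\mathrm d\nu_m<\infty$ whenever $\mathrm{KL}(\nu_m\|\rho_m)<\infty$, applying it to $G^+$ using $\int e^{G^+/\lambda_m}\,\mathrm d\rho_m\le 1+Z_m^\lambda$. Since both KL terms are integrals of functions bounded below in the relevant sense (the negative part of $x\log x$ is controlled), the decomposition is legitimate, and the identity then holds on the domain. Outside it, both sides equal $+\infty$.

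For the convention that KL is $+\infty$ if $\nu_m\not\ll\rho_m$: since $\nu_m^{\lambda,\star}\sim\rho_m$, both sides are again $+\infty$.

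With the identity in hand, both parts follow.

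(i) Properness holds because $\mathcal F\ge-\lambda_m\log Z_m^\lambda>-\infty$ everywhere. Moreover, $\mathcal F(\nu_m^{\lambda,\star})=-\lambda_m\log Z_m^\lambda<\infty$, and $\nu_m^{\lambda,\star}\in\mathcal P_2$ by assumption; the hypothesis $\int|G|\,\mathrm d\nu_m^{\lambda,\star}<\infty$ makes its value computable directly. For strict convexity, $\nu\mapsto-\int G\,\mathrm d\nu$ is affine on the domain, and $\nu\mapsto\mathrm{KL}(\nu\|\rho_m)$ is strictly convex on $\{\mathrm{KL}<\infty\}$, by strict convexity of $x\mapsto x\log x$ applied to densities: equality in Jensen forces the densities to agree $\rho_m$-a.e. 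Alternatively, the same conclusion follows directly from strict convexity of $\mathrm{KL}(\cdot\|\nu_m^{\lambda,\star})$ through the identity.

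(ii) Since $\mathrm{KL}(\nu_m\|\nu_m^{\lambda,\star})\ge0$, with equality if and only if $\nu_m=\nu_m^{\lambda,\star}$, the minimum value $-\lambda_m\log Z_m^\lambda$ is attained exactly at $\nu_m^{\lambda,\star}$, which lies in $\mathcal P_2(\Xi^m)$. This gives existence and uniqueness of the minimiser.
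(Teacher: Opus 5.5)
Your proposal follows the paper's proof. Both derive $\mathcal F_m^{\lambda,\mathrm{joint}}(\nu_m)=\lambda_m\mathrm{KL}(\nu_m\|\nu_m^{\lambda,\star})-\lambda_m\log Z_m^\lambda$ for $\nu_m\ll\rho_m$ and read off properness, minimality and strict convexity (affine utility plus strictly convex KL) from that identity; your Donsker--Varadhan bound on $\int G^+\,\mathrm d\nu_m$ adds integrability bookkeeping the paper omits, and your uniqueness comes directly from the equality case $\mathrm{KL}(\nu_m\|\nu_m^{\lambda,\star})=0$. One minor caveat, shared with the paper: if $\int G^+\,\mathrm d\nu_m=\mathrm{KL}(\nu_m\|\rho_m)=+\infty$, the functional is of the form $-\infty+\infty$ while $\mathrm{KL}(\nu_m\|\nu_m^{\lambda,\star})$ can still be finite, so your remark that ``both sides equal $+\infty$'' outside the domain needs a convention (e.g.\ $\mathcal F_m^{\lambda,\mathrm{joint}}:=+\infty$ whenever $\mathrm{KL}(\nu_m\|\rho_m)=+\infty$), under which (i) and (ii) are unaffected.
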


\begin{proof}
For every $\nu_m\ll\rho_m$, $\mathrm{KL}(\nu_m\|\nu_m^{\lambda,\star})
=\mathrm{KL}(\nu_m\|\rho_m)-\frac{1}{\lambda_m}\int_{\Xi^m}G d\nu_m+\log Z_m^\lambda$, since $\log\frac{d\nu_m^{\lambda,\star}}{d\rho_m}=\lambda_m^{-1}G-\log Z_m^\lambda$. This implies that
\begin{equation}
\mathcal F_m^{\lambda,\mathrm{joint}}(\nu_m)
=\lambda_m\mathrm{KL}(\nu_m\|\nu_m^{\lambda,\star})-\lambda_m\log Z_m^\lambda.
\end{equation}
If $\nu_m\not\ll\rho_m$, then $\mathcal F_m^{\lambda,\mathrm{joint}}(\nu_m)=+\infty$ by definition. Since $\nu_m^{\lambda,\star}\in\mathcal P_2(\Xi^m)$ and $\int |G| d\nu_m^{\lambda,\star}<\infty$, the right-hand side is finite at $\nu_m^{\lambda,\star}$, so the functional is proper. The display also shows that $\nu_m^{\lambda,\star}$ is a minimiser and that the minimum value is $-\lambda_m\log Z_m^\lambda$. In addition, the map $\nu_m\mapsto -\int G d\nu_m$ is affine, while $\nu_m\mapsto \mathrm{KL}(\nu_m\|\rho_m)$ is strictly convex on $\{\nu_m:\nu_m\ll\rho_m\}$. Therefore $\mathcal F_m^{\lambda,\mathrm{joint}}$ is strictly convex on its finite-value domain, and the minimiser is unique.
\end{proof}

\subsubsection{Zero-temperature limits}
\label{sec:zero-temp-joint}

\begin{theorem}[Zero-temperature limit of the joint Gibbs design law]
\label{thm:joint-zero-temp}
Assume that the hypotheses of Proposition~\ref{prop:joint-gibbs} hold for all sufficiently small $\lambda_m>0$. Suppose, in addition, that $G$ is continuous and attains its maximum, and write $G_m^\star := \max_{\xi_{1:m}\in\Xi^m} G(\xi_{1:m})$ and $\mathcal M_m := \argmax_{\xi_{1:m}\in\Xi^m} G(\xi_{1:m})$. Then, as $\lambda_m\downarrow 0$, 
\begin{equation}
\mathcal F_m^{\lambda,\mathrm{joint}}(\nu_m^{\lambda,\star})
\longrightarrow
-G_m^\star, \qquad 
\int_{\Xi^m} G(\xi_{1:m}) \nu_m^{\lambda,\star}(\mathrm d\xi_{1:m})
\longrightarrow
G_m^\star.
\end{equation}
Suppose $U\subseteq \Xi^m$ is open, and that $\mathcal M_m\subset U$. In addition, suppose that $G_m^\star-\sup_{\xi_{1:m}\in U^c} G(\xi_{1:m})>0$. Then there exists a constant $c_U<\infty$, independent of $\lambda_m$, such that
\begin{equation}
\nu_m^{\lambda,\star}(U^c)
\le
c_U \exp \Big(-\frac{G_m^\star-\sup_{\xi_{1:m}\in U^c} G(\xi_{1:m})}{2\lambda_m}\Big)
\end{equation}
for all sufficiently small $\lambda_m>0$. In particular, if $\mathcal M_m=\{\xi_{1:m}^\star\}$ and for every $r>0$, $G_m^\star-\sup_{\|\xi_{1:m}-\xi_{1:m}^\star\|\ge r} G(\xi_{1:m})>0$, then
\begin{equation}
\nu_m^{\lambda,\star}\Rightarrow \delta_{\xi_{1:m}^\star}
\qquad\text{as }\lambda_m\downarrow0.
\end{equation}
\end{theorem}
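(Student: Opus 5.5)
The plan is a Laplace-principle argument built on the Gibbs representation from Proposition~\ref{prop:joint-gibbs}. Recall from that proof that $\mathcal F_m^{\lambda,\mathrm{joint}}(\nu_m^{\lambda,\star})=-\lambda_m\log Z_m^\lambda$. The whole statement then reduces to two-sided bounds on $Z_m^\lambda$ and one upper bound on $\int_{U^c}e^{G/\lambda_m}\,\mathrm d\rho_m$.

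\textbf{Lower bound on $Z_m^\lambda$.} This is the step I expect to be the main obstacle. For $\delta>0$, set $B_\delta:=\{\xi_{1:m}: G(\xi_{1:m})>G_m^\star-\delta\}$.
\begin{itemize}
\item $B_\delta$ is open because $G$ is continuous.
\item $B_\delta$ is nonempty because it contains $\mathcal M_m\neq\emptyset$.
\item Since $\rho_m$ has a strictly positive Lebesgue density, $\rho_m(B_\delta)>0$.
\end{itemize}
Hence
\begin{equation}
Z_m^\lambda\ \ge\ \rho_m(B_\delta)\,e^{(G_m^\star-\delta)/\lambda_m}.
\end{equation}
This is the only place where positivity of the reference density and continuity of $G$ are used. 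If $\Xi$ were not open, one would need a separate argument that $\rho_m(B_\delta)>0$; here $\Xi=\mathbb R^d$ as in the appendix setting.

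\textbf{Upper bound and convergence of the free energy.} Trivially $Z_m^\lambda\le e^{G_m^\star/\lambda_m}$, so $-\lambda_m\log Z_m^\lambda\ge -G_m^\star$. From the lower bound,
\begin{equation}
-\lambda_m\log Z_m^\lambda\ \le\ -G_m^\star+\delta-\lambda_m\log\rho_m(B_\delta).
\end{equation}
Letting $\lambda_m\downarrow0$ and then $\delta\downarrow0$ gives $\mathcal F_m^{\lambda,\mathrm{joint}}(\nu_m^{\lambda,\star})\to-G_m^\star$.

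\textbf{Convergence of the expected utility.} Since $G\le G_m^\star$, we have $\int G\,\mathrm d\nu_m^{\lambda,\star}\le G_m^\star$. Since $\mathrm{KL}\ge0$, we have $\int G\,\mathrm d\nu_m^{\lambda,\star}\ge-\mathcal F_m^{\lambda,\mathrm{joint}}(\nu_m^{\lambda,\star})$. The right-hand side tends to $G_m^\star$, so $\int G\,\mathrm d\nu_m^{\lambda,\star}\to G_m^\star$.

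\textbf{Exponential tail bound.} Write $s_U:=\sup_{U^c}G$ and $\Delta:=G_m^\star-s_U>0$. Then
\begin{equation}
\nu_m^{\lambda,\star}(U^c)=\frac{1}{Z_m^\lambda}\int_{U^c}e^{G/\lambda_m}\,\mathrm d\rho_m\ \le\ \frac{e^{s_U/\lambda_m}}{Z_m^\lambda}.
\end{equation}
Apply the lower bound on $Z_m^\lambda$ with $\delta=\Delta/2$. This gives
\begin{equation}
\nu_m^{\lambda,\star}(U^c)\le \rho_m(B_{\Delta/2})^{-1}e^{-\Delta/(2\lambda_m)}.
\end{equation}
So $c_U:=\rho_m(B_{\Delta/2})^{-1}$, which is finite and independent of $\lambda_m$. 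This bound in fact holds for every $\lambda_m$ for which the hypotheses are in force.

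\textbf{Weak convergence to a Dirac.} Suppose $\mathcal M_m=\{\xi_{1:m}^\star\}$. For each $r>0$, take $U$ to be the open ball of radius $r$ around $\xi_{1:m}^\star$. The gap hypothesis gives $\Delta>0$, so the tail bound yields $\nu_m^{\lambda,\star}(\{\|\xi_{1:m}-\xi_{1:m}^\star\|\ge r\})\to0$.

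Now let $\psi$ be bounded continuous and fix $\varepsilon>0$. Choose $r$ so that $|\psi-\psi(\xi_{1:m}^\star)|<\varepsilon$ on the ball of radius $r$. Splitting $\int\psi\,\mathrm d\nu_m^{\lambda,\star}-\psi(\xi_{1:m}^\star)$ over the ball and its complement gives
\begin{equation}
\limsup_{\lambda_m\downarrow0}\Big|\int\psi\,\mathrm d\nu_m^{\lambda,\star}-\psi(\xi_{1:m}^\star)\Big|\le\varepsilon.
\end{equation}
Since $\varepsilon$ is arbitrary, $\nu_m^{\lambda,\star}\Rightarrow\delta_{\xi_{1:m}^\star}$.
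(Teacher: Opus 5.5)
Your proof is correct. It takes a more self-contained route than the paper, whose proof only says the result is standard and cites the Laplace principle of \citet{hwang1980laplace} and \citet[Chapter~4.3]{dembo1998large} (Varadhan's lemma).

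You derive everything from two ingredients. The first is the identity $\mathcal F_m^{\lambda,\mathrm{joint}}(\nu_m^{\lambda,\star})=-\lambda_m\log Z_m^\lambda$. The second is the elementary sandwich $\rho_m(B_\delta)\,e^{(G_m^\star-\delta)/\lambda_m}\le Z_m^\lambda\le e^{G_m^\star/\lambda_m}$. This is exactly what Varadhan's lemma reduces to when the underlying family is the fixed measure $\rho_m$, whose large-deviation rate function vanishes on $\operatorname{supp}\rho_m$. So nothing is lost by avoiding the general machinery.

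Your route gains several things:
\begin{itemize}
\item an explicit constant $c_U=\rho_m(B_{\Delta/2})^{-1}$;
\item a tail bound valid for every $\lambda_m$ with $Z_m^\lambda<\infty$, not just sufficiently small ones;
\item a clear view of which hypotheses do the work: only that $\rho_m$ charges nonempty open sets, continuity of $G$ (for openness of $B_\delta$), and $G\le G_m^\star$.
\end{itemize}
Your argument also shows some of the stated hypotheses are redundant. Openness of $U$ is never used. The condition $\mathcal M_m\subset U$ follows from the gap condition. Choosing $\delta=\theta\Delta$ with $\theta\in(0,1)$ shows the factor $2$ in the exponent can be replaced by any constant strictly greater than $1$.

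What the cited literature offers beyond your argument lies outside this statement. Hwang's results identify the limit law when $\mathcal M_m$ is not a singleton, with weights on the maximisers determined by local Hessian and density behaviour. Your argument does not attempt this, and the theorem does not claim it.

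Your remark that $\rho_m(B_\delta)>0$ needs a separate check off $\Xi=\mathbb R^d$ is apt. In the constrained domains used in the numerics it still holds, for instance for boxes, since relatively open subsets of a box have positive Lebesgue measure.
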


\begin{proof}
The result is standard; see \citet{hwang1980laplace} and
\citet[Chapter~4.3]{dembo1998large}. 
\end{proof}

\subsection{The Mean-Field Batch Objective}
\label{sec:mean-field}
Let $m\in\mathbb N$ be fixed. We now consider the {mean-field} (product-measure) variational family $\mathcal P_{\mathrm{mf}}(\Xi^m)
:=
\{
\mu_1\otimes\cdots\otimes\mu_m:\ \mu_b\in\mathcal P_2(\Xi)  ,b=1,\dots,m
\}$. For $(\mu_1,\dots,\mu_m)\in \mathcal P_2(\Xi)^m$, write $\nu_m:=\mu_1\otimes\cdots\otimes\mu_m\in\mathcal P_{\mathrm{mf}}(\Xi^m)$. We then define
\begin{align}
\mathcal J_m^{\mathrm{mf}}(\mu_1,\dots,\mu_m)
&:=
\int_{\Xi^m} G(\xi_{1:m}) (\otimes_{b=1}^m\mu_b)(\mathrm d\xi_{1:m})
\label{eq:Jm-mf-def} \\
\Phi_b(\xi;\mu_{-b})
&:=
\int_{\Xi^{m-1}}
G(\xi_1,\dots,\xi_{b-1},\xi,\xi_{b+1},\dots,\xi_m) 
\mu_{-b}(\mathrm d\xi_{-b}),
\label{eq:Phi-mf-b-def}
\end{align}
where we use $\mu_{-b}:=\otimes_{j\neq b}\mu_j$ to denote the product of the remaining marginals, and $\xi_{-b}$ to denote the tuple $(\xi_1,\dots,\xi_{b-1},\xi_{b+1},\dots,\xi_m)$. We then define, for $\lambda_m>0$, the entropy-regularised {mean-field} batch objective by
\begin{equation}
\mathcal F_m^{\lambda,\mathrm{mf}}(\mu_1,\dots,\mu_m)
=
-\mathcal J_m^{\mathrm{mf}}(\mu_1,\dots,\mu_m)
+\lambda_m\sum_{b=1}^m \mathrm{KL}(\mu_b\|\rho).
\label{eq:F-mf-def-app}
\end{equation}
This is precisely the restriction of the joint batch objective \eqref{eq:F-joint-def} to $\mathcal P_{\mathrm{mf}}(\Xi^m)$, under the assumption that the reference measure factorises as $\rho_m=\rho^{\otimes m}$ for some $\rho\in\mathcal P_{2,\mathrm{ac}}(\Xi)$, with strictly positive Lebesgue density $\rho(\xi)=Z_\rho^{-1}e^{-V(\xi)}$.

\subsubsection{Basic Results}
\label{sec:mean-field-basic-results}

\begin{proposition}[Mean-field restriction: coordinate-wise Gibbs fixed points]
\label{prop:mf-coordinate-gibbs}
Let $(\mu_1^{\lambda,\star},\dots,\mu_m^{\lambda,\star})$ be a minimiser of $\mathcal F_m^{\lambda,\mathrm{mf}}$ over $\mathcal P_2(\Xi)^m$. For each $b=1,\dots,m$, define
\begin{equation}
\widetilde\mu_b^\lambda(\mathrm d\xi)
:=
\frac{1}{Z_b^\lambda}\exp \Big(\frac{1}{\lambda_m}\Phi_b(\xi;\mu_{-b}^{\lambda,\star})\Big) \rho(\mathrm d\xi), \qquad Z_b^\lambda
:=
\int_{\Xi}\exp \Big(\frac{1}{\lambda_m}\Phi_b(\xi;\mu_{-b}^{\lambda,\star})\Big) \rho(\mathrm d\xi).
\end{equation}
Suppose that, for each $b=1,\dots,m$, the normalising constant $Z_{b}^{\lambda}<\infty$.  Assume moreover that $\widetilde\mu_b^\lambda\in\mathcal P_2(\Xi)$ and
$\smash{\int_{\Xi}\big|\Phi_b(\xi;\mu_{-b}^{\lambda,\star})\big| \widetilde\mu_b^\lambda(\mathrm d\xi)<\infty}$.  Then each marginal satisfies the self-consistency equation
\begin{equation}
\mu_b^{\lambda,\star}(\mathrm d\xi)
=
\frac{1}{Z_b^\lambda}\exp \Big(\frac{1}{\lambda_m}\Phi_b(\xi;\mu_{-b}^{\lambda,\star})\Big) \rho(\mathrm d\xi),
\qquad b=1,\dots,m.
\label{eq:mf-fixedpoint-b-app}
\end{equation}
\end{proposition}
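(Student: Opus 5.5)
The plan is to reduce the statement to a one-coordinate instance of Proposition~\ref{prop:joint-gibbs}, with $m=1$, reference measure $\rho$, temperature $\lambda_m$, and utility $\Phi_b(\cdot;\mu_{-b}^{\lambda,\star})$. Fix $b$ and freeze the remaining marginals at $\mu_{-b}^{\lambda,\star}$. Consider the one-variable functional
\begin{equation}
\mathcal F_b(\mu_b) := \mathcal F_m^{\lambda,\mathrm{mf}}(\mu_1^{\lambda,\star},\dots,\mu_{b-1}^{\lambda,\star},\mu_b,\mu_{b+1}^{\lambda,\star},\dots,\mu_m^{\lambda,\star}),\qquad \mu_b\in\mathcal P_2(\Xi).
\end{equation}
Since $(\mu_1^{\lambda,\star},\dots,\mu_m^{\lambda,\star})$ is a joint minimiser, $\mu_b^{\lambda,\star}$ minimises $\mathcal F_b$ over $\mathcal P_2(\Xi)$. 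The goal is therefore to identify the unique minimiser of $\mathcal F_b$ with $\widetilde\mu_b^\lambda$.

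\textbf{Step 1: rewrite $\mathcal F_b$.} By Fubini (or Tonelli, applied to the positive and negative parts of $G$), the utility term satisfies
\begin{equation}
\mathcal J_m^{\mathrm{mf}} = \int_\Xi \Phi_b(\xi;\mu_{-b}^{\lambda,\star})\,\mu_b(\mathrm d\xi).
\end{equation}
The KL terms with index $j\neq b$ are constants. Writing $C_b:=\lambda_m\sum_{j\ne b}\mathrm{KL}(\mu_j^{\lambda,\star}\|\rho)$, this gives
\begin{equation}
\mathcal F_b(\mu_b) = -\int_\Xi \Phi_b(\xi;\mu_{-b}^{\lambda,\star})\,\mu_b(\mathrm d\xi) + \lambda_m\mathrm{KL}(\mu_b\|\rho) + C_b.
\end{equation}

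\textbf{Step 2: Gibbs variational identity.} Exactly as in the proof of Proposition~\ref{prop:joint-gibbs}, for $\mu_b\ll\rho$ we have $\log\frac{\mathrm d\widetilde\mu_b^\lambda}{\mathrm d\rho} = \lambda_m^{-1}\Phi_b - \log Z_b^\lambda$. Hence
\begin{equation}
\mathcal F_b(\mu_b) = \lambda_m\mathrm{KL}(\mu_b\|\widetilde\mu_b^\lambda) - \lambda_m\log Z_b^\lambda + C_b.
\end{equation}
The assumptions $Z_b^\lambda<\infty$, $\widetilde\mu_b^\lambda\in\mathcal P_2(\Xi)$ and $\int|\Phi_b|\,\mathrm d\widetilde\mu_b^\lambda<\infty$ make the right-hand side well defined and finite at $\widetilde\mu_b^\lambda$. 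Thus $\widetilde\mu_b^\lambda$ is an admissible competitor in $\mathcal P_2(\Xi)$, with value $-\lambda_m\log Z_b^\lambda + C_b$.

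\textbf{Step 3: conclude.} Because $\mu_b^{\lambda,\star}$ minimises $\mathcal F_b$, we get $\mathcal F_b(\mu_b^{\lambda,\star})\le\mathcal F_b(\widetilde\mu_b^\lambda)$. This forces $\mathrm{KL}(\mu_b^{\lambda,\star}\|\widetilde\mu_b^\lambda)\le 0$, so $\mu_b^{\lambda,\star}=\widetilde\mu_b^\lambda$, which is \eqref{eq:mf-fixedpoint-b-app}. Running this argument for each $b$ gives the claim.

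\textbf{Main obstacle: integrability.} The delicate step is justifying the identity in Step 2 for the actual minimiser $\mu_b^{\lambda,\star}$, since a priori $\int \Phi_b\,\mathrm d\mu_b^{\lambda,\star}$ might be infinite or undefined.

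Finiteness of the minimum value handles this. First, $\mathcal F_m^{\lambda,\mathrm{mf}}$ at the minimiser is at most its value at the competitor $(\dots,\widetilde\mu_b^\lambda,\dots)$, which is finite, and the minimiser value itself is not $-\infty$ under the tacit assumption that $\mathcal F_m^{\lambda,\mathrm{mf}}$ is well defined and finite there. Consequently $\mathrm{KL}(\mu_b^{\lambda,\star}\|\rho)<\infty$ and $\int|\Phi_b|\,\mathrm d\mu_b^{\lambda,\star}<\infty$.

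It is also important that the expected utility is well defined on these product laws, so that Fubini applies. For any $\mu_b$ with $\mathrm{KL}(\mu_b\|\rho)<\infty$, the decomposition can then be checked via the Donsker--Varadhan inequality $\int \lambda_m^{-1}\Phi_b^+\,\mathrm d\mu_b\le \mathrm{KL}(\mu_b\|\rho)+\log\int e^{\Phi_b^+/\lambda_m}\,\mathrm d\rho$, where the last term is at most $\log(1+Z_b^\lambda)<\infty$. I would state this well-definedness explicitly as the standing convention.
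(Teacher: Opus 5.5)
Your proposal is correct and follows the paper's route. You freeze $\mu_{-b}^{\lambda,\star}$, note that $\mu_b^{\lambda,\star}$ minimises the one-coordinate functional $-\int\Phi_b(\cdot;\mu_{-b}^{\lambda,\star})\,\mathrm d\mu_b+\lambda_m\mathrm{KL}(\mu_b\|\rho)$, and identify its unique minimiser with $\widetilde\mu_b^\lambda$ via the Gibbs variational identity; the paper simply invokes Proposition~\ref{prop:joint-gibbs}, whose proof is your Steps 2--3. Your extra integrability bookkeeping (finiteness at the minimiser, the Donsker--Varadhan bound on $\Phi_b^+$, and the explicit Fubini convention) fills in details the paper leaves implicit.
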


\begin{proof}
Fix $b\in\{1,\dots,m\}$ and freeze the other coordinates at $\mu_{-b}^{\lambda,\star}$. Suppose that we define 
\begin{equation}
\mathcal F_b(\mu_b):=-\int_{\Xi}\Phi_b(\xi;\mu_{-b}^{\lambda,\star}) \mu_b(d\xi)
+\lambda_m\mathrm{KL}(\mu_b\|\rho).
\end{equation}
Then $\mathcal F_m^{\lambda,\mathrm{mf}}(\mu_1^{\lambda,\star},\dots,\mu_{b-1}^{\lambda,\star},\mu_b,\mu_{b+1}^{\lambda,\star},\dots,\mu_m^{\lambda,\star})=C_b+\mathcal F_b(\mu_b)$, where $C_b$ does not depend on $\mu_b$. It follows that $\mu_b^{\lambda,\star}$ minimizes $\mathcal F_b$ over $\mathcal P_2(\Xi)$. But $\mathcal F_b$ is exactly the one-coordinate analogue of Proposition~\ref{prop:joint-gibbs}, with $G$ replaced by $\Phi_b(\cdot;\mu_{-b}^{\lambda,\star})$. Under the stated integrability assumptions, Proposition~\ref{prop:joint-gibbs} applies and yields the unique minimiser $\widetilde\mu_b^\lambda$, which proves \eqref{eq:mf-fixedpoint-b-app}.
\end{proof}

\subsubsection{Zero-temperature limits}
\label{sec:zero-temp-mf}

\begin{proposition}[Zero-temperature limit of the mean-field relaxation]
\label{prop:mf-zero-temp}
Assume that $\rho_m=\rho^{\otimes m}$, where $\rho\in\mathcal P_{2,\mathrm{ac}}(\Xi)$ has a strictly positive Lebesgue density. Suppose, in addition, that $G$ is continuous and attains its maximum at some $\xi_{1:m}^\star=(\xi_1^\star,\dots,\xi_m^\star)\in\Xi^m$. Let
\begin{equation}
G_m^\star:=\max_{\xi_{1:m}\in\Xi^m}G(\xi_{1:m}), \qquad \Psi_m^{\lambda,\star}
:=
\inf_{(\mu_1,\dots,\mu_m)\in\mathcal P_2(\Xi)^m}
\mathcal F_m^{\lambda,\mathrm{mf}}(\mu_1,\dots,\mu_m).
\end{equation}
Then $\Psi_m^{\lambda,\star}\longrightarrow -G_m^\star$ as $\lambda_m\downarrow 0$. Consequently, if for each $\lambda_m>0$ there exists a minimiser
$(\mu_1^{\lambda,\star},\dots,\mu_m^{\lambda,\star})$ of $\mathcal F_m^{\lambda,\mathrm{mf}}$, then
\begin{equation}
\mathcal J_m^{\mathrm{mf}}(\mu_1^{\lambda,\star},\dots,\mu_m^{\lambda,\star})
\longrightarrow
G_m^\star
\qquad\text{as }\lambda_m\downarrow0.
\label{eq:jm-convergence-min}
\end{equation}
If, in addition, $\xi_{1:m}^\star$ is isolated in the sense that for every product neighborhood $U_1\times\cdots\times U_m\ni \xi_{1:m}^\star$, it holds that $G_m^\star-\sup_{\xi_{1:m}\notin U_1\times\cdots\times U_m}G(\xi_{1:m})>0$,  then for $b=1,\dots,m$, 
\begin{equation}
\mu_b^{\lambda,\star}\Rightarrow \delta_{\xi_b^\star}
\qquad \text{as $\lambda_m\downarrow0$.} \label{eq:mu-b-converge}
\end{equation}

\end{proposition}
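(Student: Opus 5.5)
The argument has three parts: sandwich the value $\Psi_m^{\lambda,\star}$ between two explicit bounds, deduce convergence of the utility, and localise the marginals using the KL control.

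\emph{Value convergence.} The lower bound uses nonnegativity of KL: for any $(\mu_1,\dots,\mu_m)$, $\mathcal F_m^{\lambda,\mathrm{mf}}\ge -\mathcal J_m^{\mathrm{mf}}\ge -G_m^\star$, hence $\Psi_m^{\lambda,\star}\ge -G_m^\star$. For the upper bound, fix $\varepsilon>0$. By continuity of $G$ at $\xi_{1:m}^\star$, choose $r>0$ with $G\ge G_m^\star-\varepsilon$ on $\prod_b B(\xi_b^\star,r)$. Take trial measures $\mu_b^r:=\rho(\cdot\mid B(\xi_b^\star,r))$, i.e.\ $\rho$ restricted to the ball and normalised. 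These lie in $\mathcal P_2(\Xi)$ and satisfy $\mathrm{KL}(\mu_b^r\|\rho)=-\log\rho(B(\xi_b^\star,r))<\infty$, which is finite because $\rho$ has a strictly positive density. Then
\[
\Psi_m^{\lambda,\star}\le -G_m^\star+\varepsilon+\lambda_m\sum_{b=1}^m\big(-\log\rho(B(\xi_b^\star,r))\big).
\]
Letting $\lambda_m\downarrow0$ and then $\varepsilon\downarrow0$ gives $\limsup\Psi_m^{\lambda,\star}\le -G_m^\star$.

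\emph{Utility convergence.} For a minimiser, $-\mathcal J_m^{\mathrm{mf}}(\mu^{\lambda,\star})\le \Psi_m^{\lambda,\star}$ because the KL term is nonnegative. Combined with $\mathcal J_m^{\mathrm{mf}}\le G_m^\star$, this yields \eqref{eq:jm-convergence-min}.

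\emph{Concentration of the marginals.} Fix $b$ and $\delta>0$, and set $U_j:=\Xi$ for $j\ne b$ and $U_b:=B(\xi_b^\star,\delta)$. Any point outside $U_1\times\cdots\times U_m$ has its $b$-th coordinate outside $U_b$. The isolation hypothesis applied to this product neighbourhood gives a gap $\Delta>0$ such that $G\le G_m^\star-\Delta$ whenever $\xi_b\notin U_b$. Writing $\nu=\otimes_j\mu_j^{\lambda,\star}$, we get
\[
G_m^\star-\mathcal J_m^{\mathrm{mf}}=\int (G_m^\star-G)\,d\nu\ge \Delta\,\mu_b^{\lambda,\star}(U_b^c).
\]
The left side tends to $0$ by the previous step, so $\mu_b^{\lambda,\star}(B(\xi_b^\star,\delta)^c)\to0$ for every $\delta>0$. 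This is exactly weak convergence to $\delta_{\xi_b^\star}$: for bounded continuous $\psi$, split the integral into the part on the ball, where $\psi$ is close to $\psi(\xi_b^\star)$, and its complement, whose mass vanishes.

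The one point needing care is this last step. The isolation condition must be used with a neighbourhood that is product-shaped, with all coordinates except the $b$-th left unrestricted, so that the gap $\Delta$ really controls $\mu_b^{\lambda,\star}(U_b^c)$. The hypothesis as stated allows this choice; if $\Xi=\mathbb R^d$ the neighbourhood $U_j=\Xi$ is open, so it qualifies. Note that no uniformity in $\lambda_m$ is needed here, since the gap depends only on $\delta$.
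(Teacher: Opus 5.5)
Your proof is correct and follows the paper's argument essentially step for step: the same KL-nonnegativity lower bound, the same conditioned-reference trial measures $\rho(\cdot\mid U_b)$ for the upper bound, and the same gap estimate from the isolation hypothesis for concentration. The only cosmetic difference is in the last step. You apply isolation to a cylinder neighbourhood (every coordinate except the $b$-th unrestricted) and get $\mu_b^{\lambda,\star}(U_b^c)\to 0$ directly, whereas the paper uses a general product neighbourhood and deduces $\prod_{b}\mu_b^{\lambda,\star}(U_b)\to 1$.
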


\begin{proof}
Since $\mathcal J_m^{\mathrm{mf}}\le G_m^\star$, we have $\Psi_m^{\lambda,\star}\ge -G_m^\star$. For the matching upper bound, fix $\varepsilon>0$. By continuity of $G$ at $\xi_{1:m}^\star$, choose neighbourhoods $U_b\ni\xi_b^\star$ such that $G\ge G_m^\star-\varepsilon$ on $U_1\times\cdots\times U_m$. Let
\begin{equation}
\bar\mu_b^\varepsilon(d\xi):=\frac{\mathbf 1_{U_b}(\xi)}{\rho(U_b)} \rho(d\xi),
\qquad b=1,\dots,m.
\end{equation}
Then $\bar\mu_b^\varepsilon\in\mathcal P_2(\Xi)$,
$\mathrm{KL}(\bar\mu_b^\varepsilon\|\rho)=\log(1/\rho(U_b))$, and
\begin{equation}
\mathcal F_m^{\lambda,\mathrm{mf}}(\bar\mu_1^\varepsilon,\dots,\bar\mu_m^\varepsilon)
\le -(G_m^\star-\varepsilon)+\lambda_m\sum_{b=1}^m \log\frac{1}{\rho(U_b)}.
\end{equation}
Hence $\limsup_{\lambda_m\downarrow0}\Psi_m^{\lambda,\star}\le -G_m^\star+\varepsilon$, and since $\varepsilon>0$ is arbitrary,
$\Psi_m^{\lambda,\star}\to -G_m^\star$. If $(\mu_1^{\lambda,\star},\dots,\mu_m^{\lambda,\star})$ is a minimiser, then
\begin{equation}
\Psi_m^{\lambda,\star}
\ge -\mathcal J_m^{\mathrm{mf}}(\mu_1^{\lambda,\star},\dots,\mu_m^{\lambda,\star}),
\end{equation}
because the entropy term is nonnegative. Since the left-hand side tends to
$-G_m^\star$ and the utility is always at most $G_m^\star$, this yields \eqref{eq:jm-convergence-min}.

For the weak convergence, fix product neighbourhoods $U_b\ni\xi_b^\star$ and set $U:=U_1\times\cdots\times U_m$. By the isolation assumption, $\eta_U:=G_m^\star-\sup_{U^c}G>0$. Therefore
\begin{equation}
G_m^\star-\mathcal J_m^{\mathrm{mf}}(\mu_1^{\lambda,\star},\dots,\mu_m^{\lambda,\star})
\ge \eta_U\Bigl(1-\prod_{b=1}^m \mu_b^{\lambda,\star}(U_b)\Bigr).
\end{equation}
The left-hand side tends to $0$, so $\prod_{b=1}^m \mu_b^{\lambda,\star}(U_b)\to1$. Since each factor lies in $[0,1]$, necessarily $\mu_b^{\lambda,\star}(U_b)\to1$ for every $b$. Letting $U_b\downarrow\{\xi_b^\star\}$ and using the Portmanteau characterization of weak convergence gives \eqref{eq:mu-b-converge}.
\end{proof}

\subsection{The i.i.d. Batch Objective}
Let $\rho\in\mathcal P_{2,\mathrm{ac}}(\Xi)$ be a reference measure with strictly positive Lebesgue density $\rho(\xi)= Z_{\rho}^{-1} e^{-V(\xi)}$, for some confining potential $V:\Xi\to\mathbb{R}$, with $Z_{\rho} = \int e^{-V(\xi)}\mathrm{d}\xi<\infty$.\footnote{In a slight abuse of notation, we use $\rho$ to denote both the measure and its density w.r.t. the Lebesgue measure.} The KL divergence (or relative entropy) is given by
\begin{equation}
\mathrm{KL}(\mu\|\rho):=\left\{\begin{array}{lll} \int_{\Xi}\log\Big(\frac{\mathrm d\mu}{\mathrm d\rho}\Big) \mu(\mathrm d\xi) & , & \mu\ll \rho \\
+\infty & , & \text{otherwise} \end{array} \right.
\end{equation}
Let $m\in\mathbb{N}$ be a fixed batch size. In addition, let $G:(\Xi)^m\to\mathbb{R}$ denote any permutation-invariant utility (e.g., the EIG). We then define, for any $\mu\in\mathcal P_2(\Xi)$,
\begin{align}
\mathcal J_m(\mu)
&:=\int_{\Xi^m} G(\xi_{1:m}) \mu^{\otimes m}(\mathrm d\xi_{1:m}),
\label{eq:Jm-def-recall}\\
\Phi_m(\xi;\mu)
&:=\int_{\Xi^{m-1}} G(\xi,\xi_{2:m}) \mu^{\otimes(m-1)}(\mathrm d\xi_{2:m}),
\qquad \xi\in\Xi.
\label{eq:Phi-def-recall}
\end{align}
We thus have, in particular, that $\mathcal J_m(\mu)=\int_{\Xi}\Phi_m(\xi;\mu) \mu(\mathrm d\xi)$.  We also consider an explicit {repulsive} regularisation term. Let $r:\Xi\to\mathbb R$ be a measurable interaction potential.
Define, for $\mu\in\mathcal P_2(\Xi)$,
\begin{equation}
\mathcal R(\mu):=\frac12\int_{\Xi}\int_{\Xi} r(\xi-\chi) \mu(\mathrm d\xi) \mu(\mathrm d\chi),
\qquad
\Psi_r(\xi;\mu):=\int_{\Xi} r(\xi-\chi) \mu(\mathrm d\chi).
\label{eq:R-and-Psi-def-app}
\end{equation}
Finally, for $\lambda>0$ and $\eta\ge 0$, we define the repulsive entropy-regularised free energy
\begin{equation}
\mathcal F_m^{\lambda,\mathrm{rep}}(\mu)
:=
-\mathcal J_m(\mu)+\eta \mathcal R(\mu)+\lambda \mathrm{KL}(\mu\|\rho).
\label{eq:F-rep-def-app}
\end{equation}

\subsubsection{Assumptions}
\label{sec:assumptions}
We will impose the following standing assumptions 

\begin{assumption}[The reference potential]
\label{ass:V}
The potential $V\in C^2(\Xi)$ and there exist constants $\kappa>0$, $K_V<\infty$, $a_V>0$, $b_V\in\mathbb{R}$ such that the following conditions hold for all $\xi\in\Xi$:
\begin{enumerate}[leftmargin=*]
\item[(i)] {Uniform convexity:} $\nabla^2 V(\xi)\succeq \kappa I_d$.
\item[(ii)] {Global smoothness:} $\|\nabla^2 V(\xi)\|_{\mathrm{op}}\le K_V$.
\item[(iii)] {Quadratic confinement:} $V(\xi)\ge a_V\|\xi\|^2+b_V$.
\end{enumerate}
\end{assumption}

Assumption~\ref{ass:V} concerns the reference measure $\rho(\xi)\propto e^{-V(\xi)}$, which can be interpreted as a {design prior} that encodes feasibility and regularity (e.g., penalising extreme sensor locations). It requires that this measure is {strongly log-concave}, and ensures that it is both {normalisable}, and has finite second moment. The uniform convexity condition $\nabla^2V\succeq \kappa I_d$ yields a dissipative drift $\nabla\log\rho=-\nabla V$, which is a standard hypothesis guaranteeing well-posedness and stability of Langevin dynamics. The Hessian bound $\|\nabla^2V\|_{\mathrm{op}}\le K_V$ implies that $\nabla V$ is globally Lipschitz, which is convenient for propagation-of-chaos and discretisation error bounds for the IPS. Finally, the quadratic lower bound $V(\xi)\ge a_V\|\xi\|^2+b_V$ is an explicit tail condition ensuring exponential integrability. This assumption is satisfied, for example, by Gaussian design priors. We note that, under Assumption \ref{ass:V}(i), a quadratic lower bound $V(\xi)\ge \tilde a\|\xi\|^2-\tilde b$ always holds for some $\tilde a>0$, $\tilde b\in\mathbb R$. We state Assumption \ref{ass:V}(iii) separately to allow explicit control of the tail parameter $a_V$ used in exponential integrability estimates.

\begin{assumption}[The utility] 
\label{ass:G}
The utility $G\in C^2((\Xi)^m)$ is permutation-invariant and there exist constants $C_G\ge 0$, $L_G\ge 0$ and an exponent $\varepsilon_G\in(0,2]$ such that the following conditions hold for all $\xi_{1:m}\in(\Xi)^m$:
\begin{enumerate}[leftmargin=*]
\item[(i)] {Subquadratic growth:}\quad
$|G(\xi_{1:m})|\le C_G\big(1+\sum_{j=1}^m\|\xi_j\|^{2-\varepsilon_G}\big)$.
\item[(ii)] {Linear growth of first derivatives:}\quad
$\|\nabla_jG(\xi_{1:m})\|\le C_G\big(1+\sum_{\ell=1}^m\|\xi_\ell\|\big)$ for each $j=1,\dots,m$.
\item[(iii)] {Uniform Hessian bound:}\quad
$\|\nabla^2G(\xi_{1:m})\|_{\mathrm{op}}\le L_G$.
\end{enumerate}
\end{assumption}

Assumption~\ref{ass:G} relates to the (deterministic) batch utility $G(\xi_{1:m})=\mathrm{EIG}_m(\xi_{1:m})$. The subquadratic growth bound ensures integrability of $G$ under product measures $\mu^{\otimes m}$ with $\mu\in\mathcal P_2(\Xi)$, and guarantees finiteness of normalisation constants of the form $\int\exp \big(\frac{1}{\lambda_m}G(\xi_{1:m})\big)\rho_m(\xi_{1:m})\mathrm{d}\xi_{1:m}$ when combined with quadratic confinement of $\rho_m$. The linear growth condition on first derivatives justifies differentiation under the integral sign in the conditional utility $\Phi_m(\cdot;\mu)$, and ensures that the mean-field drift $m\nabla\Phi_m(\xi;\mu)$ is well-defined with finite second moments along the dynamics. Finally, the uniform Hessian bound $\|\nabla^2G\|_{\mathrm{op}}\le L_G$ yields global Lipschitz control of the interaction drift with respect to the state variable, which allows us to establish existence and uniqueness of solutions to the McKean--Vlasov SDE, as well as quantitative stability estimates. This assumption can be verified under standard smoothness and domination conditions on the likelihood $\xi\mapsto \pi_\xi(y\mid\theta)$. In particular, it holds for many smooth parametric models (e.g., linear--Gaussian, smooth additive-noise).

\begin{assumption}[The repulsion]\label{ass:r}
The interaction potential $r:\mathbb R^d\to\mathbb R$ satisfies:
\begin{enumerate}[leftmargin=*]
\item[(i)] {Evenness:} $r(z)=r(-z)$ for all $z\in\mathbb R^d$.
\item[(ii)] {Global smoothness:} $r\in C^2(\mathbb R^d)$ and there exists $L_r<\infty$ such that $\|\nabla^2 r(z)\|_{\mathrm{op}}\le L_r$ for all $z\in\mathbb R^d$.
\item[(iii)] {Quadratic growth:} there exists $C_r<\infty$ such that $|r(z)|\le C_r(1+\|z\|^2)$ for all $z$.
\item[(iv)] {Lower bound:} there exists $\underline r\in\mathbb R$ such that $r(z)\ge \underline r$ for all $z\in\mathbb R^d$.
\end{enumerate}
\end{assumption}

Assumption~\ref{ass:r} relates to the repulsion potential. The evenness condition is standard for symmetric repulsion energies and yields the simple first-variation formula $\delta\mathcal R/\delta\mu=\Psi_r$ (see Lemma~\ref{lem:deltaR}). Assumption~\ref{ass:r}(ii) implies $\nabla r$ is globally Lipschitz with constant $L_r$ and has linear growth. Assumption~\ref{ass:r}(iii) ensures $\mathcal R(\mu)$ is finite for all $\mu\in\mathcal P_2(\Xi)$. Finally, Assumption~\ref{ass:r}(iv) ensures that $\mathcal R$ is bounded below, which is used to obtain existence of minimisers for $\mathcal F_m^{\lambda,\mathrm{rep}}$ uniformly in $\lambda>0$. In many repulsive examples one has $r\ge0$, in which case $\underline r=0$.

\begin{assumption}[The regularisation]
\label{ass:strong}
For $\eta\ge 0$, the constant $\lambda>0$ is chosen such that
\begin{equation}
\alpha
:=
\lambda\kappa - mL_G - \eta L_r
>0,
\label{eq:alpha}
\end{equation}
where $\kappa$ is from Assumption~\ref{ass:V}(i), $L_G$ is from Assumption~\ref{ass:G}(iii), and $L_r$ is from Assumption~\ref{ass:r}(ii).
\end{assumption}

Assumption~\ref{ass:strong} relates to the regularisation parameter $\lambda>0$. It is a {strong-confinement} condition, which requires the entropic regularisation to dominate the curvature of the utility at the scale of the batch size. This hypothesis is key to establishing global contractivity and uniqueness results (e.g. uniqueness of stationary solutions, exponential convergence to equilibrium, and quantitative propagation-of-chaos bounds), since it rules out strong multimodality induced by the utility term. It is worth noting that, since this condition scales with the batch size $m$, it becomes more restrictive for large batch sizes, reflecting the fact that the interaction strength in $m$-batch utilities increases with $m$.

\subsubsection{Well-Posedness}
\begin{lemma}[Well-posedness and basic bounds]\label{lem:Jm-wellposed}
Suppose that Assumption~\ref{ass:G} holds. Then for every $\mu\in\mathcal P_2(\Xi)$, $\mathcal J_m(\mu)$ is finite. Moreover, there exists a constant $C$ depending only on $C_G$, $\varepsilon_G$, and $m$, such that
\begin{align}
|\mathcal J_m(\mu)| &\le C \big(1+mM_2(\mu)\big),
\label{eq:Jm-bound} \\
|\Phi_m(\xi;\mu)| &\le C \Big(1+\|\xi\|^{2-\varepsilon_G}+(m-1)M_2(\mu)\Big), \qquad \forall \xi\in\Xi.
\label{eq:Phi-bound}
\end{align}
\end{lemma}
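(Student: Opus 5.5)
The plan is to integrate the pointwise subquadratic growth bound of Assumption~\ref{ass:G}(i) against the relevant product measures. Everything reduces to controlling moments $\int\|\xi\|^{2-\varepsilon_G}\,\mu(\mathrm d\xi)$ by the second moment $M_2(\mu)$.

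\textbf{Elementary inequality.} Since $0\le 2-\varepsilon_G\le 2$, for every $x\ge 0$ we have
\begin{equation}
x^{2-\varepsilon_G}\le 1+x^2.
\end{equation}
To see this, split into the cases $x\le 1$ and $x\ge 1$. Alternatively, one can apply Jensen's inequality to the concave map $t\mapsto t^{(2-\varepsilon_G)/2}$. Either way,
\begin{equation}
\int_\Xi\|\xi\|^{2-\varepsilon_G}\,\mu(\mathrm d\xi)\le 1+M_2(\mu),
\end{equation}
which is finite for every $\mu\in\mathcal P_2(\Xi)$.

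\textbf{Bound on $\Phi_m$.} Fix $\xi\in\Xi$. Assumption~\ref{ass:G}(i) gives
\begin{equation}
|G(\xi,\xi_{2:m})|\le C_G\Big(1+\|\xi\|^{2-\varepsilon_G}+\sum_{j=2}^m\|\xi_j\|^{2-\varepsilon_G}\Big).
\end{equation}
The right-hand side is integrable under $\mu^{\otimes(m-1)}$ by the elementary inequality. Hence $\Phi_m(\xi;\mu)$ is well defined by dominated integrability; measurability of the integrand comes from continuity of $G$. Integrating termwise gives
\begin{equation}
|\Phi_m(\xi;\mu)|\le C_G\big(1+\|\xi\|^{2-\varepsilon_G}+(m-1)(1+M_2(\mu))\big).
\end{equation}
This is at most $C\big(1+\|\xi\|^{2-\varepsilon_G}+(m-1)M_2(\mu)\big)$ with, for example, $C=C_G\,m$, which absorbs the extra $(m-1)$ constant into the leading $1$.

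\textbf{Bound on $\mathcal J_m$.} The same argument applies with $\mu^{\otimes m}$ directly. Integrating Assumption~\ref{ass:G}(i) gives
\begin{equation}
|\mathcal J_m(\mu)|\le C_G\big(1+m(1+M_2(\mu))\big)\le C\big(1+mM_2(\mu)\big),
\end{equation}
with $C=C_G(m+1)$. In particular, $\mathcal J_m(\mu)$ is finite.

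\textbf{Consistency check.} By Fubini, justified by the absolute integrability just shown, $\mathcal J_m(\mu)=\int\Phi_m\,\mathrm d\mu$, consistent with the identity stated before the assumptions.

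\textbf{Main obstacle.} There is no real obstacle here. The only point needing care is the bookkeeping of constants, so that $C$ depends only on $C_G$, $\varepsilon_G$ and $m$; this is automatic above, and the dependence on $\varepsilon_G$ is in fact trivial. One should also note explicitly that integrability, not just the bound, is what makes $\mathcal J_m$ and $\Phi_m$ well defined as Lebesgue integrals.
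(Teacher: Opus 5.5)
Your proposal is correct and follows essentially the same route as the paper: apply the elementary bound $\|z\|^{2-\varepsilon_G}\le 1+\|z\|^2$ and integrate Assumption~\ref{ass:G}(i) against $\mu^{\otimes m}$ (resp.\ $\mu^{\otimes(m-1)}$) via Tonelli. Your bookkeeping, which leaves the fixed coordinate's term $\|\xi\|^{2-\varepsilon_G}$ untouched in the $\Phi_m$ bound, is if anything slightly more careful than the paper's one-line ``follows similarly''.
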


\begin{proof}
Since $\varepsilon_G>0$, we have that $\|z\|^{2-\varepsilon_G}\le 1+\|z\|^2$ for all $z\in\Xi$. Thus, Assumption~\ref{ass:G}(i) implies that $|G(\xi_{1:m})|\le C_G\big(1+m+\sum_{j=1}^m\|\xi_j\|^2\big)$. Integrating both sides over $\mu^{\otimes m}$, and using Tonelli's theorem, yields \eqref{eq:Jm-bound}. The bound \eqref{eq:Phi-bound}
follows similarly by integrating over $\mu^{\otimes(m-1)}$.
\end{proof}

\subsubsection{First Variations}
\label{sec:first-var-theory}
\begin{lemma}[First variation of $\mathcal J_m$]\label{lem:deltaJ}
Suppose Assumption~\ref{ass:G} holds. Fix $\mu,\nu\in\mathcal P_2(\Xi)$ and define $\mu_\varepsilon=(1-\varepsilon)\mu+\varepsilon\nu$ for $\varepsilon\in[0,1]$. Then $\varepsilon\mapsto \mathcal J_m(\mu_\varepsilon)$ is differentiable at $\varepsilon=0$ and $\smash{\frac{\mathrm d}{\mathrm d\varepsilon}\mathcal J_m(\mu_\varepsilon)|_{\varepsilon=0}=m\int_{\Xi}\Phi_m(\xi;\mu) (\nu-\mu)(\mathrm d\xi)}$. Consequently, the (linear) first variation of $\mathcal{J}_m$ at $\mu$ is given $\mu$-a.e. up to an additive constant by
\begin{equation}
\frac{\delta\mathcal J_m}{\delta\mu}(\mu)(\xi)=m \Phi_m(\xi;\mu).
\end{equation}
\end{lemma}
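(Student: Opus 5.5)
The plan is to expand $\mathcal J_m(\mu_\varepsilon)$ as a polynomial in $\varepsilon$ by multilinearity of the product measure, and then read off the linear coefficient. Write $\mu_\varepsilon=\mu+\varepsilon(\nu-\mu)$, so that
\begin{equation}
\mu_\varepsilon^{\otimes m}=\sum_{S\subseteq[m]}\varepsilon^{|S|}\,\bigotimes_{j=1}^m \sigma_j^S,
\qquad \sigma_j^S=\nu-\mu \text{ if } j\in S,\ \sigma_j^S=\mu \text{ otherwise}.
\end{equation}
Each term is a finite signed measure dominated in total variation by a product of copies of $\mu+\nu$. Because $\mu,\nu\in\mathcal P_2(\Xi)$, Assumption~\ref{ass:G}(i) and the bound $\|z\|^{2-\varepsilon_G}\le 1+\|z\|^2$ give $\int|G|\,\mathrm d(\mu+\nu)^{\otimes m}<\infty$, exactly as in Lemma~\ref{lem:Jm-wellposed}. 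Every term is therefore integrable, and $\mathcal J_m(\mu_\varepsilon)=\sum_{k=0}^m \varepsilon^k c_k$ is a polynomial of degree at most $m$ in $\varepsilon$ with finite coefficients.

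Differentiability at $\varepsilon=0$ then follows immediately, and the derivative is $c_1=\sum_{b=1}^m\int G\,\mathrm d(\mu^{\otimes(b-1)}\otimes(\nu-\mu)\otimes\mu^{\otimes(m-b)})$. To simplify this, I use permutation invariance of $G$ to move the $b$-th slot to the first. Fubini/Tonelli, justified by the integrability just established, then identifies each summand with $\int_\Xi\Phi_m(\xi;\mu)(\nu-\mu)(\mathrm d\xi)$. Here $\Phi_m(\cdot;\mu)$ is finite everywhere by \eqref{eq:Phi-bound} and has quadratic growth, so it is integrable against $\mu$ and $\nu$. Summing the $m$ identical contributions gives the factor $m$.

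For the first-variation statement, recall that the first variation is defined as any function $\varphi$ satisfying $\frac{\mathrm d}{\mathrm d\varepsilon}\mathcal J_m(\mu_\varepsilon)|_{0}=\int\varphi\,\mathrm d(\nu-\mu)$ for all admissible $\nu$, so $m\Phi_m(\cdot;\mu)$ qualifies. For uniqueness up to constants, suppose $\varphi_1,\varphi_2$ both work and set $h=\varphi_1-\varphi_2$. Then $\int h\,\mathrm d\nu=\int h\,\mathrm d\mu$ for all $\nu\in\mathcal P_2$. Taking $\nu=\mu(\cdot\mid A)$ for Borel $A$ with $\mu(A)>0$ shows that the average of $h$ over every such set equals the constant $\int h\,\mathrm d\mu$, hence $h$ is constant $\mu$-a.e. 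Note that $\mu(\cdot\mid A)\in\mathcal P_2$ because it is dominated by $\mu/\mu(A)$.

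The main technical care lies in the integrability justification for the signed-measure terms and for Fubini, but the subquadratic growth assumption handles this directly. The rest is bookkeeping.
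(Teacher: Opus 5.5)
Your proposal is correct and follows the same route as the paper. Both arguments expand $\mu_\varepsilon^{\otimes m}$ by multilinearity, use Assumption~\ref{ass:G}(i) with $\mu,\nu\in\mathcal P_2(\Xi)$ for integrability, and use permutation invariance plus Fubini to collapse the $m$ first-order terms into $m\int_\Xi\Phi_m(\xi;\mu)(\nu-\mu)(\mathrm d\xi)$. Your version is in fact slightly more careful in two places. First, the exact polynomial expansion with domination by $(\mu+\nu)^{\otimes m}$ rigorously justifies the paper's ``$+o(\varepsilon)$'' remainder, whose higher-order terms require integrability of $G$ under measures such as $\nu^{\otimes k}\otimes\mu^{\otimes(m-k)}$, $k\ge 2$, and not only the two measures the paper lists. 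Second, your conditioning argument with $\mu(\cdot\mid A)$ supplies the uniqueness up to constants that the paper leaves to ``the definition''.
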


\begin{proof}
Set $\delta:=\nu-\mu$ and $\mu_\varepsilon:=\mu+\varepsilon\delta$. By multilinearity of the product measure, we can expand $\mu_{\varepsilon}^{\otimes m}$ as 
\begin{equation}
\mu_\varepsilon^{\otimes m}
=\mu^{\otimes m}
+\varepsilon\sum_{j=1}^m \mu^{\otimes(j-1)}\otimes\delta\otimes\mu^{\otimes(m-j)}
+o(\varepsilon)
\end{equation}
against any test function integrable under $\mu^{\otimes m}$ and $\nu\otimes\mu^{\otimes(m-1)}$. Assumption~\ref{ass:G}(i) and $\mu,\nu\in\mathcal P_2(\Xi)$ ensure that $G$ is integrable under these measures. Finally, integrating $G$ against the expansion and using permutation invariance gives
\begin{equation}
\left.\frac{d}{d\varepsilon}\mathcal J_m(\mu_\varepsilon)\right|_{\varepsilon=0}
=m\int_{\Xi^m} G(\xi_{1:m}) (\nu-\mu)(d\xi_1) \mu^{\otimes(m-1)}(d\xi_{2:m})
=m\int_{\Xi}\Phi_m(\xi;\mu) (\nu-\mu)(d\xi).
\end{equation}
The formula for the linear first variation follows from the definition.
\end{proof}

\begin{lemma}[First variation of $\mathcal{R}$]
\label{lem:deltaR}
Suppose Assumption~\ref{ass:r} holds. Fix $\mu,\nu\in\mathcal P_2(\Xi)$ and define the mixture path $\mu_\varepsilon:=(1-\varepsilon)\mu+\varepsilon\nu$ for $\varepsilon\in[0,1]$. Then $\varepsilon\mapsto \mathcal R(\mu_\varepsilon)$ is differentiable at $\varepsilon=0$ and $\frac{\mathrm d}{\mathrm d\varepsilon}\mathcal R(\mu_\varepsilon)|_{\varepsilon=0}
= \int_{\Xi}\Psi_r(\xi;\mu) (\nu-\mu)(\mathrm d\xi)$. Consequently, the (linear) first variation of $\mathcal R$ at $\mu$ is given $\mu$-a.e.\ up to an additive constant by
\begin{equation}
\frac{\delta \mathcal R}{\delta\mu}(\mu)(\xi)=\Psi_r(\xi;\mu).
\label{eq:rep-first-variation}
\end{equation}
\end{lemma}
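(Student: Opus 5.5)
We expand $\mathcal R$ along the mixture path directly, exploiting that $\mathcal R$ is a quadratic form in $\mu$. Set $\delta:=\nu-\mu$, so that $\mu_\varepsilon=\mu+\varepsilon\delta$. By bilinearity of $(\mu,\mu')\mapsto\frac12\iint r(\xi-\chi)\,\mu(\mathrm d\xi)\,\mu'(\mathrm d\chi)$, we obtain the exact (finite) expansion
\begin{equation}
\mathcal R(\mu_\varepsilon)=\mathcal R(\mu)+\frac{\varepsilon}{2}\iint r(\xi-\chi)\,\big[\delta(\mathrm d\xi)\mu(\mathrm d\chi)+\mu(\mathrm d\xi)\delta(\mathrm d\chi)\big]+\frac{\varepsilon^2}{2}\iint r(\xi-\chi)\,\delta(\mathrm d\xi)\,\delta(\mathrm d\chi).
\end{equation}
Since this is a polynomial in $\varepsilon$, differentiability at $\varepsilon=0$ is immediate once every coefficient is finite.

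\textbf{Integrability.} The first step is to justify that all integrals appearing are finite, so that splitting signed measures via Fubini is legitimate. By Assumption~\ref{ass:r}(iii), $|r(\xi-\chi)|\le C_r(1+\|\xi-\chi\|^2)\le C_r(1+2\|\xi\|^2+2\|\chi\|^2)$. Hence $r$ is integrable against every product $\alpha\otimes\beta$ with $\alpha,\beta\in\{\mu,\nu\}\subset\mathcal P_2(\Xi)$, with bound $C_r(1+2M_2(\alpha)+2M_2(\beta))$. The signed products involving $\delta$ decompose into differences of such products, so Fubini--Tonelli applies. The same bound gives $|\Psi_r(\xi;\mu)|\le C_r(1+2\|\xi\|^2+2M_2(\mu))$, so $\Psi_r(\cdot;\mu)$ is integrable against both $\mu$ and $\nu$. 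This bookkeeping is the only mildly delicate point, and it is routine.

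\textbf{Symmetrisation.} Next we use evenness (Assumption~\ref{ass:r}(i)). Swapping variables $\xi\leftrightarrow\chi$ in the second piece of the linear coefficient and using $r(\chi-\xi)=r(\xi-\chi)$ shows that the two pieces coincide. The linear coefficient is therefore
\begin{equation}
\iint r(\xi-\chi)\,\mu(\mathrm d\chi)\,\delta(\mathrm d\xi)=\int_\Xi\Psi_r(\xi;\mu)\,(\nu-\mu)(\mathrm d\xi),
\end{equation}
where the equality uses Fubini and the definition of $\Psi_r$. This gives the stated derivative.

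\textbf{Identification of the first variation.} Finally, the first variation is by definition any function $\frac{\delta\mathcal R}{\delta\mu}(\mu)$ satisfying $\frac{\mathrm d}{\mathrm d\varepsilon}\mathcal R(\mu_\varepsilon)|_{\varepsilon=0}=\int\frac{\delta\mathcal R}{\delta\mu}(\mu)\,\mathrm d(\nu-\mu)$ for all admissible $\nu$. The function $\Psi_r(\cdot;\mu)$ qualifies. Such a function is determined only up to an additive constant, since $\int c\,\mathrm d(\nu-\mu)=0$. Its $\mu$-a.e. uniqueness modulo constants follows exactly as in Lemma~\ref{lem:deltaJ}.
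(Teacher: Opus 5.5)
Your proposal is correct and follows essentially the same route as the paper: expand the quadratic form along $\mu+\varepsilon(\nu-\mu)$, use evenness of $r$ to merge the two linear cross terms into $\int\Psi_r(\cdot;\mu)\,\mathrm d(\nu-\mu)$, and read off the first variation. Your integrability check via the quadratic growth bound in Assumption~\ref{ass:r}(iii), together with noting that the expansion is an exact polynomial in $\varepsilon$, supplies a justification that the paper leaves implicit.
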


\begin{proof}
With $\delta:=\nu-\mu$ and $\mu_\varepsilon:=\mu+\varepsilon\delta$. Then $\mathcal R(\mu_\varepsilon) =\frac12\iint r(\xi-\chi) (\mu+\varepsilon\delta)(d\xi)(\mu+\varepsilon\delta)(d\chi)$. The $O(\varepsilon)$ term is given by
\begin{equation}
\frac{\varepsilon}{2}\iint r(\xi-\chi) \delta(d\xi)\mu(d\chi)
+\frac{\varepsilon}{2}\iint r(\xi-\chi) \mu(d\xi)\delta(d\chi).
\end{equation}
Since $r$ is even, the two terms coincide. We thus have that
\begin{equation}
\left.\frac{d}{d\varepsilon}\mathcal R(\mu_\varepsilon)\right|_{\varepsilon=0}
=\int_{\Xi}\Big(\int_{\Xi}r(\xi-\chi) \mu(d\chi)\Big)(\nu-\mu)(d\xi)
=\int_{\Xi}\Psi_r(\xi;\mu) (\nu-\mu)(d\xi),
\end{equation}
The formula for the linear first variation follows from the definition.
\end{proof}

\begin{lemma}[First variation of KL]\label{lem:deltaEnt}
Let $\rho\in\mathcal{P}_{2,\mathrm{ac}}(\Xi)$ be as defined above. Suppose
$\mu\in\mathcal P_{2,\mathrm{ac}}(\Xi)$ satisfies $\mathrm{KL}(\mu\|\rho)<\infty$ and $f:=\frac{\mathrm d\mu}{\mathrm d\rho}>0$ $\rho\text{-a.e.}$. Then, up to an additive constant,  
\begin{equation}
\frac{\delta}{\delta\mu}\mathrm{KL}(\mu\|\rho)(\xi)=\log\Big(\frac{\mathrm d\mu}{\mathrm d\rho}(\xi)\Big)+1
\qquad\mu\text{-a.e.}
\label{eq:dEnt}
\end{equation}
\end{lemma}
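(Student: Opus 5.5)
We compute the directional derivative of $\varepsilon\mapsto \mathrm{KL}(\mu_\varepsilon\|\rho)$ along the mixture path $\mu_\varepsilon:=(1-\varepsilon)\mu+\varepsilon\nu$ at $\varepsilon=0^+$. We use test measures $\nu\in\mathcal P_{2,\mathrm{ac}}(\Xi)$ with $\mathrm{KL}(\nu\|\rho)<\infty$, which are the only directions in which the functional stays finite. Write $f=\frac{\mathrm d\mu}{\mathrm d\rho}$, $g=\frac{\mathrm d\nu}{\mathrm d\rho}$ and $f_\varepsilon=f+\varepsilon(g-f)$, and set $\phi(x)=x\log x$. Then $\mathrm{KL}(\mu_\varepsilon\|\rho)=\int\phi(f_\varepsilon)\,\mathrm d\rho$. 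The goal is the identity
\begin{equation}
\lim_{\varepsilon\downarrow0}\frac{1}{\varepsilon}\Big(\mathrm{KL}(\mu_\varepsilon\|\rho)-\mathrm{KL}(\mu\|\rho)\Big)=\int_\Xi\big(\log f+1\big)\,(\nu-\mu)(\mathrm d\xi).
\end{equation}
Once this holds, the first variation is $\log f+1$ by definition. It is unique up to an additive constant because $(\nu-\mu)$ has zero total mass, and it is only determined $\mu$-a.e. (indeed $\rho$-a.e., since $f>0$ $\rho$-a.e.). These are exactly the qualifiers in \eqref{eq:dEnt}.

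The key step is a monotone convergence argument. Since $\phi$ is convex, the difference quotient $D_\varepsilon:=\varepsilon^{-1}\big(\phi(f_\varepsilon)-\phi(f)\big)$ is pointwise nondecreasing in $\varepsilon\in(0,1]$. It converges pointwise, as $\varepsilon\downarrow0$, to $\phi'(f)(g-f)=(\log f+1)(g-f)$ wherever $f>0$, which holds $\rho$-a.e. by hypothesis; this is where the positivity assumption enters. The quotients are bounded above by $D_1=\phi(g)-\phi(f)$, which is $\rho$-integrable because both KL divergences are finite. Indeed, $\phi\ge -e^{-1}$ and $\rho$ is a probability measure, so $\int\phi(f)\,\mathrm d\rho$ and $\int\phi(g)\,\mathrm d\rho$ are finite exactly when the KLs are. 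Applying monotone convergence to the nonincreasing family $D_1-D_\varepsilon\ge0$ then gives
\begin{equation}
\int D_\varepsilon\,\mathrm d\rho\;\downarrow\;\int(\log f+1)(g-f)\,\mathrm d\rho\in[-\infty,\infty).
\end{equation}

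The main obstacle is to show this limit is finite and to split it as $\int(\log f+1)\,\mathrm d\nu-\int(\log f+1)\,\mathrm d\mu$. The $\mu$-part is fine, since $\int f|\log f|\,\mathrm d\rho<\infty$ follows from finite KL together with the bound on the negative part of $\phi$. For the $\nu$-part, the positive part of $\log f$ under $\nu$ is controlled by Young's inequality $g\log f\le g\log g+f/e$, giving $\int g(\log f)^+\,\mathrm d\rho\le\int g\log g\,\mathrm d\rho+\int f/e\,\mathrm d\rho<\infty$. The negative part $\int g(\log f)^-\,\mathrm d\rho$ could be infinite; in that case the directional derivative equals $-\infty$. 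I would handle this in one of two ways: either restrict the class of admissible directions $\nu$ to those with $\log f\in L^1(\nu)$, which includes $\nu$ with bounded density ratio $g/f$, and note that this class suffices to identify the first variation up to constants; or simply interpret the formula in the extended sense. The paper cites \citet[Lemma 10.4.1]{ambrosio2008gradientflows} for this, so I would present the argument above in the restricted class and refer there for the general statement.
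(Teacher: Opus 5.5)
Your argument is correct, and it is more than the paper provides. The paper's proof of this lemma is only a citation to \citet[Chapter 9]{villani2009optimal} and \citet[Chapter 11]{ambrosio2008gradientflows}. You instead give a self-contained derivation along the mixture path, built from three ingredients:
\begin{itemize}
\item monotonicity of the convex difference quotients of $\phi(x)=x\log x$;
\item the integrable upper envelope $D_1=\phi(g)-\phi(f)$;
\item monotone convergence.
\end{itemize}
This route matches exactly the linear (mixture-path) first variation the paper itself uses in Lemmas~\ref{lem:deltaJ} and~\ref{lem:deltaR}.

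It also makes visible a point the statement glosses over. For some $\nu$ with $\mathrm{KL}(\nu\|\rho)<\infty$ one has $\log f\notin L^1(\nu)$, and then the one-sided derivative is $-\infty$. So the formula holds either in the extended sense or only over a restricted class of directions.

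Your restricted class (bounded $g/f$) is precisely what is needed downstream. In the proof of Theorem~\ref{thm:fixedpoint} the perturbations are $(1+\varepsilon\varphi)\mu$ with bounded mean-zero $\varphi$, which is of this form. For these, your one-sided result applied to $(1\pm c\varphi)\mu$, with $c\le 1/\|\varphi\|_\infty$, gives the two-sided derivative used there.

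Two small corrections.

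First, the Young-type bound as you wrote it, $\int g(\log f)^+\,\mathrm d\rho\le\int g\log g\,\mathrm d\rho+\int f/e\,\mathrm d\rho$, can fail. The reason is that $g\log g$ may be negative on $\{f\le 1\}$, where $(\log f)^+=0$. Restrict the pointwise inequality $g\log f\le g\log g+f/e$ to $\{f>1\}$ instead. This gives $\int g(\log f)^+\,\mathrm d\rho\le\int (g\log g)^+\,\mathrm d\rho+1/e\le \mathrm{KL}(\nu\|\rho)+2/e$, which still yields the finiteness you need.

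Second, the zero total mass of $\nu-\mu$ only shows that additive constants are invisible to the first variation. Determination up to a constant comes from the richness of the direction class: testing against all bounded mean-zero $\varphi$, for a difference $h\in L^1(\mu)$. You do supply this at the end, so only the phrasing of your first paragraph needs adjusting.
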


\begin{proof}
This result is standard; see, e.g., \citet[Chapter 9]{villani2009optimal}; \citet[Chapter 11]{ambrosio2008gradientflows}.
\end{proof}

\begin{corollary}[First variation of $\mathcal F_m^{\lambda,\mathrm{rep}}$]
\label{cor:deltaF}
Suppose Assumptions~\ref{ass:G} and \ref{ass:r} hold. Let $\rho\in\mathcal{P}_{2,\mathrm{ac}}(\Xi)$ be as defined above. Suppose $\mu\in\mathcal{P}_{2,\mathrm{ac}}(\Xi)$ satisfies $\mathrm{KL}(\mu\|\rho)<\infty$ and $\smash{\frac{\mathrm d\mu}{\mathrm d\rho}>0}$ $\rho$-a.e. Then the (linear) first variation of $\mathcal F_m^{\lambda,\mathrm{rep}}$ at $\mu$ is given $\mu$-a.e.\ up to an additive constant by
\begin{equation}
\frac{\delta \mathcal F_m^{\lambda,\mathrm{rep}}}{\delta\mu}(\mu)(\xi)
=
-m \Phi_m(\xi;\mu)
+\eta \Psi_r(\xi;\mu)
+\lambda\left(\log\Big(\frac{\mathrm d\mu}{\mathrm d\rho}(\xi)\Big)+1\right).
\label{eq:deltaF}
\end{equation}
\end{corollary}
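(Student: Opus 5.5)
The corollary follows from linearity of the first variation, combining Lemma~\ref{lem:deltaJ}, Lemma~\ref{lem:deltaR} and Lemma~\ref{lem:deltaEnt}. The natural route is to work along the mixture path $\mu_\varepsilon=(1-\varepsilon)\mu+\varepsilon\nu$. Here $\nu$ ranges over the admissible class $\{\nu\in\mathcal P_2(\Xi):\mathrm{KL}(\nu\|\rho)<\infty\}$; we may also restrict to $\nu$ with bounded density ratio $\mathrm d\nu/\mathrm d\mu$ where needed. For such $\nu$ we compute the one-sided derivative at $\varepsilon=0$ of each of the three terms of $\mathcal F_m^{\lambda,\mathrm{rep}}=-\mathcal J_m+\eta\mathcal R+\lambda\mathrm{KL}(\cdot\|\rho)$ separately and then sum.

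For the first two terms, everything is already available. Assumption~\ref{ass:G} and Lemma~\ref{lem:deltaJ} give $\frac{\mathrm d}{\mathrm d\varepsilon}\mathcal J_m(\mu_\varepsilon)|_{0}=m\int\Phi_m(\cdot;\mu)\,\mathrm d(\nu-\mu)$. The integrals are finite by the bound \eqref{eq:Phi-bound} of Lemma~\ref{lem:Jm-wellposed}, since $\mu,\nu\in\mathcal P_2$. Similarly, Assumption~\ref{ass:r}, via Lemma~\ref{lem:deltaR}, gives the derivative $\int\Psi_r(\cdot;\mu)\,\mathrm d(\nu-\mu)$. Finiteness here follows from the quadratic growth in Assumption~\ref{ass:r}(iii), which yields $|\Psi_r(\xi;\mu)|\le C_r(1+2\|\xi\|^2+2M_2(\mu))$.

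For the entropy term we invoke Lemma~\ref{lem:deltaEnt}. This is where the hypotheses $\mathrm{KL}(\mu\|\rho)<\infty$ and $\mathrm d\mu/\mathrm d\rho>0$ $\rho$-a.e.\ are used. If one wants the derivative explicitly, the standard argument runs as follows. Convexity of $\phi(s)=s\log s$ makes the difference quotients $\varepsilon^{-1}[\phi(f_\varepsilon)-\phi(f)]$ monotone in $\varepsilon$, where $f_\varepsilon=(1-\varepsilon)f+\varepsilon g$, $f=\mathrm d\mu/\mathrm d\rho$ and $g=\mathrm d\nu/\mathrm d\rho$. They are bounded above by $\phi(g)-\phi(f)$, which is integrable. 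Monotone convergence then yields the limit $\int(\log f+1)(g-f)\,\mathrm d\rho$, and positivity of $f$ ensures that $\log f$ is finite a.e. Establishing the integrability of $(\log f)(g-f)$ is the only delicate point. We would sidestep it by citing Lemma~\ref{lem:deltaEnt} directly and, if necessary, restricting the test class to $\nu$ with bounded $g/f$, which suffices to identify the first variation up to constants.

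Finally, we add the three derivatives with weights $-1$, $\eta$ and $\lambda$. This gives
\begin{equation}
\frac{\mathrm d}{\mathrm d\varepsilon}\mathcal F_m^{\lambda,\mathrm{rep}}(\mu_\varepsilon)\Big|_{\varepsilon=0}=\int\Big(-m\Phi_m(\cdot;\mu)+\eta\Psi_r(\cdot;\mu)+\lambda(\log\tfrac{\mathrm d\mu}{\mathrm d\rho}+1)\Big)\mathrm d(\nu-\mu).
\end{equation}
Since $\nu-\mu$ has zero total mass, the integrand is determined only up to an additive constant, and only $\mu$-a.e.\ on the relevant support. By the definition of the linear first variation, this is exactly \eqref{eq:deltaF}. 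The main obstacle is the entropy derivative, but it is already covered by Lemma~\ref{lem:deltaEnt}.
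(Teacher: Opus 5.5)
Your proposal is correct and takes the same route as the paper, which obtains \eqref{eq:deltaF} by summing the first variations from Lemma~\ref{lem:deltaJ}, Lemma~\ref{lem:deltaR} and Lemma~\ref{lem:deltaEnt} along the mixture path. Your extra checks tighten what the paper leaves implicit: the growth bounds make the $\Phi_m$ and $\Psi_r$ integrals finite, and restricting to test measures with bounded $\mathrm d\nu/\mathrm d\mu$ keeps the entropy derivative finite, so the three one-sided derivatives can legitimately be added.
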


\begin{proof}
The result is an immediate consequence of Lemma~\ref{lem:deltaJ}, Lemma~\ref{lem:deltaR}, and Lemma~\ref{lem:deltaEnt}.
\end{proof}

\subsubsection{Minimisers: Existence, Fixed Point Characterisation, Uniqueness}
\label{sec:exist-unique-gibbs}
\begin{lemma}[Entropy controls second moments]\label{lem:Ent-controls-M2}
Suppose that Assumption~\ref{ass:V} holds. For any $c\in(0,a_V)$ there exists $C_{2,c}<\infty$ such that for all $\mu\ll\rho$,
\begin{equation}
M_2(\mu)\le \frac{1}{c}\mathrm{KL}(\mu\|\rho)+C_{2,c}.
\label{eq:Ent-controls-M2}
\end{equation}
\end{lemma}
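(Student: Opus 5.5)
The plan is to use the Donsker--Varadhan (Gibbs) variational inequality: for any measurable $f$ with $\int e^{f}\,\mathrm d\rho<\infty$ and any $\mu\ll\rho$ with $\mathrm{KL}(\mu\|\rho)<\infty$,
\begin{equation}
\int f\,\mathrm d\mu \;\le\; \mathrm{KL}(\mu\|\rho)+\log\int e^{f}\,\mathrm d\rho .
\end{equation}
We apply it with the test function $f(\xi)=c\|\xi\|^2$ for a fixed $c\in(0,a_V)$. If $\mathrm{KL}(\mu\|\rho)=\infty$ the claimed bound is trivial, so we may assume it is finite.

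The first step is to check that $\int e^{c\|\xi\|^2}\rho(\mathrm d\xi)<\infty$. Since $\rho(\xi)=Z_\rho^{-1}e^{-V(\xi)}$, Assumption~\ref{ass:V}(iii) gives
\begin{equation}
e^{c\|\xi\|^2}\rho(\xi)\le Z_\rho^{-1}e^{-b_V}e^{-(a_V-c)\|\xi\|^2},
\end{equation}
which is integrable on $\mathbb R^d$ because $a_V-c>0$. It equals $Z_\rho^{-1}e^{-b_V}(\pi/(a_V-c))^{d/2}$. We therefore set
\begin{equation}
C_{2,c}:=\frac1c\log\int e^{c\|\xi\|^2}\rho(\mathrm d\xi)<\infty,
\end{equation}
which depends only on $c,a_V,b_V,d,Z_\rho$.

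The second step is to justify the variational inequality when $f$ is unbounded; this is the only delicate point. One route is to apply it to the truncations $f_n=c\min(\|\xi\|^2,n)$, where the bounded case follows from Jensen's inequality applied to $\int e^{f_n}\,\mathrm d\rho\ge\int_{\{h>0\}}e^{f_n-\log h}\,h\,\mathrm d\rho$ with $h=\mathrm d\mu/\mathrm d\rho$. This gives
\begin{equation}
c\int\min(\|\xi\|^2,n)\,\mathrm d\mu\le \mathrm{KL}(\mu\|\rho)+\log\int e^{f}\,\mathrm d\rho,
\end{equation}
after bounding $e^{f_n}\le e^{f}$. Monotone convergence as $n\to\infty$ then yields $cM_2(\mu)\le \mathrm{KL}(\mu\|\rho)+cC_{2,c}$. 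Dividing by $c$ gives \eqref{eq:Ent-controls-M2}. An alternative route is the pointwise Young-type inequality $ab\le a\log a - a + e^{b}$, which avoids the truncation entirely.
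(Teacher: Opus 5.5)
Your proposal is correct and follows the paper's proof: you apply the Donsker--Varadhan inequality with $f(\xi)=c\|\xi\|^2$, use Assumption~\ref{ass:V}(iii) and $c<a_V$ to get $\int e^{c\|\xi\|^2}\,\mathrm d\rho<\infty$, and rearrange. Your truncation and monotone-convergence step (or, alternatively, the pointwise Young inequality) justifies using an unbounded test function, a point the paper leaves implicit in its citation of Donsker--Varadhan.
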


\begin{proof}
Apply the Donsker--Varadhan variational formula \citep[e.g.,][]{dembo1998large} with $f(\xi)=c\|\xi\|^2$, where $0<c<a_V$. This implies that
\begin{equation}
\mathrm{KL}(\mu\|\rho)
\ge c M_2(\mu)-\log\int_{\Xi} e^{c\|\xi\|^2} \rho(d\xi).
\end{equation}
Due to Assumption~\ref{ass:V}(iii), we have that $\int e^{c\|\xi\|^2} d\rho<\infty$. The bound in \eqref{eq:Ent-controls-M2} follows immediately upon rearrangement.
\end{proof}

\begin{lemma}[Uniform integrability and continuity of $\mathcal J_m$]\label{lem:UI-Jm-cont}
Fix $\varepsilon\in(0,2]$. Let $(\mu_n)_{n\in\mathbb{N}}$ be a family in $\mathcal P_2(\Xi)$ such that $\sup_n M_2(\mu_n)<\infty$. Then the family of random variables $\|\xi\|^{2-\varepsilon}$ under $(\mu_n)$ is uniformly integrable. Moreover, if $\mu_n\rightharpoonup \mu$ weakly, then the family of functions $\smash{H(\xi_{1:m}) := 1+\sum_{j=1}^m \|\xi_j\|^{2-\varepsilon}}$ is uniformly integrable under $(\mu_n^{\otimes m})$, and for any continuous $G$ satisfying $|G|\le C H$ for some $C<\infty$,
$\smash{\int_{\Xi^m} G \mathrm d\mu_n^{\otimes m}\longrightarrow \int_{\Xi^m} G \mathrm d\mu^{\otimes m}}$. Thus, under Assumption~\ref{ass:G}(i),
\begin{equation}
\mathcal J_m(\mu_n)\to \mathcal J_m(\mu).
\end{equation}
\end{lemma}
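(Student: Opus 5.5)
The statement has three parts, which I will prove in order: uniform integrability of $\|\xi\|^{2-\varepsilon}$, uniform integrability of $H$ under the product laws, and convergence of the integrals. The tool throughout is the standard de la Vallée-Poussin type bound. If a family of nonnegative random variables $X$ satisfies $\sup\mathbb E[X^p]<\infty$ for some $p>1$, it is uniformly integrable, since $\mathbb E[X\mathbf 1_{X>K}]\le K^{1-p}\sup\mathbb E[X^p]$.

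\emph{First claim.} Take $X=\|\xi\|^{2-\varepsilon}$ under $\mu_n$ and $p=2/(2-\varepsilon)>1$. This is valid because $\varepsilon\in(0,2)$; if $\varepsilon=2$, the function is constant and the claim is trivial. Then $\mathbb E[X^p]=M_2(\mu_n)$, which is bounded uniformly in $n$, so the family is uniformly integrable.

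\emph{Second claim.} Apply the same idea to $H$ under $\mu_n^{\otimes m}$. Each summand $\|\xi_j\|^{2-\varepsilon}$ has the same law as in the first claim, because the $j$-th marginal of $\mu_n^{\otimes m}$ is $\mu_n$. A finite sum of uniformly integrable families is uniformly integrable. Alternatively, bound $\mathbb E[H^p]\le (m+1)^{p-1}\big(1+mM_2(\mu_n)\big)$ directly by convexity. Weak convergence is not needed for this step.

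\emph{Third claim.} First note that $\mu_n\rightharpoonup\mu$ implies $\mu_n^{\otimes m}\rightharpoonup\mu^{\otimes m}$. This is standard on separable metric spaces: products of bounded continuous functions of the coordinates converge, and such products form a convergence-determining class; alternatively, use Skorokhod coupling coordinatewise with independent copies. Next, $G$ is continuous and $|G|\le CH$, and $CH$ is uniformly integrable under $(\mu_n^{\otimes m})$. I would then invoke the standard result that weak convergence plus uniform integrability of a continuous integrand yields convergence of expectations. To prove it directly, truncate. Let $G_K:=\max(-K,\min(G,K))$, which is bounded and continuous, so $\int G_K\,\mathrm d\mu_n^{\otimes m}\to\int G_K\,\mathrm d\mu^{\otimes m}$. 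The remainder satisfies $|G-G_K|\le |G|\mathbf 1_{|G|>K}\le CH\mathbf 1_{CH>K}$. This is uniformly small in $n$ as $K\to\infty$ by uniform integrability. For the limit $\mu^{\otimes m}$ it is also small: Fatou's lemma and the Portmanteau theorem give $M_2(\mu)\le\liminf M_2(\mu_n)<\infty$, so $H\in L^1(\mu^{\otimes m})$, and dominated convergence applies. A $3\epsilon$ argument then finishes this part. Applying it to $G$ under Assumption~\ref{ass:G}(i), which gives $|G|\le C_G H$ with $\varepsilon=\varepsilon_G$, yields $\mathcal J_m(\mu_n)\to\mathcal J_m(\mu)$.

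The main obstacle is minor: correctly justifying weak convergence of the product measures and controlling the tail of the limit measure. Both are handled by the standard facts cited above.
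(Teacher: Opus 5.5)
Your proof is correct and follows essentially the same route as the paper. Your bound with $p=2/(2-\varepsilon)$ is exactly the paper's tail estimate $\|\xi\|^{2-\varepsilon}\mathbf 1_{\{\|\xi\|>R\}}\le R^{-\varepsilon}\|\xi\|^2$ under the substitution $K=R^{2-\varepsilon}$, and your truncation plus $3\epsilon$ argument is an explicit proof of the Vitali-type convergence theorem the paper cites; your check that $M_2(\mu)\le\liminf_n M_2(\mu_n)<\infty$, so that the limit integral is finite, is a detail the paper leaves implicit.
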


\begin{proof}
The elementary bound 
$\|\xi\|^{2-\varepsilon}\mathbf 1_{\{\|\xi\|>R\}}
\le R^{-\varepsilon}\|\xi\|^2$ and the uniform second-moment bound imply that $\|\xi\|^{2-\varepsilon}$ is uniformly integrable under $(\mu_n)$. By the same tail estimate, together with a union bound over the $m$ coordinates, the function
\begin{equation}
H(\xi_{1:m})=1+\sum_{j=1}^m \|\xi_j\|^{2-\varepsilon}
\end{equation}
is uniformly integrable under $(\mu_n^{\otimes m})$. Since
$\mu_n\rightharpoonup\mu$, also $\mu_n^{\otimes m}\rightharpoonup\mu^{\otimes m}$ on $\Xi^m$. Therefore, if $G$ is continuous and $|G|\le CH$, Vitali's theorem yields
\begin{equation}
\int_{\Xi^m} G d\mu_n^{\otimes m}
\longrightarrow
\int_{\Xi^m} G d\mu^{\otimes m}.
\end{equation}
Under Assumption~\ref{ass:G}(i), we have $|G|\le C_G H$ with
$\varepsilon=\varepsilon_G$. It follows, in particular, that
$\mathcal J_m(\mu_n)\to\mathcal J_m(\mu)$.
\end{proof}

\begin{lemma}[Lower semicontinuity of the repulsion energy]
\label{lem:R-lsc}
Suppose Assumption~\ref{ass:r} holds. If $\mu_n\rightharpoonup \mu$ weakly in $\mathcal P_2(\Xi)$, then
\begin{equation}
\mathcal R(\mu)\le \liminf_{n\to\infty}\mathcal R(\mu_n).
\end{equation}
\end{lemma}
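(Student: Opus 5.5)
The plan is to exploit the fact that $r$ is continuous and bounded below (Assumption~\ref{ass:r}(ii),(iv)), so that $\mathcal R$ is the integral of a lower-bounded continuous function against the product measure $\mu_n\otimes\mu_n$. Lower semicontinuity then follows from the standard Portmanteau-type result: if $f$ is lower semicontinuous and bounded below, and $\pi_n\rightharpoonup\pi$ weakly, then $\int f\,\mathrm d\pi\le\liminf_n\int f\,\mathrm d\pi_n$.

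\emph{Step 1 (product convergence).} First I will note that $\mu_n\rightharpoonup\mu$ implies $\mu_n\otimes\mu_n\rightharpoonup\mu\otimes\mu$ on $\Xi\times\Xi$. This is the same fact already used in the proof of Lemma~\ref{lem:UI-Jm-cont}. It can be justified by tightness of $(\mu_n\otimes\mu_n)$, inherited from tightness of $(\mu_n)$, together with convergence on the convergence-determining class of products $\varphi(\xi)\psi(\chi)$ of bounded continuous functions.

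\emph{Step 2 (truncation).} Set $f(\xi,\chi):=r(\xi-\chi)-\underline r\ge 0$, which is continuous. For $M>0$, the truncation $f\wedge M$ is bounded and continuous, so weak convergence gives $\iint (f\wedge M)\,\mathrm d(\mu_n\otimes\mu_n)\to\iint (f\wedge M)\,\mathrm d(\mu\otimes\mu)$. Since $f\ge f\wedge M$, this yields
\begin{equation}
\liminf_{n\to\infty}\iint f\,\mathrm d(\mu_n\otimes\mu_n)\ge \iint (f\wedge M)\,\mathrm d(\mu\otimes\mu).
\end{equation}
Letting $M\uparrow\infty$ and applying monotone convergence gives $\liminf_n\iint f\,\mathrm d(\mu_n\otimes\mu_n)\ge\iint f\,\mathrm d(\mu\otimes\mu)$.

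\emph{Step 3 (conclusion).} Both sides of the inequality in Step 2 differ from $2\mathcal R(\cdot)$ by the same constant $\underline r$, because the measures are probability measures. All quantities are finite, since Assumption~\ref{ass:r}(iii) and $\mu_n,\mu\in\mathcal P_2(\Xi)$ imply $|\mathcal R|<\infty$. Subtracting $\underline r$ and dividing by $2$ therefore gives the claim. No uniform integrability is needed, which is exactly why the lower bound (iv) is assumed.

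The only mildly delicate point is Step 1, weak convergence of the product measures. I would handle it with the tightness-plus-product-test-functions argument sketched above, or simply cite it as standard, as the paper already does in Lemma~\ref{lem:UI-Jm-cont}.
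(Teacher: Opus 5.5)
Your proposal is correct and follows essentially the same route as the paper, which also passes to weak convergence of $\mu_n\otimes\mu_n$ to $\mu\otimes\mu$ and applies the Portmanteau inequality to the continuous, lower-bounded integrand $\frac12 r(\xi-\chi)$. The only difference is that you prove that Portmanteau step explicitly, by truncating at level $M$ and using monotone convergence, whereas the paper simply cites it.
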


\begin{proof}
Since $\mu_n\rightharpoonup \mu$ on $\Xi$, we also have $\mu_n\otimes\mu_n\rightharpoonup \mu\otimes\mu$ on $\Xi\times\Xi$. Define $\smash{ H_r(\xi,\chi):=\frac12 r(\xi-\chi)}$. By Assumption~\ref{ass:r}(ii), $H_r$ is continuous, and by Assumption~\ref{ass:r}(iv), it is bounded below by $\underline r/2$. Therefore, by Portmanteau's theorem, we have
\begin{equation}
\int_{\Xi\times\Xi} H_r(\xi,\chi) (\mu\otimes\mu)(\mathrm d\xi,\mathrm d\chi)
\le
\liminf_{n\to\infty}
\int_{\Xi\times\Xi} H_r(\xi,\chi) (\mu_n\otimes\mu_n)(\mathrm d\xi,\mathrm d\chi),
\end{equation}
which is exactly the claimed lower semicontinuity of $\mathcal R$.
\end{proof}

\begin{theorem}[Existence of a minimiser]
\label{thm:exist}
Suppose that Assumptions~\ref{ass:V}, \ref{ass:G}, and \ref{ass:r} hold. Then $\mathcal F_m^{\lambda,\mathrm{rep}}$ is proper, bounded below on $\mathcal P_2(\Xi)$, and admits at least one minimiser
\begin{equation}
\mu_m^{\lambda,\star}\in\argmin_{\mu\in\mathcal P_2(\Xi)}\mathcal F_m^{\lambda,\mathrm{rep}}(\mu).
\end{equation}
\end{theorem}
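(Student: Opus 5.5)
We will use the direct method of the calculus of variations on $\mathcal P_2(\Xi)$, with the weak topology, using the preceding lemmas.

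\textbf{Properness.} Evaluate at $\mu=\rho$. Since $\rho\in\mathcal P_2(\Xi)$ by Assumption~\ref{ass:V}, we have $\mathrm{KL}(\rho\|\rho)=0$. Lemma~\ref{lem:Jm-wellposed} shows that $\mathcal J_m(\rho)$ is finite, and Assumption~\ref{ass:r}(iii) gives $|\mathcal R(\rho)|\le C_r(1+2M_2(\rho))<\infty$. Hence $\mathcal F_m^{\lambda,\mathrm{rep}}(\rho)<\infty$. Moreover $\mathcal F_m^{\lambda,\mathrm{rep}}>-\infty$ everywhere, because $\mathcal J_m$ is finite on $\mathcal P_2$ and $\mathcal R\ge \underline r/2$.

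\textbf{Coercivity and lower bound.} The key estimate sharpens Lemma~\ref{lem:Jm-wellposed} using subquadratic growth. By Young's inequality, for any $\delta>0$ there is $C_\delta$ with $\|z\|^{2-\varepsilon_G}\le \delta\|z\|^2+C_\delta$. Integrating Assumption~\ref{ass:G}(i) against $\mu^{\otimes m}$ then gives
\[
\mathcal J_m(\mu)\le C_G\big(1+mC_\delta+m\delta M_2(\mu)\big).
\]
Combining this with Lemma~\ref{lem:Ent-controls-M2} for a fixed $c\in(0,a_V)$ yields, for $\mu\ll\rho$,
\[
\mathcal F_m^{\lambda,\mathrm{rep}}(\mu)\ \ge\ \lambda\,\mathrm{KL}(\mu\|\rho)-C_Gm\delta\big(c^{-1}\mathrm{KL}(\mu\|\rho)+C_{2,c}\big)-C'+\eta\underline r/2 .
\]
Choose $\delta$ so small that $C_Gm\delta/c\le\lambda/2$. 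This gives
\[
\mathcal F_m^{\lambda,\mathrm{rep}}(\mu)\ge \tfrac{\lambda}{2}\mathrm{KL}(\mu\|\rho)-C''.
\]
For $\mu\not\ll\rho$ the functional equals $+\infty$. Hence $\mathcal F_m^{\lambda,\mathrm{rep}}$ is bounded below, and on any sublevel set both $\mathrm{KL}(\cdot\|\rho)$ and, by Lemma~\ref{lem:Ent-controls-M2}, $M_2$ are uniformly bounded. This absorption step is the main point of the argument. It is exactly where $\varepsilon_G>0$ is needed, since with quadratic growth the utility term could overpower the entropy for small $\lambda$.

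\textbf{Compactness and lower semicontinuity.} Take a minimising sequence $(\mu_n)$ with $\mathcal F_m^{\lambda,\mathrm{rep}}(\mu_n)\downarrow\inf\mathcal F_m^{\lambda,\mathrm{rep}}$. By the previous step, $\sup_n M_2(\mu_n)<\infty$ and $\sup_n\mathrm{KL}(\mu_n\|\rho)<\infty$. The uniform second-moment bound gives tightness, so Prokhorov's theorem yields a subsequence with $\mu_n\rightharpoonup\mu^\star$. By Fatou's lemma for weak convergence, $M_2(\mu^\star)\le\liminf_n M_2(\mu_n)<\infty$, so $\mu^\star\in\mathcal P_2(\Xi)$.

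We then pass to the limit in each term:
\begin{itemize}
\item $\mathcal J_m(\mu_n)\to\mathcal J_m(\mu^\star)$ by Lemma~\ref{lem:UI-Jm-cont}, using continuity of $G$ and the uniform $M_2$ bound;
\item $\mathcal R(\mu^\star)\le\liminf_n\mathcal R(\mu_n)$ by Lemma~\ref{lem:R-lsc}, with $\eta\ge0$;
\item $\mathrm{KL}(\cdot\|\rho)$ is weakly lower semicontinuous, which follows from the Donsker--Varadhan representation as a supremum of weakly continuous functionals $\int f\,d\mu-\log\int e^f d\rho$ over $f\in C_b$.
\end{itemize}
Summing these gives $\mathcal F_m^{\lambda,\mathrm{rep}}(\mu^\star)\le\liminf_n\mathcal F_m^{\lambda,\mathrm{rep}}(\mu_n)=\inf\mathcal F_m^{\lambda,\mathrm{rep}}$. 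Therefore $\mu^\star$ is a minimiser.
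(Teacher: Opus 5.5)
Your proof is correct and follows the same direct-method route as the paper: properness at $\rho$, a coercivity bound in which the entropy dominates the utility, uniform second moments via Lemma~\ref{lem:Ent-controls-M2}, Prokhorov's theorem, and (semi)continuity of the three terms. The only difference is cosmetic: you obtain the coercivity estimate from Young's inequality combined with Lemma~\ref{lem:Ent-controls-M2}, whereas the paper applies the Donsker--Varadhan formula directly with the test function $a\|\xi\|^{2-\varepsilon_G}$; both use $\varepsilon_G>0$ in the same way.
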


\begin{proof}
The fact that $\mathcal F_m^{\lambda,\mathrm{rep}}$ is proper is immediate from $\mathcal F_m^{\lambda,\mathrm{rep}}(\rho)<\infty$, using Lemma~\ref{lem:Jm-wellposed} and Assumption~\ref{ass:r}(iii). For coercivity, Assumption~\ref{ass:G}(i) and the Donsker--Varadhan formula applied to $a\|\xi\|^{2-\varepsilon_G}$ give, for any $a>0$,
\begin{equation}
|\mathcal J_m(\mu)|\le C_a+\frac{mC_G}{a}\mathrm{KL}(\mu\|\rho).
\end{equation}
In addition, we know that $\mathcal R(\mu)\ge \underline r/2$ by Assumption~\ref{ass:r}(iv). It follows from this and the previous display that
\begin{equation}
\mathcal F_m^{\lambda,\mathrm{rep}}(\mu)
\ge -C_a+\frac{\eta\underline r}{2}
+\Bigl(\lambda-\frac{mC_G}{a}\Bigr)\mathrm{KL}(\mu\|\rho).
\end{equation}
Choosing $a>mC_G/\lambda$ shows that $\mathcal F_m^{\lambda,\mathrm{rep}}$ is bounded below and that every minimizing sequence has uniformly bounded entropy. Meanwhile, Lemma~\ref{lem:Ent-controls-M2} gives a uniform second-moment bound.

Finally, let $(\mu_n)$ be a minimizing sequence. By Prokhorov's theorem and the uniform second-moment bound, there exists a subsequence, not relabelled, and a measure $\mu_\infty\in\mathcal P_2(\Xi)$ such that $\mu_n\rightharpoonup\mu_\infty$. It follows from the semicontinuity of $\mathrm{KL}(\cdot\|\rho)$, Lemma~\ref{lem:R-lsc}, and Lemma~\ref{lem:UI-Jm-cont} that $\mu_\infty$ is a minimiser, since
\begin{equation}
\mathcal F_m^{\lambda,\mathrm{rep}}(\mu_\infty)
\le \liminf_{n\to\infty}\mathcal F_m^{\lambda,\mathrm{rep}}(\mu_n).
\end{equation}
\end{proof}

\begin{lemma}[Strict positivity of minimisers]\label{lem:positive-min}
Suppose that Assumptions~\ref{ass:V}, \ref{ass:G}, and \ref{ass:r} hold. Let $\mu_{m}^{\lambda,\star}$ be any minimiser of $\mathcal{F}_m^{\lambda,\mathrm{rep}}$. Then $\mu_{m}^{\lambda,\star}\ll\rho$ and, moreover, $\frac{\mathrm d\mu_{m}^{\lambda,\star}}{\mathrm d\rho}(\xi)>0$ for $\rho$-a.e.\ $\xi\in\Xi$.
\end{lemma}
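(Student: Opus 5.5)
The first claim, $\mu_m^{\lambda,\star}\ll\rho$, is immediate. By Theorem~\ref{thm:exist} the functional is proper, so its minimum value is finite. Since $-\mathcal J_m$ and $\eta\mathcal R$ are finite on $\mathcal P_2(\Xi)$ (Lemma~\ref{lem:Jm-wellposed} and Assumption~\ref{ass:r}(iii)), the KL term must be finite at a minimiser, and this forces absolute continuity with respect to $\rho$. Write $f:=\mathrm d\mu_m^{\lambda,\star}/\mathrm d\rho$ and $A:=\{f=0\}$. The goal is to show $\rho(A)=0$.

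I will argue by contradiction with a mixture perturbation. Suppose $\rho(A)>0$. Let $\nu:=\rho(\cdot\cap A)/\rho(A)$, which lies in $\mathcal P_2(\Xi)$ because $\rho$ does, and set $\mu_\varepsilon:=(1-\varepsilon)\mu_m^{\lambda,\star}+\varepsilon\nu$.

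The smooth terms can be handled directly.
\begin{itemize}
\item By Lemma~\ref{lem:deltaJ}, $\varepsilon\mapsto\mathcal J_m(\mu_\varepsilon)$ is differentiable at $0$.
\item By Lemma~\ref{lem:deltaR}, $\varepsilon\mapsto\mathcal R(\mu_\varepsilon)$ is differentiable at $0$.
\item In fact both are polynomials in $\varepsilon$, of degree at most $m$ and $2$ respectively, with finite coefficients by Assumptions~\ref{ass:G}(i) and \ref{ass:r}(iii).
\end{itemize}
Hence $-\mathcal J_m(\mu_\varepsilon)+\eta\mathcal R(\mu_\varepsilon)=c_0+O(\varepsilon)$, where $c_0$ is the value at $\varepsilon=0$.

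For the entropy term, $\mu_\varepsilon$ has density $(1-\varepsilon)f$ on $A^c$ and $\varepsilon/\rho(A)$ on $A$, since $f=0$ on $A$. A direct computation then gives
\begin{equation}
\mathrm{KL}(\mu_\varepsilon\|\rho)=(1-\varepsilon)\log(1-\varepsilon)+(1-\varepsilon)\mathrm{KL}(\mu_m^{\lambda,\star}\|\rho)+\varepsilon\log\frac{\varepsilon}{\rho(A)}.
\end{equation}
Consequently,
\begin{equation}
\mathrm{KL}(\mu_\varepsilon\|\rho)-\mathrm{KL}(\mu_m^{\lambda,\star}\|\rho)=\varepsilon\log\varepsilon+O(\varepsilon).
\end{equation}

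Combining the two expansions,
\begin{equation}
\mathcal F_m^{\lambda,\mathrm{rep}}(\mu_\varepsilon)-\mathcal F_m^{\lambda,\mathrm{rep}}(\mu_m^{\lambda,\star})=\lambda\varepsilon\log\varepsilon+O(\varepsilon).
\end{equation}
The $\lambda\varepsilon\log\varepsilon$ term is negative and dominates $O(\varepsilon)$ as $\varepsilon\downarrow0$. So the difference is strictly negative for small $\varepsilon$, contradicting minimality. Therefore $\rho(A)=0$.

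The only delicate step is checking that the $O(\varepsilon)$ remainders are genuinely finite. This needs three facts: $\mathrm{KL}(\mu_m^{\lambda,\star}\|\rho)<\infty$; the integrability of $G$ under mixed products of $\mu_m^{\lambda,\star}$ and $\nu$, which follows from their second moments and Assumption~\ref{ass:G}(i); and the integrability of $r$ under the corresponding mixed products, which follows from Assumption~\ref{ass:r}(iii). With these the entropy computation is exact, and the superlinear $\varepsilon\log\varepsilon$ gain does all the work.
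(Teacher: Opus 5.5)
Your proof is correct and follows the same route as the paper: you perturb the minimiser by mixing in $\rho(\cdot\mid A)$, observe that the utility and repulsion terms change by only $O(\varepsilon)$, and use the $\varepsilon\log\varepsilon$ gain in the entropy on $A$ to contradict minimality. Your explicit formula for $\mathrm{KL}(\mu_\varepsilon\|\rho)$, including the $A^c$ contribution $(1-\varepsilon)\log(1-\varepsilon)+(1-\varepsilon)\mathrm{KL}(\mu_m^{\lambda,\star}\|\rho)$, also makes precise a step the paper leaves implicit, namely that the entropy change off $A$ is only $O(\varepsilon)$.
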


\begin{proof}
First, absolute continuity with respect to $\rho$ is immediate from finiteness of the entropy term. Let $f^\star:=d\mu_m^{\lambda,\star}/d\rho$ and suppose that $f^\star=0$ on a Borel set $A$ with $\rho(A)>0$. Set $\nu_A:=\rho(\cdot\mid A)$ and $\mu_\varepsilon:=(1-\varepsilon)\mu_m^{\lambda,\star}+\varepsilon\nu_A$.

By Lemmas~\ref{lem:deltaJ} and \ref{lem:deltaR}, the one-sided derivatives of $\mathcal J_m(\mu_\varepsilon)$ and $\mathcal R(\mu_\varepsilon)$ at $\varepsilon=0^+$ are finite. On the other hand, on $A$ we have $\frac{d\mu_\varepsilon}{d\rho}=\varepsilon/\rho(A)$, so
\begin{equation}
\int_A \frac{d\mu_\varepsilon}{d\rho}\log\frac{d\mu_\varepsilon}{d\rho} d\rho
=\varepsilon\log\frac{\varepsilon}{\rho(A)},
\end{equation}
whose right derivative at $0$ is $-\infty$. It follows, in particular, that  $\frac{d}{d\varepsilon}\mathcal F_m^{\lambda,\mathrm{rep}}(\mu_\varepsilon)|_{\varepsilon=0^+}=-\infty$, contradicting minimality. Therefore $f^\star(\xi)>0$ for $\rho$-a.e. $\xi\in\Xi$ or, equivalently, $\rho\ll\mu_m^{\lambda,\star}$.
\end{proof}

\begin{theorem}[Euler--Lagrange condition and Gibbs fixed point]
\label{thm:fixedpoint}
Suppose that Assumptions~\ref{ass:V}, \ref{ass:G}, and \ref{ass:r} hold. Then any minimiser $\mu_m^{\lambda,\star}$ of
$\mathcal F_m^{\lambda,\mathrm{rep}}$ satisfies $\mu_m^{\lambda,\star}\ll\rho$ with strictly positive density $\rho$-a.e., and
\begin{equation}
\mu_m^{\lambda,\star}(\mathrm d\xi)
=
\frac{1}{Z_m^{\lambda,\eta}}\exp \Big(\frac{1}{\lambda}\big(m\Phi_m(\xi;\mu_m^{\lambda,\star})-\eta\Psi_r(\xi;\mu_m^{\lambda,\star})\big)\Big) \rho(\mathrm d\xi),
\label{eq:gibbs-fp}
\end{equation}
where $Z_m^{\lambda,\eta}$ is the normalising constant
\begin{equation}
Z_m^{\lambda,\eta}
:=
\int_{\Xi}\exp \Big(\frac{1}{\lambda}\big(m\Phi_m(\xi;\mu_m^{\lambda,\star})-\eta\Psi_r(\xi;\mu_m^{\lambda,\star})\big)\Big) \rho(\mathrm d\xi)
\in(0,\infty).
\end{equation}
\end{theorem}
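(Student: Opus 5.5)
The plan is to reduce the non-convex problem to a linear one by freezing the interaction at the minimiser, and then apply the Gibbs variational principle of Proposition~\ref{prop:joint-gibbs} (in dimension one, i.e.\ on $\Xi$). Lemma~\ref{lem:positive-min} already gives $\mu^\star:=\mu_m^{\lambda,\star}\ll\rho$ with $f^\star:=\mathrm d\mu^\star/\mathrm d\rho>0$ $\rho$-a.e. Set
\[
W^\star(\xi):=m\Phi_m(\xi;\mu^\star)-\eta\Psi_r(\xi;\mu^\star).
\]
By Lemma~\ref{lem:Jm-wellposed} and Assumption~\ref{ass:r}(iii), $W^\star$ is finite and satisfies $|W^\star(\xi)|\le C(1+\|\xi\|^2)$. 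Moreover, the $\Psi_r$ contribution is bounded above, since $r\ge\underline r$ gives $-\eta\Psi_r\le -\eta\underline r$. The $\Phi_m$ part grows subquadratically, so $W^\star(\xi)\le C'(1+\|\xi\|^{2-\varepsilon_G})$. Together with Assumption~\ref{ass:V}(iii), this shows $Z_m^{\lambda,\eta}=\int e^{W^\star/\lambda}\,\mathrm d\rho\in(0,\infty)$. Define $\widetilde\mu$ as the right-hand side of \eqref{eq:gibbs-fp}. The same tail estimate shows $\widetilde\mu\in\mathcal P_2(\Xi)$ and $\int|W^\star|\,\mathrm d\widetilde\mu<\infty$.

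\textbf{Step 1 (first-order condition).} For any $\nu\in\mathcal P_2(\Xi)$ with $\mathrm{KL}(\nu\|\rho)<\infty$, let $\mu_\varepsilon=(1-\varepsilon)\mu^\star+\varepsilon\nu$. Minimality gives $\frac{\mathrm d}{\mathrm d\varepsilon}\mathcal F_m^{\lambda,\mathrm{rep}}(\mu_\varepsilon)|_{0^+}\ge0$. Lemmas~\ref{lem:deltaJ} and~\ref{lem:deltaR} handle the utility and repulsion terms. For the entropy, convexity of $x\log x$ shows that the difference quotients are monotone, so the right derivative exists in $[-\infty,\infty)$ and equals $\int\log f^\star\,\mathrm d(\nu-\mu^\star)$. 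This holds whenever that integral is well defined, which I justify using $f^\star>0$ and finiteness of both entropies, via monotone convergence on the quotients. The result is the variational inequality
\[
\int\big(-W^\star+\lambda\log f^\star\big)\,\mathrm d(\nu-\mu^\star)\ge0.
\]

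\textbf{Step 2 (identification).} Rather than differentiate, I would rewrite the inequality. Since $\log(\mathrm d\widetilde\mu/\mathrm d\rho)=W^\star/\lambda-\log Z$, the expression $-W^\star+\lambda\log f^\star$ equals $\lambda\log(f^\star/\widetilde f)-\lambda\log Z$. Taking $\nu=\widetilde\mu$ in Step 1 then gives
\[
\lambda\big[\int\log(f^\star/\widetilde f)\,\mathrm d\widetilde\mu-\mathrm{KL}(\mu^\star\|\widetilde\mu)\big]\ge0,
\]
that is,
\[
-\mathrm{KL}(\widetilde\mu\|\mu^\star)-\mathrm{KL}(\mu^\star\|\widetilde\mu)\ge0.
\]
Hence both divergences vanish and $\mu^\star=\widetilde\mu$, which is \eqref{eq:gibbs-fp}.

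An alternative cleaner route to Step 2 is the following. Because $\mathcal J_m$ and $\mathcal R$ are polynomial in $\mu$, the linearised functional $\mathcal L(\nu):=-\int W^\star\,\mathrm d\nu+\lambda\mathrm{KL}(\nu\|\rho)$ agrees to first order with $\mathcal F_m^{\lambda,\mathrm{rep}}$ along segments from $\mu^\star$. So $\mu^\star$ minimises the convex functional $\mathcal L$ along every segment, hence globally by convexity. Proposition~\ref{prop:joint-gibbs} applied to $\mathcal L$ then identifies the unique minimiser as $\widetilde\mu$.

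\textbf{Main obstacle.} The delicate point is justifying the entropy derivative and the finiteness of $\mathrm{KL}(\widetilde\mu\|\rho)$ and $\int\log f^\star\,\mathrm d\widetilde\mu$, so that the test direction $\nu=\widetilde\mu$ is admissible and no $\infty-\infty$ arises. I would handle this by first testing with truncations $\nu_R=\widetilde\mu(\cdot\mid B_R)$, which have bounded density ratio to $\widetilde\mu$, and then letting $R\to\infty$ using the quadratic tail bounds on $W^\star$ and $V$.
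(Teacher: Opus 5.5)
Your proof is correct, but it takes a different route from the paper's.

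\textbf{The paper's route.} The paper uses two-sided perturbations inside the support of the minimiser, $\mu_\varepsilon=(1+\varepsilon\varphi)\mu_m^{\lambda,\star}$ with $\varphi$ bounded and mean-zero. These are admissible for $|\varepsilon|$ small and yield an Euler--Lagrange \emph{equality}: the first variation $-W^\star+\lambda(\log f^\star+1)$ is constant $\mu_m^{\lambda,\star}$-a.e. It is then constant $\rho$-a.e.\ by the strict positivity from Lemma~\ref{lem:positive-min}, and exponentiating gives the Gibbs form. In that route $Z_m^{\lambda,\eta}\in(0,\infty)$ comes for free from $\int f^\star\,\mathrm d\rho=1$.

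\textbf{Your route.} You use one-sided mixture perturbations towards arbitrary finite-entropy $\nu$, which gives only a variational \emph{inequality}. You close it by testing against the candidate Gibbs law $\widetilde\mu$ itself, which produces $\mathrm{KL}(\widetilde\mu\|\mu^\star)+\mathrm{KL}(\mu^\star\|\widetilde\mu)\le 0$. Your alternative version shows instead that $\mu^\star$ minimises the frozen convex functional $\nu\mapsto-\int W^\star\,\mathrm d\nu+\lambda\mathrm{KL}(\nu\|\rho)=\lambda\mathrm{KL}(\nu\|\widetilde\mu)-\lambda\log Z$.

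\textbf{What each buys.} Your approach needs $\widetilde\mu$ to be an admissible competitor before you know it equals $\mu^\star$. That is why you need the tail estimates from Assumptions~\ref{ass:G}(i), \ref{ass:V}(iii) and \ref{ass:r}(iv) to get $Z<\infty$, $\widetilde\mu\in\mathcal P_2(\Xi)$ and finite entropy; the paper's argument does not use them at this step. In return, your argument:
\begin{itemize}
\item works directly with the natural one-sided perturbations of the simplex;
\item avoids the ``constant $\mu^\star$-a.e., then transfer to $\rho$-a.e.'' step;
\item makes the link to the Gibbs variational principle of Proposition~\ref{prop:joint-gibbs} explicit;
\item would recover strict positivity by itself, since if $f^\star$ vanished on a $\rho$-non-null set, the right derivative of the entropy towards $\widetilde\mu$ would be $-\infty$.
\end{itemize}

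\textbf{The truncation step is unnecessary.} Monotone convergence on the convex difference quotients already gives the entropy's right derivative in $[-\infty,\infty)$, and minimality excludes $-\infty$. Also, $\mathrm{KL}(\widetilde\mu\|\rho)=\lambda^{-1}\int W^\star\,\mathrm d\widetilde\mu-\log Z$ is finite directly from $|W^\star(\xi)|\le C(1+\|\xi\|^2)$ and $\widetilde\mu\in\mathcal P_2(\Xi)$.
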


\begin{proof}
By Lemma~\ref{lem:positive-min}, $\mu_m^{\lambda,\star}\ll\rho$ with density $f^\star>0$ $\rho$-a.e., and $\mathrm{KL}(\mu_m^{\lambda,\star}\|\rho)<\infty$. Hence Corollary~\ref{cor:deltaF} applies. Let
$\varphi\in L^\infty(\mu_m^{\lambda,\star})$ satisfy $\int \varphi d\mu_m^{\lambda,\star}=0$, and set
$\mu_\varepsilon:=(1+\varepsilon\varphi)\mu_m^{\lambda,\star}$ for $|\varepsilon|$ small. Since $\mu_m^{\lambda,\star}$ is a minimiser,
\begin{equation}
0=\left.\frac{d}{d\varepsilon}\mathcal F_m^{\lambda,\mathrm{rep}}(\mu_\varepsilon)\right|_{\varepsilon=0}
=\int_{\Xi}\frac{\delta \mathcal F_m^{\lambda,\mathrm{rep}}}{\delta\mu}(\mu_m^{\lambda,\star})(\xi)
 \varphi(\xi) \mu_m^{\lambda,\star}(d\xi).
\end{equation}
Since this holds for every bounded mean-zero $\varphi$, the first variation is constant $\mu_m^{\lambda,\star}$-a.e.; since $\rho\ll\mu_m^{\lambda,\star}$, the same holds $\rho$-a.e. Using \eqref{eq:deltaF},
\begin{equation}
-m\Phi_m(\xi;\mu_m^{\lambda,\star})+\eta\Psi_r(\xi;\mu_m^{\lambda,\star})+\lambda\log f^\star(\xi)=C
\qquad \rho\text{-a.e.}
\end{equation}
for some constant $C$. Thus, exponentiating and normalizing, we have that
\begin{equation}
f^\star(\xi)
=\frac{1}{Z_m^{\lambda,\eta}}
\exp \Big(\frac{1}{\lambda}\big(m\Phi_m(\xi;\mu_m^{\lambda,\star})-\eta\Psi_r(\xi;\mu_m^{\lambda,\star})\big)\Big),
\end{equation}
which is exactly \eqref{eq:gibbs-fp}. Finally, since $f^\star$ must integrate to $1$, the normalizing constant is finite and strictly positive.
\end{proof}

\begin{lemma}[Geodesic convexity of $-\mathcal J_m$]\label{lem:J-semi}
Suppose that Assumption~\ref{ass:G} holds. Let $\mu_0,\mu_1\in\mathcal P_2(\Xi)$ and let $\gamma\in\Gamma(\mu_0,\mu_1)$ be an optimal coupling for $\mathsf W_2$. Define the displacement interpolation $\mu_t:=(T_t)_\#\gamma$, $t\in[0,1]$, where $T_t(\xi,\eta):=(1-t)\xi+t\eta$. Then
\begin{equation}
-\mathcal J_m(\mu_t)
\le (1-t)\big(-\mathcal J_m(\mu_0)\big)+t\big(-\mathcal J_m(\mu_1)\big)
+\frac{mL_G}{2} t(1-t) \mathsf{W}_2^2(\mu_0,\mu_1).
\label{eq:J-semi}
\end{equation}
That is, $\mu\mapsto -\mathcal J_m(\mu)$ is $(-mL_G)$-geodesically convex on $(\mathcal P_2(\Xi),\mathsf{W}_2)$.
\end{lemma}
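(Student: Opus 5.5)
The plan is to lift the displacement interpolation to the product space and use the Hessian bound on $G$ pointwise along straight lines. Let $(\xi,\eta)\sim\gamma$ and take $m$ i.i.d.\ copies $(\xi_j,\eta_j)_{j=1}^m\sim\gamma^{\otimes m}$. Then $\big((1-t)\xi_j+t\eta_j\big)_{j=1}^m$ has law $\mu_t^{\otimes m}$, because the map $T_t$ acts coordinatewise and the copies are independent. Hence
\begin{equation}
\mathcal J_m(\mu_t)=\int G\big((1-t)\xi_{1:m}+t\eta_{1:m}\big)\,\gamma^{\otimes m}(\mathrm d\xi_{1:m},\mathrm d\eta_{1:m}),
\end{equation}
and similarly $\mathcal J_m(\mu_0)=\int G(\xi_{1:m})\,\mathrm d\gamma^{\otimes m}$ and $\mathcal J_m(\mu_1)=\int G(\eta_{1:m})\,\mathrm d\gamma^{\otimes m}$. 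By Lemma~\ref{lem:Jm-wellposed}, all three integrals are finite, since $\mu_0,\mu_1,\mu_t\in\mathcal P_2(\Xi)$.

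The key pointwise step uses Assumption~\ref{ass:G}(iii). The function $g(s):=G\big((1-s)x+sy\big)$ for $x,y\in\Xi^m$ satisfies $g''(s)\ge -L_G\|y-x\|^2$. Therefore $s\mapsto g(s)-\tfrac{L_G}{2}\|y-x\|^2 s^2$ is convex, which after the standard computation gives
\begin{equation}
-G\big((1-t)x+ty\big)\le -(1-t)G(x)-tG(y)+\tfrac{L_G}{2}t(1-t)\|x-y\|^2 .
\end{equation}
Apply this with $x=\xi_{1:m}$ and $y=\eta_{1:m}$. Then integrate against $\gamma^{\otimes m}$; every term is integrable by the growth bounds, so linearity is legitimate.

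The last step is the quadratic term. Because the pairs are i.i.d., $\int\|\xi_{1:m}-\eta_{1:m}\|^2\,\mathrm d\gamma^{\otimes m}=\sum_{j=1}^m\int\|\xi_j-\eta_j\|^2\,\mathrm d\gamma=m\,\mathsf W_2^2(\mu_0,\mu_1)$, where optimality of $\gamma$ gives the last equality. This produces the factor $mL_G/2$ and yields \eqref{eq:J-semi}.

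The main point needing care is the first identification, namely that the pushforward of $\gamma^{\otimes m}$ under the coordinatewise interpolation equals $\mu_t^{\otimes m}$. This holds because a product of pushforwards is the pushforward of the product. The rest is routine, with no need for optimality of $\gamma^{\otimes m}$ on $\Xi^m$.
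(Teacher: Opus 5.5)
Your architecture matches the paper's. You realise $\mu_t^{\otimes m}$ as the pushforward of $\gamma^{\otimes m}$ under the coordinatewise interpolation, control $G$ along straight segments in $\Xi^m$ via Assumption~\ref{ass:G}(iii), integrate, and use $\int\|\xi_{1:m}-\eta_{1:m}\|^2\,\mathrm d\gamma^{\otimes m}=m\,\mathsf W_2^2(\mu_0,\mu_1)$. However, the pointwise step has its signs reversed and does not go through as written, and it is the heart of the argument.

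The bound $g''(s)\ge -L_G\|y-x\|^2$ only makes $s\mapsto g(s)+\tfrac{L_G}{2}\|y-x\|^2s^2$ convex. That yields $G((1-t)x+ty)\le(1-t)G(x)+tG(y)+\tfrac{L_G}{2}t(1-t)\|x-y\|^2$, which is a \emph{lower} bound on $-G((1-t)x+ty)$, not the upper bound you need. Your intermediate claim that $s\mapsto g(s)-\tfrac{L_G}{2}\|y-x\|^2s^2$ is convex is false in general (take $G\equiv 0$ with $L_G>0$). Even if it held, convexity would give the reverse of your displayed inequality.

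What is actually required is semiconcavity of $G$ along segments: $g''(s)=\langle y-x,\nabla^2G((1-s)x+sy)(y-x)\rangle\le L_G\|y-x\|^2$. This makes $h(s):=g(s)-\tfrac{L_G}{2}\|y-x\|^2s^2$ \emph{concave}, and then $h(t)\ge(1-t)h(0)+th(1)$ rearranges exactly to your display. Assumption~\ref{ass:G}(iii) is a two-sided operator-norm bound, so this upper bound on $g''$ is available and the repair is one line. It is precisely the inequality $\psi''(t)\le L_G\|d_m\|^2$ that the paper uses.

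With that correction, your proof coincides with the paper's up to one difference. The paper integrates $\psi''$ against $\gamma^{\otimes m}$ to bound $\tfrac{\mathrm d^2}{\mathrm dt^2}\mathcal J_m(\mu_t)$, then invokes concavity of $t\mapsto\mathcal J_m(\mu_t)+\tfrac{mL_G}{2}t(1-t)\mathsf W_2^2(\mu_0,\mu_1)$; this tacitly requires differentiating twice under the integral sign. You integrate the pointwise three-point inequality directly. That needs only integrability of $G$ and of $\|x-y\|^2$ under $\gamma^{\otimes m}$, so it avoids the interchange of differentiation and integration.
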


\begin{proof}
Let $\Gamma:=\gamma^{\otimes m}$ and $S_t:=T_t^{\otimes m}$, so that $\mu_t^{\otimes m}=(S_t)_\#\Gamma$. For fixed $(\xi_{1:m},\eta_{1:m})$, set $\psi(t):=G\big((1-t)\xi_{1:m}+t\eta_{1:m}\big)$. Then
\begin{equation}
\psi''(t)=\langle d_m,\nabla^2G((1-t)\xi_{1:m}+t\eta_{1:m})d_m\rangle
\le L_G\|d_m\|^2,
\end{equation}
where $d_m:=(\eta_1-\xi_1,\dots,\eta_m-\xi_m)$. It follows, integrating against $\Gamma$ and using the fact that $\int \|d_m\|^2 d\Gamma = m \mathsf W_2^2(\mu_0,\mu_1)$, that
\begin{equation}
\frac{d^2}{dt^2}\mathcal J_m(\mu_t)
\le mL_G \mathsf W_2^2(\mu_0,\mu_1).
\end{equation}
Hence $t\mapsto \mathcal J_m(\mu_t)+\frac{mL_G}{2}t(1-t)\mathsf W_2^2(\mu_0,\mu_1)$ is concave on $[0,1]$. Evaluating this inequality at time $t$, and multiplying by $-1$, yields \eqref{eq:J-semi}.
\end{proof}

\begin{lemma}[Geodesic semiconvexity of $\mathcal R$]\label{lem:R-semi}
Suppose Assumption~\ref{ass:r}(ii) holds, i.e.\ $\|\nabla^2 r\|_{\mathrm{op}}\le L_r$. Then $\mu\mapsto \mathcal R(\mu)$ is $(-L_r)$-geodesically convex on $(\mathcal P_2(\Xi),\mathsf W_2)$ in the sense that for every $\mu_0,\mu_1\in\mathcal P_2(\Xi)$ and every displacement interpolation $(\mu_t)_{t\in[0,1]}$ between them,
\begin{equation}
\mathcal R(\mu_t)
\le
(1-t)\mathcal R(\mu_0)+t\mathcal R(\mu_1)
+\frac{L_r}{2} t(1-t) \mathsf W_2^2(\mu_0,\mu_1).
\label{eq:R-semi}
\end{equation}
\end{lemma}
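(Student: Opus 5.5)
I will mirror the proof of Lemma~\ref{lem:J-semi}, replacing the $m$-fold product coupling by the two-fold product $\gamma\otimes\gamma$. Let $\gamma\in\Gamma(\mu_0,\mu_1)$ be an optimal coupling inducing the displacement interpolation, so $\mu_t=(T_t)_\#\gamma$ with $T_t(\xi,\eta)=(1-t)\xi+t\eta$. Then $\mu_t\otimes\mu_t=(T_t\otimes T_t)_\#(\gamma\otimes\gamma)$. Consequently
\begin{equation}
\mathcal R(\mu_t)=\frac12\int r\big((1-t)(\xi-\chi)+t(\eta-\zeta)\big)\,\gamma(\mathrm d\xi,\mathrm d\eta)\,\gamma(\mathrm d\chi,\mathrm d\zeta).
\end{equation}

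For fixed $(\xi,\eta,\chi,\zeta)$, set $z_0:=\xi-\chi$, $z_1:=\eta-\zeta$, and $\psi(t):=r((1-t)z_0+tz_1)$. By Assumption~\ref{ass:r}(ii), $\psi''(t)\le L_r\|z_1-z_0\|^2$. Hence $t\mapsto\psi(t)-\frac{L_r}{2}t^2\|z_1-z_0\|^2$ is concave, and the elementary interpolation inequality gives
\begin{equation}
\psi(t)\le(1-t)\psi(0)+t\psi(1)+\frac{L_r}{2}t(1-t)\|z_1-z_0\|^2.
\end{equation}
Here $z_1-z_0=(\eta-\xi)-(\zeta-\chi)$.

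Next, integrate this pointwise inequality against $\gamma\otimes\gamma$ and multiply by $\tfrac12$. The affine terms give $(1-t)\mathcal R(\mu_0)+t\mathcal R(\mu_1)$, since the marginals of $\gamma$ are $\mu_0$ and $\mu_1$. Every term is integrable by the quadratic growth in Assumption~\ref{ass:r}(iii) and finiteness of second moments. The remaining step is to bound
\begin{equation}
\frac12\int\|(\eta-\xi)-(\zeta-\chi)\|^2\,\mathrm d\gamma\,\mathrm d\gamma .
\end{equation}
Expanding the square, this equals $\int\|\eta-\xi\|^2\,\mathrm d\gamma-\big\|\int(\eta-\xi)\,\mathrm d\gamma\big\|^2$. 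That is at most $\mathsf W_2^2(\mu_0,\mu_1)$, by optimality of $\gamma$. This yields \eqref{eq:R-semi} with constant $\frac{L_r}{2}$.

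The main obstacle is this last constant. The crude bound $\|a-b\|^2\le2\|a\|^2+2\|b\|^2$ would only give $L_r\,t(1-t)\mathsf W_2^2$, which is twice the claimed constant. Obtaining the sharp factor relies on the variance identity above, in which the cross term $-2\langle\int(\eta-\xi)\,\mathrm d\gamma,\int(\zeta-\chi)\,\mathrm d\gamma\rangle$ is nonpositive. The identity itself uses only the independence of the two $\gamma$-copies. Evenness of $r$ is not needed for this argument.
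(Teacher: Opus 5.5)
Your route matches the paper's. You couple two independent copies of the optimal plan, bound the second derivative of $r$ along the segment $(1-t)(\xi-\chi)+t(\eta-\zeta)$ using the Hessian bound, and integrate. Your variance identity
$\frac12\iint\|(\eta-\xi)-(\zeta-\chi)\|^2\,\mathrm d\gamma\,\mathrm d\gamma=\int\|\eta-\xi\|^2\,\mathrm d\gamma-\|\int(\eta-\xi)\,\mathrm d\gamma\|^2\le\mathsf W_2^2(\mu_0,\mu_1)$
is correct, and it is exactly what gives the constant $\frac{L_r}{2}$; the paper's sketch leaves this implicit. Integrating a pointwise chord inequality, rather than differentiating $t\mapsto\mathbb E[r(X_t-X_t')]$ twice under the integral, is a harmless and slightly cleaner variant.

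However, the one-dimensional step has a sign error. From $\psi''(t)\le L_r\|z_1-z_0\|^2$, concavity of $\psi-\frac{L_r}{2}t^2\|z_1-z_0\|^2$ gives $\psi(t)\ge(1-t)\psi(0)+t\psi(1)-\frac{L_r}{2}t(1-t)\|z_1-z_0\|^2$. That is the reverse of the inequality you display. For example, $\psi(t)=-10t^2$ with $L_r\|z_1-z_0\|^2=2$ satisfies your hypothesis but violates your inequality at $t=\frac12$.

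To get an upper bound on $\mathcal R(\mu_t)$ you need the lower curvature bound $\psi''(t)=\langle z_1-z_0,\nabla^2 r(\cdot)(z_1-z_0)\rangle\ge -L_r\|z_1-z_0\|^2$. Then $\psi+\frac{L_r}{2}t^2\|z_1-z_0\|^2$ is convex, and the chord inequality for this convex function gives exactly your displayed bound. The operator-norm hypothesis supplies this lower bound, so the repair is one line.

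This is precisely where mirroring Lemma~\ref{lem:J-semi} misleads you. There the functional is $-\mathcal J_m$, so the upper bound $\nabla^2 G\preceq L_G I$ is the relevant one. For $+\mathcal R$ it is $\nabla^2 r\succeq -L_r I$ that matters, and a one-sided upper bound on $\nabla^2 r$ would not suffice.

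A minor point: integrability does not need Assumption~\ref{ass:r}(iii), since quadratic growth of $r$ already follows from (ii) by Taylor's theorem.
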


\begin{proof}
This is a standard estimate for interaction energies with smooth potentials; see, e.g., \citet[][Section 9.3]{ambrosio2008gradientflows}. It follows by differentiating $t\mapsto \mathbb E[r(X_t-X_t')]$ along the displacement interpolation $X_t=(1-t)X+tY$ under an optimal coupling and using the uniform Hessian bound $\|\nabla^2 r\|_{\mathrm{op}}\le L_r$.
\end{proof}

\begin{theorem}[Uniqueness and quadratic growth in the strongly convex regime]\label{thm:uniq}
Suppose that Assumptions~\ref{ass:V}, \ref{ass:G}, \ref{ass:r}, and \ref{ass:strong} hold. Then $\mathcal F_m^{\lambda,\mathrm{rep}}$ is $\alpha$-geodesically convex on $(\mathcal P_2(\Xi),\mathsf{W}_2)$ with $ \alpha=\lambda\kappa-mL_G-\eta L_r>0.$ In particular, the minimiser $\mu_m^{\lambda,\star}$ is unique and satisfies
\begin{equation}
\mathcal F_m^{\lambda,\mathrm{rep}}(\mu)-\mathcal F_m^{\lambda,\mathrm{rep}}(\mu_m^{\lambda,\star})
\ge \frac{\alpha}{2} \mathsf{W}_2^2(\mu,\mu_m^{\lambda,\star}),
\qquad \forall \mu\in\mathcal P_2(\Xi).
\label{eq:quad-growth}
\end{equation}
\end{theorem}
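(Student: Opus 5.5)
The plan is to decompose $\mathcal F_m^{\lambda,\mathrm{rep}}=-\mathcal J_m+\eta\mathcal R+\lambda\mathrm{KL}(\cdot\|\rho)$ and add up the geodesic convexity moduli of the three pieces along generalised geodesics or displacement interpolations. By Lemma~\ref{lem:J-semi}, $-\mathcal J_m$ is $(-mL_G)$-geodesically convex. By Lemma~\ref{lem:R-semi}, together with $\eta\ge0$, the term $\eta\mathcal R$ is $(-\eta L_r)$-geodesically convex. For the entropy, I will write $\mathrm{KL}(\mu\|\rho)=\int\mu\log\mu\,\mathrm d\xi+\int V\,\mathrm d\mu+\log Z_\rho$. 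The Boltzmann entropy is displacement convex by McCann's condition in $\mathbb R^d$, and the potential energy $\int V\,\mathrm d\mu$ is $\kappa$-geodesically convex by Assumption~\ref{ass:V}(i). The latter follows by the same argument as in Lemma~\ref{lem:J-semi}: set $\psi(t)=V((1-t)\xi+t\eta)$, so that $\psi''\ge\kappa\|\eta-\xi\|^2$, and integrate against the optimal coupling. Hence $\lambda\mathrm{KL}(\cdot\|\rho)$ is $\lambda\kappa$-convex along every displacement interpolation, and summing the three moduli gives $\alpha=\lambda\kappa-mL_G-\eta L_r$.

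One technical point needs care. Lemmas~\ref{lem:J-semi} and~\ref{lem:R-semi} hold along every displacement interpolation induced by an optimal coupling. The entropy bound holds along the geodesic when $\mu_0$ is absolutely continuous, in which case the optimal coupling is induced by a map and the interpolation is unique. If $\mu_0$ or $\mu_1$ has infinite KL, the inequality is trivial, because the right-hand side is $+\infty$. So I will restrict to the effective domain, where both endpoints are absolutely continuous, and this gives $\alpha$-convexity along the unique geodesic. I will also check that all terms are finite on $\mathcal P_2$, using Lemma~\ref{lem:Jm-wellposed} and Assumption~\ref{ass:r}(iii), so that the inequalities can be added without any $\infty-\infty$ issue.

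For uniqueness and quadratic growth, I will take the minimiser $\mu^\star=\mu_m^{\lambda,\star}$ from Theorem~\ref{thm:exist} and any $\mu\in\mathcal P_2(\Xi)$ with finite energy. Along the geodesic $\mu_t$ from $\mu^\star$ to $\mu$, $\alpha$-convexity gives
\begin{equation}
\mathcal F(\mu^\star)\le\mathcal F(\mu_t)\le(1-t)\mathcal F(\mu^\star)+t\mathcal F(\mu)-\frac{\alpha}{2}t(1-t)\mathsf W_2^2(\mu^\star,\mu).
\end{equation}
Subtracting $\mathcal F(\mu^\star)$, dividing by $t>0$, and letting $t\downarrow0$ yields $\mathcal F(\mu)-\mathcal F(\mu^\star)\ge\frac{\alpha}{2}\mathsf W_2^2(\mu,\mu^\star)$. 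When $\mathcal F(\mu)=\infty$ the bound is trivial. If $\mu$ is another minimiser, the left-hand side vanishes, so $\mathsf W_2(\mu,\mu^\star)=0$ and the minimiser is unique.

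The main obstacle I expect is justifying the displacement convexity of the entropy part with only the tools stated so far. The rest is bookkeeping. I would cite \citet{mccann1997convexity} and \citet[Ch.~9]{ambrosio2008gradientflows} for the convexity of $\int\mu\log\mu$, and verify that the minimiser lies in the effective domain with absolutely continuous geodesics, so that the three convexity inequalities are all applied along the same curve.
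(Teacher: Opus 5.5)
Your proposal is correct and follows essentially the same route as the paper. Both add the moduli from Lemma~\ref{lem:J-semi}, Lemma~\ref{lem:R-semi} and the $\kappa$-convexity of $\mathrm{KL}(\cdot\|\rho)$ to get $\alpha$-convexity, then deduce uniqueness and quadratic growth. You also supply the details the paper leaves to citation: splitting KL into Boltzmann entropy plus potential energy, restricting to the effective domain so that all three inequalities hold along the same (unique) geodesic, and deriving the growth bound by dividing by $t$ and letting $t\downarrow 0$.
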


\begin{proof}
Under Assumption~\ref{ass:V}(i), the functional $\mu\mapsto\mathrm{KL}(\mu\|\rho)$ is $\kappa$-geodesically convex on $(\mathcal P_2,\mathsf{W}_2)$ \citep[e.g.,][Chapter 9]{ambrosio2008gradientflows}. By Lemma~\ref{lem:J-semi}, $\mu\mapsto -\mathcal J_m(\mu)$ is $(-mL_G)$-geodesically convex. By Lemma~\ref{lem:R-semi}, $\mu\mapsto \eta\mathcal R(\mu)$ is $(-\eta L_r)$-geodesically convex. Therefore $\mathcal F_m^{\lambda,\mathrm{rep}}=(-\mathcal J_m)+\eta\mathcal R+\lambda\mathrm{KL}(\cdot\|\rho)$
is $(\lambda\kappa-mL_G-\eta L_r)$-geodesically convex, i.e.\ $\alpha$-convex. Since $\alpha>0$, $\mathcal{F}_m^{\lambda,\mathrm{rep}}$ is strongly geodesically convex, hence the minimiser is unique. The quadratic growth bound \eqref{eq:quad-growth} is a standard consequence of $\alpha$-convexity at a minimiser.
\end{proof}

\begin{corollary}[Uniqueness of the Gibbs fixed point in the strongly convex regime]
\label{cor:unique-fixed-point}
Suppose that Assumptions~\ref{ass:V}, \ref{ass:G}, \ref{ass:r}, and \ref{ass:strong} hold. Then the fixed point equation \eqref{eq:gibbs-fp} admits a unique solution in $\mathcal P_2(\Xi)$, namely the unique minimiser $\mu_m^{\lambda,\star}$ from Theorem~\ref{thm:uniq}.
\end{corollary}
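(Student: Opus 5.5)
We need to show that, under strong convexity, any solution of the fixed point equation \eqref{eq:gibbs-fp} in $\mathcal P_2(\Xi)$ is a minimiser, and hence equals the unique minimiser $\mu_m^{\lambda,\star}$. Existence is already settled: by Theorem~\ref{thm:exist} and Theorem~\ref{thm:fixedpoint} the minimiser exists and satisfies \eqref{eq:gibbs-fp}, and by Theorem~\ref{thm:uniq} it is unique. So the work lies entirely in the converse direction.

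\textbf{Step 1: a fixed point is a zero of the Wasserstein gradient.} Let $\bar\mu\in\mathcal P_2(\Xi)$ solve \eqref{eq:gibbs-fp}. Then $\bar\mu\ll\rho$ with density
\[
\bar f=\exp\!\big(\lambda^{-1}(m\Phi_m(\cdot;\bar\mu)-\eta\Psi_r(\cdot;\bar\mu))\big)/Z.
\]
Assumption~\ref{ass:G}(ii)--(iii) and Assumption~\ref{ass:r}(ii) imply that $\Phi_m(\cdot;\bar\mu)$ and $\Psi_r(\cdot;\bar\mu)$ are $C^1$, with gradients of linear growth obtained by differentiating under the integral. Hence $\log\bar f$ is $C^1$ and
\[
\lambda\nabla\log(\mathrm d\bar\mu/\mathrm d\rho)=m\nabla\Phi_m-\eta\nabla\Psi_r,
\]
which says exactly that the Wasserstein gradient \eqref{eq:wass-grad-regularised} vanishes $\bar\mu$-a.e. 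We also need $\mathrm{KL}(\bar\mu\|\rho)<\infty$. This follows from $\log\bar f$ having at most quadratic growth together with $\bar\mu\in\mathcal P_2$, using Lemma~\ref{lem:Jm-wellposed} and Assumption~\ref{ass:r}(iii).

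\textbf{Step 2: zero gradient plus $\alpha$-convexity forces minimality.} For an $\alpha$-geodesically convex functional (Theorem~\ref{thm:uniq}), the subdifferential characterisation of \citet[Ch.~10]{ambrosio2008gradientflows} gives the following. If $\nu$ is any measure, $T$ is the optimal map from $\bar\mu$ to $\nu$ (which exists since $\bar\mu$ is absolutely continuous), and $w$ is a strong subdifferential of $\mathcal F_m^{\lambda,\mathrm{rep}}$ at $\bar\mu$, then
\[
\mathcal F(\nu)\ge\mathcal F(\bar\mu)+\int\langle w,T-\mathrm{id}\rangle\,\mathrm d\bar\mu+\tfrac{\alpha}{2}\mathsf W_2^2(\bar\mu,\nu).
\]
Taking $w=0$ shows that $\bar\mu$ is a minimiser, and uniqueness in Theorem~\ref{thm:uniq} then gives $\bar\mu=\mu_m^{\lambda,\star}$.

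A more elementary route avoids subdifferential theory. Differentiate $t\mapsto\mathcal F(\mu_t)$ along the geodesic from $\bar\mu$ to $\nu$ at $t=0^+$. The right derivative equals $\int\langle\nabla\frac{\delta\mathcal F}{\delta\mu}(\bar\mu),T-\mathrm{id}\rangle\,\mathrm d\bar\mu$, which is $0$ by Step 1. Combining this with the convexity inequality along the geodesic gives $\mathcal F(\nu)-\mathcal F(\bar\mu)\ge\frac{\alpha}{2}\mathsf W_2^2$.

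\textbf{Main obstacle.} The difficult point is justifying, in Step 2, that $\nabla\frac{\delta\mathcal F}{\delta\mu}$ really is a strong subdifferential at $\bar\mu$, equivalently that the first-order expansion holds along geodesics. This is delicate for the entropy term. I would handle it via the standard result (\citet[Thm.~10.4.9]{ambrosio2008gradientflows}) that for the relative entropy with respect to log-concave $\rho$, $\lambda\nabla\log(\mathrm d\bar\mu/\mathrm d\rho)$ belongs to the subdifferential whenever $\mathrm d\bar\mu/\mathrm d\rho\in W^{1,1}_{\mathrm{loc}}$ and the gradient is in $L^2(\bar\mu)$; both hold here by Step 1. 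For the smooth terms $-\mathcal J_m$ and $\eta\mathcal R$, the first-order expansion follows directly from the Hessian bounds by Taylor expansion under the coupling, as in Lemmas~\ref{lem:J-semi} and \ref{lem:R-semi}. The sum of subdifferentials is a subdifferential of the sum, so $0$ lies in the subdifferential of $\mathcal F_m^{\lambda,\mathrm{rep}}$ at $\bar\mu$.
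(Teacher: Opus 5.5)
Your proposal is correct and follows the same route as the paper. Both arguments show that a solution of \eqref{eq:gibbs-fp} has finite relative entropy and vanishing Wasserstein gradient, deduce $0\in\partial\mathcal F_m^{\lambda,\mathrm{rep}}$, and then use $\alpha$-geodesic convexity together with uniqueness of the minimiser from Theorem~\ref{thm:uniq}. Your justification of the step $0\in\partial\mathcal F_m^{\lambda,\mathrm{rep}}$ (the entropy subdifferential characterisation for log-concave $\rho$, plus Taylor bounds for the two smooth terms) is in fact more careful than the paper's, which asserts this directly from $C^1$ regularity of the first variation.
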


\begin{proof}
By Theorems~\ref{thm:fixedpoint} and \ref{thm:uniq}, the unique minimiser $\mu_m^{\lambda,\star}$ solves \eqref{eq:gibbs-fp}. Conversely, let $\mu\in\mathcal P_2(\Xi)$ solve \eqref{eq:gibbs-fp}. Then
$d\mu/d\rho>0$ $\rho$-a.e. and
\begin{equation}
\log\frac{d\mu}{d\rho}
=\frac1\lambda\bigl(m\Phi_m(\cdot;\mu)-\eta\Psi_r(\cdot;\mu)\bigr)-\log Z_m^{\lambda,\eta}.
\end{equation}
Since $\mathcal J_m(\mu)<\infty$ by Lemma~\ref{lem:Jm-wellposed} and $\mathcal R(\mu)<\infty$ by Assumption~\ref{ass:r}(iii), integrating this identity with respect to $\mu$ shows that $\mathrm{KL}(\mu\|\rho)<\infty$. We can thus apply Corollary~\ref{cor:deltaF} to obtain
\begin{equation}
\frac{\delta \mathcal F_m^{\lambda,\mathrm{rep}}}{\delta\mu}(\mu)
\equiv \lambda\bigl(1-\log Z_m^{\lambda,\eta}\bigr)
\qquad \mu\text{-a.e.}
\end{equation}
Using Lemmas~\ref{lem:Phi-diff-Lip} and \ref{lem:Psi-diff-Lip}, the first variation is $C^1$ in $\xi$; hence its gradient vanishes and $0\in\partial \mathcal F_m^{\lambda,\mathrm{rep}}(\mu)$ in the Wasserstein sense
\citep[Chapter~10]{ambrosio2008gradientflows}. Since
$\mathcal F_m^{\lambda,\mathrm{rep}}$ is $\alpha$-geodesically convex with $\alpha>0$, any such critical point is a minimiser. By uniqueness of the minimiser from Theorem~\ref{thm:uniq}, we conclude that $\mu=\mu_m^{\lambda,\star}$.
\end{proof}

\subsubsection{Wasserstein Gradient Flow}
\label{sec:wgf}
\begin{lemma}[Differentiability and Lipschitz gradient of $\Phi_m$]\label{lem:Phi-diff-Lip}
Suppose that Assumption~\ref{ass:G} holds. Then for each $\mu\in\mathcal P_2(\Xi)$ the map $\xi\mapsto \Phi_m(\xi;\mu)$ is $C^1$, and
\begin{equation}
\nabla_{\xi}\Phi_m(\xi;\mu)=\int_{\Xi^{m-1}}\nabla_1G(\xi,\xi_{2:m}) \mu^{\otimes(m-1)}(\mathrm d\xi_{2:m}).
\label{eq:grad-Phi-recall}
\end{equation}
Moreover, $\nabla_\xi\Phi_m(\cdot;\mu)$ is globally Lipschitz: for all $\xi,\xi'\in\Xi$, 
\begin{equation}
\|\nabla_\xi\Phi_m(\xi;\mu)-\nabla_\xi\Phi_m(\xi';\mu)\|\le L_G\|\xi-\xi'\|.
\end{equation}
\end{lemma}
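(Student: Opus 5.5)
The plan is to differentiate under the integral sign using dominated convergence, with the linear-growth bound of Assumption~\ref{ass:G}(ii) supplying the integrable dominating function. After that, the Lipschitz estimate will follow from the mean value theorem together with the uniform Hessian bound of Assumption~\ref{ass:G}(iii).

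\textbf{Differentiability.} Fix $\mu\in\mathcal P_2(\Xi)$ and $\xi\in\Xi$, a direction $h\in\mathbb R^d$ with $\|h\|\le 1$, and $s\in(0,1]$. By the fundamental theorem of calculus,
\begin{equation}
\frac{G(\xi+sh,\xi_{2:m})-G(\xi,\xi_{2:m})}{s}=\int_0^1\langle \nabla_1G(\xi+ush,\xi_{2:m}),h\rangle\,\mathrm du .
\end{equation}
By Assumption~\ref{ass:G}(ii), the integrand is bounded by $C_G\big(1+\|\xi\|+1+\sum_{j\ge2}\|\xi_j\|\big)$. This bound is independent of $s$ and $u$, and it is integrable under $\mu^{\otimes(m-1)}$ because $\mu$ has finite second moment and hence finite first moment. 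The integral defining $\Phi_m$ is also finite, by Lemma~\ref{lem:Jm-wellposed}. Since $G\in C^2$, the integrand converges pointwise to $\langle\nabla_1G(\xi,\xi_{2:m}),h\rangle$ as $s\to0$. Dominated convergence therefore shows that the directional derivative of $\Phi_m(\cdot;\mu)$ exists and equals $\langle\int\nabla_1G(\xi,\xi_{2:m})\,\mu^{\otimes(m-1)}(\mathrm d\xi_{2:m}),h\rangle$. Continuity of the candidate gradient will be proved in the next step, via the Lipschitz bound. Existence of all directional derivatives, linear in $h$, together with continuity of the gradient gives $\Phi_m(\cdot;\mu)\in C^1$ and the formula \eqref{eq:grad-Phi-recall}.

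\textbf{Lipschitz bound.} For fixed $\xi_{2:m}$, the map $\xi\mapsto\nabla_1G(\xi,\xi_{2:m})$ has Jacobian $\nabla^2_{11}G$. This is a diagonal block of $\nabla^2G$, so its operator norm is at most $\|\nabla^2G\|_{\mathrm{op}}\le L_G$. The mean value inequality then gives $\|\nabla_1G(\xi,\xi_{2:m})-\nabla_1G(\xi',\xi_{2:m})\|\le L_G\|\xi-\xi'\|$ pointwise in $\xi_{2:m}$. Integrating this against $\mu^{\otimes(m-1)}$ and applying Jensen's inequality for the norm yields the claimed global Lipschitz estimate. In particular, the gradient is continuous, which completes the $C^1$ claim.

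The only delicate point is producing a dominating function that is uniform over the perturbation. The linear-growth assumption handles it once $s$ is restricted to a bounded range, so I do not expect a real obstacle.
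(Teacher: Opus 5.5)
Your proposal is correct and takes essentially the same route as the paper. You differentiate under the integral sign, with domination supplied by the linear-growth bound of Assumption~\ref{ass:G}(ii), and then integrate the pointwise Lipschitz estimate for $\nabla_1 G$ (from the Hessian bound) against $\mu^{\otimes(m-1)}$; you simply make explicit details the paper leaves implicit, namely the dominating function uniform in $s$, the bound $\|\nabla^2_{11}G\|_{\mathrm{op}}\le\|\nabla^2 G\|_{\mathrm{op}}$, and the passage from continuous directional derivatives to $C^1$.
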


\begin{proof}
By Assumption~\ref{ass:G}(ii), we have the linear bound $\|\nabla_1G(\xi,\xi_{2:m})\|\le C(1+\|\xi\|+\sum_{j=2}^m\|\xi_j\|)$, which is integrable under $\mu^{\otimes(m-1)}$. Differentiating under the integral sign in \eqref{eq:Phi-def-recall} thus yields \eqref{eq:grad-Phi-recall}. Moreover, by Assumption~\ref{ass:G}(iii), we have that
\begin{equation}
\|\nabla_1G(\xi,\xi_{2:m})-\nabla_1G(\xi',\xi_{2:m})\|
\le L_G\|\xi-\xi'\|.
\end{equation}
Finally, integrating this bound over $\mu^{\otimes(m-1)}$ gives the stated global Lipschitz estimate. Thus, in particular, $\Phi_m(\cdot;\mu)\in C^1(\Xi)$.
\end{proof}

\begin{lemma}[Differentiability and Lipschitz gradient of $\Psi_r$]
\label{lem:Psi-diff-Lip}
Suppose Assumption~\ref{ass:r}(ii) holds. Then for each $\mu\in\mathcal P_2(\Xi)$ the map $\xi\mapsto \Psi_r(\xi;\mu)$ is $C^1$, and
\begin{equation}
\nabla_\xi\Psi_r(\xi;\mu)=\int_{\Xi}\nabla r(\xi-\chi) \mu(\mathrm d\chi).
\label{eq:grad-Psi}
\end{equation}
Moreover, $\nabla_\xi\Psi_r(\cdot;\mu)$ is globally Lipschitz: for all $\xi,\xi'\in\Xi$,
\begin{equation}
\|\nabla_\xi\Psi_r(\xi;\mu)-\nabla_\xi\Psi_r(\xi';\mu)\|\le L_r\|\xi-\xi'\|.
\label{eq:Psi-Lip-x}
\end{equation}
Finally, for all $\mu,\nu\in\mathcal P_2(\Xi)$ and all $\xi\in\Xi$,
\begin{equation}
\|\nabla_\xi\Psi_r(\xi;\mu)-\nabla_\xi\Psi_r(\xi;\nu)\|
\le
L_r \mathsf W_2(\mu,\nu).
\label{eq:Psi-Lip-mu}
\end{equation}
\end{lemma}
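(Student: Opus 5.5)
The plan mirrors the proof of Lemma~\ref{lem:Phi-diff-Lip}, now for the convolution $\Psi_r(\cdot;\mu)=r*\mu$. There are three claims: differentiability with formula \eqref{eq:grad-Psi}, Lipschitz continuity in $\xi$, and Lipschitz continuity in $\mu$ with respect to $\mathsf W_2$.

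\textbf{Step 1: differentiability and \eqref{eq:grad-Psi}.} By Assumption~\ref{ass:r}(ii), $\nabla r$ is $L_r$-Lipschitz. Hence it has linear growth,
\[
\|\nabla r(z)\|\le \|\nabla r(0)\|+L_r\|z\|.
\]
By the mean value theorem, for $\xi'$ with $\|\xi'-\xi\|\le 1$ the difference quotients of $\chi\mapsto r(\xi-\chi)$ in $\xi$ are dominated by
\[
\|\nabla r(0)\|+L_r(\|\xi\|+1+\|\chi\|).
\]
This function is $\mu$-integrable because $\mu\in\mathcal P_2(\Xi)$, so in particular $\mu$ has a finite first moment. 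The same growth bound, together with Assumption~\ref{ass:r}(iii), shows that $\Psi_r(\xi;\mu)$ is finite. Dominated convergence then justifies differentiating under the integral sign, which gives \eqref{eq:grad-Psi}. Continuity of $\xi\mapsto\nabla_\xi\Psi_r(\xi;\mu)$ follows from the Lipschitz estimate in Step 2, so $\Psi_r(\cdot;\mu)\in C^1$.

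\textbf{Step 2: Lipschitz continuity in $\xi$, i.e.\ \eqref{eq:Psi-Lip-x}.} Using \eqref{eq:grad-Psi},
\[
\|\nabla r(\xi-\chi)-\nabla r(\xi'-\chi)\|\le L_r\|\xi-\xi'\|
\]
pointwise in $\chi$. Integrating against the probability measure $\mu$ gives the claim.

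\textbf{Step 3: Lipschitz continuity in the measure, i.e.\ \eqref{eq:Psi-Lip-mu}.} Let $\gamma\in\Gamma(\mu,\nu)$ be an optimal coupling for $\mathsf W_2$. Then
\[
\nabla_\xi\Psi_r(\xi;\mu)-\nabla_\xi\Psi_r(\xi;\nu)=\int\big(\nabla r(\xi-\chi)-\nabla r(\xi-\chi')\big)\,\gamma(\mathrm d\chi,\mathrm d\chi').
\]
The integrand has norm at most $L_r\|\chi-\chi'\|$. Integrating and applying Jensen (or Cauchy--Schwarz) yields
\[
\|\nabla_\xi\Psi_r(\xi;\mu)-\nabla_\xi\Psi_r(\xi;\nu)\|\le L_r\int\|\chi-\chi'\|\,\mathrm d\gamma\le L_r\,\mathsf W_1(\mu,\nu)\le L_r\,\mathsf W_2(\mu,\nu).
\]

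No step is a real obstacle. The only point needing care is the domination argument in Step 1, which rests on the linear growth of $\nabla r$ and the finite first moment of $\mu$.
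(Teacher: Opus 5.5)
Your proof follows the paper's argument step for step: you justify differentiation under the integral by the linear growth of $\nabla r$, integrate the pointwise Lipschitz bound to get \eqref{eq:Psi-Lip-x}, and use a coupling argument for \eqref{eq:Psi-Lip-mu}.

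One inequality in Step 3 is backwards. For a $\mathsf W_2$-optimal $\gamma$ one has $\int\|\chi-\chi'\|\,\gamma(\mathrm d\chi,\mathrm d\chi')\ge\mathsf W_1(\mu,\nu)$, not $\le$. This is because $\mathsf W_1$ is an infimum over all couplings, and a $\mathsf W_2$-optimal coupling need not attain it. The conclusion survives in either of two ways:
\begin{itemize}
\item Apply Cauchy--Schwarz directly, $\int\|\chi-\chi'\|\,\mathrm d\gamma\le(\int\|\chi-\chi'\|^2\,\mathrm d\gamma)^{1/2}=\mathsf W_2(\mu,\nu)$. This is what the paper does.
\item Note that your pointwise bound holds for every coupling and take the infimum. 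This actually gives the sharper estimate $L_r\mathsf W_1(\mu,\nu)$.
\end{itemize}

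A smaller point: Assumption~\ref{ass:r}(iii) is not among the lemma's hypotheses, but you don't need it. The Hessian bound in (ii) already gives $|r(z)|\le|r(0)|+\|\nabla r(0)\|\|z\|+\tfrac{L_r}{2}\|z\|^2$. Hence $\Psi_r(\xi;\mu)$ is finite for every $\mu\in\mathcal P_2(\Xi)$.
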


\begin{proof}
The gradient representation \eqref{eq:grad-Psi} follows by differentiating under the integral sign, which is justified since $\nabla r$ is globally Lipschitz and hence has linear growth. For \eqref{eq:Psi-Lip-x}, use that $\|\nabla r(\xi-\chi)-\nabla r(\xi'-\chi)\|\le L_r\|\xi-\xi'\|$ and integrate over $\mu$. For \eqref{eq:Psi-Lip-mu}, let $\gamma$ be an optimal coupling of $\mu$ and $\nu$ for $\mathsf W_2$ and let $(U,V)\sim\gamma$. Then
\begin{equation}
\nabla_\xi\Psi_r(\xi;\mu)-\nabla_\xi\Psi_r(\xi;\nu)
=
\mathbb E[\nabla r(\xi-U)-\nabla r(\xi-V)],
\end{equation}
and Lipschitzness of $\nabla r$ gives $\|\nabla r(\xi-U)-\nabla r(\xi-V)\|\le L_r\|U-V\|$. Taking expectations and using Cauchy--Schwarz yields $\|\cdot\|\le L_r(\mathbb E\|U-V\|^2)^{1/2}=L_r \mathsf W_2(\mu,\nu)$.
\end{proof}

\begin{lemma}[Drift Lipschitz bounds]\label{lem:drift-Lip}
Suppose that Assumptions~\ref{ass:V}, ~\ref{ass:G}, and ~\ref{ass:r}(ii) hold. Define
\begin{equation}
b(\xi,\mu)
:=
m \nabla_{\xi}\Phi_m(\xi;\mu)
-\eta \nabla_{\xi}\Psi_r(\xi;\mu)
-\lambda \nabla V(\xi).
\label{eq:b-def}
\end{equation}
Then for all $\xi,\xi'\in\Xi$ and $\mu,\nu\in\mathcal P_2(\Xi)$,
\begin{alignat}{2}
\|b(\xi,\mu)-b(\xi',\mu)\|
&\le L_\xi\|\xi-\xi'\|,
\qquad &&L_\xi:=mL_G+\eta L_r+\lambda K_V,
\label{eq:b-Lip-x}\\
\|b(\xi,\mu)-b(\xi,\nu)\|
&\le L_\mu \mathsf W_2(\mu,\nu),
\qquad &&L_\mu:=mL_G\sqrt{m-1}+\eta L_r.
\label{eq:b-Lip-mu}
\end{alignat}
\end{lemma}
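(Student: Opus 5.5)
The plan is to treat the three pieces of the drift $b(\xi,\mu)$ in \eqref{eq:b-def} separately and add the resulting bounds by the triangle inequality.

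\emph{Spatial Lipschitz bound \eqref{eq:b-Lip-x}.} Fix $\mu$ and bound each term:
\begin{itemize}
\item Lemma~\ref{lem:Phi-diff-Lip} gives $\|\nabla_\xi\Phi_m(\xi;\mu)-\nabla_\xi\Phi_m(\xi';\mu)\|\le L_G\|\xi-\xi'\|$, so the first term contributes $mL_G$.
\item Lemma~\ref{lem:Psi-diff-Lip}, specifically \eqref{eq:Psi-Lip-x}, gives the constant $L_r$, so the repulsion term contributes $\eta L_r$.
\item Assumption~\ref{ass:V}(ii) gives $\|\nabla V(\xi)-\nabla V(\xi')\|\le K_V\|\xi-\xi'\|$ by the mean value theorem applied to $\nabla V$, so the reference term contributes $\lambda K_V$.
\end{itemize}
Summing the three yields $L_\xi=mL_G+\eta L_r+\lambda K_V$.

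\emph{Measure Lipschitz bound \eqref{eq:b-Lip-mu}.} The $\nabla V$ term does not depend on $\mu$. The repulsion term is handled directly by \eqref{eq:Psi-Lip-mu}, giving $\eta L_r\,\mathsf W_2(\mu,\nu)$. The only genuine work is the $\Phi_m$ term.

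\begin{itemize}
\item Let $\gamma$ be an optimal $\mathsf W_2$ coupling of $\mu$ and $\nu$. Take $(U_j,V_j)_{j=2}^m$ i.i.d.\ with law $\gamma$, so $\gamma^{\otimes(m-1)}$ couples $\mu^{\otimes(m-1)}$ and $\nu^{\otimes(m-1)}$.
\item By \eqref{eq:grad-Phi-recall},
\begin{equation}
\nabla_\xi\Phi_m(\xi;\mu)-\nabla_\xi\Phi_m(\xi;\nu)=\mathbb E\big[\nabla_1G(\xi,U_{2:m})-\nabla_1G(\xi,V_{2:m})\big].
\end{equation}
\item Assumption~\ref{ass:G}(iii) bounds the full Hessian of $G$. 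Hence $\nabla_1 G$, being a block of $\nabla G$, is $L_G$-Lipschitz in the joint variable $\xi_{1:m}$ with respect to the Euclidean norm on $\Xi^m$. This gives
\begin{equation}
\|\nabla_1G(\xi,U_{2:m})-\nabla_1G(\xi,V_{2:m})\|\le L_G\Big(\sum_{j=2}^m\|U_j-V_j\|^2\Big)^{1/2}.
\end{equation}
\item Jensen's (or Cauchy--Schwarz) inequality then gives
\begin{equation}
\|\nabla_\xi\Phi_m(\xi;\mu)-\nabla_\xi\Phi_m(\xi;\nu)\|\le L_G\Big(\sum_{j=2}^m\mathbb E\|U_j-V_j\|^2\Big)^{1/2}=L_G\sqrt{m-1}\,\mathsf W_2(\mu,\nu).
\end{equation}
\end{itemize}
Multiplying by $m$ gives the constant $mL_G\sqrt{m-1}$. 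Adding the repulsion contribution $\eta L_r$ yields $L_\mu$.

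\emph{Main obstacle.} The only point needing care is the product coupling together with the step from the operator-norm Hessian bound on $\Xi^m$ to a Lipschitz bound for the partial gradient $\nabla_1 G$ in the remaining coordinates. This works because $\|\nabla_1G(z)-\nabla_1G(z')\|\le\|\nabla G(z)-\nabla G(z')\|\le L_G\|z-z'\|$ along the segment joining $z$ and $z'$. Integrability of all quantities involved follows from the linear growth in Assumption~\ref{ass:G}(ii) and $\mu,\nu\in\mathcal P_2(\Xi)$.
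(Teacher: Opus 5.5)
Your proposal is correct and follows essentially the same argument as the paper. The spatial bound comes from summing Lemma~\ref{lem:Phi-diff-Lip}, \eqref{eq:Psi-Lip-x} and Assumption~\ref{ass:V}(ii). The measure bound uses an i.i.d.\ product of optimal couplings, the Hessian bound of Assumption~\ref{ass:G}(iii) applied to the block $\nabla_1 G$, and Jensen's inequality, with \eqref{eq:Psi-Lip-mu} handling the repulsion term. Your explicit check that the partial gradient inherits the $L_G$-Lipschitz bound from the full Hessian spells out a step the paper states without comment.
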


\begin{proof}
The bound in \eqref{eq:b-Lip-x} is immediate from the decomposition \eqref{eq:b-def}, Lemma~\ref{lem:Phi-diff-Lip}, Lemma~\ref{lem:Psi-diff-Lip}, and Assumption~\ref{ass:V}(ii). For \eqref{eq:b-Lip-mu}, let $\gamma$ be an optimal coupling of $\mu$ and $\nu$, and let $(U_j,V_j)_{j=2}^m$ be i.i.d. with law $\gamma$. Then
\begin{equation}
\nabla_{\xi}\Phi_m(\xi;\mu)-\nabla_{\xi}\Phi_m(\xi;\nu)
=\mathbb E\big[\nabla_1G(\xi,U_{2:m})-\nabla_1G(\xi,V_{2:m})\big].
\end{equation}
By Assumption~\ref{ass:G}(iii), we have that $\|\nabla_1G(\xi,U_{2:m})-\nabla_1G(\xi,V_{2:m})\|
\le L_G(\sum_{j=2}^m \|U_j-V_j\|^2)^{1/2}$. Taking expectations and using Jensen's inequality, it follows that
\begin{equation}
\|\nabla_{\xi}\Phi_m(\xi;\mu)-\nabla_{\xi}\Phi_m(\xi;\nu)\|
\le L_G\sqrt{m-1} \mathsf W_2(\mu,\nu).
\end{equation}
Finally, combining this with \eqref{eq:Psi-Lip-mu}, multiplying the $\Phi_m$-bound by $m$ and the $\Psi_r$-bound by $\eta$, and observing that $-\lambda\nabla V$ does not depend on $\mu$, we arrive at \eqref{eq:b-Lip-mu}.
\end{proof}

\begin{lemma}[Dissipativity of the drift]\label{lem:dissip}
Suppose that Assumptions~\ref{ass:V}, \ref{ass:G}, \ref{ass:r}, and \ref{ass:strong} hold. Then for all $\xi,\xi'\in\Xi$ and all $\mu\in\mathcal P_2(\Xi)$,
\begin{equation}
\big\langle \xi-\xi',\ b(\xi,\mu)-b(\xi',\mu)\big\rangle
\le -\alpha \|\xi-\xi'\|^2,
\qquad \alpha=\lambda\kappa-\eta L_r - mL_G>0.
\label{eq:one-sided}
\end{equation}
\end{lemma}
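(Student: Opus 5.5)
The plan is to split the drift $b(\xi,\mu)$ from \eqref{eq:b-def} into its three constituent vector fields and bound the contribution of each to the inner product separately, using that $\mu$ is held fixed throughout. Write $\Delta:=\xi-\xi'$. By linearity,
\begin{equation}
\langle \Delta, b(\xi,\mu)-b(\xi',\mu)\rangle
= m\langle \Delta, \nabla_\xi\Phi_m(\xi;\mu)-\nabla_\xi\Phi_m(\xi';\mu)\rangle
-\eta\langle \Delta, \nabla_\xi\Psi_r(\xi;\mu)-\nabla_\xi\Psi_r(\xi';\mu)\rangle
-\lambda\langle \Delta, \nabla V(\xi)-\nabla V(\xi')\rangle ,
\end{equation}
and I will treat the three terms in turn.

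For the confinement term, I will use Assumption~\ref{ass:V}(i). Writing $\nabla V(\xi)-\nabla V(\xi')=\int_0^1\nabla^2V(\xi'+s\Delta)\,\Delta\,\mathrm ds$ (valid since $V\in C^2$) and using $\nabla^2 V\succeq\kappa I_d$ gives $\langle\Delta,\nabla V(\xi)-\nabla V(\xi')\rangle\ge\kappa\|\Delta\|^2$. Hence the last term is at most $-\lambda\kappa\|\Delta\|^2$.

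For the two interaction terms, convexity is not needed, only Lipschitz bounds. By Cauchy--Schwarz together with the global Lipschitz estimate of Lemma~\ref{lem:Phi-diff-Lip} (constant $L_G$, uniformly in $\mu$), the first term is at most $mL_G\|\Delta\|^2$. The same argument with \eqref{eq:Psi-Lip-x} of Lemma~\ref{lem:Psi-diff-Lip} bounds the middle term by $\eta L_r\|\Delta\|^2$; here $\eta\ge0$ is used, so the sign in front of it plays no role.

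Summing the three bounds gives the right-hand side $-(\lambda\kappa-mL_G-\eta L_r)\|\Delta\|^2=-\alpha\|\Delta\|^2$, and Assumption~\ref{ass:strong} ensures $\alpha>0$. The only point needing care is that the Lipschitz constant of $\nabla_\xi\Phi_m(\cdot;\mu)$ is $L_G$ rather than something growing with $m$. This already follows from Lemma~\ref{lem:Phi-diff-Lip}: the block $\nabla_1^2 G$ has operator norm at most $\|\nabla^2G\|_{\mathrm{op}}\le L_G$. The factor $m$ therefore enters only through the prefactor in the drift, which is precisely how $mL_G$ appears in $\alpha$. Otherwise the argument is routine.
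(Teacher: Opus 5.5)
Your proposal is correct and matches the paper's proof essentially line for line. It uses the same three-term decomposition of the drift, Cauchy--Schwarz with the Lipschitz bounds of Lemmas~\ref{lem:Phi-diff-Lip} and \ref{lem:Psi-diff-Lip} for the utility and repulsion terms, and Assumption~\ref{ass:V}(i) for the confinement term; your extra remarks (the integral form of $\nabla V(\xi)-\nabla V(\xi')$, and the fact that the $\nabla_1^2 G$ block inherits the bound $L_G$) just make explicit steps the paper leaves implicit.
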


\begin{proof}
Working from the definition in \eqref{eq:b-def}, we have that
\begin{align*}
\langle \xi-\xi', b(\xi,\mu)-b(\xi',\mu)\rangle
&=m\langle \xi-\xi',\nabla_\xi\Phi_m(\xi;\mu)-\nabla_\xi\Phi_m(\xi';\mu)\rangle \\
&\quad -\eta\langle \xi-\xi',\nabla_\xi\Psi_r(\xi;\mu)-\nabla_\xi\Psi_r(\xi';\mu)\rangle \\
&\quad -\lambda\langle \xi-\xi',\nabla V(\xi)-\nabla V(\xi')\rangle.
\end{align*}
By Lemmas~\ref{lem:Phi-diff-Lip} and \ref{lem:Psi-diff-Lip}, together with the Cauchy--Schwarz inequality, the first two terms are bounded above by $mL_G\|\xi-\xi'\|^2$ and $\eta L_r\|\xi-\xi'\|^2$, respectively. Meanwhile, due to Assumption~\ref{ass:V}(i), we have $\langle \xi-\xi',\nabla V(\xi)-\nabla V(\xi')\rangle\ge \kappa\|\xi-\xi'\|^2$. Combining these estimates yields \eqref{eq:one-sided}.
\end{proof}

\begin{theorem}[McKean--Vlasov SDE, PDE]\label{thm:MV-wellposed}
Suppose that Assumptions~\ref{ass:V}, \ref{ass:G}, and \ref{ass:r} hold. Let $b$ be defined by \eqref{eq:b-def}. Then, for each initial law $\mu_0\in\mathcal P_2(\Xi)$, there exists a unique strong solution to the McKean--Vlasov SDE
\begin{equation}
\mathrm d\xi_t=b(\xi_t,\mu_t) \mathrm dt+\sqrt{2\lambda} \mathrm dw_t,
\qquad \mu_t:=\mathrm{Law}(\xi_t),
\label{eq:MV-SDE}
\end{equation}
and the curve $t\mapsto \mu_t$ belongs to $\mathcal C([0,\infty);\mathcal P_2(\Xi))$ with respect to $\mathsf W_2$. Moreover, $(\mu_t)$ solves the nonlinear Fokker--Planck equation
\begin{equation}
\partial_t\mu_t=-\nabla\cdot\big(b(\cdot,\mu_t)\mu_t\big)+\lambda\Delta\mu_t
\label{eq:MV-PDE}
\end{equation}
in the weak sense: for all $\varphi\in C_c^\infty(\Xi)$ and all $t\ge 0$,
\begin{equation}
\int_{\Xi} \varphi(\xi) \mu_t(\mathrm d\xi)
=
\int_{\Xi} \varphi(\xi) \mu_0(\mathrm d\xi)
+\int_0^t\int_{\Xi}\Big(\langle\nabla\varphi(\xi),b(\xi,\mu_s)\rangle+\lambda\Delta\varphi(\xi)\Big) 
\mu_s(\mathrm d\xi) \mathrm ds.
\label{eq:MV-weak}
\end{equation}
\end{theorem}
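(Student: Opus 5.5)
The plan is the classical Sznitman fixed-point argument for McKean--Vlasov SDEs with Lipschitz coefficients. All the Lipschitz control it needs is already available from Lemma~\ref{lem:drift-Lip}. There $b$ is globally Lipschitz in $\xi$ with constant $L_\xi=mL_G+\eta L_r+\lambda K_V$, and Lipschitz in $\mu$ with respect to $\mathsf W_2$ with constant $L_\mu$. Note that these bounds use only Assumptions~\ref{ass:V}, \ref{ass:G}, \ref{ass:r}; Assumption~\ref{ass:strong} is not needed. Combining them with the bound $\|b(0,\delta_0)\|\le mC_G+\eta\|\nabla r(0)\|+\lambda\|\nabla V(0)\|$, which follows from Assumption~\ref{ass:G}(ii), Assumption~\ref{ass:r}(ii) and $V\in C^2$, gives the linear growth estimate
\begin{equation}
\|b(\xi,\mu)\|\le C\bigl(1+\|\xi\|+M_2(\mu)^{1/2}\bigr).
\end{equation}

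I will work on a fixed horizon $[0,T]$. Let $\mathcal C_T:=\mathcal C([0,T];\mathcal P_2(\Xi))$, equipped with the metric $\sup_{t\le T}e^{-\beta t}\mathsf W_2(\mu_t,\nu_t)$.
\begin{enumerate}[leftmargin=*]
\item For a given curve $\mu\in\mathcal C_T$, the drift $b(\cdot,\mu_t)$ is Lipschitz in $\xi$ uniformly in $t$. It is also continuous in $t$, because $t\mapsto\mu_t$ is $\mathsf W_2$-continuous and $b$ is Lipschitz in $\mu$.
\item By standard SDE theory, the linear SDE $\mathrm d X_t=b(X_t,\mu_t)\,\mathrm dt+\sqrt{2\lambda}\,\mathrm dw_t$, $X_0\sim\mu_0$, therefore has a unique strong solution.
\item Gronwall and the linear growth estimate give $\mathbb E\sup_{t\le T}\|X_t\|^2<\infty$. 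Together with $\mathbb E\|X_t-X_s\|^2\lesssim |t-s|$, this shows $\Theta(\mu):=(\mathrm{Law}(X_t))_{t\le T}\in\mathcal C_T$.
\item For two input curves $\mu,\nu$, run the two SDEs with the same Brownian motion and the same initial condition, and subtract. Itô's formula (or a direct bound, since the noise cancels) together with the two Lipschitz bounds gives
\begin{equation}
\mathbb E\|X_t-Y_t\|^2\le 2L_\xi\int_0^t\mathbb E\|X_s-Y_s\|^2\,\mathrm ds+L_\mu\int_0^t\bigl(\mathbb E\|X_s-Y_s\|^2+\mathsf W_2^2(\mu_s,\nu_s)\bigr)\,\mathrm ds .
\end{equation}
\item By Gronwall, $\mathsf W_2^2(\Theta(\mu)_t,\Theta(\nu)_t)\le C_T\int_0^t\mathsf W_2^2(\mu_s,\nu_s)\,\mathrm ds$.
\item This makes $\Theta$ a contraction in the weighted metric once $\beta$ is large; alternatively, iterate to get a Picard bound of the form $C^kT^k/k!$. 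Banach's theorem then yields a unique fixed point $\mu=\Theta(\mu)$.
\end{enumerate}

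The corresponding $X$ is a strong solution of \eqref{eq:MV-SDE} with $\mathrm{Law}(X_t)=\mu_t$. Uniqueness goes as follows. Any solution has a law curve in $\mathcal C_T$, by the same moment estimates, and that curve must be a fixed point of $\Theta$. Pathwise uniqueness for the frozen linear SDE then identifies the two solutions. Finally, consistency of the solutions across $T$ lets me extend to $[0,\infty)$.

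For the weak formulation \eqref{eq:MV-weak}, fix $\varphi\in C_c^\infty$ and apply Itô's formula to $\varphi(\xi_t)$:
\begin{equation}
\varphi(\xi_t)=\varphi(\xi_0)+\int_0^t\bigl(\langle\nabla\varphi,b(\xi_s,\mu_s)\rangle+\lambda\Delta\varphi(\xi_s)\bigr)\,\mathrm ds+\sqrt{2\lambda}\int_0^t\nabla\varphi(\xi_s)\cdot\mathrm dw_s .
\end{equation}
The stochastic integral is a true martingale because $\nabla\varphi$ is bounded, so it has zero mean. Taking expectations and using Fubini, which is justified by the linear growth of $b$ and the second-moment bound, gives exactly \eqref{eq:MV-weak}.

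The main obstacle is the contraction estimate in the fourth and fifth steps. One has to make sure $b$ is Lipschitz jointly in $(\xi,\mu)$ with respect to $\mathsf W_2$ on path-space laws, and that the $\sqrt{m-1}$ factor from the tuple coupling in Lemma~\ref{lem:drift-Lip} only enters the constant. The rest is routine bookkeeping.
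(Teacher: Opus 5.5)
Your proposal is correct and follows the same route as the paper. You feed the Lipschitz bounds of Lemma~\ref{lem:drift-Lip} into a Sznitman-type fixed-point argument on $\mathcal C([0,T];\mathcal P_2(\Xi))$, and then apply It\^o's formula to $\varphi(\xi_t)$ to obtain \eqref{eq:MV-weak}; the paper only cites Sznitman and Carmona--Delarue for the fixed-point step, whereas you spell out the linear-growth bound, the synchronous-coupling Gr\"onwall estimate, and the weighted-metric contraction (and, as your own argument shows, only time-marginal laws are needed, not path-space laws).
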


\begin{proof}
By Lemma~\ref{lem:drift-Lip}, the drift $b(\xi,\mu)$ is globally Lipschitz in $\xi$ and Lipschitz in $\mu$ with respect to $\mathsf W_2$. Standard fixed-point arguments on $\mathcal C([0,T];\mathcal P_2(\Xi))$ therefore yield existence and uniqueness of a strong solution to \eqref{eq:MV-SDE} on each finite horizon $[0,T]$, and hence globally in time; see, e.g., \citet[Sec.~I.2]{sznitman1991topics} or \citet{carmona2018probabilisticI,carmona2018probabilisticII}.  Continuity of $t\mapsto\mu_t$ in $\mathsf W_2$ follows from standard SDE stability. Finally, applying It\^o's formula to $\varphi(\xi_t)$ and taking expectations (the martingale term has mean zero) yields \eqref{eq:MV-weak}, which is equivalent to \eqref{eq:MV-PDE} in the weak sense.
\end{proof}

\begin{proposition}[Wasserstein gradient flow structure and energy dissipation]
\label{prop:EDI-smooth}
Suppose that Assumptions~\ref{ass:V}, \ref{ass:G}, and \ref{ass:r} hold. Let $(\mu_t)$ be a sufficiently regular solution to \eqref{eq:MV-PDE} with strictly positive smooth density and sufficient decay at infinity so that all differentiations under the integral sign and integrations by parts below are justified. Then \eqref{eq:MV-PDE} can be written in the form
\begin{equation}
\partial_t\mu_t+\nabla\cdot(\mu_t v_t)=0,
\qquad
v_t(\xi)=-\nabla_{\xi}\frac{\delta\mathcal{F}_m^{\lambda,\mathrm{rep}}}{\delta\mu}(\mu_t)(\xi),
\label{eq:cont-v}
\end{equation}
Moreover, the free energy $\mathcal{F}_m^{\lambda,\mathrm{rep}}$ satisfies the energy dissipation identity: for a.e.\ $t>0$,
\begin{equation}
\frac{\mathrm d}{\mathrm dt}\mathcal{F}_m^{\lambda,\mathrm{rep}}(\mu_t)
=
-\int_{\Xi}\left\|\nabla_{\xi}\frac{\delta\mathcal{F}_m^{\lambda,\mathrm{rep}}}{\delta\mu}(\mu_t)(\xi)\right\|^2 \mu_t(\mathrm d\xi)
\le 0.
\label{eq:EDI}
\end{equation}
\end{proposition}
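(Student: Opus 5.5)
The plan has two parts: rewrite the nonlinear Fokker--Planck drift as the negative gradient of the first variation, and then obtain the dissipation identity from a chain rule along the curve, justified by the regularity the statement assumes.

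\textbf{Step 1: the continuity-equation form.} Since $\rho\propto e^{-V}$, we have $\nabla\log\rho=-\nabla V$. Because $\mu_t$ has a strictly positive smooth density,
\begin{equation}
\lambda\Delta\mu_t=\lambda\nabla\cdot(\mu_t\nabla\log\mu_t)=\nabla\cdot\big(\mu_t\,\lambda\nabla\log(\mu_t/\rho)\big)-\lambda\nabla\cdot(\mu_t\nabla V).
\end{equation}
Substitute the definition of $b$ in \eqref{eq:b-def} into \eqref{eq:MV-PDE}. The two $\lambda\nabla V$ terms cancel, and we are left with
\begin{equation}
\partial_t\mu_t=-\nabla\cdot\big(\mu_t[m\nabla\Phi_m(\cdot;\mu_t)-\eta\nabla\Psi_r(\cdot;\mu_t)-\lambda\nabla\log(\mu_t/\rho)]\big).
\end{equation}
Corollary~\ref{cor:deltaF} gives the first variation. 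Its gradient, computed with Lemmas~\ref{lem:Phi-diff-Lip} and \ref{lem:Psi-diff-Lip}, is exactly the negative of the bracket, since the additive constant disappears under $\nabla$. The hypotheses of Corollary~\ref{cor:deltaF} hold because the density is positive and the entropy is finite, the latter being part of the assumed regularity. This proves \eqref{eq:cont-v}.

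\textbf{Step 2: the dissipation identity.} Differentiate each part of $\mathcal F_m^{\lambda,\mathrm{rep}}(\mu_t)$ in time.
\begin{itemize}
\item \emph{Utility term.} Use the multilinear expansion from the proof of Lemma~\ref{lem:deltaJ} with $\partial_t\mu_t$ in place of $\nu-\mu$. By permutation invariance, $\frac{\mathrm d}{\mathrm dt}\mathcal J_m(\mu_t)=m\int\Phi_m(\xi;\mu_t)\,\partial_t\mu_t(\mathrm d\xi)$.
\item \emph{Repulsion term.} In the same way, using evenness of $r$ as in Lemma~\ref{lem:deltaR}, $\frac{\mathrm d}{\mathrm dt}\mathcal R(\mu_t)=\int\Psi_r\,\partial_t\mu_t$.
\item \emph{Entropy term.} $\frac{\mathrm d}{\mathrm dt}\mathrm{KL}(\mu_t\|\rho)=\int(\log(\mu_t/\rho)+1)\,\partial_t\mu_t$. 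The $+1$ contributes nothing, because mass is conserved: $\int\partial_t\mu_t=0$.
\end{itemize}
Summing the three pieces gives
\begin{equation}
\frac{\mathrm d}{\mathrm dt}\mathcal F_m^{\lambda,\mathrm{rep}}(\mu_t)=\int\frac{\delta\mathcal F_m^{\lambda,\mathrm{rep}}}{\delta\mu}(\mu_t)\,\partial_t\mu_t .
\end{equation}
Insert $\partial_t\mu_t=-\nabla\cdot(\mu_tv_t)$ and integrate by parts; boundary terms vanish by the assumed decay at infinity. The result is $\int\langle\nabla\frac{\delta\mathcal F}{\delta\mu},v_t\rangle\,\mathrm d\mu_t=-\int\|\nabla\frac{\delta\mathcal F}{\delta\mu}\|^2\,\mathrm d\mu_t\le0$, which is \eqref{eq:EDI}.

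\textbf{Main obstacle.} The delicate step is the time differentiation of the nonlinear term $\mathcal J_m(\mu_t)$, which involves $\mu_t^{\otimes m}$. The derivative is $\sum_j\int G\,\mathrm d(\mu_t^{\otimes(j-1)}\otimes\partial_t\mu_t\otimes\mu_t^{\otimes(m-j)})$, and exchanging $\frac{\mathrm d}{\mathrm dt}$ with the integral needs domination. That domination should follow from the growth bounds in Assumption~\ref{ass:G}(i)--(ii) together with the second-moment bounds along the flow from Theorem~\ref{thm:MV-wellposed}. Where uniform control in $t$ is available only on bounded intervals, the identity will hold for a.e.\ $t$. I will simply invoke the proposition's standing regularity hypothesis at this point, and at the corresponding step for the entropy (differentiating $\int\mu_t\log\mu_t$).
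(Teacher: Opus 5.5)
Your proposal is correct and follows the paper's proof. The paper identifies the drift with $-\nabla_\xi\frac{\delta\mathcal F_m^{\lambda,\mathrm{rep}}}{\delta\mu}$ using Corollary~\ref{cor:deltaF}, $\nabla\log\rho=-\nabla V$ and $\nabla\cdot(\mu_t\nabla\log\mu_t)=\Delta\mu_t$, then obtains \eqref{eq:EDI} by the chain rule and one integration by parts under the standing regularity hypothesis; your term-by-term chain rule, and your remark that finite entropy is needed to invoke Corollary~\ref{cor:deltaF}, just spell out details the paper leaves to its cited references.
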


\begin{proof}
Using the expression for the first-variation obtained in Corollary~\ref{cor:deltaF}, cf. \eqref{eq:deltaF}, together with the fact that $\rho(\xi)\propto e^{-V(\xi)}$, it is straightforward to verify that
\begin{equation}
\nabla_\xi\frac{\delta\mathcal{F}_m^{\lambda,\mathrm{rep}}}{\delta\mu}(\mu_t)(\xi)
=
-m\nabla_\xi\Phi_m(\xi;\mu_t)+\eta \nabla_{\xi} \Psi_{r}(\xi;\mu_t)+\lambda\nabla\log\mu_t(\xi)+\lambda\nabla V(\xi).
\end{equation}
Thus, the velocity field is given by
\begin{equation}
v_t(\xi)= m\nabla_\xi\Phi_m(\xi;\mu_t) -\eta \nabla_{\xi} \Psi_{r}(\xi;\mu_t)-\lambda\nabla V(\xi)-\lambda\nabla\log\mu_t(\xi),
\end{equation}
and \eqref{eq:cont-v} is equivalent to \eqref{eq:MV-PDE} since $\nabla\cdot(\mu_t\nabla\log\mu_t)=\Delta\mu_t$. Under the stated smoothness and decay assumptions, the chain rule and integration by parts yield \eqref{eq:EDI}. See \citet[Chapters~10--11]{ambrosio2008gradientflows} and \citet{jordan1998variational}.
\end{proof}

\begin{lemma}[Wasserstein contractivity of the McKean--Vlasov dynamics in the dissipative regime]
\label{lem:MV-contractivity}
Suppose that Assumptions~\ref{ass:V}, \ref{ass:G}, \ref{ass:r}, and \ref{ass:strong} hold, and suppose moreover that
\begin{equation}
\delta:=\alpha-L_\mu>0,
\qquad \alpha=\lambda\kappa-\eta L_r - mL_G,\quad L_\mu \text{ is as in Lemma~\ref{lem:drift-Lip}}.
\label{eq:delta-contract}
\end{equation}
Let $(\mu_t)_{t\ge0}$ and $(\tilde\mu_t)_{t\ge0}$ be two solution laws of \eqref{eq:MV-SDE} with initial laws $\mu_0,\tilde\mu_0\in\mathcal P_2(\Xi)$. Then for all $t\ge0$,
\begin{equation}
\mathsf W_2(\mu_t,\tilde\mu_t)\le e^{-\delta t} \mathsf W_2(\mu_0,\tilde\mu_0).
\label{eq:MV-contract}
\end{equation}
In particular, if $\mu_m^{\lambda,\star}$ is a stationary solution of \eqref{eq:MV-PDE} (e.g.\ the Gibbs minimiser from Corollary~\ref{cor:stationary-minimiser} in the $\alpha>0$ regime), then
\begin{equation}
\mathsf W_2(\mu_t,\mu_m^{\lambda,\star})\le e^{-\delta t} \mathsf W_2(\mu_0,\mu_m^{\lambda,\star}),
\qquad t\ge0.
\label{eq:MV-to-stationary}
\end{equation}
\end{lemma}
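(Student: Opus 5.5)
We prove the contraction by a synchronous coupling. By Theorem~\ref{thm:MV-wellposed}, we fix an optimal $\mathsf W_2$-coupling of $(\mu_0,\tilde\mu_0)$ and draw $(\xi_0,\tilde\xi_0)$ from it. We then let $\xi_t$ and $\tilde\xi_t$ be the strong solutions of \eqref{eq:MV-SDE} driven by the \emph{same} Brownian motion $(w_t)$, with laws $\mu_t$ and $\tilde\mu_t$ respectively. The noise cancels in the difference, so $\xi_t-\tilde\xi_t$ solves the random ODE
\begin{equation}
\frac{\mathrm d}{\mathrm dt}(\xi_t-\tilde\xi_t)=b(\xi_t,\mu_t)-b(\tilde\xi_t,\tilde\mu_t),
\end{equation}
and $t\mapsto\|\xi_t-\tilde\xi_t\|^2$ is absolutely continuous pathwise, with no It\^o correction.

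Next we split the drift difference as
\begin{equation}
\big[b(\xi_t,\mu_t)-b(\tilde\xi_t,\mu_t)\big]+\big[b(\tilde\xi_t,\mu_t)-b(\tilde\xi_t,\tilde\mu_t)\big].
\end{equation}
The first bracket is controlled by Lemma~\ref{lem:dissip}, which contributes $-\alpha\|\xi_t-\tilde\xi_t\|^2$ to $\tfrac12\tfrac{\mathrm d}{\mathrm dt}\|\xi_t-\tilde\xi_t\|^2$. The second bracket is controlled by Cauchy--Schwarz and \eqref{eq:b-Lip-mu}, which contribute at most $L_\mu\|\xi_t-\tilde\xi_t\|\,\mathsf W_2(\mu_t,\tilde\mu_t)$.

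We then set $D_t:=\mathbb E\|\xi_t-\tilde\xi_t\|^2$ and take expectations. Differentiation under $\mathbb E$ is justified by finite second moments along the flow and the linear growth of $b$. Using $\mathsf W_2(\mu_t,\tilde\mu_t)\le D_t^{1/2}$, because $(\xi_t,\tilde\xi_t)$ is a coupling, together with Cauchy--Schwarz in expectation, we obtain
\begin{equation}
\tfrac12 D_t'\le -\alpha D_t+L_\mu D_t^{1/2}\,\mathsf W_2(\mu_t,\tilde\mu_t)\le-(\alpha-L_\mu)D_t=-\delta D_t.
\end{equation}
Gr\"onwall's inequality gives $D_t\le e^{-2\delta t}D_0$. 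Since $D_0=\mathsf W_2^2(\mu_0,\tilde\mu_0)$ by the choice of initial coupling, and $\mathsf W_2^2(\mu_t,\tilde\mu_t)\le D_t$, taking square roots yields \eqref{eq:MV-contract}.

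For \eqref{eq:MV-to-stationary}, we take $\tilde\mu_0=\mu_m^{\lambda,\star}$. Stationarity, meaning that the constant curve solves \eqref{eq:MV-PDE}, together with uniqueness of weak solutions of the PDE within the class of SDE laws, gives $\tilde\mu_t\equiv\mu_m^{\lambda,\star}$. Uniqueness holds because, with the drift frozen at $\mu_m^{\lambda,\star}$, the equation is linear with a Lipschitz drift, so the stationary measure is the law of the McKean--Vlasov solution started from it.

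We expect the main obstacle to be a technical one: the nonsmoothness of $D_t^{1/2}$ near $D_t=0$. We would avoid it by working directly with the pathwise integral form of $\|\xi_t-\tilde\xi_t\|^2$ and applying Gr\"onwall to the integral inequality, or, if needed, by bounding the cross term with Young's inequality using $\varepsilon$ and optimising. Identifying the stationary solution with the constant-in-time SDE law is the other point requiring care.
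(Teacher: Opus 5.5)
Your proposal is correct and follows essentially the same synchronous-coupling argument as the paper: it uses the same drift splitting via Lemma~\ref{lem:dissip} and \eqref{eq:b-Lip-mu}, the bound $\mathbb E[\|\xi_t-\tilde\xi_t\|]\,\mathsf W_2(\mu_t,\tilde\mu_t)\le D_t$, and Gr\"onwall. Your explicit optimal initial coupling, and your argument that the stationary law is the McKean--Vlasov solution started from it (via uniqueness for the frozen-drift linear Fokker--Planck equation), fill in details the paper leaves implicit; the difficulty you anticipate with $D_t^{1/2}$ does not arise, since the resulting differential inequality is linear in $D_t$.
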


\begin{remark}
The rate $\delta=\alpha-L_\mu$ arises from a synchronous-coupling bound and is generally conservative. In the $\alpha$-convex regime, the corresponding $\mathrm{EVI}_\alpha$ gradient flow contracts at rate $\alpha$ (see Theorem~\ref{thm:EVI}); rigorously identifying the McKean--Vlasov law with that flow would sharpen \eqref{eq:MV-contract}.
\end{remark}

\begin{proof}
Let $(\xi_t,\tilde\xi_t)$ be the synchronous coupling of the two McKean--Vlasov solutions, driven by the same Brownian motion, and set $\Delta_t:=\xi_t-\tilde\xi_t$. Then
\begin{equation}
\frac{d}{dt}\mathbb E\|\Delta_t\|^2
=2 \mathbb E\langle \Delta_t, b(\xi_t,\mu_t)-b(\tilde\xi_t,\tilde\mu_t)\rangle.
\end{equation}
We decompose the difference in the drift as $[b(\xi_t,\mu_t)-b(\tilde\xi_t,\mu_t)]
+[b(\tilde\xi_t,\mu_t)-b(\tilde\xi_t,\tilde\mu_t)]$. By Lemma~\ref{lem:dissip}, the first term contributes at most $-2\alpha\mathbb E\|\Delta_t\|^2$. By \eqref{eq:b-Lip-mu} and Cauchy--Schwarz, the second contributes at most
\begin{equation}
2L_\mu \mathbb E\big[\|\Delta_t\| \mathsf W_2(\mu_t,\tilde\mu_t)\big].
\end{equation}
Since the law of $(\xi_t,\tilde\xi_t)$ is a coupling of $(\mu_t,\tilde\mu_t)$, $\mathsf W_2^2(\mu_t,\tilde\mu_t)\le \mathbb E\|\Delta_t\|^2$, and therefore $\mathbb E[\|\Delta_t\| \mathsf W_2(\mu_t,\tilde\mu_t)] \le \mathbb E\|\Delta_t\|^2$. It follows that
\begin{equation}
\frac{d}{dt}\mathbb E\|\Delta_t\|^2
\le -2(\alpha-L_\mu) \mathbb E\|\Delta_t\|^2
=-2\delta \mathbb E\|\Delta_t\|^2.
\end{equation}
Gronwall yields
$\mathbb E\|\Delta_t\|^2\le e^{-2\delta t}\mathbb E\|\Delta_0\|^2$, and since
$\mathsf W_2^2(\mu_t,\tilde\mu_t)\le \mathbb E\|\Delta_t\|^2$, this proves
\eqref{eq:MV-contract}. Taking $\tilde\mu_t\equiv \mu_m^{\lambda,\star}$ gives
\eqref{eq:MV-to-stationary}.
\end{proof}

\begin{corollary}[Stationarity of Gibbs minimisers for the McKean--Vlasov dynamics]
\label{cor:stationary-minimiser}
Suppose that Assumptions~\ref{ass:V}, \ref{ass:G}, and \ref{ass:r} hold. Let $\mu_m^{\lambda,\star}$ be any minimiser of
$\mathcal{F}_m^{\lambda,\mathrm{rep}}$, thus a solution of the Gibbs fixed point \eqref{eq:gibbs-fp} (by Theorem~\ref{thm:fixedpoint}). Then $\mu_m^{\lambda,\star}$ is a stationary weak solution of the nonlinear Fokker--Planck equation \eqref{eq:MV-PDE}. Consequently, if $\xi_0\sim \mu_m^{\lambda,\star}$ and $(\xi_t)$ solves \eqref{eq:MV-SDE}, then
$\mu_t=\mathrm{Law}(\xi_t)=\mu_m^{\lambda,\star}$ for all $t\ge 0$.
\end{corollary}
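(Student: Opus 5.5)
The plan is to show directly that $\mu^\star:=\mu_m^{\lambda,\star}$ makes the time-integrand in the weak formulation \eqref{eq:MV-weak} vanish for every test function. The time-independent curve $\mu_t\equiv\mu^\star$ is then a weak solution. Uniqueness in Theorem~\ref{thm:MV-wellposed} will identify the law of the McKean--Vlasov solution started from $\mu^\star$.

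\textbf{Step 1: Gibbs form of the stationary density.} By Theorem~\ref{thm:fixedpoint}, $\mu^\star$ has Lebesgue density
\begin{equation}
p^\star(\xi)\propto \exp\Big(\tfrac1\lambda U(\xi)\Big), \qquad U(\xi):=m\Phi_m(\xi;\mu^\star)-\eta\Psi_r(\xi;\mu^\star)-\lambda V(\xi),
\end{equation}
which is strictly positive. By Lemmas~\ref{lem:Phi-diff-Lip} and \ref{lem:Psi-diff-Lip}, $U\in C^1$ with globally Lipschitz gradient. Moreover, $\nabla U(\xi)=b(\xi,\mu^\star)$ by \eqref{eq:b-def}. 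Hence $p^\star$ is $C^1$ and satisfies the identity
\begin{equation}
\lambda\nabla p^\star=p^\star\,b(\cdot,\mu^\star).
\end{equation}

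\textbf{Step 2: the weak stationarity identity.} Fix $\varphi\in C_c^\infty(\Xi)$. The boundary terms vanish because $\varphi$ has compact support, so integration by parts gives
\begin{equation}
\int\lambda\Delta\varphi\,p^\star\,\mathrm d\xi=-\int\langle\nabla\varphi,\lambda\nabla p^\star\rangle\,\mathrm d\xi=-\int\langle\nabla\varphi,b(\cdot,\mu^\star)\rangle\,p^\star\,\mathrm d\xi.
\end{equation}
Therefore $\int(\langle\nabla\varphi,b(\cdot,\mu^\star)\rangle+\lambda\Delta\varphi)\,\mathrm d\mu^\star=0$. It follows that the constant curve $\mu_t\equiv\mu^\star$ satisfies \eqref{eq:MV-weak}, so $\mu^\star$ is a stationary weak solution of \eqref{eq:MV-PDE}.

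\textbf{Step 3: the SDE statement.} This step is the main obstacle. Weak PDE solutions need not be unique in general, so the constant curve cannot immediately be identified with the law of the McKean--Vlasov solution. I would avoid PDE uniqueness by freezing the measure.

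Consider the linear SDE $\mathrm d X_t=b(X_t,\mu^\star)\,\mathrm dt+\sqrt{2\lambda}\,\mathrm dw_t$ with $X_0\sim\mu^\star$. Its drift $\nabla U$ is globally Lipschitz, so it has a unique strong solution. It is a reversible Langevin diffusion with invariant density $p^\star\propto e^{U/\lambda}$, since this density is normalisable (Theorem~\ref{thm:fixedpoint}). Invariance can be verified by the standard generator argument: $\int \mathcal L f\,\mathrm d\mu^\star=0$ for $f\in C_c^\infty$ by Step~2, and the Lipschitz drift gives a core for the generator. Hence $\mathrm{Law}(X_t)=\mu^\star$ for all $t\ge0$.

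Consequently $b(X_t,\mathrm{Law}(X_t))=b(X_t,\mu^\star)$, so $X$ solves \eqref{eq:MV-SDE}. Note that $\mu^\star\in\mathcal P_2$ by Theorem~\ref{thm:exist}. By the uniqueness of strong solutions in Theorem~\ref{thm:MV-wellposed}, $X=\xi$, and therefore $\mathrm{Law}(\xi_t)=\mu^\star$ for all $t\ge0$.
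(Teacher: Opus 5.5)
Your proof is correct. Steps 1--2 are the same computation as the paper's: the paper also writes $b(\cdot,\mu_m^{\lambda,\star})=\lambda\nabla\log q^\star$, so that $b\,\mu_m^{\lambda,\star}=\lambda\nabla q^\star$ and the flux cancels the Laplacian. Your integration by parts against $C_c^\infty$ test functions is simply the weak form of that identity.

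The real difference is in Step 3. The paper finishes in one line by appealing to ``uniqueness of the MV solution curve'' in Theorem~\ref{thm:MV-wellposed}. That theorem, however, gives pathwise uniqueness for the SDE \eqref{eq:MV-SDE}, not uniqueness of weak solutions of \eqref{eq:MV-PDE}. So the paper's last sentence implicitly needs exactly the bridge you supply: that the constant curve is actually the law of some solution of \eqref{eq:MV-SDE}, or equivalently a uniqueness result for the linear Fokker--Planck equation with Lipschitz drift.

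Your frozen-measure argument closes this cleanly. Invariance of $\mu_m^{\lambda,\star}$ for the reversible Langevin diffusion with drift $\nabla U=b(\cdot,\mu_m^{\lambda,\star})$ makes $X$ a McKean--Vlasov solution, and pathwise uniqueness then identifies it with $\xi$. The paper's version is shorter; yours is complete.

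Two small points. First, take $X_0=\xi_0$ and drive $X$ with the same Brownian motion as $\xi$, so that pathwise uniqueness applies as stated; alternatively, argue via uniqueness in law. Second, for the invariance step the most economical justification is Echeverr\'{\i}a's criterion: a well-posed martingale problem on $C_c^\infty$ together with $\int\mathcal L f\,\mathrm d\mu_m^{\lambda,\star}=0$ from your Step~2. Your core claim is also fine here, because $b(\cdot,\mu_m^{\lambda,\star})$ is in fact $C^1$ with bounded derivative by Assumptions~\ref{ass:V}(ii), \ref{ass:G}(iii) and \ref{ass:r}(ii), not merely Lipschitz.
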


\begin{proof}
Let $q^\star$ denote the Lebesgue density of $\mu_m^{\lambda,\star}$, so that $\mu_m^{\lambda,\star}(\mathrm d\xi)=q^\star(\xi) \mathrm d\xi$. From \eqref{eq:gibbs-fp} and $\rho(\xi)\propto e^{-V(\xi)}$, we have $q^\star(\xi)\propto \exp \big(\frac{1}{\lambda}(m\Phi_m(\xi;\mu_m^{\lambda,\star}) - \eta\Psi_{r}(\xi;\mu_m^{\lambda,\star}))-V(\xi)\big)$, hence
\begin{equation}
\nabla \log q^\star(\xi)
=
\frac{m}{\lambda}\nabla_\xi\Phi_m(\xi;\mu_m^{\lambda,\star})
-\frac{\eta}{\lambda}\nabla_\xi\Psi_r(\xi;\mu_m^{\lambda,\star})
-\nabla V(\xi).
\label{eq:grad-log-qstar}
\end{equation}
Multiplying \eqref{eq:grad-log-qstar} by $\lambda$ gives $b(\xi,\mu_m^{\lambda,\star})=m\nabla_\xi\Phi_m(\xi;\mu_m^{\lambda,\star})-\eta\nabla_\xi\Psi_r(\xi;\mu_m^{\lambda,\star})-\lambda\nabla V(\xi)=\lambda\nabla\log q^\star(\xi)$. Thus, $b(\cdot,\mu_m^{\lambda,\star}) \mu_m^{\lambda,\star}=\lambda\nabla q^\star$, and so
\begin{equation}
-\nabla\cdot\big(b(\cdot,\mu_m^{\lambda,\star})\mu_m^{\lambda,\star}\big)+\lambda\Delta\mu_m^{\lambda,\star}
=
-\lambda\nabla\cdot(\nabla q^\star)+\lambda\Delta q^\star
=0
\end{equation}
That is, $\mu_m^{\lambda,\star}$ is a stationary weak solution of the nonlinear Fokker-Planck equation. The final claim follows by applying Theorem~\ref{thm:MV-wellposed}, i.e., uniqueness of the MV solution curve, with initial law $\mu_0=\mu_m^{\lambda,\star}$.
\end{proof}

\subsubsection{EVI Gradient Flow}
\label{sec:evi}
\begin{definition}[EVI$_\alpha$ gradient flow]\label{def:EVI}
Let $\alpha\in\mathbb R$ and let $\mathcal F:\mathcal P_2(\Xi)\to(-\infty,+\infty]$ be a proper functional. A locally absolutely continuous curve $(\mu_t)_{t\ge 0}\subset\mathcal P_2(\Xi)$ is called an
$\mathrm{EVI}_\alpha$ (evolution variational inequality) gradient flow of $\mathcal F$ starting from $\mu_0$ if for every $\nu\in\mathcal P_2(\Xi)$ with $\mathcal F(\nu)<\infty$ and for a.e.\ $t>0$,
\begin{equation}
\frac{1}{2}\frac{\mathrm d}{\mathrm dt}\mathsf W_2^2(\mu_t,\nu)
+\frac{\alpha}{2}\mathsf W_2^2(\mu_t,\nu)
\le \mathcal F(\nu)-\mathcal F(\mu_t).
\label{eq:EVI-def}
\end{equation}
\end{definition}

\begin{theorem}[EVI$_\alpha$ gradient flow and sharp contractivity]\label{thm:EVI}
Suppose that Assumptions~\ref{ass:V}, \ref{ass:G}, \ref{ass:r}, and \ref{ass:strong} hold. Let $\alpha=\lambda\kappa-\eta L_r - mL_G>0$. Then $\mathcal F_m^{\lambda,\mathrm{rep}}$ admits a unique $\mathrm{EVI}_\alpha$ Wasserstein gradient flow $(\mu_t)_{t\ge0}$ from every $\mu_0\in\mathcal P_2(\Xi)$. Moreover, for any two $\mathrm{EVI}_\alpha$ solutions $(\mu_t)$ and $(\tilde\mu_t)$,
\begin{equation}
\mathsf W_2(\mu_t,\tilde\mu_t)\le e^{-\alpha t} \mathsf W_2(\mu_0,\tilde\mu_0),
\qquad t\ge 0.
\label{eq:EVI-contract}
\end{equation}
In particular, letting $\mu_m^{\lambda,\star}$ denote the unique minimiser of $\mathcal F_m^{\lambda,\mathrm{rep}}$,
\begin{equation}
\mathsf W_2(\mu_t,\mu_m^{\lambda,\star})\le e^{-\alpha t} \mathsf W_2(\mu_0,\mu_m^{\lambda,\star}),
\qquad t\ge 0,
\label{eq:EVI-to-min}
\end{equation}
and the free energy converges exponentially:
\begin{equation}
\mathcal F_m^{\lambda,\mathrm{rep}}(\mu_t)-\mathcal F_m^{\lambda,\mathrm{rep}}(\mu_m^{\lambda,\star})
\le e^{-2\alpha t}\Big(\mathcal F_m^{\lambda,\mathrm{rep}}(\mu_0)-\mathcal F_m^{\lambda,\mathrm{rep}}(\mu_m^{\lambda,\star})\Big),
\qquad t\ge 0.
\label{eq:EVI-energy}
\end{equation}
\end{theorem}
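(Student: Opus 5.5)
The plan is to invoke the abstract theory of gradient flows for $\lambda$-convex functionals in \citet[Chapters~4 and~11]{ambrosio2008gradientflows}. First I check its hypotheses for $\mathcal F := \mathcal F_m^{\lambda,\mathrm{rep}}$ on $(\mathcal P_2(\Xi),\mathsf W_2)$; then I read off existence, uniqueness and contractivity; finally I derive the convergence statements from $\alpha$-convexity and the minimiser of Theorem~\ref{thm:uniq}.

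\emph{Step 1 (hypotheses).} The following properties are needed.
\begin{itemize}
\item Properness and a lower bound come from Theorem~\ref{thm:exist}.
\item Lower semicontinuity should hold with respect to weak convergence on $\mathsf W_2$-bounded sets. For $\mathrm{KL}(\cdot\|\rho)$ this is classical. For $\mathcal R$ it is Lemma~\ref{lem:R-lsc}. For $-\mathcal J_m$, Lemma~\ref{lem:UI-Jm-cont} gives continuity along sequences with bounded second moments.
\item Convexity along generalised geodesics is the key structural hypothesis. Lemmas~\ref{lem:J-semi} and \ref{lem:R-semi} are stated along displacement interpolations. Their proofs, a pointwise Hessian bound integrated against a coupling, work verbatim for any interpolation $(1-t)X+tY$ built from an arbitrary coupling. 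Replacing $\mathsf W_2^2(\mu_0,\mu_1)$ by the cost of the coupling, which is the quantity appearing in the generalised-geodesic definition, gives the same $(-mL_G)$ and $(-\eta L_r)$ bounds. Note that Lemma~\ref{lem:J-semi} uses $\gamma^{\otimes m}$, which is still a coupling. $\mathrm{KL}(\cdot\|\rho)$ is $\kappa$-convex along generalised geodesics because $V$ is $\kappa$-convex \citep[Thm.~9.4.11]{ambrosio2008gradientflows}. Summing gives $\alpha$-convexity along generalised geodesics.
\end{itemize}

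\emph{Step 2 (EVI flow).} With these hypotheses, \citet[Thm.~4.0.4 and Thm.~11.2.1]{ambrosio2008gradientflows} give, for every $\mu_0\in\overline{\mathrm{Dom}(\mathcal F)}$, a unique $\mathrm{EVI}_\alpha$ solution obtained as the limit of the JKO scheme. Since $\mathrm{Dom}(\mathcal F)$ contains all smooth positive densities with finite second moment, its $\mathsf W_2$-closure is all of $\mathcal P_2(\Xi)$. Contractivity \eqref{eq:EVI-contract} follows by the standard doubling argument: write \eqref{eq:EVI-def} for $\mu_t$ tested against $\tilde\mu_s$ and vice versa, add the two, and apply Gronwall. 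This is also part of \citet[Thm.~11.2.1]{ambrosio2008gradientflows}.

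\emph{Step 3 (convergence).} The minimiser $\mu_m^{\lambda,\star}$ of Theorem~\ref{thm:uniq} is a stationary $\mathrm{EVI}_\alpha$ solution. The right-hand side of \eqref{eq:EVI-def} is $\mathcal F(\nu)-\mathcal F(\mu_m^{\lambda,\star})\ge \frac{\alpha}{2}\mathsf W_2^2(\nu,\mu_m^{\lambda,\star})$ by \eqref{eq:quad-growth}, so it dominates the $\alpha$ term, and the derivative term vanishes for a constant curve. Applying \eqref{eq:EVI-contract} with $\tilde\mu_t\equiv\mu_m^{\lambda,\star}$ then gives \eqref{eq:EVI-to-min}.

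For \eqref{eq:EVI-energy} I would use the regularising estimates for $\mathrm{EVI}_\alpha$ flows. The energy $t\mapsto\mathcal F(\mu_t)$ is nonincreasing, and the energy dissipation equality $\frac{d}{dt}\mathcal F(\mu_t)=-|\partial\mathcal F|^2(\mu_t)$ holds. The $\alpha$-convexity yields the Polyak--\L{}ojasiewicz-type inequality $|\partial\mathcal F|^2(\mu)\ge 2\alpha\big(\mathcal F(\mu)-\mathcal F(\mu_m^{\lambda,\star})\big)$ \citep[Lemma~2.4.13]{ambrosio2008gradientflows}. Gronwall then gives the $e^{-2\alpha t}$ decay. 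If $\mathcal F(\mu_0)=\infty$, the bound holds trivially.

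The main obstacle is Step 1, namely upgrading geodesic convexity to convexity along generalised geodesics and securing the compactness and lower semicontinuity needed by the JKO existence theory. In particular, $-\mathcal J_m$ is unbounded below and is controlled only through coercivity via KL. I would handle this as in Theorem~\ref{thm:exist}: take $a>mC_G/\lambda$ in the Donsker--Varadhan bound. This gives uniform entropy, hence second-moment, bounds along JKO minimising sequences, so that Lemma~\ref{lem:UI-Jm-cont} applies.
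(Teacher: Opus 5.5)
Your proposal is correct and follows the same route as the paper: appeal to the general theory of $\mathrm{EVI}_\alpha$ flows for $\alpha$-convex functionals in \citet{ambrosio2008gradientflows}, then specialise to the stationary minimiser. It is in fact more careful than the paper, because \citet[Thm.~11.2.1]{ambrosio2008gradientflows} requires $\alpha$-convexity along \emph{generalised} geodesics rather than the plain geodesic $\alpha$-convexity the paper invokes via Theorem~\ref{thm:uniq}, and your observation that the Hessian arguments of Lemmas~\ref{lem:J-semi} and \ref{lem:R-semi} hold along any coupling interpolation, with the coupling cost in place of $\mathsf W_2^2$, supplies exactly this. One small correction: a smooth positive density with finite second moment can still have infinite entropy. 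Justify $\overline{\mathrm{Dom}(\mathcal F_m^{\lambda,\mathrm{rep}})}=\mathcal P_2(\Xi)$ instead by Gaussian mollification, since $\mu*\mathcal N(0,\varepsilon I_d)$ has finite relative entropy with respect to $\rho$ and converges to $\mu$ in $\mathsf W_2$ as $\varepsilon\downarrow 0$.
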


\begin{proof}
By Theorems~\ref{thm:exist} and \ref{thm:uniq}, $\mathcal F_m^{\lambda,\mathrm{rep}}$ is proper, lower semicontinuous, and $\alpha$-geodesically convex on $(\mathcal P_2(\Xi),\mathsf W_2)$. Existence and uniqueness of the $\mathrm{EVI}_\alpha$ flow, together with the contractivity estimate \eqref{eq:EVI-contract}, are standard consequences of the general theory in \citet[Thms.~11.1.4, 11.2.1, 11.2.4]{ambrosio2008gradientflows}. The convergence bounds \eqref{eq:EVI-to-min}--\eqref{eq:EVI-energy} follow by taking $\tilde\mu_t\equiv\mu_m^{\lambda,\star}$
and using standard $\alpha$-convexity consequences.
\end{proof}

\begin{remark}[On identification of MV dynamics and EVI gradient flows]
\label{rem:MV-vs-EVI-revised}
Equation \eqref{eq:MV-PDE} is formally the Wasserstein gradient-flow associated with $\mathcal F_m^{\lambda,\mathrm{rep}}$
(cf.\ Proposition~\ref{prop:EDI-smooth}). In the $\alpha$-convex regime, the $\mathrm{EVI}_\alpha$ gradient flow is unique. A full identification of the McKean--Vlasov law $(\mu_t)$ from Theorem~\ref{thm:MV-wellposed} with the $\mathrm{EVI}_\alpha$ flow requires an argument showing that $(\mu_t)$ is a curve of maximal slope for $\mathcal F_m^{\lambda,\mathrm{rep}}$ (e.g.\ via an energy dissipation inequality). In this appendix, whenever we require \emph{quantitative} convergence of the \emph{McKean--Vlasov} law to equilibrium, we use instead
a direct synchronous-coupling contractivity estimate (Lemma~\ref{lem:MV-contractivity}), which holds under the dissipative condition
$\delta=\alpha-L_\mu>0$.
\end{remark}

\subsubsection{Propagation of Chaos and Empirical Measure Error}
\label{sec:poc}
We first recall the definition of the interacting particle system (IPS) from Section~\ref{sec:method:ips}. For $i\in[N]$, let $(\xi_t^{i,N})_{t\ge 0}$ solve
\begin{equation}
\mathrm d\xi_t^{i,N}=b(\xi_t^{i,N},\mu_t^N) \mathrm dt+\sqrt{2\lambda} \mathrm dw_t^i,
\qquad
\xi_0^{i,N}\sim \mu_0\ \text{i.i.d.}
\label{eq:IPS-recall}
\end{equation}
where $\mu_t^N:=\frac1N\sum_{j=1}^N\delta_{\xi_t^{j,N}}$. In addition, for each $i\in[N]$, let $(\bar \xi_t^i)_{t\ge 0}$ be an i.i.d.\ family of \emph{nonlinear copies} with common law $\mu_t$, defined as the unique strong solutions to
\begin{equation}
\mathrm d\bar \xi_t^{i}=b(\bar \xi_t^{i},\mu_t) \mathrm dt+\sqrt{2\lambda} \mathrm dw_t^i,
\qquad \bar \xi_0^i\sim \mu_0,\ \text{i.i.d.}.
\label{eq:nonlinear-copies}
\end{equation}
and define $\bar\mu_t^N:=\frac1N\sum_{i=1}^N\delta_{\bar \xi_t^i}$.  

\begin{lemma}[Propagation of moments for the McKean--Vlasov solution]\label{lem:MV-moment-q}
Suppose that Assumptions~\ref{ass:V}, \ref{ass:G}, and \ref{ass:r} hold. Let $(\mu_t)_{t\ge 0}$ be the law of the unique strong solution to \eqref{eq:MV-SDE}. If $M_q(\mu_0)<\infty$ for some $q>2$, then for every $T>0$,
\begin{equation}
\sup_{t\in[0,T]} M_q(\mu_t) < \infty.
\label{eq:MV-moment-q}
\end{equation}
\end{lemma}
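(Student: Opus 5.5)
The plan is to apply Itô's formula to $\|\xi_t\|^q$ along the strong solution of \eqref{eq:MV-SDE} and close a Gronwall inequality for $t\mapsto M_q(\mu_t)$. Two earlier facts supply the ingredients. First, the drift $b$ in \eqref{eq:b-def} has at most linear growth, uniformly over measures with bounded second moment. Second, $\sup_{t\le T}M_2(\mu_t)<\infty$, which follows from the $\mathcal C([0,T];\mathcal P_2)$ regularity in Theorem~\ref{thm:MV-wellposed}. The nonlinearity is therefore harmless: the measure argument enters only through $M_2(\mu_t)$, which is already under control.

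The first step is the linear growth bound on $b$. By Assumption~\ref{ass:G}(ii),
\begin{equation}
\|\nabla_\xi\Phi_m(\xi;\mu)\|\le C_G\Big(1+\|\xi\|+(m-1)M_1(\mu)\Big)\le C_G\Big(1+\|\xi\|+(m-1)\sqrt{M_2(\mu)}\Big).
\end{equation}
By Assumption~\ref{ass:r}(ii), $\|\nabla r(z)\|\le\|\nabla r(0)\|+L_r\|z\|$, so $\|\nabla_\xi\Psi_r(\xi;\mu)\|\le C(1+\|\xi\|+\sqrt{M_2(\mu)})$. Finally, $\|\nabla V(\xi)\|\le\|\nabla V(0)\|+K_V\|\xi\|$ by Assumption~\ref{ass:V}(ii). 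Combining these gives
\begin{equation}
\|b(\xi,\mu)\|\le C\big(1+\|\xi\|+\sqrt{M_2(\mu)}\big),
\end{equation}
and hence $\|b(\xi,\mu_t)\|\le C_T(1+\|\xi\|)$ for all $t\in[0,T]$.

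The second step is Itô's formula, which gives
\begin{equation}
\mathrm d\|\xi_t\|^q=q\|\xi_t\|^{q-2}\langle\xi_t,b(\xi_t,\mu_t)\rangle\,\mathrm dt+\lambda q(d+q-2)\|\xi_t\|^{q-2}\,\mathrm dt+\mathrm dM_t,
\end{equation}
where $M_t$ is a local martingale. Both drift terms are bounded by $C_T(1+\|\xi_t\|^q)$, using $\|\xi\|^{q-1}\le1+\|\xi\|^q$ and $\|\xi\|^{q-2}\le1+\|\xi\|^q$. Strong dissipativity from $V$ is not needed on a finite horizon.

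The remaining obstacle is that $M_t$ is only a local martingale. To handle it, I would localise with stopping times $\tau_R:=\inf\{t:\|\xi_t\|\ge R\}$, so the stopped martingale has mean zero. This yields
\begin{equation}
\mathbb E\|\xi_{t\wedge\tau_R}\|^q\le M_q(\mu_0)+C_T\int_0^t\big(1+\mathbb E\|\xi_{s\wedge\tau_R}\|^q\big)\,\mathrm ds.
\end{equation}
Because the integrand is bounded by $R^q$, all quantities are finite and Gronwall applies, giving a bound $(M_q(\mu_0)+C_TT)e^{C_TT}$ uniform in $R$. Letting $R\to\infty$ via Fatou's lemma, using $\tau_R\to\infty$ almost surely by continuity of paths, gives \eqref{eq:MV-moment-q}.

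The only genuinely delicate point is justifying the uniform-in-time bound on $M_2(\mu_t)$ used in the first step. If that is not considered explicit from Theorem~\ref{thm:MV-wellposed}, I would first run the same argument with $q=2$. There the $M_2(\mu_t)$ term in the drift bound appears linearly, through $\mathbb E[\|\xi_t\|\sqrt{M_2(\mu_t)}]\le M_2(\mu_t)$, so the Gronwall inequality closes directly for $M_2$.
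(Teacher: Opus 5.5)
Your proposal is correct and takes the same route as the paper's sketched proof: Itô's formula for $\|\xi_t\|^q$, the linear-growth bound on $b$ from Assumptions~\ref{ass:V}--\ref{ass:r}, and Gr\"onwall. Your localisation, and your use of $\sup_{t\le T}M_2(\mu_t)<\infty$ from the $\mathcal C([0,T];\mathcal P_2(\Xi))$ regularity, simply make explicit what the paper calls ``standard''; one cosmetic fix is that on $\{\tau_R=0\}$ the stopped term $\|\xi_{s\wedge\tau_R}\|^q$ is bounded by $R^q+\|\xi_0\|^q$ rather than $R^q$, which is still integrable since $M_q(\mu_0)<\infty$.
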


\begin{proof}
This is standard for SDEs with globally Lipschitz drift and at most linear growth: apply It\^o's formula to $\|\xi_t\|^q$, use the linear-growth bounds implied by Assumptions~\ref{ass:V}--\ref{ass:G}, and conclude by Gr\"onwall. See, e.g., \citet[Sec.~I.2]{sznitman1991topics} or \citet{carmona2018probabilisticI,carmona2018probabilisticII}.
\end{proof}

\begin{theorem}[Propagation of chaos in $\mathsf{W}_2$ (finite horizon)]\label{thm:PoC}
Suppose that Assumptions~\ref{ass:V}, \ref{ass:G}, and \ref{ass:r} hold. Fix $T>0$ and assume $M_q(\mu_0)<\infty$ for some $q>2$. Then there exists $C_T<\infty$ such that
\begin{equation}
\sup_{t\in[0,T]}\mathbb{E}\big[\mathsf{W}_2^2(\mu_t^N,\mu_t)\big]
\le C_T \beta_{d,q}(N),
\label{eq:PoC-W2}
\end{equation}
where $\beta_{d,q}(N)$ is as in \citet[Theorem~1]{fournier2015rate}.
\end{theorem}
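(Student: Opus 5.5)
The plan is a standard synchronous-coupling argument in the spirit of \citet[Sec.~I.1]{sznitman1991topics}. I couple the IPS \eqref{eq:IPS-recall} with the nonlinear copies \eqref{eq:nonlinear-copies}, using the same Brownian motions $w^i$ and the same initial conditions $\xi_0^{i,N}=\bar\xi_0^i$. The triangle inequality then gives
\begin{equation}
\mathsf W_2^2(\mu_t^N,\mu_t)\le 2\,\mathsf W_2^2(\mu_t^N,\bar\mu_t^N)+2\,\mathsf W_2^2(\bar\mu_t^N,\mu_t).
\end{equation}
The first term is controlled by the pairwise coupling, since $\mathsf W_2^2(\mu_t^N,\bar\mu_t^N)\le \frac1N\sum_i\|\xi_t^{i,N}-\bar\xi_t^i\|^2$. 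The second term is the empirical-measure error for i.i.d.\ samples from $\mu_t$.

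For the coupling term, set $\Delta_t^i:=\xi_t^{i,N}-\bar\xi_t^i$. The noise cancels, so $\Delta_t^i$ is absolutely continuous with $\frac{d}{dt}\|\Delta^i_t\|^2=2\langle\Delta^i_t,b(\xi_t^{i,N},\mu_t^N)-b(\bar\xi_t^i,\mu_t)\rangle$. I split the drift difference as
\begin{equation}
[b(\xi^{i,N}_t,\mu^N_t)-b(\bar\xi^i_t,\mu^N_t)]+[b(\bar\xi^i_t,\mu^N_t)-b(\bar\xi^i_t,\bar\mu^N_t)]+[b(\bar\xi^i_t,\bar\mu^N_t)-b(\bar\xi^i_t,\mu_t)].
\end{equation}
Lemma~\ref{lem:drift-Lip} bounds these three pieces by $L_\xi\|\Delta^i_t\|$, $L_\mu\mathsf W_2(\mu^N_t,\bar\mu^N_t)$ and $L_\mu\mathsf W_2(\bar\mu^N_t,\mu_t)$, respectively. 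I then apply Young's inequality, average over $i$, take expectations, and use exchangeability. With $e(t):=\mathbb E\|\Delta^1_t\|^2$ this yields
\begin{equation}
e'(t)\le C\,e(t)+C\,\mathbb E\,\mathsf W_2^2(\bar\mu_t^N,\mu_t),
\end{equation}
where $C=2L_\xi+2L_\mu+1$, say. Since $e(0)=0$, Grönwall gives $\sup_{t\le T}e(t)\le Te^{CT}C\sup_{t\le T}\mathbb E\,\mathsf W_2^2(\bar\mu_t^N,\mu_t)$. No dissipativity (Assumption~\ref{ass:strong}) is needed on a finite horizon.

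For the empirical term, I apply \citet[Theorem~1]{fournier2015rate} with $p=2$ at each fixed $t$. This gives $\mathbb E\,\mathsf W_2^2(\bar\mu_t^N,\mu_t)\le C_{d,q}\,M_q(\mu_t)^{2/q}\beta_{d,q}(N)$. Lemma~\ref{lem:MV-moment-q} provides $\sup_{t\le T}M_q(\mu_t)<\infty$, which makes this bound uniform in $t\in[0,T]$. Combining the two estimates yields \eqref{eq:PoC-W2} with
\begin{equation}
C_T=2\,C_{d,q}\,\sup_{t\le T}M_q(\mu_t)^{2/q}\,(1+C\,T\,e^{CT}).
\end{equation}

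The main technical points are twofold. First, the IPS must be well posed, which follows because its drift on $(\mathbb R^d)^N$ is globally Lipschitz; this comes from Lemma~\ref{lem:drift-Lip} together with $\mathsf W_2(\mu^N_x,\mu^N_y)^2\le\frac1N\|x-y\|^2$. Second, and this is the step I expect to be the main obstacle, the empirical-rate bound must be uniform in time. That uniformity rests entirely on the $q$-moment propagation in Lemma~\ref{lem:MV-moment-q}, so I would check that its constants depend only on $T$, $M_q(\mu_0)$ and the Lipschitz and growth constants. I would also check that measurability in $t$ allows taking the supremum after the expectation.
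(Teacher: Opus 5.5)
Your proposal is correct and is essentially the paper's own argument: synchronous coupling with the nonlinear copies, the exchangeability bound $\mathbb E\,\mathsf W_2^2(\mu_t^N,\bar\mu_t^N)\le \mathbb E\|\Delta_t^1\|^2$, Gr\"onwall from $\Delta_0=0$, and \citet[Theorem~1]{fournier2015rate} made uniform on $[0,T]$ via Lemma~\ref{lem:MV-moment-q}. Your three-way split of the drift is just the paper's bound by $L_\mu\mathsf W_2(\mu_t^N,\mu_t)$ followed by the triangle inequality, done one step earlier; the explicit value $C=2L_\xi+2L_\mu+1$ is not quite what Young's inequality gives (you get, e.g., $2L_\xi+3L_\mu$ in front of $e(t)$), but this is immaterial, since any constant depending only on $L_\xi$ and $L_\mu$ suffices.
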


\begin{proof}
Our proof follows a standard synchronous-coupling argument; see, e.g., \citet{sznitman1991topics}. Couple the IPS \eqref{eq:IPS-recall} with the nonlinear copies \eqref{eq:nonlinear-copies} using the same Brownian motions and the same initial data. In addition, set $\Delta_t^i:=\xi_t^{i,N}-\bar\xi_t^i$. By It\^o's formula and Lemma~\ref{lem:drift-Lip},
\begin{equation}
\frac{d}{dt}\mathbb E\|\Delta_t^i\|^2
\le C \mathbb E\|\Delta_t^i\|^2
+C \mathbb E\big[\mathsf W_2^2(\mu_t^N,\mu_t)\big].
\end{equation}
Let $r_t:=\mathbb E\|\Delta_t^1\|^2$. Exchangeability and the coupling
$\frac1N\sum_{i=1}^N\delta_{(\xi_t^{i,N},\bar\xi_t^i)}$ give $\mathbb E\big[\mathsf W_2^2(\mu_t^N,\bar\mu_t^N)\big]\le r_t$. Hence, by the triangle inequality, we have that
\begin{equation}
\mathbb E\big[\mathsf W_2^2(\mu_t^N,\mu_t)\big]
\le 2r_t+2\mathbb E\big[\mathsf W_2^2(\bar\mu_t^N,\mu_t)\big].
\end{equation}
Conditionally on $\mu_t$, the particles $(\bar\xi_t^i)$ are i.i.d. with law $\mu_t$. Therefore, using
\citet[Theorem~1]{fournier2015rate} and Lemma~\ref{lem:MV-moment-q}, we have that
\begin{equation}
\sup_{t\in[0,T]}\mathbb E\big[\mathsf W_2^2(\bar\mu_t^N,\mu_t)\big]
\le C_T \beta_{d,q}(N).
\end{equation}
It follows, in particular, that $r_t$ satisfies
\begin{equation}
r_t'\le Cr_t+C_T\beta_{d,q}(N),\qquad r_0=0,
\end{equation}
Thus, via Gr\"onwall's inequality, we have $\sup_{t\le T}r_t\le C_T\beta_{d,q}(N)$. Substituting this back into the previous triangle bound proves \eqref{eq:PoC-W2}.
\end{proof}

\begin{lemma}[Uniform propagation of moments for the McKean--Vlasov solution]
\label{lem:MV-moment-q-uniform}
Suppose that Assumptions~\ref{ass:V}, \ref{ass:G}, \ref{ass:r}, and \ref{ass:strong} hold. Suppose moreover that
\begin{equation}
\delta:=\alpha-L_\mu>0,
\qquad\text{where}\qquad
\alpha=\lambda\kappa-\eta L_r - mL_G,\quad L_\mu \text{ is as in Lemma~\ref{lem:drift-Lip}},
\label{eq:delta-moment}
\end{equation}
Let $(\mu_t)_{t\ge0}$ be the law of the unique strong solution to \eqref{eq:MV-SDE}. If $M_q(\mu_0)<\infty$ for some $q>2$, then
\begin{equation}
\sup_{t\ge0} M_q(\mu_t) < \infty.
\label{eq:MV-moment-q-uniform}
\end{equation}
\end{lemma}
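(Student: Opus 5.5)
We apply It\^o's formula to $\|\xi_t\|^q$ and use the dissipative structure of the drift to obtain a differential inequality whose constants do not depend on $T$. The key point is a one-sided linear bound for $b$ that involves the \emph{second} moment of $\mu_t$. The second moment is controlled uniformly in time by the contractivity result.

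\textbf{Step 1 (uniform second moments).} Let $\mu_\star:=\mu_m^{\lambda,\star}$ be the unique minimiser. It lies in $\mathcal P_2(\Xi)$ by Theorem~\ref{thm:exist} and Theorem~\ref{thm:uniq}, and it is stationary for \eqref{eq:MV-PDE} by Corollary~\ref{cor:stationary-minimiser}. Lemma~\ref{lem:MV-contractivity} then gives
\begin{equation}
\mathsf W_2(\mu_t,\mu_\star)\le e^{-\delta t}\mathsf W_2(\mu_0,\mu_\star).
\end{equation}
Since $M_2(\mu_t)^{1/2}=\mathsf W_2(\mu_t,\delta_0)\le \mathsf W_2(\mu_t,\mu_\star)+M_2(\mu_\star)^{1/2}$, we conclude that $\sup_{t\ge0}M_2(\mu_t)=:\bar M_2<\infty$. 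Note that $M_q(\mu_0)<\infty$ with $q>2$ implies $M_2(\mu_0)<\infty$.

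\textbf{Step 2 (one-sided growth of the drift).} Lemma~\ref{lem:dissip} with $\xi'=0$ gives
\begin{equation}
\langle \xi,b(\xi,\mu)\rangle\le -\alpha\|\xi\|^2+\|\xi\|\,\|b(0,\mu)\|.
\end{equation}
We bound $\|b(0,\mu)\|$ using Assumption~\ref{ass:G}(ii) integrated against $\mu^{\otimes(m-1)}$, linear growth of $\nabla r$, and $\nabla V(0)$. This yields
\begin{equation}
\|b(0,\mu)\|\le C_0(1+M_2(\mu)^{1/2}),
\end{equation}
with $C_0$ depending on $m,C_G,\eta,L_r,\nabla r(0),\lambda,\nabla V(0)$. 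Combined with Step 1, we get $\langle \xi,b(\xi,\mu_t)\rangle\le -\alpha\|\xi\|^2+C_1\|\xi\|$ for all $t\ge0$, where $C_1$ does not depend on $t$.

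\textbf{Step 3 (It\^o and Gr\"onwall).} For $f(\xi)=\|\xi\|^q$ we have $\nabla f=q\|\xi\|^{q-2}\xi$ and $\Delta f=q(q+d-2)\|\xi\|^{q-2}$. It\^o's formula gives
\begin{equation}
\frac{\mathrm d}{\mathrm dt}\mathbb E\|\xi_t\|^q\le \mathbb E\Big[-q\alpha\|\xi_t\|^q+qC_1\|\xi_t\|^{q-1}+\lambda q(q+d-2)\|\xi_t\|^{q-2}\Big].
\end{equation}
The martingale term has zero mean by Lemma~\ref{lem:MV-moment-q}; to be rigorous, we localise with stopping times and then pass to the limit via Fatou. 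Young's inequality absorbs the lower-order powers: $C\|\xi\|^{q-1}+C'\|\xi\|^{q-2}\le \tfrac{\alpha}{2}\|\xi\|^q+C''$. This leads to
\begin{equation}
\frac{\mathrm d}{\mathrm dt}M_q(\mu_t)\le -\tfrac{q\alpha}{2}M_q(\mu_t)+C'',
\end{equation}
and Gr\"onwall's inequality gives $\sup_{t\ge0}M_q(\mu_t)\le M_q(\mu_0)+2C''/(q\alpha)<\infty$.

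\textbf{Main obstacle.} The delicate point is the uniform-in-time control of the measure-dependent part of the drift, $b(0,\mu_t)$. Without Step 1 this term could grow with $M_2(\mu_t)$, and the resulting bound would be Gr\"onwall-type in $T$. The hypothesis $\delta>0$ is used exactly here, through Lemma~\ref{lem:MV-contractivity}. If one instead wanted to avoid Step 1, the alternative would be to run the $q=2$ computation directly. In that computation the measure term $C_0\mathbb E\|\xi_t\|\,M_2^{1/2}\le C_0M_2$ must be absorbed, which requires a condition of the type $\alpha$ larger than the $\mu$-Lipschitz constant. That condition is again guaranteed by $\delta>0$.
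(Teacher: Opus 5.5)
Your proof is correct, but it is organised differently from the paper's.

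\textbf{The paper's route.} The paper never bounds $M_2(\mu_t)$ separately. In a single It\^o computation for $\|\xi_t\|^q$, it writes $\|b(0,\mu_t)\|\le B_0+L_\mu M_2(\mu_t)^{1/2}$ with $B_0:=\|b(0,\delta_0)\|$, using Lemma~\ref{lem:drift-Lip} with $\nu=\delta_0$. It then applies H\"older in the form $M_2(\mu_t)^{1/2}\,\mathbb E\|\xi_t\|^{q-1}\le(\mathbb E\|\xi_t\|^q)^{1/q}(\mathbb E\|\xi_t\|^q)^{(q-1)/q}=\mathbb E\|\xi_t\|^q$. This absorbs the measure-dependent part of the drift directly into the dissipation, at net rate $q(\alpha-L_\mu)=q\delta$. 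Young's inequality then handles the remaining sublinear terms exactly as in your Step~3.

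This is the alternative you sketch in your last paragraph, but carried out at level $q$ rather than $q=2$. Note that it yields exactly the condition $\delta>0$ only because $M_2^{1/2}$ is multiplied by the Lipschitz constant $L_\mu$. With your growth constant $C_0$ in that role, absorption would instead require $\alpha>C_0$.

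\textbf{What each route buys.} The paper's argument is self-contained. It needs neither the existence of a minimiser, nor its stationarity under the McKean--Vlasov flow (Corollary~\ref{cor:stationary-minimiser}), nor Lemma~\ref{lem:MV-contractivity}. Its constant depends only on the model constants and $M_q(\mu_0)$.

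Your argument separates the roles of the hypotheses. You use $\delta>0$ only to obtain $\sup_t M_2(\mu_t)<\infty$, via contraction towards $\mu_m^{\lambda,\star}$. After that, the $q$-moment bound is a standard Lyapunov estimate that needs only $\alpha>0$. The price is reliance on those earlier results, and a constant that involves $\mathsf W_2(\mu_0,\mu_m^{\lambda,\star})$ and $M_2(\mu_m^{\lambda,\star})$.

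\textbf{A technical point on Step~3.} The zero mean of $\int_0^t\|\xi_s\|^{q-2}\langle\xi_s,\mathrm dw_s\rangle$ does not follow from Lemma~\ref{lem:MV-moment-q} alone, so it is your localisation that carries Step~3. The paper's proof passes over this point entirely. Localising $\|\xi_t\|^q$ itself and applying Fatou only gives an integral inequality with a negative term, which does not directly yield the decay. It is cleaner to apply It\^o, with stopping times, to $e^{q\alpha t/2}\|\xi_t\|^q$, whose drift is bounded by $C''e^{q\alpha t/2}$. Fatou then gives
\[
e^{q\alpha t/2}M_q(\mu_t)\le M_q(\mu_0)+\tfrac{2C''}{q\alpha}\bigl(e^{q\alpha t/2}-1\bigr),
\]
which is your stated bound.
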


\begin{proof}
Let $(\xi_t)_{t\ge0}$ denote a solution to the McKean-Vlasov SDE \eqref{eq:MV-SDE}, with $\mathrm{Law}(\xi_t)=\mu_t$. Applying It\^o's formula to $f(x)=\|x\|^q$, we have
\begin{equation}
\frac{\mathrm d}{\mathrm dt}\mathbb{E}\big[\|\xi_t\|^q\big]
=
q \mathbb E \left[\|\xi_t\|^{q-2} \langle \xi_t, b(\xi_t,\mu_t)\rangle\right]
+\lambda q(d+q-2) \mathbb E\big[\|\xi_t\|^{q-2}\big].
\label{eq:ito-moment}
\end{equation}
We begin by bounding the drift term. By Lemma~\ref{lem:dissip} (i.e., the dissipativity of the drift), with $\xi'=0$, we have that
\begin{equation}
\langle \xi_t, b(\xi_t,\mu_t)\rangle
=
\langle \xi_t, b(\xi_t,\mu_t)-b(0,\mu_t)\rangle+\langle \xi_t, b(0,\mu_t)\rangle
\le -\alpha\|\xi_t\|^2+\|\xi_t\| \|b(0,\mu_t)\|.
\label{eq:drift-split}
\end{equation}
Meanwhile, by Lemma~\ref{lem:drift-Lip}, we have $\|b(0,\mu)-b(0,\delta_0)\| \le L_\mu \mathsf W_2(\mu,\delta_0) = L_\mu M_2(\mu)^{\frac{1}{2}}$. Thus, with the constant $B_0:=\|b(0,\delta_0)\|<\infty$, we have $\|b(0,\mu_t)\|\le B_0+L_\mu M_2(\mu_t)^{1/2}$. Substituting this into \eqref{eq:drift-split}, multiplying  by $\|\xi_t\|^{q-2}$, and taking expectations, yields
\begin{align}
\mathbb E \left[\|\xi_t\|^{q-2} \langle \xi_t, b(\xi_t,\mu_t)\rangle\right]
&\le
-\alpha \mathbb E\|\xi_t\|^q
+\big(B_0+L_\mu M_2(\mu_t)^{1/2}\big) \mathbb E\|\xi_t\|^{q-1}.
\label{eq:drift-moment-bound}
\end{align}
By H\"older's inequality, $\mathbb E\|\xi_t\|^{q-1}\le [\mathbb{E}\big[\|\xi_t\|^q\big]]^{(q-1)/q}$, $\mathbb E\|\xi_t\|^{q-2}\le [\mathbb{E}\big[\|\xi_t\|^q\big]]^{(q-2)/q}$, and $M_2(\mu_t)^{1/2}\le \mathbb{E}\big[\|\xi_t\|^q\big]^{1/q}$. Substituting these bounds into \eqref{eq:drift-moment-bound}, and then \eqref{eq:drift-moment-bound} into \eqref{eq:ito-moment}, yields
\begin{equation}
\frac{\mathrm{d}}{\mathrm{d}t}\mathbb{E}\big[\|\xi_t\|^q\big]
\le
-q(\alpha-L_\mu) \mathbb{E}\big[\|\xi_t\|^q\big]
+ qB_0 \mathbb{E}\big[\|\xi_t\|^q\big]^{\frac{q-1}{q}}
+ \lambda q(d+q-2) \mathbb{E}\big[\|\xi_t\|^q\big]^{\frac{q-2}{q}}.
\label{eq:yt-ode}
\end{equation}
We now absorb the sublinear terms, recalling that $\delta=\alpha-L_\mu>0$. For any $\varepsilon>0$ and any $r\in(0,1)$, there exists $C_{\varepsilon,r}<\infty$ such that $z^r\le \varepsilon z + C_{\varepsilon,r}$ for all $z\ge0$. Applying this with $r_1=(q-1)/q$ and $r_2=(q-2)/q$, and choosing $\varepsilon>0$ small enough that $\smash{q\varepsilon(B_0+\lambda(d+q-2))\le {q\delta}/{2}}$,  we obtain from \eqref{eq:yt-ode} an inequality of the form
\begin{equation}
\frac{\mathrm{d}}{\mathrm{d}t}\mathbb{E}\big[\|\xi_t\|^q\big] \le -\frac{q\delta}{2} \mathbb{E}\big[\|\xi_t\|^q\big] + C_q,
\label{eq:yt-linear}
\end{equation}
for a constant $C_q<\infty$ depending only on $q,d,\lambda,\delta,B_0$, i.e., only on the standing model constants. Finally, solving \eqref{eq:yt-linear} gives
\begin{equation}
\mathbb{E}\big[\|\xi_t\|^q\big] \le e^{-\frac{q\delta}{2}t}\mathbb{E}\big[\|\xi_0\|^q\big] + \frac{2C_q}{q\delta},
\qquad \forall t\ge0.
\end{equation}
This immediately implies that $\sup_{t\geq 0}M_{q}(\mu_t):=\sup_{t\geq 0}\mathbb{E}[\|\xi_t\|^q]<\infty$, and thus completes the proof.
\end{proof}

\begin{theorem}[Propagation of chaos in $\mathsf{W}_2$ (uniform in time)]
\label{thm:PoC-uniform}
Suppose that Assumptions~\ref{ass:V}, \ref{ass:G}, \ref{ass:r}, and \ref{ass:strong} hold. Assume $M_q(\mu_0)<\infty$ for some $q>2$. Suppose moreover that
\begin{equation}
\delta:=\alpha-L_\mu>0,
\qquad\text{where}\qquad
\alpha=\lambda\kappa-\eta L_r - mL_G,\quad L_\mu \text{ is as in Lemma~\ref{lem:drift-Lip}},
\label{eq:delta}
\end{equation}
Then there exists $C<\infty$ (independent of $t$ and $N$) such that
\begin{equation}
\sup_{t\ge 0}\mathbb E\big[\mathsf W_2^2(\mu_t^N,\mu_t)\big]
\le C \beta_{d,q}(N),
\label{eq:PoC-uniform-W2}
\end{equation}
where $\beta_{d,q}(N)$ is as in \citet[Theorem~1]{fournier2015rate}.
\end{theorem}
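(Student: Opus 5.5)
The plan is to rerun the synchronous-coupling argument of Theorem~\ref{thm:PoC}, replacing the finite-horizon Gr\"onwall step with a dissipative differential inequality whose constants do not depend on $t$. We couple the IPS \eqref{eq:IPS-recall} with the nonlinear copies \eqref{eq:nonlinear-copies} using the same Brownian motions and the same initial data, and set $\Delta_t^i:=\xi_t^{i,N}-\bar\xi_t^i$. Since the Brownian terms cancel, $\Delta^i_t$ is absolutely continuous, and
\begin{equation}
\frac{\mathrm d}{\mathrm dt}\|\Delta_t^i\|^2 = 2\langle \Delta_t^i, b(\xi_t^{i,N},\mu_t^N)-b(\bar\xi_t^i,\mu_t^N)\rangle + 2\langle \Delta_t^i, b(\bar\xi_t^i,\mu_t^N)-b(\bar\xi_t^i,\mu_t)\rangle .
\end{equation}
For the first term we use Lemma~\ref{lem:dissip}, which bounds it by $-2\alpha\|\Delta_t^i\|^2$. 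For the second we use \eqref{eq:b-Lip-mu}, which bounds it by $2L_\mu\|\Delta_t^i\|\,\mathsf W_2(\mu_t^N,\mu_t)$.

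Next we split the measure discrepancy as $\mathsf W_2(\mu_t^N,\mu_t)\le \mathsf W_2(\mu_t^N,\bar\mu_t^N)+\mathsf W_2(\bar\mu_t^N,\mu_t)$. The empirical coupling gives $\mathsf W_2^2(\mu_t^N,\bar\mu_t^N)\le \frac1N\sum_j\|\Delta_t^j\|^2$. Averaging over $i$, writing $R_t:=\mathbb E\frac1N\sum_i\|\Delta_t^i\|^2$ (which equals $r_t$ by exchangeability), and applying Cauchy--Schwarz yields
\begin{equation}
R_t' \le -2(\alpha-L_\mu)R_t + 2L_\mu \sqrt{R_t}\,\sqrt{e_t},\qquad e_t:=\mathbb E\,\mathsf W_2^2(\bar\mu_t^N,\mu_t).
\end{equation}
Young's inequality $2L_\mu\sqrt{R_te_t}\le \delta R_t + (L_\mu^2/\delta)e_t$ then gives $R_t'\le -\delta R_t + (L_\mu^2/\delta)e_t$ with $R_0=0$.

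The key input, and the main obstacle, is a time-uniform bound $\sup_{t\ge0}e_t\le C\beta_{d,q}(N)$. For this we apply \citet[Theorem~1]{fournier2015rate} to the i.i.d.\ sample $(\bar\xi^i_t)$ with law $\mu_t$ (deterministic, so no conditioning is needed). The constant in that theorem depends only on $d,q$ and $M_q(\mu_t)^{2/q}$. Lemma~\ref{lem:MV-moment-q-uniform}, which requires exactly $\delta>0$ and $M_q(\mu_0)<\infty$, supplies $\sup_t M_q(\mu_t)<\infty$, and therefore $\sup_t e_t\le C\beta_{d,q}(N)$. Gr\"onwall's inequality with the negative rate then gives $R_t\le \int_0^t e^{-\delta(t-s)}(L_\mu^2/\delta)e_s\,\mathrm ds\le (L_\mu^2/\delta^2)C\beta_{d,q}(N)$ uniformly in $t$.

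To conclude, $\mathbb E\mathsf W_2^2(\mu_t^N,\mu_t)\le 2R_t+2e_t\le C'\beta_{d,q}(N)$ for all $t\ge0$, where $C'$ depends only on the model constants, $\delta$, and $M_q(\mu_0)$. If the differential form of the coupling identity is problematic for integrability reasons, we would instead work with the integrated form and use the comparison lemma. A localisation is unnecessary, since the second moments of the IPS are finite on compacts by the Lipschitz drift.
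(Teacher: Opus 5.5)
Your proposal is correct and follows essentially the same route as the paper. Both arguments use synchronous coupling with the nonlinear copies, the dissipativity bound of Lemma~\ref{lem:dissip}, the split of $\mathsf W_2(\mu_t^N,\mu_t)$ through $\bar\mu_t^N$, Cauchy--Schwarz and Young to reach $r_t'\le-\delta r_t+(L_\mu^2/\delta)\,\mathbb E\,\mathsf W_2^2(\bar\mu_t^N,\mu_t)$, and the uniform moment bound of Lemma~\ref{lem:MV-moment-q-uniform} fed into Fournier--Guillin. The differences are minor: you average over particles rather than invoking exchangeability, and you correctly note that $\mu_t$ is deterministic, so no conditioning is needed.
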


\begin{proof}
Once again, we follow a standard synchronous coupling argument, now suitable for the dissipative regime; see \citet{malrieu2001logarithmic}. Consider the same synchronous coupling as in Theorem~\ref{thm:PoC}, and set $r_t:=\mathbb E\|\Delta_t^1\|^2$. Due to Lemma~\ref{lem:dissip}, we have that
\begin{equation}
r_t'\le -2\alpha r_t+2L_\mu \mathbb E\big[\|\Delta_t^1\| \mathsf W_2(\mu_t^N,\mu_t)\big].
\end{equation}
By the triangle inequality, we have that $\mathsf W_2(\mu_t^N,\mu_t)\le \mathsf W_2(\mu_t^N,\bar\mu_t^N)+\mathsf W_2(\bar\mu_t^N,\mu_t)$. For the first term, as in the proof of Theorem~\ref{thm:PoC},
\begin{equation}
\mathbb E\big[\|\Delta_t^1\| \mathsf W_2(\mu_t^N,\bar\mu_t^N)\big]\le r_t.
\end{equation}
Thus, due to Cauchy--Schwarz, we have $\mathbb E[\|\Delta_t^1\| \mathsf W_2(\bar\mu_t^N,\mu_t)]
\le r_t^{1/2}(\mathbb E\mathsf W_2^2(\bar\mu_t^N,\mu_t))^{1/2}$. Substituting this into the previous differential inequality, and using Young's inequality, we have that
\begin{equation}
r_t'\le -\delta r_t+\frac{L_\mu^2}{\delta} \mathbb E\mathsf W_2^2(\bar\mu_t^N,\mu_t),
\qquad \delta=\alpha-L_\mu>0.
\end{equation}
By \citet[Theorem~1]{fournier2015rate} and Lemma~\ref{lem:MV-moment-q-uniform}, we have  $\sup_{t\ge0}\mathbb E\mathsf W_2^2(\bar\mu_t^N,\mu_t)\le C \beta_{d,q}(N)$. Using this and the fact that $r_0=0$, Gr\"onwall's inequality then yields $\sup_{t\ge0}r_t\le C\beta_{d,q}(N)$. Finally, applying the triangle inequality, we have \eqref{eq:PoC-uniform-W2}.
\end{proof}

\subsubsection{Time Discretisation}
\label{sec:Euler}
Fix $h>0$ and $t_n:=nh$. Recall, from Section~\ref{sec:method:ips}, the Euler--Maruyama discretisation of the continuous-time dynamics in \eqref{eq:IPS-recall}:
\begin{equation}
\xi_{n+1}^{i,N,h}
=
\xi_n^{i,N,h}
+h b(\xi_n^{i,N,h},\mu_n^{N,h})
+\sqrt{2\lambda h} Z_{n+1}^i,
\qquad
\mu_n^{N,h}:=\frac1N\sum_{j=1}^N\delta_{\xi_n^{j,N,h}}, \qquad (Z_n^i)_{n\ge1}^{i\in[N]} \stackrel{\text{i.i.d.}}{\sim} \mathcal N(0,I_d).
\label{eq:Euler-IPS}
\end{equation}
For notational consistency with the continuous-time laws, we will occasionally also write $\mu_{t_n}^{N,h}:=\mu_n^{N,h}$.

\begin{theorem}[Strong Euler error for the IPS at grid times (finite horizon)]\label{thm:Euler}
Suppose that Assumptions~\ref{ass:V}, \ref{ass:G}, and \ref{ass:r} hold. Suppose also that $M_2(\mu_0)<\infty$. Fix $T>0$ and let $n_T=\lfloor T/h\rfloor$. Let \eqref{eq:IPS-recall} and \eqref{eq:Euler-IPS} be coupled by taking $\smash{Z_{n+1}^i=(w_{t_{n+1}}^i-w_{t_n}^i)/\sqrt{h}}$ and $\smash{\xi_0^{i,N,h}=\xi_0^{i,N}}$. Then there exists $C_T<\infty$ (independent of $N$ and $h$) such that
\begin{equation}
\max_{0\le n\le n_T}\ \mathbb{E}\Big[\frac1N\sum_{i=1}^N\|\xi_{t_n}^{i,N}-\xi_n^{i,N,h}\|^2\Big]
\le C_T h.
\label{eq:Euler-avg}
\end{equation}
Consequently, for all $0\le n\le n_T$,
\begin{equation}
\mathbb{E}\big[\mathsf{W}_2^2(\mu_{t_n}^N,\mu_n^{N,h})\big]
\le \mathbb{E}\Big[\frac1N\sum_{i=1}^N\|\xi_{t_n}^{i,N}-\xi_n^{i,N,h}\|^2\Big]
\le C_T h.
\label{eq:Euler-W2}
\end{equation}
\end{theorem}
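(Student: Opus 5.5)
The plan is a standard strong-error argument for Euler--Maruyama applied to the $N$-particle system, viewed as a single SDE on $\Xi^N$ with drift $B^N(x)_i := b(x_i,\frac1N\sum_j\delta_{x_j})$. Using the coupling $Z_{n+1}^i=(w^i_{t_{n+1}}-w^i_{t_n})/\sqrt h$, we write $e_n^i:=\xi_{t_n}^{i,N}-\xi_n^{i,N,h}$ and $E_n:=\mathbb E\big[\frac1N\sum_i\|e_n^i\|^2\big]$. The noise cancels exactly, so
\begin{equation}
e_{n+1}^i=e_n^i+\int_{t_n}^{t_{n+1}}\big(b(\xi_s^{i,N},\mu_s^N)-b(\xi_n^{i,N,h},\mu_n^{N,h})\big)\,\mathrm ds .
\end{equation}
We split the integrand into two pieces. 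The first is $b(\xi_s^{i,N},\mu_s^N)-b(\xi_{t_n}^{i,N},\mu_{t_n}^N)$, the local regularity error. The second is $b(\xi_{t_n}^{i,N},\mu_{t_n}^N)-b(\xi_n^{i,N,h},\mu_n^{N,h})$, the propagated error.

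\textbf{Propagated error.} Lemma~\ref{lem:drift-Lip} bounds its norm by $L_\xi\|e_n^i\|+L_\mu\mathsf W_2(\mu_{t_n}^N,\mu_n^{N,h})$. The empirical coupling $\frac1N\sum_i\delta_{(\xi_{t_n}^{i,N},\xi_n^{i,N,h})}$ gives
\begin{equation}
\mathsf W_2^2(\mu_{t_n}^N,\mu_n^{N,h})\le\frac1N\sum_i\|e_n^i\|^2 .
\end{equation}
After averaging over $i$, this term therefore contributes at most $C h^2 E_n$ to the squared increment and $C h E_n$ to the cross term.

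\textbf{Local error.} Its average squared size is bounded by $C\big(\frac1N\sum_i\|\xi_s^{i,N}-\xi_{t_n}^{i,N}\|^2\big)$, again using the empirical coupling for the measure argument. We then need the standard one-step increment estimate
\begin{equation}
\mathbb E\|\xi_s^{i,N}-\xi_{t_n}^{i,N}\|^2\le C_T h\qquad\text{for }s\in[t_n,t_{n+1}].
\end{equation}
This follows from $\|b(x,\mu)\|\le C(1+\|x\|+M_2(\mu)^{1/2})$, which comes from Lemma~\ref{lem:drift-Lip} with $b(0,\delta_0)$ finite, together with a uniform bound $\sup_{t\le T}\mathbb E\|\xi_t^{1,N}\|^2\le C_T$. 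The moment bound is proved by It\^o's formula and Gr\"onwall using exchangeability, with a constant independent of $N$ since $\mathbb E M_2(\mu_t^N)=\mathbb E\|\xi_t^1\|^2$. Together these give a local contribution to $\mathbb E\|\int(\cdots)\|^2$ of order $h\cdot h\cdot h=h^3$ via Cauchy--Schwarz.

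\textbf{Main obstacle: the cross term.} The cross term $2\mathbb E\langle e_n^i,\int(\text{local})\rangle$ is where care is needed, and I would handle it with Young's inequality. The inequality
\begin{equation}
2\langle a,c\rangle\le h\|a\|^2+h^{-1}\|c\|^2
\end{equation}
gives a bound $hE_n+h^{-1}\cdot O(h^3)=hE_n+O(h^2)$. This crude route suffices for order $h$ in mean square, i.e.\ strong order $1/2$. One does not need the martingale cancellation that would be required for order $h^2$.

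\textbf{Conclusion.} Collecting the pieces gives the recursion $E_{n+1}\le(1+Ch)E_n+C_Th^2$ with $E_0=0$. A discrete Gr\"onwall argument then yields $\max_{n\le n_T}E_n\le C_T h$, with $C_T$ independent of $N$ because all bounds are per-particle averages. The second display follows immediately from the same empirical coupling bound on $\mathsf W_2^2(\mu_{t_n}^N,\mu_n^{N,h})$.
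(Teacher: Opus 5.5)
Your proposal is correct and takes essentially the same route as the paper. The paper stacks the particles into a $dN$-dimensional SDE and shows, via Lemma~\ref{lem:drift-Lip} and the empirical coupling inequality, that $B_N$ is globally Lipschitz with an $N$-independent constant. It then cites the standard strong Euler--Maruyama estimate to get $C_T N h$ and divides by $N$. You instead carry out that standard estimate by hand in the particle-averaged norm, which makes explicit why $C_T$ is independent of $N$. The paper's black-box citation leaves this implicit, even though the textbook constants involve $\|B_N(0)\|^2$, the noise trace and $\mathbb E\|\xi_0^N\|^2$, each of which grows linearly in $N$.
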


\begin{proof}
Let $\xi_t^N:=(\xi_t^{1,N},\dots,\xi_t^{N,N})\in\mathbb R^{dN}$ and $\xi_n^{N,h}:=(\xi_n^{1,N,h},\dots,\xi_n^{N,N,h})\in\mathbb R^{dN}$. Then \eqref{eq:IPS-recall} can be written as a $dN$-dimensional SDE
\begin{equation}
\mathrm d\xi_t^N = B_N(\xi_t^N) \mathrm dt + \sqrt{2\lambda} \mathrm dw_t^N,
\label{eq:stacked-SDE}
\end{equation}
where $w_t^N:=(w_t^1,\dots,w_t^N)$ and $B_N(\xi_1,\dots,\xi_N) := \big(b(\xi_i,\tfrac1N\sum_{j=1}^N\delta_{\xi_j})\big)_{i=1}^N$. Similarly, one can view \eqref{eq:Euler-IPS} as the Euler--Maruyama scheme for \eqref{eq:stacked-SDE}. By Lemma~\ref{lem:drift-Lip} and the coupling inequality
\begin{equation}
\mathsf{W}_2^2 \Big(\tfrac1N\sum_{j=1}^N\delta_{x_j},\tfrac1N\sum_{j=1}^N\delta_{y_j}\Big)
\le \tfrac1N\sum_{j=1}^N\|x_j-y_j\|^2,
\label{eq:empirical-W2-coupling}
\end{equation}
the map $B_N$ is globally Lipschitz on $\mathbb R^{dN}$ with a Lipschitz constant depending only on the constants $L_\xi,L_\mu$ from Lemma~\ref{lem:drift-Lip}. In particular, these constants are independent of $N$. Indeed, writing $\mu_x:=\tfrac1N\sum_{j=1}^N\delta_{x_j}$ and $\mu_y:=\tfrac1N\sum_{j=1}^N\delta_{y_j}$, we have
\begin{align}
\|B_N(x)-B_N(y)\|^2
&=\sum_{i=1}^N \|b(x_i,\mu_x)-b(y_i,\mu_y)\|^2 \\
&\le 2L_\xi^2\sum_{i=1}^N\|x_i-y_i\|^2 + 2N L_\mu^2 \mathsf W_2^2(\mu_x,\mu_y)
\le 2(L_\xi^2+L_\mu^2) \|x-y\|^2,
\end{align}
where the last inequality uses \eqref{eq:empirical-W2-coupling}. Therefore the standard strong mean-square Euler--Maruyama estimate for globally Lipschitz SDEs applies to \eqref{eq:stacked-SDE}; see, e.g., \citet{mao2007sde,kloedenplaten1992,higham2002strong}. This yields
\begin{equation}
\max_{0\le n\le n_T}\ \mathbb{E}\big[\|\xi_{t_n}^N-\xi_n^{N,h}\|^2\big]\le C_T N h,
\end{equation}
for some $C_T<\infty$, independent of both $N$ and $h$. Dividing both sides of this bound by $N$ gives \eqref{eq:Euler-avg}. Finally, \eqref{eq:Euler-W2} follows by using the coupling $\frac1N\sum_{i=1}^N\delta_{(\xi_{t_n}^{i,N},\xi_n^{i,N,h})}$ between $\mu_{t_n}^N$ and $\mu_n^{N,h}$ in the definition of $\mathsf W_2$.
\end{proof}

\begin{remark}[On strong order for additive noise]
Since the diffusion coefficient in \eqref{eq:IPS-recall} is constant, the Euler--Maruyama scheme coincides with the Milstein scheme. In the standard setting with sufficiently smooth drift, this yields strong order $1$ at fixed times, i.e.\ mean-square error $O(h^2)$, whereas \eqref{eq:Euler-avg} gives a conservative $O(h)$ bound under global Lipschitzness \citep[e.g.,][]{kloedenplaten1992}.
\end{remark}

\begin{remark}[Uniform-in-time Euler discretisation in the contractive regime]
\label{rem:uniform-euler}
The finite-horizon estimate \eqref{eq:Euler-avg} is a direct consequence of global Lipschitzness of the drift and yields a constant $C_T$ that typically grows (at least exponentially) with $T$. In the contractive regime where $\delta:=\alpha-L_\mu>0$, under suitable conditions on the step size, it is often possible to upgrade this bound to a uniform-in-time discretisation estimate of the form
\begin{equation}
\sup_{n\ge 0}\ \mathbb{E}\Big[\frac1N\sum_{i=1}^N\|\xi_{t_n}^{i,N}-\xi_n^{i,N,h}\|^2\Big]\le C h.
\label{eq:Euler-uniform}
\end{equation}
Indeed, let $\xi,\xi'\in\mathbb R^{dN}$. Define $\Delta_i=\xi_i-\xi'_i$, and $\mu_{\xi}=\frac1N\sum_{j=1}^N\delta_{\xi_j}$, $\mu_{\xi'}=\frac1N\sum_{j=1}^N\delta_{\xi'_j}$. Then, using Lemma~\ref{lem:drift-Lip}, Lemma~\ref{lem:dissip}, together with the bound $\mathsf W_2^2(\mu_{\xi},\mu_{\xi'})\le \frac1N\sum_{j=1}^N\|\Delta_j\|^2$, one can show that
\begin{equation}
\label{eq:BN-contract}
\langle \xi-\xi', B_N(\xi)-B_N(\xi')\rangle
=
\sum_{i=1}^N \langle \Delta_i,  b(\xi_i,\mu_{\xi})-b(\xi'_i,\mu_{\xi'})\rangle
\le -\delta \|\xi-\xi'\|^2,
\end{equation}
In addition, as shown in the proof of Theorem~\ref{thm:Euler}, $B_N$ is globally Lipschitz with constant $\smash{L_B:=\sqrt{2(L_\xi^2+L_\mu^2)}}$ independent of $N$. Together, these two properties imply that the \emph{explicit Euler map} $\xi\mapsto \xi+hB_N(\xi)$ is contractive for sufficiently small steps:
\begin{equation}
\label{eq:euler-map-contract}
\|\xi+hB_N(\xi)-(\xi'+hB_N(\xi'))\|^2
\le \big(1-2\delta h+L_B^2 h^2\big) \|\xi-\xi'\|^2
\le (1-\delta h) \|\xi-\xi'\|^2,
\end{equation}
whenever $0<h\le \delta/L_B^2$. Combining \eqref{eq:euler-map-contract} with a one-step local truncation error estimate for Euler--Maruyama applied to \eqref{eq:stacked-SDE} yields a recursion of the form
\begin{equation}
e_{n+1}\le (1-\delta h) e_n + C h^2,\qquad
e_n:=\mathbb E\Big[\frac1N\sum_{i=1}^N\|\xi_{t_n}^{i,N}-\xi_n^{i,N,h}\|^2\Big],
\end{equation}
for some constant $C<\infty$ which depends only on $\lambda$, $d$, $L_B$ and suitable uniform-in-time moment bounds for the IPS $(\xi_t^N)_{t\ge0}$, which can be obtained in the contractive regime via the same arguments as used in the proof of Lemma~\ref{lem:MV-moment-q-uniform}. Solving this recursion gives the uniform-in-time strong error bound in \eqref{eq:Euler-uniform} with $C=O(\delta^{-1})$. A general framework making this principle explicit is developed in \citet{schuhsouttar2024conditions}.
\end{remark}

\subsubsection{Doubly-Stochastic Approximation}
\label{sec:stoch}
We now formalise the doubly-stochastic approximation used by the algorithm, allowing both the utility interaction and the repulsion drift to be estimated by Monte Carlo. Fix integers $K\in\mathbb N$, $K_{\mathrm{rep}}\in\mathbb N$, and $m\ge2$. For each step $n\ge0$ and particle $i\in[N]$, sample $K$ i.i.d.\ index tuples
\begin{equation}
(I_{2,n,k}^i,\dots,I_{m,n,k}^i)\ \sim\ \mathcal U([N]^{m-1}) \qquad (k=1,\dots,K),
\end{equation}
with replacement, and independently sample $K_{\mathrm{rep}}$ i.i.d.\ indices
\begin{equation}
J_{n,\ell}^i \sim \mathcal U([N]) \qquad (\ell=1,\dots,K_{\mathrm{rep}}),
\end{equation}
again with replacement. Let $(U_{n,k}^i)_{n\ge0,k\in[K]}^{i\in[N]}$ be i.i.d.\ auxiliary randomness, independent of $(Z_{n}^i)_{n\ge1,i\in[N]}$ and independent of the past. More precisely, if $\mathcal F_n$ denotes the $\sigma$-field generated by the particle system up to time $n$ and the Gaussian noises up to time $n$, viz
\begin{equation}
\mathcal F_n:=\sigma\big(\{\xi_\ell^{j,N,h,\mathrm{st}}:0\le \ell\le n,\ 1\le j\le N\},\ \{Z_\ell^j:1\le \ell\le n,\ 1\le j\le N\}\big),
\end{equation}
then for each $n$ the collections $\{(I_{2,n,k}^i,\dots,I_{m,n,k}^i), U_{n,k}^i\}_{i,k}$ and $\{J_{n,\ell}^i\}_{i,\ell}$ are independent of $\mathcal F_n$ and mutually independent. We assume access to an unbiased oracle $\smash{\widehat{\nabla_1G}(\xi_{1:m};U)}$, measurable in the auxiliary randomness $U$, with conditional second-moment control. In particular, for all $\xi_{1:m}\in(\Xi)^m$,
\begin{align}
\mathbb{E}\big[\widehat{\nabla_1G}(\xi_{1:m};U)\mid \xi_{1:m}\big]&=\nabla_1G(\xi_{1:m}) \label{eq:oracle-mean} \\
\mathbb{E}\big[\|\widehat{\nabla_1G}(\xi_{1:m};U)-\nabla_1G(\xi_{1:m})\|^2\mid \xi_{1:m}\big]
&\le \sigma_G^2\Big(1+\sum_{j=1}^m\|\xi_j\|^2\Big).
\label{eq:oracle-var}
\end{align}
We can now define the doubly stochastic drift estimators. In particular, given the particle positions at time $n$, for each particle $i$ we define
\begin{equation}
\widehat{\nabla_\xi\Phi}_m\big(\xi_n^{i,N,h,\mathrm{st}};\mu_n^{N,h,\mathrm{st}}\big)
:=
\frac1K\sum_{k=1}^K
\widehat{\nabla_1G}\Big(\xi_n^{i,N,h,\mathrm{st}},\xi_n^{I_{2,n,k}^i,N,h,\mathrm{st}},\dots,\xi_n^{I_{m,n,k}^i,N,h,\mathrm{st}}; U_{n,k}^i\Big),
\end{equation}
and
\begin{equation}
\widehat{\nabla_\xi\Psi}_r\big(\xi_n^{i,N,h,\mathrm{st}};\mu_n^{N,h,\mathrm{st}}\big)
:=
\frac1{K_{\mathrm{rep}}}\sum_{\ell=1}^{K_{\mathrm{rep}}}
\nabla r \Big(\xi_n^{i,N,h,\mathrm{st}}-\xi_n^{J_{n,\ell}^i,N,h,\mathrm{st}}\Big),
\label{eq:rep-est}
\end{equation}
where $\mu_n^{N,h,\mathrm{st}}:=\frac1N\sum_{j=1}^N\delta_{\xi_n^{j,N,h,\mathrm{st}}}$ denotes the empirical measure of the particles. The doubly-stochastic Euler scheme is then given by
\begin{equation}
\xi_{n+1}^{i,N,h,\mathrm{st}}
=
\xi_n^{i,N,h,\mathrm{st}}
+h\Big(
m \widehat{\nabla_\xi\Phi}_m(\xi_n^{i,N,h,\mathrm{st}};\mu_n^{N,h,\mathrm{st}})
-\eta \widehat{\nabla_\xi\Psi}_r(\xi_n^{i,N,h,\mathrm{st}};\mu_n^{N,h,\mathrm{st}})
-\lambda\nabla V(\xi_n^{i,N,h,\mathrm{st}})
\Big)
+\sqrt{2\lambda h} Z_{n+1}^i.
\label{eq:stoch-Euler}
\end{equation}

\begin{remark}[Relation to random batch methods and stochastic-gradient mean-field Langevin]
Sampling interaction tuples $(I_{2,n,k}^i,\dots,I_{m,n,k}^i)$ with replacement is a multi-body analogue of {random batch methods (RBM)}
for pairwise interacting particle systems, introduced in \citet{jinliliu2020rbm}. Likewise, the repulsion estimator \eqref{eq:rep-est} is the natural with-replacement Monte Carlo approximation of the empirical repulsion drift. Convergence of with-replacement variants (RBM-r) has been analysed recently in the pairwise setting; see, e.g., \citet{cai2024convergence}. The additional oracle noise in \eqref{eq:oracle-mean} - \eqref{eq:oracle-var}, together with the repulsion subsampling, yields a stochastic approximation of the full drift. Uniform-in-time convergence guarantees for related mean-field Langevin dynamics accounting simultaneously for finite-$N$ effects, time discretisation, and stochastic gradient noise have been developed in \citet{suzuki2023convergence}.
\end{remark}

\begin{lemma}[Unbiasedness and conditional variance of the interaction and repulsion estimators]
\label{lem:est}
Suppose that Assumption~\ref{ass:G}(ii) and Assumption~\ref{ass:r}(ii) hold. Suppose also that \eqref{eq:oracle-mean} and \eqref{eq:oracle-var} hold. Then, conditional on $\mathcal F_n$, the estimators satisfy
\begin{align}
\mathbb{E}\big[\widehat{\nabla_\xi\Phi}_m(\xi_n^{i,N,h,\mathrm{st}};\mu_n^{N,h,\mathrm{st}})\mid \mathcal F_n\big]
&=\nabla_\xi\Phi_m(\xi_n^{i,N,h,\mathrm{st}};\mu_n^{N,h,\mathrm{st}}),
\label{eq:est-bias}
\\
\mathbb{E}\Big[\big\|\widehat{\nabla_\xi\Phi}_m(\xi_n^{i,N,h,\mathrm{st}};\mu_n^{N,h,\mathrm{st}})
-\nabla_\xi\Phi_m(\xi_n^{i,N,h,\mathrm{st}};\mu_n^{N,h,\mathrm{st}})\big\|^2\ \Big|\ \mathcal F_n\Big]
&\le \frac{C_{\mathrm{var},G}}{K}\Big(1+\|\xi_n^{i,N,h,\mathrm{st}}\|^2+\frac1N\sum_{j=1}^N\|\xi_n^{j,N,h,\mathrm{st}}\|^2\Big),
\label{eq:est-var}
\\
\mathbb{E}\big[\widehat{\nabla_\xi\Psi}_r(\xi_n^{i,N,h,\mathrm{st}};\mu_n^{N,h,\mathrm{st}})\mid \mathcal F_n\big]
&=\nabla_\xi\Psi_r(\xi_n^{i,N,h,\mathrm{st}};\mu_n^{N,h,\mathrm{st}}),
\label{eq:est-bias-r}
\\
\mathbb{E}\Big[\big\|\widehat{\nabla_\xi\Psi}_r(\xi_n^{i,N,h,\mathrm{st}};\mu_n^{N,h,\mathrm{st}})
-\nabla_\xi\Psi_r(\xi_n^{i,N,h,\mathrm{st}};\mu_n^{N,h,\mathrm{st}})\big\|^2\ \Big|\ \mathcal F_n\Big]
&\le \frac{C_{\mathrm{var},r}}{K_{\mathrm{rep}}}\Big(1+\|\xi_n^{i,N,h,\mathrm{st}}\|^2+\frac1N\sum_{j=1}^N\|\xi_n^{j,N,h,\mathrm{st}}\|^2\Big),
\label{eq:est-var-r}
\end{align}
for constants $C_{\mathrm{var},G}<\infty$ and $C_{\mathrm{var},r}<\infty$ depending only on $(m,C_G,\sigma_G^2)$ and $(L_r,\|\nabla r(0)\|)$, respectively.
\end{lemma}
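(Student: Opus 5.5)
The plan is to condition on $\mathcal F_n$, which fixes the particle positions $x_j:=\xi_n^{j,N,h,\mathrm{st}}$. We then use the tower property together with the independence of the index draws, the oracle randomness and $\mathcal F_n$.

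\textbf{Interaction estimator: mean.} For one summand $k$, first condition on the tuple $(I_{2,n,k}^i,\dots,I_{m,n,k}^i)$ and apply \eqref{eq:oracle-mean} to replace the oracle by $\nabla_1G(x_i,x_{I_2},\dots,x_{I_m})$. Then average over the uniform tuple in $[N]^{m-1}$. This gives $\int\nabla_1 G(x_i,\xi_{2:m})\,(\mu_n^{N,h,\mathrm{st}})^{\otimes(m-1)}(\mathrm d\xi_{2:m})$, which equals $\nabla_\xi\Phi_m(x_i;\mu_n^{N,h,\mathrm{st}})$ by Lemma~\ref{lem:Phi-diff-Lip} applied to the empirical measure, which lies in $\mathcal P_2$. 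Averaging over $k$ proves \eqref{eq:est-bias}.

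\textbf{Interaction estimator: variance.} The $K$ summands are conditionally i.i.d. given $\mathcal F_n$, so the conditional variance equals $1/K$ times the variance of one summand. That variance is at most its second moment about $\nabla_\xi\Phi_m$. We decompose one summand minus its mean as
\[
\bigl[\widehat{\nabla_1G}(\cdot;U)-\nabla_1G(x_i,x_{I_{2:m}})\bigr]+\bigl[\nabla_1G(x_i,x_{I_{2:m}})-\nabla_\xi\Phi_m\bigr].
\]
The two brackets are orthogonal, because the first has zero conditional mean given the tuple. We bound them separately.
\begin{itemize}
\item The first term, by \eqref{eq:oracle-var}, is at most $\sigma_G^2(1+\|x_i\|^2+\sum_{j\ge2}\|x_{I_j}\|^2)$. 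Taking the expectation over uniform indices gives $\sigma_G^2(1+\|x_i\|^2+(m-1)\frac1N\sum_j\|x_j\|^2)$.
\item The second term is bounded by the second moment of $\nabla_1G$. By Assumption~\ref{ass:G}(ii), $\|\nabla_1G\|^2\le C_G^2 m\,(1+\|x_i\|^2+\sum_{j\ge 2}\|x_{I_j}\|^2)$ up to constants. Averaging over the indices again gives a bound of the form $C(1+\|x_i\|^2+\frac1N\sum_j\|x_j\|^2)$.
\end{itemize}
Collecting these gives \eqref{eq:est-var}, with $C_{\mathrm{var},G}$ depending on $m$, $C_G$ and $\sigma_G^2$.

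\textbf{Repulsion estimator.} Averaging $\nabla r(x_i-x_J)$ over $J\sim\mathcal U([N])$ gives $\int\nabla r(x_i-\chi)\,\mu_n^{N,h,\mathrm{st}}(\mathrm d\chi)=\nabla_\xi\Psi_r$ by Lemma~\ref{lem:Psi-diff-Lip}. This proves \eqref{eq:est-bias-r}. For the variance, the $K_{\mathrm{rep}}$ draws are i.i.d., so the variance is at most $K_{\mathrm{rep}}^{-1}\,\mathbb E\|\nabla r(x_i-x_J)\|^2$. By Assumption~\ref{ass:r}(ii), $\|\nabla r(z)\|\le\|\nabla r(0)\|+L_r\|z\|$, so
\[
\|\nabla r(x_i-x_J)\|^2\le 3\|\nabla r(0)\|^2+3L_r^2\bigl(\|x_i\|^2+\|x_J\|^2\bigr).
\]
Averaging over $J$ yields \eqref{eq:est-var-r}.

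\textbf{Main obstacle.} No step is deep. The one point needing care is the measurability and independence bookkeeping: the tower property with respect to $\sigma(\mathcal F_n,\text{tuples})$ must be justified so that \eqref{eq:oracle-mean}--\eqref{eq:oracle-var} can be applied at random arguments. This follows because $U_{n,k}^i$ is independent of $\mathcal F_n$ and of the tuples, via the freezing lemma.
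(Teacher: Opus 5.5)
Your proposal is correct and follows essentially the same route as the paper: condition on $\mathcal F_n$, use the conditional i.i.d.\ structure of the $K$ (resp.\ $K_{\mathrm{rep}}$) summands to extract the $1/K$ (resp.\ $1/K_{\mathrm{rep}}$) factor, bound the single-summand variance by a second moment controlled via Assumption~\ref{ass:G}(ii) and \eqref{eq:oracle-var} (resp.\ the linear growth of $\nabla r$), and average over the uniform indices to obtain the empirical second moment. Your explicit orthogonal split into oracle noise and index-sampling noise, and your remark on the freezing lemma, simply make explicit what the paper's one-line bound on $\mathbb E[\|Y_1\|^2\mid\mathcal F_n]$ leaves implicit.
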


\begin{proof}
Conditional on $\mathcal F_n$, the interaction summands are i.i.d. Let $Y_k$ denote one such summand and let $Y:=\mathbb E[Y_1\mid\mathcal F_n]$. Then the unbiasedness result in \eqref{eq:est-bias} follows immediately from \eqref{eq:oracle-mean} and the uniform sampling of indices:
\begin{equation}
Y=
\int \nabla_1G(\xi_n^{i,N,h,\mathrm{st}},\xi_{2:m}) \mu_n^{\otimes(m-1)}(d\xi_{2:m})
=\nabla_\xi\Phi_m(\xi_n^{i,N,h,\mathrm{st}};\mu_n^{N,h,\mathrm{st}}).
\end{equation}
We next establish \eqref{eq:est-var}. In this case, the $Y_k$ are i.i.d., we have that
\begin{equation}
\mathbb E\Big[\Big\|\frac1K\sum_{k=1}^K(Y_k-Y)\Big\|^2\Bigm|\mathcal F_n\Big]
=\frac1K\mathrm{Var}(Y_1\mid\mathcal F_n)
\le \frac1K\mathbb E[\|Y_1\|^2\mid\mathcal F_n].
\end{equation}
Meanwhile, due to Assumption~\ref{ass:G}(ii) and \eqref{eq:oracle-var}, we can bound the RHS as
\begin{equation}
\mathbb E[\|Y_1\|^2\mid\mathcal F_n]
\le C_{\mathrm{var},G}\Bigl(1+\|\xi_n^{i,N,h,\mathrm{st}}\|^2
+\frac1N\sum_{j=1}^N\|\xi_n^{j,N,h,\mathrm{st}}\|^2\Bigr).
\end{equation}
This proves \eqref{eq:est-var}. The proof of \eqref{eq:est-bias-r}--\eqref{eq:est-var-r} is identical, now using $W_\ell:=\nabla r(\xi_n^{i,N,h,\mathrm{st}}-\xi_n^{J_{n,\ell}^i,N,h,\mathrm{st}})$, and the linear-growth bound $\|\nabla r(z)\|^2\le C_{\mathrm{var},r}(1+\|z\|^2)$ implied by Assumption~\ref{ass:r}(ii).
\end{proof}

\begin{lemma}[Second-moment stability of the doubly-stochastic Euler scheme (finite horizon)]
\label{lem:stoch-moment}
Suppose that Assumptions~\ref{ass:V}, \ref{ass:G}(ii), and \ref{ass:r}(ii) hold. In addition, suppose that \eqref{eq:oracle-mean}--\eqref{eq:oracle-var} hold. Suppose also that $M_2(\mu_0)<\infty$. Fix $T>0$ and let $n_T=\lfloor T/h\rfloor$. Then there exists $C_T<\infty$ (independent of $N,h,K,K_{\mathrm{rep}}$) such that
\begin{equation}
\max_{0\le n\le n_T}\ \mathbb E\Big[\frac1N\sum_{i=1}^N \|\xi_n^{i,N,h,\mathrm{st}}\|^2\Big]\le C_T\Big(1+M_2(\mu_0)\Big).
\end{equation}
\end{lemma}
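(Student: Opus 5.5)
The plan is a discrete Grönwall argument on the averaged second moment
\[
S_n:=\mathbb E\Big[\frac1N\sum_{i=1}^N\|\xi_n^{i,N,h,\mathrm{st}}\|^2\Big].
\]
Write the stochastic drift as $\widehat b_n^i = b(\xi_n^i,\mu_n)+\epsilon_n^i$. Here $b$ is the exact drift from \eqref{eq:b-def}, evaluated at the empirical measure $\mu_n:=\mu_n^{N,h,\mathrm{st}}$, and $\epsilon_n^i$ is the combined estimation error $m(\widehat{\nabla_\xi\Phi}_m-\nabla_\xi\Phi_m)-\eta(\widehat{\nabla_\xi\Psi}_r-\nabla_\xi\Psi_r)$. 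Expanding the square of \eqref{eq:stoch-Euler} gives
\[
\|\xi_{n+1}^i\|^2=\|\xi_n^i\|^2+2h\langle \xi_n^i,\widehat b_n^i\rangle+h^2\|\widehat b_n^i\|^2+2\lambda h\|Z_{n+1}^i\|^2+\text{(cross terms linear in }Z_{n+1}^i).
\]
I will take conditional expectations given $\mathcal F_n$. The cross terms with $Z_{n+1}^i$ then vanish, since $Z_{n+1}^i$ is independent of $\mathcal F_n$ and of the sampling randomness. By Lemma~\ref{lem:est}, $\mathbb E[\epsilon_n^i\mid\mathcal F_n]=0$, so the inner product term reduces to $2h\langle\xi_n^i,b(\xi_n^i,\mu_n)\rangle$. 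The noise term contributes exactly $2\lambda h d$.

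\textbf{Linear growth of the exact drift.} By Lemma~\ref{lem:drift-Lip},
\[
\|b(\xi,\mu)\|\le \|b(0,\delta_0)\|+L_\xi\|\xi\|+L_\mu M_2(\mu)^{1/2}.
\]
Cauchy--Schwarz and Young then give
\[
2\langle\xi,b(\xi,\mu)\rangle\le C(1+\|\xi\|^2+M_2(\mu)), \qquad \|b(\xi,\mu)\|^2\le C(1+\|\xi\|^2+M_2(\mu)).
\]
Dissipativity (Assumption~\ref{ass:strong}) is not needed here, since this is a finite-horizon bound. For the second-order term, I split $\mathbb E[\|\widehat b_n^i\|^2\mid\mathcal F_n]=\|b\|^2+\mathbb E[\|\epsilon_n^i\|^2\mid\mathcal F_n]$. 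The variance bounds \eqref{eq:est-var} and \eqref{eq:est-var-r}, using $K,K_{\mathrm{rep}}\ge1$, bound the second piece by $C(1+\|\xi_n^i\|^2+M_2(\mu_n))$. The constant is uniform in $K,K_{\mathrm{rep}}$ because $1/K\le1$.

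\textbf{Averaging and iterating.} I average over $i$ and use $M_2(\mu_n)=\frac1N\sum_j\|\xi_n^j\|^2$. Taking full expectations and restricting to $h\le 1$ gives
\[
S_{n+1}\le (1+Ch)S_n+Ch,
\]
where $C$ depends only on $m,\eta,\lambda,d,L_\xi,L_\mu,\|b(0,\delta_0)\|,\sigma_G,C_G,L_r,\|\nabla r(0)\|$. Iterating yields
\[
S_n\le e^{Cnh}(S_0+1)\le e^{CT}(1+M_2(\mu_0))
\]
for $n\le n_T$, since $S_0=M_2(\mu_0)$ by i.i.d.\ initialisation. For $h>1$ the same recursion holds with $C(h+h^2)$ in place of $Ch$, so I will either restrict to $h\le1$ or absorb the $h^2$ term, at the cost of a constant depending on an upper bound for $h$.

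\textbf{Main obstacle.} The delicate step is justifying the conditional-expectation computation. First, all quantities must be integrable, which requires an a priori finiteness of $S_n$; this follows inductively from the same recursion. Second, the sampled indices $I,J$ make the estimator depend on other particles. This is already handled by conditioning on $\mathcal F_n$ in Lemma~\ref{lem:est}, since the index and oracle randomness is independent of $\mathcal F_n$ and of $Z_{n+1}$.
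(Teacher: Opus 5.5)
Your argument is essentially the paper's. Both condition on $\mathcal F_n$ to remove the Gaussian cross terms, bound the conditional second moment of the stochastic drift by $C(1+\|\xi_n^i\|^2+M_2(\mu_n))$ using linear growth and Lemma~\ref{lem:est}, average over particles, and close with $S_{n+1}\le(1+Ch)S_n+Ch$ and discrete Gr\"onwall. The paper absorbs the drift cross term with $\|a+b\|^2\le(1+h)\|a\|^2+(1+h^{-1})\|b\|^2$ rather than removing the estimator error by unbiasedness, which is only a minor variation.

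Two small fixes are needed:
\begin{itemize}
\item Lemma~\ref{lem:drift-Lip} relies on Assumption~\ref{ass:G}(iii), which is not among the stated hypotheses. Obtain $\|b(\xi,\mu)\|\le C(1+\|\xi\|+M_2(\mu)^{1/2})$ directly instead, as the paper does, from Assumption~\ref{ass:G}(ii), $\|\nabla r(z)\|\le\|\nabla r(0)\|+L_r\|z\|$ and $\|\nabla V(\xi)\|\le\|\nabla V(0)\|+K_V\|\xi\|$.
\item No restriction on $h$ is needed. If $h>T$ then $n_T=0$ and the bound is trivial, while if $h\le T$ then $h^2\le Th$, so the constant depends only on $T$.
\end{itemize}
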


\begin{proof}
Let us suppress $(N,h,\mathrm{st})$ from the notation. We can then write the update in the form
\begin{equation}
\xi_{n+1}^i=\xi_n^i+hD_n^i+\sqrt{2\lambda h} Z_{n+1}^i,
\end{equation}
where $D_n^i$ is the stochastic drift. Conditioning on $\mathcal F_n$ and using $\mathbb E[Z_{n+1}^i\mid\mathcal F_n]=0$, $\mathbb E\|Z_{n+1}^i\|^2=d$, and $\|a+b\|^2\le (1+h)\|a\|^2+(1+h^{-1})\|b\|^2$, we obtain
\begin{equation}
\mathbb E[\|\xi_{n+1}^i\|^2\mid\mathcal F_n]
\le (1+h)\|\xi_n^i\|^2+Ch \mathbb E[\|D_n^i\|^2\mid\mathcal F_n]+2\lambda hd.
\end{equation}
By Assumption~\ref{ass:V}(ii), Lemma~\ref{lem:est}, and the linear-growth bounds on $\nabla_1G$ and $\nabla r$, there exists $C<\infty$ such that
\begin{equation}
\mathbb E[\|D_n^i\|^2\mid\mathcal F_n]
\le C\Bigl(1+\|\xi_n^i\|^2+\frac1N\sum_{j=1}^N\|\xi_n^j\|^2\Bigr).
\end{equation}
Define  $A_n:=\mathbb E[\frac1N\sum_{i=1}^N\|\xi_n^i\|^2]$. Then, averaging the previous display over $i$, and taking expectations, we arrive at
\begin{equation}
A_{n+1}\le (1+Ch)A_n+Ch.
\end{equation}
Finally, a discrete Gr\"onwall argument for $n\le n_T$ gives the claimed result: 
\begin{equation}
\max_{0\le n\le n_T}A_n\le C_T(1+A_0)=C_T(1+M_2(\mu_0)).
\end{equation}
\end{proof}

\begin{theorem}[Error from the doubly-stochastic approximation (finite horizon)]
\label{thm:stoch}
Suppose that Assumptions~\ref{ass:V}, \ref{ass:G}, and \ref{ass:r} hold. In addition, suppose that the oracle conditions \eqref{eq:oracle-mean}--\eqref{eq:oracle-var} hold. Suppose also that $M_2(\mu_0)<\infty$. Fix $T>0$ and $n_T=\lfloor T/h\rfloor$. Let $(\xi_n^{i,N,h})$ be the IPS in \eqref{eq:Euler-IPS}, and $(\xi_n^{i,N,h,\mathrm{st}})$ the doubly-stochastic approximation in \eqref{eq:stoch-Euler}, coupled with the same Gaussians $(Z_{n}^i)$ and the same initial conditions. Then there exists $C_T<\infty$ (independent of $N,h,K,K_{\mathrm{rep}}$) such that
\begin{equation}
\max_{0\le n\le n_T}\ \mathbb{E}\Big[\frac1N\sum_{i=1}^N\|\xi_n^{i,N,h,\mathrm{st}}-\xi_n^{i,N,h}\|^2\Big]
\le C_T h\Big(\frac{1}{K}+\frac{1}{K_{\mathrm{rep}}}\Big).
\label{eq:stoch-bound}
\end{equation}
\end{theorem}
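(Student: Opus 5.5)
We compare the two Euler schemes by a one-step recursion for the averaged squared discrepancy
\begin{equation}
e_n:=\mathbb E\Big[\tfrac1N\sum_{i=1}^N\|\xi_n^{i,N,h,\mathrm{st}}-\xi_n^{i,N,h}\|^2\Big],
\end{equation}
with $e_0=0$ because the two schemes share initial conditions. The Gaussian increments are common to both schemes, so they cancel in the difference. Writing $\Delta_n^i:=\xi_n^{i,\mathrm{st}}-\xi_n^i$ (suppressing $N,h$), we have
\begin{equation}
\Delta_{n+1}^i=\Delta_n^i+h\big[b(\xi_n^{i,\mathrm{st}},\mu_n^{\mathrm{st}})-b(\xi_n^{i},\mu_n)\big]+h\,M_n^i,
\end{equation}
where $b$ is the exact drift from \eqref{eq:b-def}. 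The noise term is
\begin{equation}
M_n^i:=m\big(\widehat{\nabla_\xi\Phi}_m-\nabla_\xi\Phi_m\big)(\xi_n^{i,\mathrm{st}};\mu_n^{\mathrm{st}})-\eta\big(\widehat{\nabla_\xi\Psi}_r-\nabla_\xi\Psi_r\big)(\xi_n^{i,\mathrm{st}};\mu_n^{\mathrm{st}}).
\end{equation}
This is the estimator error evaluated along the stochastic trajectory.

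The first step is to use Lemma~\ref{lem:est}, which gives $\mathbb E[M_n^i\mid\mathcal F_n]=0$. The other two terms in the recursion are $\mathcal F_n$-measurable: the deterministic IPS $\xi_n^i$ is driven by the same Gaussians, so we include it in the filtration (equivalently, we enlarge $\mathcal F_n$ to contain both systems, and the independence assumptions still hold). Squaring and taking conditional expectations therefore kills the cross term with $M_n^i$, leaving
\begin{equation}
\mathbb E\|\Delta_{n+1}^i\|^2\le \mathbb E\big\|\Delta_n^i+h[b(\xi_n^{i,\mathrm{st}},\mu_n^{\mathrm{st}})-b(\xi_n^i,\mu_n)]\big\|^2+h^2\,\mathbb E\|M_n^i\|^2 .
\end{equation}

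For the first term on the right we expand the square and apply Lemma~\ref{lem:drift-Lip}. Together with the empirical coupling bound \eqref{eq:empirical-W2-coupling}, averaged over $i$, this is exactly the global Lipschitz bound on $B_N$ established in the proof of Theorem~\ref{thm:Euler}. It yields a contribution of at most $(1+Ch)e_n$, with $C$ depending only on $L_\xi$ and $L_\mu$, and hence independent of $N$.

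For the noise term, the variance bounds \eqref{eq:est-var} and \eqref{eq:est-var-r} give
\begin{equation}
\mathbb E\|M_n^i\|^2\le C\Big(\tfrac1K+\tfrac1{K_{\mathrm{rep}}}\Big)\mathbb E\Big(1+\|\xi_n^{i,\mathrm{st}}\|^2+\tfrac1N\sum_j\|\xi_n^{j,\mathrm{st}}\|^2\Big).
\end{equation}
After averaging over $i$, the moment terms are controlled uniformly for $n\le n_T$ by Lemma~\ref{lem:stoch-moment}, with a constant $C_T(1+M_2(\mu_0))$ that does not depend on $N,h,K,K_{\mathrm{rep}}$. Putting the two pieces together gives the recursion
\begin{equation}
e_{n+1}\le (1+Ch)e_n+C_T h^2\Big(\tfrac1K+\tfrac1{K_{\mathrm{rep}}}\Big).
\end{equation}

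A discrete Gr\"onwall argument over $n\le n_T=\lfloor T/h\rfloor$ then gives
\begin{equation}
e_n\le e^{CT}\,n_T\,C_T h^2\Big(\tfrac1K+\tfrac1{K_{\mathrm{rep}}}\Big)\le C_T' h\Big(\tfrac1K+\tfrac1{K_{\mathrm{rep}}}\Big),
\end{equation}
which is \eqref{eq:stoch-bound}. The only delicate point is the martingale structure: the zero-mean noise contributes at order $h^2$ per step, rather than $h$ from a naive Young bound, and this is what produces the factor $h$. It requires that the tuple indices, the repulsion indices and the oracle randomness at step $n$ are independent of both particle systems up to step $n$, and of $Z_{n+1}$. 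This is guaranteed by the stated independence assumptions.
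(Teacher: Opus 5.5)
Your proposal is correct and follows essentially the same route as the paper. The paper also conditions on an enlarged $\sigma$-field $\mathcal G_n$ containing both systems, uses Lemma~\ref{lem:est} to kill the cross term so that the estimator noise enters at order $h^2$, bounds the exact-drift difference via Lemma~\ref{lem:drift-Lip} and the empirical coupling inequality, controls moments with Lemma~\ref{lem:stoch-moment}, and closes with the same recursion $e_{n+1}\le(1+Ch)e_n+C_Th^2(1/K+1/K_{\mathrm{rep}})$ and discrete Gr\"onwall. The only cosmetic difference is that you bound $\|\Delta_n+h(B_N(\tilde\xi_n)-B_N(\xi_n))\|^2$ directly through the global Lipschitz constant of $B_N$, whereas the paper first splits it into $(1+Ch)\|\Delta_n^i\|^2+Ch\|d_n^i\|^2$ before applying the Lipschitz bound.
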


\begin{proof}
Write $\tilde\xi_n^i:=\xi_n^{i,N,h,\mathrm{st}}$, $\xi_n^i:=\xi_n^{i,N,h}$, $\tilde\mu_n:=\frac1N\sum_j\delta_{\tilde\xi_n^j}$, and $\mu_n:=\frac1N\sum_j\delta_{\xi_n^j}$. Set $\Delta_n^i:=\tilde\xi_n^i-\xi_n^i$. Define
\begin{equation}
\varepsilon_n^i
:=
\widehat{\nabla_\xi\Phi}_m(\tilde\xi_n^i;\tilde\mu_n)
-\nabla_\xi\Phi_m(\tilde\xi_n^i;\tilde\mu_n),
\qquad
\zeta_n^i
:=
\widehat{\nabla_\xi\Psi}_r(\tilde\xi_n^i;\tilde\mu_n)
-\nabla_\xi\Psi_r(\tilde\xi_n^i;\tilde\mu_n).
\end{equation}
Let $\mathcal G_n$ be the $\sigma$-field generated by $\{(\tilde\xi_\ell^j,\xi_\ell^j):0\le \ell\le n,\ 1\le j\le N\}$, the Gaussian variables $\{Z_\ell^j:1\le \ell\le n,\ 1\le j\le N\}$, and all interaction-index, repulsion-index, and oracle variables up to time $n-1$. Then $\tilde\xi_n,\xi_n,\tilde\mu_n,\mu_n$ are $\mathcal G_n$-measurable, while the new randomness at step $n$ is independent of $\mathcal G_n$. Then
\begin{equation}
\Delta_{n+1}^i
=\Delta_n^i+h\bigl(d_n^i+m\varepsilon_n^i-\eta\zeta_n^i\bigr),
\qquad d_n^i:=b(\tilde\xi_n^i,\tilde\mu_n)-b(\xi_n^i,\mu_n),
\end{equation}
where $\varepsilon_n^i$ and $\zeta_n^i$ are the centered interaction and repulsion estimator errors. Condition on $\mathcal G_n$. By Lemma~\ref{lem:est}, $\mathbb E[\varepsilon_n^i\mid\mathcal G_n]=\mathbb E[\zeta_n^i\mid\mathcal G_n]=0$ and
\begin{align*}
\mathbb E[\|\varepsilon_n^i\|^2\mid\mathcal G_n]
&\le \frac{C}{K}\Bigl(1+\|\tilde\xi_n^i\|^2+\frac1N\sum_j\|\tilde\xi_n^j\|^2\Bigr),\\
\mathbb E[\|\zeta_n^i\|^2\mid\mathcal G_n]
&\le \frac{C}{K_{\mathrm{rep}}}\Bigl(1+\|\tilde\xi_n^i\|^2+\frac1N\sum_j\|\tilde\xi_n^j\|^2\Bigr).
\end{align*}
Thus, the cross term with $m\varepsilon_n^i-\eta\zeta_n^i$ vanishes and
\begin{equation}
\mathbb E[\|\Delta_{n+1}^i\|^2\mid\mathcal G_n]
\le (1+Ch)\|\Delta_n^i\|^2+Ch\|d_n^i\|^2
+Ch^2\Bigl(\frac1K+\frac1{K_{\mathrm{rep}}}\Bigr)
\Bigl(1+\|\tilde\xi_n^i\|^2+\frac1N\sum_j\|\tilde\xi_n^j\|^2\Bigr).
\end{equation}
By Lemma~\ref{lem:drift-Lip} and the empirical coupling inequality, we have that
\begin{equation}
\|d_n^i\|^2
\le C\Bigl(\|\Delta_n^i\|^2+\frac1N\sum_{j=1}^N\|\Delta_n^j\|^2\Bigr).
\end{equation}
Let $e_n:=\mathbb E\Big[\frac1N\sum_{i=1}^N\|\Delta_n^i\|^2\Big]$. Averaging over $i$, taking expectations, and using Lemma~\ref{lem:stoch-moment}, we thus have that
\begin{equation}
e_{n+1}\le (1+Ch)e_n+C_T h^2\Bigl(\frac1K+\frac1{K_{\mathrm{rep}}}\Bigr),
\qquad e_0=0.
\end{equation}
Finally, a discrete version of Gr\"onwall's inequality yields \eqref{eq:stoch-bound}.
\end{proof}

\begin{remark}[Uniform-in-time doubly-stochastic approximation in the contractive regime]
\label{rem:uniform-stoch}
The finite-horizon bound in Theorem~\ref{thm:stoch} yields a constant $C_T$ that typically grows with $T$. In the contractive regime where $\delta:=\alpha-L_\mu>0$, under suitable conditions on the step size, it is often possible to upgrade \eqref{eq:stoch-bound} to a uniform-in-time estimate of the form
\begin{equation}
\sup_{n\ge 0}\ \mathbb{E}\Big[\frac1N\sum_{i=1}^N\|\xi_n^{i,N,h,\mathrm{st}}-\xi_n^{i,N,h}\|^2\Big]
\le C h\Big(\frac1K+\frac1{K_{\mathrm{rep}}}\Big).
\label{eq:stoch-uniform}
\end{equation}
A sufficient step-size condition is the same contractivity requirement as in Remark~\ref{rem:uniform-euler}. Writing the stacked drift $B_N$ as in \eqref{eq:stacked-SDE} and letting $L_B:=\sqrt{2(L_\xi^2+L_\mu^2)}$ (cf.~the proof of Theorem~\ref{thm:Euler}), the explicit Euler map $\xi\mapsto \xi+hB_N(\xi)$ is contractive whenever $0<h\le \delta/L_B^2$; see \eqref{eq:euler-map-contract}. Under synchronous coupling, the mean-square error then satisfies a recursion of the form
\begin{equation}
e_{n+1}\le (1-\delta h) e_n + C h^2\Big(\frac1K+\frac1{K_{\mathrm{rep}}}\Big),
\label{eq:stoch-recursion}
\end{equation}
where the additive term comes from the conditional variance bounds for both the utility and repulsion estimators in Lemma~\ref{lem:est},  together with suitable uniform-in-time second-moment bounds for the doubly-stochastic scheme (obtained via a discrete Lyapunov argument in the contractive/small-step regime, analogously to the finite-horizon stability Lemma~\ref{lem:stoch-moment}). Solving \eqref{eq:stoch-recursion} yields \eqref{eq:stoch-uniform} with $C=O(\delta^{-1})$. A general framework making this principle explicit for numerical discretisations, including mean-field systems and stochastic perturbations, is developed in \citet{schuhsouttar2024conditions}.
\end{remark}

\subsubsection{End-to-End Error Decompositions}
\label{sec:end2end}

\begin{theorem}[End-to-end bound at time horizon $T$]
\label{thm:end2end}
Suppose that Assumptions~\ref{ass:V}, \ref{ass:G}, and \ref{ass:r} hold. In addition, suppose that the oracle conditions \eqref{eq:oracle-mean}--\eqref{eq:oracle-var} hold. Suppose also that $M_q(\mu_0)<\infty$ for some $q>2$. Fix $T>0$ and let $n_T=\lfloor T/h\rfloor$ and $T_h:=t_{n_T}$. Let $\mu_{T_h}$ be the McKean--Vlasov law at time $T_h$, $\smash{\mu_{T_h}^{N}}$ the IPS empirical law at time $T_h$, $\smash{\mu_{T_h}^{N,h}}$ the Euler IPS empirical law at time $T_h$, and $\smash{\mu_{T_h}^{N,h,\mathrm{st}}}$ the empirical law of the doubly-stochastic scheme at time $T_h$. Then there exists $C_T<\infty$ (independent of $N,h,K,K_{\mathrm{rep}}$) such that
\begin{equation}
\mathbb{E}\Big[\mathsf{W}_2^2\big(\mu_{T_h}^{N,h,\mathrm{st}},\mu_{T_h}\big)\Big]
\le
C_T\Big(\beta_{d,q}(N)+h+\frac{h}{K}+\frac{h}{K_{\mathrm{rep}}}\Big),
\label{eq:end2end}
\end{equation}
where $\beta_{d,q}(N)$ is the empirical-measure rate from \citet[Theorem~1]{fournier2015rate} (with $p=2$). If, in addition, Assumption~\ref{ass:strong} holds and $\delta=\alpha-L_\mu>0$, then
\begin{equation}
\mathbb{E}\Big[\mathsf{W}_2^2\big(\mu_{T_h}^{N,h,\mathrm{st}},\mu_m^{\lambda,\star}\big)\Big]
\le
C_T\Big(\beta_{d,q}(N)+h+\frac{h}{K}+\frac{h}{K_{\mathrm{rep}}}\Big)
+2e^{-2\delta T_h} \mathsf W_2^2(\mu_0,\mu_m^{\lambda,\star}),
\qquad \delta=\alpha-L_\mu.
\label{eq:end2end-stationary-T-fixed}
\end{equation}
\end{theorem}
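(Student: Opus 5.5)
The plan is to chain the three finite-horizon estimates already established through the triangle inequality for $\mathsf W_2$, evaluated at the common grid time $T_h=t_{n_T}\le T$. Concretely, I would write
\begin{equation}
\mathsf W_2\big(\mu_{T_h}^{N,h,\mathrm{st}},\mu_{T_h}\big)
\le
\mathsf W_2\big(\mu_{T_h}^{N,h,\mathrm{st}},\mu_{T_h}^{N,h}\big)
+\mathsf W_2\big(\mu_{T_h}^{N,h},\mu_{T_h}^{N}\big)
+\mathsf W_2\big(\mu_{T_h}^{N},\mu_{T_h}\big),
\end{equation}
square it with $(a+b+c)^2\le 3(a^2+b^2+c^2)$, and take expectations. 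All four processes are realised on one probability space. The IPS \eqref{eq:IPS-recall} is driven by Brownian motions $w^i$ and uses the same initial data as the nonlinear copies. The Euler scheme uses $Z_{n+1}^i=(w^i_{t_{n+1}}-w^i_{t_n})/\sqrt h$, and the doubly-stochastic scheme reuses these same Gaussians and initial conditions, as required by Theorems~\ref{thm:Euler} and \ref{thm:stoch}.

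The three terms are then bounded in order. First, the empirical-coupling inequality \eqref{eq:empirical-W2-coupling} together with Theorem~\ref{thm:stoch} at $n=n_T$ gives $C_T h(1/K+1/K_{\mathrm{rep}})$. Second, Theorem~\ref{thm:Euler}, in the form \eqref{eq:Euler-W2}, gives $C_T h$. Third, Theorem~\ref{thm:PoC} with $\sup_{t\le T}$ gives $C_T\beta_{d,q}(N)$, since $T_h\le T$. The hypotheses needed are only $M_2(\mu_0)<\infty$ for the first two estimates and $M_q(\mu_0)<\infty$ with $q>2$ for the third; the latter implies the former. All constants depend only on $T$ and the model constants, not on $N,h,K,K_{\mathrm{rep}}$, and they can be taken monotone in $T$, so replacing $T_h$ by $T$ is harmless. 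This gives \eqref{eq:end2end}.

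For \eqref{eq:end2end-stationary-T-fixed}, I would add one more triangle step:
\begin{equation}
\mathbb E\,\mathsf W_2^2\big(\mu_{T_h}^{N,h,\mathrm{st}},\mu_m^{\lambda,\star}\big)\le 2\,\mathbb E\,\mathsf W_2^2\big(\mu_{T_h}^{N,h,\mathrm{st}},\mu_{T_h}\big)+2\,\mathsf W_2^2\big(\mu_{T_h},\mu_m^{\lambda,\star}\big).
\end{equation}
The second term is deterministic. By Lemma~\ref{lem:MV-contractivity}, applied with $\tilde\mu_t\equiv\mu_m^{\lambda,\star}$ (which is stationary by Corollary~\ref{cor:stationary-minimiser} and unique by Theorem~\ref{thm:uniq}), it is at most $e^{-2\delta T_h}\mathsf W_2^2(\mu_0,\mu_m^{\lambda,\star})$. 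The factor $2$ is absorbed into $C_T$ in the first term and appears explicitly in the second, which matches the statement.

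The main obstacle is bookkeeping rather than analysis: checking that the couplings used in the three earlier theorems are mutually compatible on a single probability space, so that the pathwise triangle inequality is legitimate. Even without that, each $\mathsf W_2$ term only needs a bound in expectation of its own law pair, and the triangle inequality for $\mathsf W_2$ holds pointwise for the random empirical measures. One should also confirm that Theorem~\ref{thm:stoch} compares the doubly-stochastic scheme to the same Euler IPS $\xi_n^{i,N,h}$ used in Theorem~\ref{thm:Euler}; it does, since both use the scheme \eqref{eq:Euler-IPS}.
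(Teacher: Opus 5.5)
Your proposal is correct and follows essentially the same route as the paper. Both use a three-term triangle decomposition through $\mu_{T_h}^{N,h}$ and $\mu_{T_h}^{N}$, square it with the factor $3$, and bound the terms by Theorems~\ref{thm:stoch}, \ref{thm:Euler}, and \ref{thm:PoC}, then add one further triangle step and Lemma~\ref{lem:MV-contractivity} for the stationary bound. Your check that the couplings are compatible (shared initial data, Gaussians taken as Brownian increments) is something the paper leaves implicit, and you handle it correctly.
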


\begin{proof}
We begin by establishing the bound in \eqref{eq:end2end}. Using the triangle inequality, we can decompose $\mathsf{W}_2(\mu_{T_h}^{N,h,\mathrm{st}},\mu_{T_h})$ into three terms, viz
\begin{equation}
\mathsf{W}_2(\mu_{T_h}^{N,h,\mathrm{st}},\mu_{T_h})
\le \mathsf{W}_2(\mu_{T_h}^{N,h,\mathrm{st}},\mu_{T_h}^{N,h})
+\mathsf{W}_2(\mu_{T_h}^{N,h},\mu_{T_h}^N)
+\mathsf{W}_2(\mu_{T_h}^N,\mu_{T_h}).
\end{equation}
Thus, applying the standard algebraic inequality $(a+b+c)^2\le 3(a^2+b^2+c^2)$, and taking expectations, we have that
\begin{equation}
\mathbb E\big[\mathsf W_2^2(\mu_{T_h}^{N,h,\mathrm{st}},\mu_{T_h})\big]
\le 3\Big(
\mathbb E\big[\mathsf W_2^2(\mu_{T_h}^{N,h,\mathrm{st}},\mu_{T_h}^{N,h})\big]
+\mathbb E\big[\mathsf W_2^2(\mu_{T_h}^{N,h},\mu_{T_h}^N)\big]
+\mathbb E\big[\mathsf W_2^2(\mu_{T_h}^N,\mu_{T_h})\big]\Big).
\end{equation}
Each of the three terms can be bounded using our existing results. For the first term, we use the coupling $\frac1N\sum_{i=1}^N\delta_{(\xi_{n_T}^{i,N,h,\mathrm{st}},\xi_{n_T}^{i,N,h})}$ between $\mu_{T_h}^{N,h,\mathrm{st}}$ and $\mu_{T_h}^{N,h}$ to obtain
\begin{equation}
\mathsf W_2^2(\mu_{T_h}^{N,h,\mathrm{st}},\mu_{T_h}^{N,h})
\le \frac1N\sum_{i=1}^N\|\xi_{n_T}^{i,N,h,\mathrm{st}}-\xi_{n_T}^{i,N,h}\|^2,
\end{equation}
and then apply Theorem~\ref{thm:stoch}. Meanwhile, the second term is bounded by Theorem~\ref{thm:Euler} and the third by Theorem~\ref{thm:PoC}. This yields \eqref{eq:end2end}.

We now establish the stationary-target bound in \eqref{eq:end2end-stationary-T-fixed}. Using the triangle inequality, we have
\begin{equation}
\mathsf W_2(\mu_{T_h}^{N,h,\mathrm{st}},\mu_m^{\lambda,\star})
\le \mathsf W_2(\mu_{T_h}^{N,h,\mathrm{st}},\mu_{T_h})+\mathsf W_2(\mu_{T_h},\mu_m^{\lambda,\star}).
\end{equation}
Using $(a+b)^2\le 2a^2+2b^2$ and taking expectations yields
\begin{equation}
\mathbb{E}\big[\mathsf{W}_2^2\big(\mu_{T_h}^{N,h,\mathrm{st}},\mu_m^{\lambda,\star}\big)\big]
\le
2\mathbb E\big[\mathsf W_2^2(\mu_{T_h}^{N,h,\mathrm{st}},\mu_{T_h})\big]
+2 \mathbb{E}\big[\mathsf W_2^2(\mu_{T_h},\mu_m^{\lambda,\star})\big].
\end{equation}
We have just established a bound for the first term. Meanwhile, by Lemma~\ref{lem:MV-contractivity}, we have that $\mathsf W_2(\mu_{T_h},\mu_m^{\lambda,\star})\le e^{-\delta T_h}\mathsf W_2(\mu_0,\mu_m^{\lambda,\star})$. This completes the proof.
\end{proof}

\begin{theorem}[Stationary-accuracy bound in the contractive regime]
\label{thm:end2end-stationary}
Suppose that Assumptions~\ref{ass:V}, \ref{ass:G}, \ref{ass:r}, and \ref{ass:strong} hold. In addition, suppose that the oracle conditions \eqref{eq:oracle-mean}--\eqref{eq:oracle-var} hold. Suppose also that $M_q(\mu_0)<\infty$ for some $q>2$. Assume moreover that $\delta:=\alpha-L_\mu>0$, where $\alpha=\lambda\kappa-\eta L_r - mL_G$ and $L_\mu$ is as in Lemma~\ref{lem:drift-Lip}. Finally, assume that the following uniform-in-time discretisation bounds hold:
\begin{align}
\sup_{n\ge 0}\ \mathbb{E}\Big[\mathsf W_2^2(\mu_{t_n}^{N,h},\mu_{t_n}^{N})\Big] &\le C_{\mathrm{Eul}} h,
\\
\sup_{n\ge 0}\ \mathbb{E}\Big[\mathsf W_2^2(\mu_{t_n}^{N,h,\mathrm{st}},\mu_{t_n}^{N,h})\Big]
&\le C_{\mathrm{st}} h\Big(\frac1K+\frac1{K_{\mathrm{rep}}}\Big),
\label{eq:assume-unif}
\end{align}
for some constants $C_{\mathrm{Eul}},C_{\mathrm{st}}<\infty$ independent of $N,h,K,K_{\mathrm{rep}}$. Then there exists $C<\infty$ (independent of $n,N,h,K,K_{\mathrm{rep}}$) such that for all $n\ge 0$,
\begin{equation}
\mathbb{E}\Big[\mathsf{W}_2^2\big(\mu_{t_n}^{N,h,\mathrm{st}},\mu_m^{\lambda,\star}\big)\Big]
\le
C\Big(e^{-2\delta t_n}\mathsf{W}_2^2(\mu_0,\mu_m^{\lambda,\star})+\beta_{d,q}(N)+h+\frac{h}{K}+\frac{h}{K_{\mathrm{rep}}}\Big),
\label{eq:end2end-uniform-explicit}
\end{equation}
where $\beta_{d,q}(N)$ is as in \citet[Theorem~1]{fournier2015rate}. In particular,
\begin{equation}
\limsup_{n\to\infty}\ \mathbb{E}\Big[\mathsf{W}_2^2\big(\mu_{t_n}^{N,h,\mathrm{st}},\mu_m^{\lambda,\star}\big)\Big]
\le
C\Big(\beta_{d,q}(N)+h+\frac{h}{K}+\frac{h}{K_{\mathrm{rep}}}\Big).
\label{eq:uniform-lim-sup}
\end{equation}
\end{theorem}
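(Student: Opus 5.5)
The proof will be a four-way triangle decomposition, with every piece controlled uniformly in time. Fix $n\ge0$. I will insert the Euler IPS empirical law $\mu_{t_n}^{N,h}$, the continuous-time IPS empirical law $\mu_{t_n}^N$, and the McKean--Vlasov law $\mu_{t_n}$ between $\mu_{t_n}^{N,h,\mathrm{st}}$ and $\mu_m^{\lambda,\star}$. By the triangle inequality and $(a_1+\dots+a_4)^2\le 4\sum_k a_k^2$, this gives
\begin{equation}
\mathbb E\big[\mathsf W_2^2(\mu_{t_n}^{N,h,\mathrm{st}},\mu_m^{\lambda,\star})\big]
\le 4\Big(\mathbb E\,\mathsf W_2^2(\mu_{t_n}^{N,h,\mathrm{st}},\mu_{t_n}^{N,h})+\mathbb E\,\mathsf W_2^2(\mu_{t_n}^{N,h},\mu_{t_n}^{N})+\mathbb E\,\mathsf W_2^2(\mu_{t_n}^{N},\mu_{t_n})+\mathsf W_2^2(\mu_{t_n},\mu_m^{\lambda,\star})\Big).
\end{equation}

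The first two terms are handled directly by the assumed uniform-in-time bounds \eqref{eq:assume-unif}. They contribute $C_{\mathrm{st}}h(1/K+1/K_{\mathrm{rep}})+C_{\mathrm{Eul}}h$, with constants independent of $n$.

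The third term is the propagation-of-chaos error. Since Assumption~\ref{ass:strong}, $\delta>0$ and $M_q(\mu_0)<\infty$ all hold, Theorem~\ref{thm:PoC-uniform} applies and bounds it by $C\beta_{d,q}(N)$ uniformly in $t\ge0$, so in particular at $t=t_n$.

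The fourth term is the convergence of the McKean--Vlasov law to equilibrium. By Corollary~\ref{cor:stationary-minimiser}, the unique minimiser $\mu_m^{\lambda,\star}$ (unique by Theorem~\ref{thm:uniq}) is a stationary solution of \eqref{eq:MV-PDE}. Its law lies in $\mathcal P_2$, so Lemma~\ref{lem:MV-contractivity}, equation \eqref{eq:MV-to-stationary}, gives $\mathsf W_2^2(\mu_{t_n},\mu_m^{\lambda,\star})\le e^{-2\delta t_n}\mathsf W_2^2(\mu_0,\mu_m^{\lambda,\star})$.

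Collecting the four bounds with $C:=4\max\{1,C_{\mathrm{st}},C_{\mathrm{Eul}},C_{\mathrm{PoC}}\}$ yields \eqref{eq:end2end-uniform-explicit}. Letting $n\to\infty$ kills the exponential term, which gives \eqref{eq:uniform-lim-sup}.

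The main point needing care is that every constant must be genuinely independent of $n$. The first two terms are independent of $n$ by hypothesis. For the third, Theorem~\ref{thm:PoC-uniform} relies on the uniform moment bound of Lemma~\ref{lem:MV-moment-q-uniform}, and that is exactly where $\delta>0$ is used. I will also check a minor technical point: to apply Lemma~\ref{lem:MV-contractivity} with $\tilde\mu_t\equiv\mu_m^{\lambda,\star}$, the stationary law must itself be the law of the unique strong MV solution started from $\mu_m^{\lambda,\star}$. This is supplied by the final claim of Corollary~\ref{cor:stationary-minimiser}.
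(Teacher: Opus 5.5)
Your proposal is correct and matches the paper's proof essentially line for line: the same four-term triangle decomposition with the factor $4$, the assumed uniform-in-time discretisation bounds for the first two terms, Theorem~\ref{thm:PoC-uniform} for the propagation-of-chaos term, and Lemma~\ref{lem:MV-contractivity} with $\tilde\mu_t\equiv\mu_m^{\lambda,\star}$ for the convergence to equilibrium. You also check explicitly, via Corollary~\ref{cor:stationary-minimiser}, that the stationary law is itself a McKean--Vlasov solution law; the paper uses this only implicitly, so your write-up makes that step slightly more careful.
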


\begin{proof}
Fix $n\ge0$. By the triangle inequality and the fact that $(a+b+c+d)^2\le 4(a^2+b^2+c^2+d^2)$, we have
\begin{align*}
\mathbb{E}\big[\mathsf W_2^2(\mu_{t_n}^{N,h,\mathrm{st}},\mu_m^{\lambda,\star})\big]
\le 4\Big(&
\mathbb{E}\big[\mathsf W_2^2(\mu_{t_n}^{N,h,\mathrm{st}},\mu_{t_n}^{N,h})\big]
+\mathbb{E}\big[\mathsf W_2^2(\mu_{t_n}^{N,h},\mu_{t_n}^{N})\big]
+\mathbb{E}\big[\mathsf W_2^2(\mu_{t_n}^{N},\mu_{t_n})\big]
+\mathsf W_2^2(\mu_{t_n},\mu_m^{\lambda,\star})
\Big).
\end{align*}
The first two terms are controlled by \eqref{eq:assume-unif}. The third term is controlled by Theorem~\ref{thm:PoC-uniform}. For the fourth term, Lemma~\ref{lem:MV-contractivity} (with $\tilde\mu_t\equiv\mu_m^{\lambda,\star}$) gives $\mathsf W_2(\mu_{t_n},\mu_m^{\lambda,\star})\le e^{-\delta t_n}\mathsf W_2(\mu_0,\mu_m^{\lambda,\star})$. Combining these bounds yields \eqref{eq:end2end-uniform-explicit}. Taking $\limsup_{n\to\infty}$ gives
\eqref{eq:uniform-lim-sup}.
\end{proof}

\subsubsection{From \texorpdfstring{$\mathsf W_2$}{} to the Utility}
\label{app:eig:utility}

The main results in the previous section provided bounds on the $\mathsf{W}_2$ distance between design measures (e.g., between algorithmic iterates and the target design law). To connect this to BOED performance, we now derive the corresponding bounds on the expected batch utility $\mathcal{J}_m(\mu)=\int G(\xi_{1:m}) \mu^{\otimes m}(\mathrm{d}\xi_{1:m})$.

\begin{proposition}[Lipschitz-type control of $\mathcal{J}_m$ by $\mathsf W_2$]\label{prop:app:Jm-W2}
Suppose that Assumption~\ref{ass:G}(ii) holds. Then for any $\mu,\nu\in\mathcal{P}_2(\Xi)$,
\begin{equation}
\big|\mathcal{J}_m(\mu)-\mathcal{J}_m(\nu)\big|
\le
C_m\Big(1+\sqrt{M_2(\mu)}+\sqrt{M_2(\nu)}\Big) \mathsf W_2(\mu,\nu),
\label{eq:app:Jm-W2}
\end{equation}
for a constant $C_m<\infty$ depending only on $(C_G,m)$.
\end{proposition}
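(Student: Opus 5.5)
The plan is to couple $\mu^{\otimes m}$ and $\nu^{\otimes m}$ through an optimal plan and then bound the change in $G$ along straight lines.

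Let $\gamma\in\Gamma(\mu,\nu)$ be an optimal coupling for $\mathsf W_2$. Draw $(X_j,Y_j)_{j=1}^m$ i.i.d.\ from $\gamma$. Then $X_{1:m}\sim\mu^{\otimes m}$ and $Y_{1:m}\sim\nu^{\otimes m}$, so
\begin{equation}
\mathcal J_m(\mu)-\mathcal J_m(\nu)=\mathbb E\big[G(X_{1:m})-G(Y_{1:m})\big].
\end{equation}
Both expectations are finite by Lemma~\ref{lem:Jm-wellposed}; that lemma uses Assumption~\ref{ass:G}(i), but linear growth of the gradient already gives quadratic growth of $G$, which suffices here.

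Next, apply the mean value theorem along the segment $Z_s:=(1-s)Y_{1:m}+sX_{1:m}$:
\begin{equation}
|G(X_{1:m})-G(Y_{1:m})|\le \sup_{s\in[0,1]}\sum_{j=1}^m\|\nabla_jG(Z_s)\|\,\|X_j-Y_j\|.
\end{equation}
By Assumption~\ref{ass:G}(ii), each $\|\nabla_jG(Z_s)\|$ is at most $C_G(1+\sum_\ell(\|X_\ell\|+\|Y_\ell\|))$, uniformly in $s$, since $\|(1-s)y+sx\|\le\|x\|+\|y\|$. Hence
\begin{equation}
|G(X_{1:m})-G(Y_{1:m})|\le C_G\Big(1+\sum_{\ell=1}^m(\|X_\ell\|+\|Y_\ell\|)\Big)\sum_{j=1}^m\|X_j-Y_j\|.
\end{equation}

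Then take expectations and apply Cauchy--Schwarz to the product. The first factor has
\begin{equation}
\Big(\mathbb E\big(1+\textstyle\sum_\ell(\|X_\ell\|+\|Y_\ell\|)\big)^2\Big)^{1/2}\le 1+m\big(\sqrt{M_2(\mu)}+\sqrt{M_2(\nu)}\big),
\end{equation}
by Minkowski's inequality. The second factor has
\begin{equation}
\Big(\mathbb E\big(\textstyle\sum_j\|X_j-Y_j\|\big)^2\Big)^{1/2}\le m\,\mathsf W_2(\mu,\nu),
\end{equation}
by Minkowski and optimality of $\gamma$. Multiplying gives the bound with $C_m=C_Gm^2$, which depends only on $(C_G,m)$.

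No step looks hard. The one point needing care is that the gradient bound must be applied along the whole segment, which the pointwise norm inequality above handles, and that $G\in C^1$ so the mean value inequality is valid. Both hold under Assumption~\ref{ass:G}. Permutation invariance is not needed.
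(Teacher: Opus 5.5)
Your proposal is correct and takes essentially the same approach as the paper: an optimal coupling with i.i.d.\ copies $(X_j,Y_j)$, a mean-value bound along the segment using the linear growth of $\nabla_j G$, and then Cauchy--Schwarz. The only differences are that you work with $\sum_j\|X_j-Y_j\|$ and Minkowski (giving $C_m=C_G m^2$), where the paper uses the Euclidean norm on $\Xi^m$ and $\mathbb E\|X_{1:m}-Y_{1:m}\|^2=m\,\mathsf W_2^2(\mu,\nu)$, and that your observation that (ii) alone gives quadratic growth of $G$, hence finiteness of $\mathcal J_m$, makes explicit a point the paper leaves implicit.
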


\begin{proof}
Let $\gamma\in\Gamma(\mu,\nu)$ be optimal for $\mathsf W_2(\mu,\nu)$ and let $(X,Y)\sim\gamma$. Take i.i.d. copies $(X_j,Y_j)_{j=1}^m$. We then have that
\begin{equation}
\mathcal J_m(\mu)-\mathcal J_m(\nu)
=\mathbb E\big[G(X_{1:m})-G(Y_{1:m})\big].
\end{equation}
By the fundamental theorem of calculus along the segment joining $Y_{1:m}$ to $X_{1:m}$ and Assumption~\ref{ass:G}(ii), we can bound
\begin{equation}
|G(X_{1:m})-G(Y_{1:m})|
\le C_m\Bigl(1+\sum_{j=1}^m(\|X_j\|+\|Y_j\|)\Bigr) \|X_{1:m}-Y_{1:m}\|,
\end{equation}
for some constant $C_m$ depending only on $(C_G,m)$. Taking expectations and applying the Cauchy--Schwarz inequality then gives
\begin{equation}
|\mathcal J_m(\mu)-\mathcal J_m(\nu)|
\le C_m\Bigl(1+\sqrt{M_2(\mu)}+\sqrt{M_2(\nu)}\Bigr)
\sqrt{\mathbb E\|X_{1:m}-Y_{1:m}\|^2}.
\end{equation}
Finally, identify $\mathbb E\|X_{1:m}-Y_{1:m}\|^2=m \mathsf W_2^2(\mu,\nu)$, and absorbing the factor $\sqrt m$ into the constant $C_m$, we arrive at \eqref{eq:app:Jm-W2}.
\end{proof}

\begin{corollary}[Utility suboptimality from a $\mathsf W_2$ end-to-end bound]
\label{cor:app:utility-gap}
Suppose that Assumption~\ref{ass:G}(ii) holds. Then, if an algorithm produces a random design measure $\widehat{\mu}$ satisfying $\mathbb{E}[\mathsf W_2^2(\widehat{\mu},\mu_m^{\lambda,\star})]\le \varepsilon^2$ and $\mathbb{E}[M_2(\widehat{\mu})]\le M$ for some $M<\infty$, then
\begin{equation}
\label{eq:jm-mu-hat-mu-star-bound}
\mathbb{E}\big[|\mathcal{J}_m(\widehat{\mu})-\mathcal{J}_m(\mu_m^{\lambda,\star})|\big]
\le
C_m\Big(1+\sqrt{M}+\sqrt{M_2(\mu_m^{\lambda,\star})}\Big) \varepsilon.
\end{equation}
In particular, under the assumptions of Theorem~\ref{thm:end2end}, applying the above bound with $\widehat{\mu}=\mu_{T_h}^{N,h,\mathrm{st}}$ and target measure $\mu_{T_h}$ yields
\begin{equation}
\label{eq:jm-mu-th-n-mu-th-bound}
\mathbb{E}\big[|\mathcal{J}_m(\mu_{T_h}^{N,h,\mathrm{st}})-\mathcal{J}_m(\mu_{T_h})|\big]
\le
C_m\Big(1+\sqrt{M}+\sqrt{M_2(\mu_{T_h})}\Big) 
\sqrt{C_T\Big(\beta_{d,q}(N)+h+\tfrac{h}{K}+\tfrac{h}{K_{\mathrm{rep}}}\Big)},
\end{equation}
where $C_T$ is the constant given in \eqref{eq:end2end}, $M:=\sup_{0\le n\le n_T}\mathbb{E}[M_2(\mu_{t_n}^{N,h,\mathrm{st}})]<\infty$ (e.g., by Lemma~\ref{lem:stoch-moment}), and $\beta_{d,q}(N)$ is the empirical-measure rate from \citet[Theorem~1]{fournier2015rate} (with $p=2$). If, in addition, Assumption~\ref{ass:strong} holds and $\delta=\alpha-L_\mu>0$, then applying the above bound with $\smash{\widehat{\mu}=\mu_{T_h}^{N,h,\mathrm{st}}}$ and target measure $\mu_m^{\lambda,\star}$ gives
\begin{align}
\label{eq:jm-mu-th-n-mu-m-lambda-bound}
\mathbb{E}\big[|\mathcal{J}_m(\mu_{T_h}^{N,h,\mathrm{st}})-\mathcal{J}_m(\mu_m^{\lambda,\star})|\big]
&\le
C_m\left(1+\sqrt{M}+\sqrt{M_2(\mu_m^{\lambda,\star})}\right)
\\
&\qquad \times
\sqrt{C_T\Big(\beta_{d,q}(N)+h+\tfrac{h}{K}+\tfrac{h}{K_{\mathrm{rep}}}\Big)+2e^{-2\delta T_h} \mathsf W_2^2(\mu_0,\mu_m^{\lambda,\star})}.
\notag
\end{align}
\end{corollary}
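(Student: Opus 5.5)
The plan is to treat the first display \eqref{eq:jm-mu-hat-mu-star-bound} as a direct consequence of Proposition~\ref{prop:app:Jm-W2}, and then to obtain the two specialised bounds by plugging in the $\mathsf W_2$ estimates of Theorem~\ref{thm:end2end}.

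For the general bound, apply Proposition~\ref{prop:app:Jm-W2} pathwise with $\mu=\widehat\mu$ (random) and $\nu=\mu_m^{\lambda,\star}$ (deterministic). This gives
\[
|\mathcal J_m(\widehat\mu)-\mathcal J_m(\mu_m^{\lambda,\star})|\le C_m\big(1+\sqrt{M_2(\widehat\mu)}+\sqrt{M_2(\mu_m^{\lambda,\star})}\big)\mathsf W_2(\widehat\mu,\mu_m^{\lambda,\star}).
\]
Taking expectations, the deterministic part $1+\sqrt{M_2(\mu_m^{\lambda,\star})}$ multiplies $\mathbb E[\mathsf W_2]\le(\mathbb E[\mathsf W_2^2])^{1/2}\le\varepsilon$ by Jensen. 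The random part is handled by Cauchy--Schwarz,
\[
\mathbb E\big[\sqrt{M_2(\widehat\mu)}\,\mathsf W_2\big]\le (\mathbb E M_2(\widehat\mu))^{1/2}(\mathbb E\mathsf W_2^2)^{1/2}\le\sqrt M\,\varepsilon .
\]
Summing the two pieces yields \eqref{eq:jm-mu-hat-mu-star-bound}. We also need $M_2(\mu_m^{\lambda,\star})<\infty$, which holds because $\mu_m^{\lambda,\star}\in\mathcal P_2(\Xi)$ by Theorem~\ref{thm:exist}.

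For \eqref{eq:jm-mu-th-n-mu-th-bound}, repeat the argument with the deterministic target $\mu_{T_h}$ in place of $\mu_m^{\lambda,\star}$. Set $\varepsilon^2:=C_T(\beta_{d,q}(N)+h+h/K+h/K_{\mathrm{rep}})$, which is admissible by \eqref{eq:end2end}. The moment bound $\mathbb E[M_2(\mu_{T_h}^{N,h,\mathrm{st}})]\le M$ holds with $M$ finite: the empirical second moment is exactly $\frac1N\sum_i\|\xi^i\|^2$, which is controlled by Lemma~\ref{lem:stoch-moment}. In addition, $M_2(\mu_{T_h})<\infty$ by Lemma~\ref{lem:MV-moment-q}.

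For \eqref{eq:jm-mu-th-n-mu-m-lambda-bound}, use the target $\mu_m^{\lambda,\star}$ and take $\varepsilon^2$ to be the right-hand side of \eqref{eq:end2end-stationary-T-fixed}. The only bookkeeping concern is constant matching, i.e.\ whether the factor $2$ in front of $C_T$ is absorbed into the constants. I expect no real obstacle here. The one point to check is that the random measure $\widehat\mu$ is measurable, so that the expectations are well defined; this is clear, since $\widehat\mu$ is an empirical measure of finitely many random particles.
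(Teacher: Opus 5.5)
Your proposal is correct and follows the paper's proof essentially step for step. You apply Proposition~\ref{prop:app:Jm-W2} pathwise, control the deterministic factor with Jensen and the random factor $\sqrt{M_2(\widehat\mu)}$ with Cauchy--Schwarz, and then substitute the two $\mathsf W_2$ bounds of Theorem~\ref{thm:end2end}; the paper likewise absorbs the factor $2$ into $C_T$ in the stationary case.
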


\begin{proof}
From Proposition~\ref{prop:app:Jm-W2}, for each realisation of $\widehat{\mu}$, it holds that
\begin{equation}
|\mathcal{J}_m(\widehat{\mu})-\mathcal{J}_m(\mu_m^{\lambda,\star})|
\le
C_m\Big(1+\sqrt{M_2(\widehat{\mu})}+\sqrt{M_2(\mu_m^{\lambda,\star})}\Big) \mathsf W_2(\widehat{\mu},\mu_m^{\lambda,\star}).
\label{eq:jm-mu-hat-mu-star}
\end{equation} 
Taking expectations and using Jensen's inequality gives $\mathbb{E}[\mathsf W_2(\widehat{\mu},\mu_m^{\lambda,\star})]\le \sqrt{\mathbb{E}[\mathsf W_2^2(\widehat{\mu},\mu_m^{\lambda,\star})]}\le \varepsilon$.
Moreover, by the Cauchy--Schwarz inequality,
\begin{equation}
\mathbb{E} \left[\sqrt{M_2(\widehat{\mu})} \mathsf W_2(\widehat{\mu},\mu_m^{\lambda,\star})\right]
\le
\sqrt{\mathbb{E}[M_2(\widehat{\mu})]} \sqrt{\mathbb{E}[\mathsf W_2^2(\widehat{\mu},\mu_m^{\lambda,\star})]}
\le
\sqrt{M} \varepsilon.
\label{eq:m2-w2}
\end{equation}
Combining \eqref{eq:jm-mu-hat-mu-star} and \eqref{eq:m2-w2} yields the bound  in \eqref{eq:jm-mu-hat-mu-star-bound}. Finally, the bounds in \eqref{eq:jm-mu-th-n-mu-th-bound} and \eqref{eq:jm-mu-th-n-mu-m-lambda-bound} follow directly from \eqref{eq:jm-mu-hat-mu-star-bound}, and the bounds in Theorem~\ref{thm:end2end}.
\end{proof}

\subsubsection{Best-of-\texorpdfstring{$n$}{} extraction from a learned design law}
\label{sec:bon}

The distributional formulation yields a design law on $\Xi^m$, which may be used directly as a randomised policy, or converted into a deterministic batch via a best-of-$n$ (BoN) extraction step. The next results quantify this extraction procedure and connect it to the i.i.d.\ Wasserstein gradient flow developed above.

\begin{proposition}[Best-of-$n$ extraction from an arbitrary design law]
\label{prop:bon-general}
Let $\Xi\subseteq \mathbb R^d$ be Borel, let $G:\Xi^m\to\mathbb R$ be measurable and bounded above, and define $G^\star:=\sup_{\xi_{1:m}\in\Xi^m} G(\xi_{1:m})<\infty$. Let $\nu_m\in\mathcal P(\Xi^m)$, and draw 
$\smash{\xi_{1:m}^{(1)},\dots,\xi_{1:m}^{(n)}\stackrel{\mathrm{i.i.d.}}{\sim}\nu_m}$. Define the best-of-$n$ extractor by $\smash{I_n:=\min\argmax_{1\le r\le n} G(\xi_{1:m}^{(r)})}$ with $\smash{\widehat{\xi}_{1:m}^{(n)}:=\xi_{1:m}^{(I_n)}}$. Finally, for $\varepsilon>0$, define the $\varepsilon$-optimal set $\smash{A_\varepsilon
:=
\{\xi_{1:m}\in\Xi^m:\ G(\xi_{1:m})\ge G^\star-\varepsilon\}}$. Then, for every $\varepsilon>0$, it holds that 
\begin{equation}
\label{eq:bon-exact}
\mathbb P\big(G(\widehat{\xi}_{1:m}^{(n)})\ge G^\star-\varepsilon\big)
=
1-\bigl(1-\nu_m(A_\varepsilon)\bigr)^n.
\end{equation}
Thus, if $\nu_m(A_\varepsilon)>0$, then for all $\delta\in(0,1)$,
\begin{equation}
\label{eq:bon-n-sufficient}
n\ge \frac{\log(1/\delta)}{\nu_m(A_\varepsilon)}
\qquad\Longrightarrow\qquad
\mathbb P\big(G(\widehat{\xi}_{1:m}^{(n)})\ge G^\star-\varepsilon\big)\ge 1-\delta.
\end{equation}
Suppose, in addition, that $\gamma(\nu_m):= G^\star-\int_{\Xi^m} G(\xi_{1:m}) \nu_m(\mathrm d\xi_{1:m})<\infty$. Then
\begin{equation}
\label{eq:bon-gap-identity}
\mathbb E \left[G^\star-G(\widehat{\xi}_{1:m}^{(n)})\right]
=
\int_0^\infty \bigl(1-\nu_m(A_\varepsilon)\bigr)^n \mathrm d\varepsilon.
\end{equation}
Moreover, for every $\varepsilon>0$, it holds that $\nu_m(A_\varepsilon)\ge 1-\min\{1,\frac{\gamma(\nu_m)}{\varepsilon}\}$, and therefore
\begin{equation}
\label{eq:bon-gap-prob}
\mathbb P\big(G(\widehat{\xi}_{1:m}^{(n)})\ge G^\star-\varepsilon\big)
\ge
1-
\left(
\min \left\{1,\frac{\gamma(\nu_m)}{\varepsilon}\right\}
\right)^n.
\end{equation}
\end{proposition}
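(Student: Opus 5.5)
The plan is to use the i.i.d.\ structure of the $n$ candidates together with the tail-integral formula for nonnegative random variables. We set $W_r:=G^\star-G(\xi^{(r)}_{1:m})\ge 0$. These are i.i.d.\ under $\nu_m$ and finite, since $G$ is real-valued and bounded above. By construction of the extractor, $G(\widehat\xi^{(n)}_{1:m})=\max_r G(\xi^{(r)}_{1:m})$. The tie-breaking rule $\min\argmax$ only makes $I_n$ measurable and plays no role in the value. Hence $G^\star-G(\widehat\xi^{(n)}_{1:m})=\min_{1\le r\le n}W_r$.

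For \eqref{eq:bon-exact} we compute the complement. The event $\{G(\widehat\xi^{(n)}_{1:m})<G^\star-\varepsilon\}$ is exactly $\bigcap_r\{\xi^{(r)}_{1:m}\notin A_\varepsilon\}$. By independence its probability is $(1-\nu_m(A_\varepsilon))^n$, and measurability of $A_\varepsilon$ follows from measurability of $G$.

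For \eqref{eq:bon-n-sufficient} we use $1-x\le e^{-x}$, which gives $(1-\nu_m(A_\varepsilon))^n\le \exp(-n\nu_m(A_\varepsilon))\le\delta$ under the stated lower bound on $n$.

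For \eqref{eq:bon-gap-identity} we apply the layer-cake formula $\mathbb E[Y]=\int_0^\infty\mathbb P(Y>\varepsilon)\,\mathrm d\varepsilon$ to $Y=\min_r W_r\ge 0$. This requires one check. The exact formula gives $\mathbb P(Y>\varepsilon)=\mathbb P(G(\widehat\xi^{(n)}_{1:m})<G^\star-\varepsilon)=(1-\nu_m(A_\varepsilon))^n$. The complement of $A_\varepsilon$ is defined with strict inequality, so this matches $\{Y>\varepsilon\}$ exactly. Even if it did not, replacing $\ge$ by $>$ changes the integrand only at countably many $\varepsilon$, because $\varepsilon\mapsto\nu_m(A_\varepsilon)$ is monotone. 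The formula holds in $[0,\infty]$ without any integrability assumption. The hypothesis $\gamma(\nu_m)<\infty$ only ensures finiteness, since $\mathbb E[Y]\le\mathbb E[W_1]=\gamma(\nu_m)$.

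For the final claim we apply Markov's inequality to $W_1\ge 0$: $\nu_m(A_\varepsilon^c)=\mathbb P(W_1>\varepsilon)\le\gamma(\nu_m)/\varepsilon$. A probability is also at most $1$, so $1-\nu_m(A_\varepsilon)\le\min\{1,\gamma(\nu_m)/\varepsilon\}$. Raising this to the $n$th power and inserting it into \eqref{eq:bon-exact} gives \eqref{eq:bon-gap-prob}.

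The only delicate point is the strict versus non-strict inequality bookkeeping in the layer-cake step. Otherwise every step is elementary.
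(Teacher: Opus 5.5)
Your proposal is correct and follows essentially the same route as the paper's proof: you identify the complement event as an intersection of independent events, use $1-u\le e^{-u}$, apply the tail-integral formula to the nonnegative gap $G^\star-G(\widehat{\xi}_{1:m}^{(n)})$, and finish with Markov's inequality. Your remark that the layer-cake identity holds in $[0,\infty]$ without using $\gamma(\nu_m)<\infty$ is a small sharpening of the paper's step, which invokes domination by the integrable variable $G^\star-G(\xi_{1:m}^{(1)})$.
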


\begin{proof}
Since $G(\widehat\xi_{1:m}^{(n)})=\max_{1\le r\le n}G(\xi_{1:m}^{(r)})$, we have that $\{G(\widehat\xi_{1:m}^{(n)})<G^\star-\varepsilon\}
=\cap_{r=1}^n\{\xi_{1:m}^{(r)}\notin A_\varepsilon\}$. The samples are i.i.d., and thus
\begin{equation}
\mathbb P\big(G(\widehat\xi_{1:m}^{(n)})<G^\star-\varepsilon\big)
=(1-\nu_m(A_\varepsilon))^n,
\end{equation}
which is precisely \eqref{eq:bon-exact}. The sufficient condition in \eqref{eq:bon-n-sufficient} follows immediately from the standard inequality $1-u\le e^{-u}$.

For the expectation identity, let $\Delta_n:=G^\star-G(\widehat\xi_{1:m}^{(n)})\ge0$. Since $\Delta_n\le G^\star-G(\xi_{1:m}^{(1)})$ and the latter is integrable,
\begin{equation}
\mathbb E[\Delta_n]
=\int_0^\infty \mathbb P(\Delta_n>\varepsilon) d\varepsilon
=\int_0^\infty (1-\nu_m(A_\varepsilon))^n d\varepsilon,
\end{equation}
which is the result in \eqref{eq:bon-gap-identity}. Finally, let $Y:=G^\star-G(\xi_{1:m})$ and $\xi_{1:m}\sim\nu_m$. We then have $\mathbb E[Y]=\gamma(\nu_m)$ and, by Markov's inequality,
\begin{equation}
1-\nu_m(A_\varepsilon)=\mathbb P(Y>\varepsilon)
\le \min\Bigl\{1,\frac{\gamma(\nu_m)}{\varepsilon}\Bigr\}
\end{equation}
Substituting this into \eqref{eq:bon-exact} yields \eqref{eq:bon-gap-prob}.
\end{proof}

\begin{corollary}[Best-of-$n$ extraction with approximate utility scores]
\label{cor:bon-noisy}
Suppose that the assumptions of Proposition~\ref{prop:bon-general} hold. Let $\smash{\widehat G_1,\dots,\widehat G_n}$ be measurable scores, and define $\smash{\widetilde I_n:=\min\argmax_{1\le r\le n}\widehat G_r}$ with $\smash{\widetilde{\xi}_{1:m}^{(n)}:=\xi_{1:m}^{(\widetilde I_n)}}$. Suppose that for some $\eta>0$ and $\delta_{\mathrm{score}}\in[0,1]$, $\smash{\mathbb P(\max_{1\le r\le n}\big|\widehat G_r-G(\xi_{1:m}^{(r)})\big|\le \eta)\ge 1-\delta_{\mathrm{score}}}$. Then, for every $\varepsilon>0$, it holds that
\begin{equation}
\label{eq:bon-noisy-main}
\mathbb P \left(
G(\widetilde{\xi}_{1:m}^{(n)})\ge G^\star-\varepsilon-2\eta
\right)
\ge
1-\delta_{\mathrm{score}}-\bigl(1-\nu_m(A_\varepsilon)\bigr)^n.
\end{equation}
\end{corollary}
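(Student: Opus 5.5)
The plan is a union-bound argument on two events: a good sample appears among the $n$ draws, and the scores are uniformly accurate. Define
\begin{equation}
E_{\mathrm{score}}:=\Big\{\max_{1\le r\le n}\big|\widehat G_r-G(\xi_{1:m}^{(r)})\big|\le \eta\Big\},
\qquad
E_{\mathrm{good}}:=\bigcup_{r=1}^n\big\{\xi_{1:m}^{(r)}\in A_\varepsilon\big\}.
\end{equation}
By Proposition~\ref{prop:bon-general}, specifically \eqref{eq:bon-exact} (or directly from independence of the draws), $\mathbb P(E_{\mathrm{good}}^c)=(1-\nu_m(A_\varepsilon))^n$. By hypothesis, $\mathbb P(E_{\mathrm{score}}^c)\le\delta_{\mathrm{score}}$. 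Hence $\mathbb P(E_{\mathrm{score}}\cap E_{\mathrm{good}})\ge 1-\delta_{\mathrm{score}}-(1-\nu_m(A_\varepsilon))^n$. No independence between the scores and the samples is needed.

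It then remains to show the deterministic inclusion $E_{\mathrm{score}}\cap E_{\mathrm{good}}\subseteq\{G(\widetilde\xi_{1:m}^{(n)})\ge G^\star-\varepsilon-2\eta\}$. On this event, choose an index $r_0$ with $G(\xi_{1:m}^{(r_0)})\ge G^\star-\varepsilon$. Since $\widetilde I_n$ maximises the scores, and the scores are within $\eta$ of the true utilities,
\begin{equation}
G(\xi_{1:m}^{(\widetilde I_n)})\ge \widehat G_{\widetilde I_n}-\eta\ge \widehat G_{r_0}-\eta\ge G(\xi_{1:m}^{(r_0)})-2\eta\ge G^\star-\varepsilon-2\eta .
\end{equation}
This gives \eqref{eq:bon-noisy-main}.

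There is no real obstacle. The only points to check are measurability of $\widetilde I_n$, which holds because it is a tie-broken argmax over finitely many measurable scores, and that the bound is trivially true whenever its right-hand side is negative.
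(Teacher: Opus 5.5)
Your proposal is correct and follows essentially the same route as the paper's proof. The paper likewise intersects the score-accuracy event with the event that some sample lies in $A_\varepsilon$, whose complement has probability $(1-\nu_m(A_\varepsilon))^n$ by \eqref{eq:bon-exact}. It then proves the same deterministic inclusion via the chain $G(\widetilde\xi_{1:m}^{(n)})\ge \widehat G_{\widetilde I_n}-\eta\ge \widehat G_{r_\varepsilon}-\eta\ge G^\star-\varepsilon-2\eta$, and concludes with the union bound.
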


\begin{proof}
Define the events $\smash{
E_\eta
:=
\{
\max_{1\le r\le n}\big|\widehat G_r-G(\xi_{1:m}^{(r)})\big|\le \eta
\}}$ and $\smash{
S_\varepsilon
:=
\{
\max_{1\le r\le n} G(\xi_{1:m}^{(r)})\ge G^\star-\varepsilon
\}}$. By \eqref{eq:bon-exact}, we have
\begin{equation}
\mathbb P(S_\varepsilon)=1-\bigl(1-\nu_m(A_\varepsilon)\bigr)^n.
\end{equation}
Suppose that both $E_\eta$ and $S_\varepsilon$ occur. Then there exists $\smash{r_\varepsilon\in\{1,\dots,n\}}$ such that $\smash{G(\xi_{1:m}^{(r_\varepsilon)})\ge G^\star-\varepsilon}$. Since $\smash{\widetilde I_n}$ maximises the approximate scores, we have $\smash{\widehat G_{\widetilde I_n}\ge \widehat G_{r_\varepsilon}}$. On the event $E_\eta$, this implies that
\begin{equation}
\widehat G_{\widetilde I_n}
\ge
\widehat G_{r_\varepsilon}
\ge
G(\xi_{1:m}^{(r_\varepsilon)})-\eta
\ge
G^\star-\varepsilon-\eta.
\end{equation}
It follows, once more conditioning on the event $E_\eta$, that 
\begin{equation}
G(\widetilde{\xi}_{1:m}^{(n)})
\ge
\widehat G_{\widetilde I_n}-\eta
\ge
G^\star-\varepsilon-2\eta.
\end{equation}
We thus have that $\smash{E_\eta\cap S_\varepsilon
\subseteq \{ G(\widetilde{\xi}_{1:m}^{(n)})\ge G^\star-\varepsilon-2\eta\}}$. Taking probabilities and using the union bound, we arrive at
\begin{equation}
\mathbb P\big(
G(\widetilde{\xi}_{1:m}^{(n)})\ge G^\star-\varepsilon-2\eta
\big)
\ge
\mathbb P(E_\eta\cap S_\varepsilon)
\ge
1-\mathbb P(E_\eta^c)-\mathbb P(S_\varepsilon^c),
\end{equation}
which is exactly \eqref{eq:bon-noisy-main}.
\end{proof}

\begin{proposition}[Best-of-$n$ extraction along the i.i.d.\ Wasserstein gradient flow]
\label{prop:bon-wgf}
Suppose that Assumptions~\ref{ass:V}, \ref{ass:G}, \ref{ass:r}, and \ref{ass:strong} hold. Let $\smash{(\mu_t)_{t\ge 0}}$ be the law of the unique strong solution to the McKean--Vlasov SDE \eqref{eq:MV-SDE}, started from some initial law $\smash{\mu_0\in\mathcal P_2(\Xi)}$. Let $\smash{\mu_m^{\lambda,\star}}$ denote the unique minimiser of $\mathcal F_m^{\lambda,\mathrm{rep}}$; by Corollary~\ref{cor:stationary-minimiser}, this law is stationary for \eqref{eq:MV-PDE}. Suppose moreover that
\begin{equation}
\delta:=\alpha-L_\mu>0,
\qquad\text{where}\qquad
\alpha=\lambda\kappa-\eta L_r - mL_G,\quad L_\mu \text{ is as in Lemma~\ref{lem:drift-Lip}},
\label{eq:delta-recall}
\end{equation}
Define $G^\star:=\sup_{\xi_{1:m}\in\Xi^m}G(\xi_{1:m})<\infty$ and $\Delta_{\mathrm{rel}}
:=
G^\star-\mathcal J_m(\mu_m^{\lambda,\star})\ge 0$. In addition, define $B_{\mathrm{BoN}}
:=
L_{\mathrm{BoN}} \mathsf W_2(\mu_0,\mu_m^{\lambda,\star})$, where
\begin{equation}
\label{eq:l-bon-def}
L_{\mathrm{BoN}}
:=
C_m\Big(
1+\sqrt{M_2(\mu_m^{\lambda,\star})}
+\sqrt{2M_2(\mu_m^{\lambda,\star})+2\mathsf W_2^2(\mu_0,\mu_m^{\lambda,\star})}
\Big),
\end{equation}
and where $C_m$ is the constant from Proposition~\ref{prop:app:Jm-W2}. Fix $t\ge 0$, draw ${\xi_{1:m}^{(1)},\dots,\xi_{1:m}^{(n)}\stackrel{\mathrm{i.i.d.}}{\sim}\mu_t^{\otimes m}}$, and define 
\begin{equation}
I_n(t):=\min\argmax_{1\le r\le n} G(\xi_{1:m}^{(r)}), \qquad \widehat{\xi}_{1:m,t}^{(n)}:=\xi_{1:m}^{(I_n(t))}. 
\end{equation}
Then, for every $\varepsilon>0$, it holds that
\begin{equation}
\label{eq:bon-wgf-main}
\mathbb P \left(
G(\widehat{\xi}_{1:m,t}^{(n)})\ge G^\star-\varepsilon
\right)
\ge
1-
\left(
\min \left\{
1, 
\frac{
\Delta_{\mathrm{rel}}
+
L_{\mathrm{BoN}} \mathsf W_2(\mu_t,\mu_m^{\lambda,\star})
}{\varepsilon}
\right\}
\right)^n.
\end{equation}
Consequently, by Lemma~\ref{lem:MV-contractivity},
\begin{equation}
\label{eq:bon-wgf-explicit}
\mathbb P \left(
G(\widehat{\xi}_{1:m,t}^{(n)})\ge G^\star-\varepsilon
\right)
\ge
1-
\left(
\min \left\{
1, 
\frac{\Delta_{\mathrm{rel}}+B_{\mathrm{BoN}}e^{-\delta t}}{\varepsilon}
\right\}
\right)^n.
\end{equation}
\end{proposition}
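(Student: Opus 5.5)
The plan is to reduce the statement to Proposition~\ref{prop:bon-general}, applied with the product law $\nu_m=\mu_t^{\otimes m}$. That result gives
\[
\mathbb P\bigl(G(\widehat{\xi}_{1:m,t}^{(n)})\ge G^\star-\varepsilon\bigr)\ge 1-\bigl(\min\{1,\gamma(\mu_t^{\otimes m})/\varepsilon\}\bigr)^n,
\]
where $\gamma(\mu_t^{\otimes m})=G^\star-\mathcal J_m(\mu_t)$. Since $u\mapsto 1-(\min\{1,u/\varepsilon\})^n$ is nonincreasing in $u$, it suffices to prove
\[
\gamma(\mu_t^{\otimes m})\le \Delta_{\mathrm{rel}}+L_{\mathrm{BoN}}\mathsf W_2(\mu_t,\mu_m^{\lambda,\star}).
\]
Three preliminaries are needed. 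First, $G^\star<\infty$: this is part of the hypotheses through the definition, and it is what Proposition~\ref{prop:bon-general} requires. Second, $\mu_t\in\mathcal P_2(\Xi)$ by Theorem~\ref{thm:MV-wellposed}. Third, $\mathcal J_m(\mu_t)$ is finite by Lemma~\ref{lem:Jm-wellposed}, so $\gamma<\infty$.

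For the key inequality, write
\[
G^\star-\mathcal J_m(\mu_t)=\Delta_{\mathrm{rel}}+\bigl(\mathcal J_m(\mu_m^{\lambda,\star})-\mathcal J_m(\mu_t)\bigr)
\]
and bound the bracket using Proposition~\ref{prop:app:Jm-W2}. This gives
\[
C_m\Bigl(1+\sqrt{M_2(\mu_m^{\lambda,\star})}+\sqrt{M_2(\mu_t)}\Bigr)\mathsf W_2(\mu_t,\mu_m^{\lambda,\star}).
\]
It remains to bound $M_2(\mu_t)$ uniformly in $t$ by quantities fixed at time zero. Since $M_2(\mu)=\mathsf W_2^2(\mu,\delta_0)$, the triangle inequality together with $(a+b)^2\le 2a^2+2b^2$ gives
\[
M_2(\mu_t)\le 2M_2(\mu_m^{\lambda,\star})+2\mathsf W_2^2(\mu_t,\mu_m^{\lambda,\star}).
\]
By Lemma~\ref{lem:MV-contractivity}, which uses $\delta>0$ and the stationarity of $\mu_m^{\lambda,\star}$ from Corollary~\ref{cor:stationary-minimiser}, we have $\mathsf W_2(\mu_t,\mu_m^{\lambda,\star})\le e^{-\delta t}\mathsf W_2(\mu_0,\mu_m^{\lambda,\star})\le \mathsf W_2(\mu_0,\mu_m^{\lambda,\star})$. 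Hence the prefactor is at most $L_{\mathrm{BoN}}$, which establishes \eqref{eq:bon-wgf-main}. The only subtlety in this step is to use contractivity for the moment bound as well as for the distance, so that the constant is independent of $t$. That $\mu_m^{\lambda,\star}\in\mathcal P_2$ follows from Theorem~\ref{thm:exist} and the uniqueness in Theorem~\ref{thm:uniq}.

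Finally, for \eqref{eq:bon-wgf-explicit}, insert $\mathsf W_2(\mu_t,\mu_m^{\lambda,\star})\le e^{-\delta t}\mathsf W_2(\mu_0,\mu_m^{\lambda,\star})$ into the numerator once more. This gives $L_{\mathrm{BoN}}\mathsf W_2(\mu_t,\mu_m^{\lambda,\star})\le B_{\mathrm{BoN}}e^{-\delta t}$, and the same monotonicity of the lower bound in the numerator finishes the proof. The main obstacle is bookkeeping rather than depth: in particular, checking that the constant $C_m$ from Proposition~\ref{prop:app:Jm-W2} enters exactly as in \eqref{eq:l-bon-def}.
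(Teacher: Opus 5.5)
Your proposal is correct and follows essentially the same route as the paper: apply Proposition~\ref{prop:bon-general} to $\mu_t^{\otimes m}$, split $\gamma_t=G^\star-\mathcal J_m(\mu_t)$ into $\Delta_{\mathrm{rel}}$ plus a $\mathcal J_m$-difference controlled by Proposition~\ref{prop:app:Jm-W2}, and use Lemma~\ref{lem:MV-contractivity} twice — once for the $t$-uniform bound on $M_2(\mu_t)$ and once for the final $e^{-\delta t}$ decay. Your remarks on the monotonicity of the lower bound in its numerator and on the finiteness of $\gamma$ only make explicit steps that the paper leaves implicit.
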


\begin{proof}
We begin by applying Proposition~\ref{prop:bon-general} to $\nu_t:=\mu_t^{\otimes m}$. In particular, from the bound in \eqref{eq:bon-gap-prob}, we have that
\begin{equation}
\mathbb P (G(\widehat\xi_{1:m,t}^{(n)})\ge G^\star-\varepsilon)
\ge 1-\big(\min\{1,\gamma_t/\varepsilon\}\big)^n, \qquad \gamma_t:=G^\star-\mathcal J_m(\mu_t).
\end{equation}
We now seek an upper bound for $\gamma_t$. First observe that
\begin{equation}
\gamma_t
=\Delta_{\mathrm{rel}}+\mathcal J_m(\mu_m^{\lambda,\star})-\mathcal J_m(\mu_t)
\le \Delta_{\mathrm{rel}}+|\mathcal J_m(\mu_t)-\mathcal J_m(\mu_m^{\lambda,\star})|.
\end{equation}
In addition, by Proposition~\ref{prop:app:Jm-W2}, we have that
\begin{equation}
\big|\mathcal J_m(\mu_t)-\mathcal J_m(\mu_m^{\lambda,\star})\big|
\le C_m\Bigl(1+\sqrt{M_2(\mu_t)}+\sqrt{M_2(\mu_m^{\lambda,\star})}\Bigr)
\mathsf W_2(\mu_t,\mu_m^{\lambda,\star}).
\end{equation}
Meanwhile, via the triangle inequality and Lemma~\ref{lem:MV-contractivity}, 
\begin{equation}
M_2(\mu_t)
\le 2M_2(\mu_m^{\lambda,\star})+2\mathsf W_2^2(\mu_t,\mu_m^{\lambda,\star})
\le 2M_2(\mu_m^{\lambda,\star})+2\mathsf W_2^2(\mu_0,\mu_m^{\lambda,\star}),
\end{equation}
Combining these three displays, we thus have that
\begin{equation}
\gamma_t\le \Delta_{\mathrm{rel}}+L_{\mathrm{BoN}} \mathsf W_2(\mu_t,\mu_m^{\lambda,\star}),
\end{equation}
with $L_{\mathrm{BoN}}$ is the constant defined in \eqref{eq:l-bon-def}. This proves \eqref{eq:bon-wgf-main}. Finally, \eqref{eq:bon-wgf-explicit} follows directly from Lemma~\ref{lem:MV-contractivity}.
\end{proof}

\begin{corollary}[Best-of-$n$ extraction with approximate scores along the i.i.d.\ Wasserstein gradient flow]
\label{cor:bon-noisy-wgf}
Suppose that the assumptions of Proposition~\ref{prop:bon-wgf} hold. Fix $t\ge 0$, draw $\smash{\xi_{1:m}^{(1)},\dots,\xi_{1:m}^{(n)}\stackrel{\mathrm{i.i.d.}}{\sim}\mu_t^{\otimes m}}$, let ${\widehat G_1,\dots,\widehat G_n}$ be measurable scores, and define 
\begin{equation}
\widetilde I_n(t):=\min\argmax_{1\le r\le n}\widehat G_r, \qquad \widetilde{\xi}_{1:m,t}^{(n)}:=\xi_{1:m}^{(\widetilde I_n(t))}.
\end{equation}
Suppose that for some $\eta>0$ and $\delta_{\mathrm{score}}\in[0,1]$, we have $\mathbb P (\max_{1\le r\le n}\big|\widehat G_r-G(\xi_{1:m}^{(r)})\big|\le \eta)\ge 1-\delta_{\mathrm{score}}$. Then, for every $\varepsilon>0$, it holds that
\begin{equation}
\label{eq:bon-noisy-wgf-main}
\mathbb P \left(
G(\widetilde{\xi}_{1:m,t}^{(n)})\ge G^\star-\varepsilon-2\eta
\right)
\ge
1-\delta_{\mathrm{score}}
-
\left(
\min \left\{
1, 
\frac{
\Delta_{\mathrm{rel}}
+
L_{\mathrm{BoN}} \mathsf W_2(\mu_t,\mu_m^{\lambda,\star})
}{\varepsilon}
\right\}
\right)^n.
\end{equation}
Consequently,
\begin{equation}
\label{eq:bon-noisy-wgf-explicit}
\mathbb P \left(
G(\widetilde{\xi}_{1:m,t}^{(n)})\ge G^\star-\varepsilon-2\eta
\right)
\ge
1-\delta_{\mathrm{score}}
-
\left(
\min \left\{
1, 
\frac{\Delta_{\mathrm{rel}}+B_{\mathrm{BoN}}e^{-\delta t}}{\varepsilon}
\right\}
\right)^n.
\end{equation}
\end{corollary}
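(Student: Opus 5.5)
The plan is to combine Corollary~\ref{cor:bon-noisy} (best-of-$n$ with approximate scores for an arbitrary design law) with the bound on the optimality gap $\gamma_t$ already obtained in the proof of Proposition~\ref{prop:bon-wgf}. Fix $t\ge0$ and set $\nu_t:=\mu_t^{\otimes m}\in\mathcal P(\Xi^m)$. The samples $\xi_{1:m}^{(r)}$ are i.i.d.\ from $\nu_t$, the scores $\widehat G_r$ satisfy the uniform accuracy hypothesis with parameters $(\eta,\delta_{\mathrm{score}})$, and $G$ is bounded above with $G^\star<\infty$. Corollary~\ref{cor:bon-noisy} therefore applies directly and gives, for every $\varepsilon>0$,
\begin{equation}
\mathbb P\big(G(\widetilde\xi_{1:m,t}^{(n)})\ge G^\star-\varepsilon-2\eta\big)\ge 1-\delta_{\mathrm{score}}-\bigl(1-\nu_t(A_\varepsilon)\bigr)^n .
\end{equation}

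The next step is to control $1-\nu_t(A_\varepsilon)$. As in the last part of Proposition~\ref{prop:bon-general}, Markov's inequality applied to $Y=G^\star-G(\xi_{1:m})\ge0$ under $\nu_t$ gives $1-\nu_t(A_\varepsilon)\le\min\{1,\gamma_t/\varepsilon\}$, where $\gamma_t=G^\star-\mathcal J_m(\mu_t)$. Here $\gamma_t<\infty$ because $\mathcal J_m(\mu_t)$ is finite: $\mu_t\in\mathcal P_2(\Xi)$ by Theorem~\ref{thm:MV-wellposed}, and Lemma~\ref{lem:Jm-wellposed} then gives finiteness.

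Next I reuse verbatim the chain from the proof of Proposition~\ref{prop:bon-wgf}. First, $\gamma_t\le\Delta_{\mathrm{rel}}+|\mathcal J_m(\mu_t)-\mathcal J_m(\mu_m^{\lambda,\star})|$. Proposition~\ref{prop:app:Jm-W2} then bounds the difference by a moment-weighted multiple of $\mathsf W_2(\mu_t,\mu_m^{\lambda,\star})$. Finally, the moment bound $M_2(\mu_t)\le 2M_2(\mu_m^{\lambda,\star})+2\mathsf W_2^2(\mu_0,\mu_m^{\lambda,\star})$, obtained from Lemma~\ref{lem:MV-contractivity}, yields $\gamma_t\le\Delta_{\mathrm{rel}}+L_{\mathrm{BoN}}\mathsf W_2(\mu_t,\mu_m^{\lambda,\star})$.

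Since $u\mapsto(\min\{1,u/\varepsilon\})^n$ is nondecreasing, substituting this bound gives \eqref{eq:bon-noisy-wgf-main}. Inserting the contraction estimate $\mathsf W_2(\mu_t,\mu_m^{\lambda,\star})\le e^{-\delta t}\mathsf W_2(\mu_0,\mu_m^{\lambda,\star})$ from Lemma~\ref{lem:MV-contractivity} turns $L_{\mathrm{BoN}}\mathsf W_2(\mu_t,\mu_m^{\lambda,\star})$ into $B_{\mathrm{BoN}}e^{-\delta t}$, which gives \eqref{eq:bon-noisy-wgf-explicit}.

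No step is genuinely difficult. The only points needing care are bookkeeping: checking that the hypotheses of Corollary~\ref{cor:bon-noisy} hold for $\nu_t$ (boundedness of $G$ is inherited from the standing assumptions of Proposition~\ref{prop:bon-wgf}), and tracking the monotonicity in $u$ that allows the bound on $\gamma_t$ to be substituted. I also note that the repulsion strength and the score tolerance are both written $\eta$ in the paper; they play separate roles here and must not be conflated.
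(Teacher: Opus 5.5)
Your proposal is correct and matches the paper's proof: apply Corollary~\ref{cor:bon-noisy} with $\nu_m=\mu_t^{\otimes m}$, then bound $(1-\mu_t^{\otimes m}(A_\varepsilon))^n$ using the estimate $\gamma_t\le\Delta_{\mathrm{rel}}+L_{\mathrm{BoN}}\mathsf W_2(\mu_t,\mu_m^{\lambda,\star})$ from Proposition~\ref{prop:bon-wgf}, followed by Lemma~\ref{lem:MV-contractivity}. The only cosmetic difference is that you re-run the Markov-plus-$\mathsf W_2$ chain from inside that proposition's proof, whereas the paper reads the bound off its statement via the exact identity \eqref{eq:bon-exact}.
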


\begin{proof}
The proof is an immediate consequence of Corollary~\ref{cor:bon-noisy} and Proposition~\ref{prop:bon-wgf}. In particular, we apply Corollary~\ref{cor:bon-noisy} with $\smash{\nu_m=\mu_t^{\otimes m}}$, and then bound $\smash{(1-\mu_t^{\otimes m}(A_\varepsilon))^n}$ using Proposition~\ref{prop:bon-wgf}.
\end{proof}

\begin{remark}[EVI formulation]
\label{rem:bon-evi}
For the $\mathrm{EVI}_\alpha$ gradient flow in Theorem~\ref{thm:EVI}, an identical argument to the one used in the proof of Proposition~\ref{prop:bon-wgf}, together with the quadratic-growth bound \eqref{eq:quad-growth} and the exponential free-energy decay from Theorem~\ref{thm:EVI}, yields in place of \eqref{eq:bon-wgf-main}
\begin{equation}
\mathbb P \left(
G(\widehat{\xi}_{1:m,t}^{(n)})\ge G^\star-\varepsilon
\right)
\ge
1-
\left(
\min \left\{
1, 
\frac{
\Delta_{\mathrm{rel}}
+
L_{\mathrm{BoN}}
\sqrt{\frac{2}{\alpha}\bigl(\mathcal F_m^{\lambda,\mathrm{rep}}(\mu_t)- \mathcal F_m^{\lambda,\mathrm{rep}}(\mu_m^{\lambda,\star})\bigr)}
}{\varepsilon}
\right\}
\right)^n,
\end{equation}
under the weaker condition $\alpha=\lambda\kappa-\eta L_r - mL_G>0$, and the assumption that $\smash{\mathcal F_m^{\lambda,\mathrm{rep}}(\mu_0)<\infty}$. Consequently, defining $\widetilde B_{\mathrm{BoN}}
:=
L_{\mathrm{BoN}}
\sqrt{\frac{2}{\alpha}\bigl(\mathcal F_m^{\lambda,\mathrm{rep}}(\mu_0)-\mathcal  F_m^{\lambda,\mathrm{rep}}(\mu_m^{\lambda,\star}) \bigr)}$, one obtains in place of \eqref{eq:bon-wgf-explicit}
\begin{equation}
\mathbb P \left(
G(\widehat{\xi}_{1:m,t}^{(n)})\ge G^\star-\varepsilon
\right)
\ge
1-
\Big(
\min \Big\{
1, 
\frac{\Delta_{\mathrm{rel}}+\widetilde B_{\mathrm{BoN}}e^{-\alpha t}}{\varepsilon}
\Big\}
\Big)^n.
\end{equation}
Similarly, an analogous modification yields the corresponding version of \eqref{eq:bon-noisy-wgf-main} and \eqref{eq:bon-noisy-wgf-explicit} for approximate utility scores.
\end{remark}

\subsubsection{Zero-temperature limits}
\label{sec:zero-temp-iid}

For the i.i.d.\ family, convergence to the \emph{true} joint maximiser can only be expected in the non-repulsive case $\eta=0$. In this subsection we therefore consider
\begin{equation}
\mathcal F_m^\lambda(\mu):=-\mathcal J_m(\mu)+\lambda \mathrm{KL}(\mu\|\rho),
\qquad \mu\in\mathcal P_2(\Xi).
\end{equation}

\begin{proposition}[Zero-temperature limit in the i.i.d.\ family under a diagonal maximiser]
\label{prop:iid-zero-temp}
Suppose that Assumptions~\ref{ass:V} and \ref{ass:G} hold. Suppose, in addition, that there exists $\bar\xi^\star\in\Xi$ such that $G(\bar\xi^\star,\dots,\bar\xi^\star)
=
G_m^\star
:=
\max_{\xi_{1:m}\in\Xi^m}G(\xi_{1:m})$. Then, for any $\mu_m^{\lambda,\star}\in\argmin_{\mu\in\mathcal{P}_2(\Xi)}\mathcal{F}_m^{\lambda}$, as $\lambda\downarrow 0$, 
\begin{equation}
\label{eq:limit-1-iid}
\mathcal F_m^\lambda(\mu_m^{\lambda,\star})
\longrightarrow
-G_m^\star, \qquad
\mathcal J_m(\mu_m^{\lambda,\star})
\longrightarrow
G_m^\star.
\end{equation}
If, in addition, $\bar\xi^{\star}$ is isolated in the sense that for every neighbourhood $U\ni\bar\xi^\star$, it holds that $G_m^\star-\sup_{\xi_{1:m}\notin U^m}G(\xi_{1:m})>0$,  then
\begin{equation}
\label{eq:limit-2-iid}
\mu_m^{\lambda,\star}\Rightarrow \delta_{\bar\xi^\star}
\qquad\text{as }\lambda\downarrow0.
\end{equation}
\end{proposition}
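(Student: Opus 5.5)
The plan mirrors the proof of Proposition~\ref{prop:mf-zero-temp}, with the product neighbourhood replaced by a diagonal one. Existence of $\mu_m^{\lambda,\star}$ for each $\lambda>0$ is given by Theorem~\ref{thm:exist} with $\eta=0$; the argument works for any choice of minimiser.

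\emph{Lower bound.} Since $\mathcal J_m(\mu)\le G_m^\star$ for every $\mu$, and $\mathrm{KL}\ge0$, we have $\mathcal F_m^\lambda(\mu)\ge -G_m^\star$ for all $\mu$. Hence
\[
\mathcal F_m^\lambda(\mu_m^{\lambda,\star})\ge -G_m^\star .
\]

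\emph{Upper bound.} Fix $\varepsilon>0$. By continuity of $G$ at $(\bar\xi^\star,\dots,\bar\xi^\star)$, choose a bounded open $U\ni\bar\xi^\star$ with $G\ge G_m^\star-\varepsilon$ on $U^m$. Define the competitor
\[
\bar\mu^\varepsilon:=\rho(\cdot\mid U),
\]
which lies in $\mathcal P_2(\Xi)$ and satisfies $\mathrm{KL}(\bar\mu^\varepsilon\|\rho)=\log(1/\rho(U))<\infty$, since $\rho$ has a positive density. The key point is that $(\bar\mu^\varepsilon)^{\otimes m}$ is supported on $U^m$. This is exactly where the diagonal hypothesis is needed: an i.i.d.\ law can only concentrate near diagonal points. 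It follows that
\[
\mathcal F_m^\lambda(\mu_m^{\lambda,\star})\le\mathcal F_m^\lambda(\bar\mu^\varepsilon)\le -(G_m^\star-\varepsilon)+\lambda\log(1/\rho(U)).
\]
Letting $\lambda\downarrow0$ and then $\varepsilon\downarrow0$ gives the first limit in \eqref{eq:limit-1-iid}. For the second, non-negativity of KL gives
\[
-\mathcal J_m(\mu_m^{\lambda,\star})\le \mathcal F_m^\lambda(\mu_m^{\lambda,\star})\to -G_m^\star .
\]
Combined with $\mathcal J_m\le G_m^\star$, this yields $\mathcal J_m(\mu_m^{\lambda,\star})\to G_m^\star$.

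\emph{Concentration.} Fix an open $U\ni\bar\xi^\star$ and set $\eta_U:=G_m^\star-\sup_{(U^m)^c}G>0$, which is positive by the isolation assumption. Splitting $\Xi^m$ into $U^m$ and its complement gives
\[
G_m^\star-\mathcal J_m(\mu_m^{\lambda,\star})\ge \eta_U\bigl(1-\mu_m^{\lambda,\star}(U)^m\bigr).
\]
The left side tends to $0$, so $\mu_m^{\lambda,\star}(U)^m\to1$ and hence $\mu_m^{\lambda,\star}(U)\to1$. Taking $U$ to be the open balls around $\bar\xi^\star$, every closed $F\not\ni\bar\xi^\star$ misses some such ball, so $\mu_m^{\lambda,\star}(F)\to0$. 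By the Portmanteau theorem this gives $\mu_m^{\lambda,\star}\Rightarrow\delta_{\bar\xi^\star}$.

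The main obstacle is the upper bound: we must exhibit an admissible i.i.d.\ competitor with finite entropy whose utility is near $G_m^\star$. This is the only step that uses the diagonal assumption, and it is resolved by conditioning $\rho$ on a small neighbourhood of $\bar\xi^\star$. Continuity of $G$ and the existence of the maximum are supplied by Assumption~\ref{ass:G} and the hypothesis of the proposition.
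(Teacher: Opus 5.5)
Your proposal is correct and follows the paper's proof essentially step for step: the lower bound $\mathcal F_m^\lambda\ge -G_m^\star$, the competitor $\rho(\cdot\mid U)$ on a neighbourhood with $G\ge G_m^\star-\varepsilon$ on $U^m$ for the matching upper bound, and the estimate $G_m^\star-\mathcal J_m(\mu_m^{\lambda,\star})\ge\eta_U\bigl(1-\mu_m^{\lambda,\star}(U)^m\bigr)$ for concentration. Your additions only spell out what the paper leaves implicit: the explicit closed-set Portmanteau check, and the choice of a bounded $U$, which is harmless but unnecessary since $\rho\in\mathcal P_2(\Xi)$ already.
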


\begin{proof}
The argument is the same as in Proposition~\ref{prop:mf-zero-temp}, but now restricted to the i.i.d.
family $\{\mu^{\otimes m}:\mu\in\mathcal P_2(\Xi)\}$. For completeness, we sketch the two steps.

First, $\mathcal J_m(\mu)\le G_m^\star$ for every $\mu$, and hence $\mathcal F_m^\lambda(\mu)\ge -G_m^\star$. Conversely, for any $\varepsilon>0$, continuity of $G$ at $(\bar\xi^\star,\dots,\bar\xi^\star)$ yields an open set $U\ni\bar\xi^\star$ such that $G\ge G_m^\star-\varepsilon$ on $U^m$. Define
\begin{equation}
\bar\mu^\varepsilon(d\xi):=\frac{\mathbf 1_U(\xi)}{\rho(U)} \rho(d\xi).
\end{equation}
We then have that
\begin{equation}
\mathcal F_m^\lambda(\bar\mu^\varepsilon)
\le -(G_m^\star-\varepsilon)+\lambda\log\frac{1}{\rho(U)}.
\end{equation}
It follows that $\mathcal F_m^\lambda(\mu_m^{\lambda,\star})\to -G_m^\star$ and, since the entropy term is nonnegative, also that $\mathcal J_m(\mu_m^{\lambda,\star})\to G_m^\star$. This proves \eqref{eq:limit-1-iid}.

For the weak convergence result, fix a neighbourhood $U\ni\bar\xi^\star$. In addition, define $\eta_U:=G_m^\star-\sup_{\Xi^m\setminus U^m}G>0$. We then have that
\begin{equation}
G_m^\star-\mathcal J_m(\mu_m^{\lambda,\star})
\ge \eta_U\bigl(1-\mu_m^{\lambda,\star}(U)^m\bigr).
\end{equation}
The left-hand side tends to $0$, so $\mu_m^{\lambda,\star}(U)\to1$ for every neighbourhood $U$ of $\bar\xi^\star$. This is equivalent to $\mu_m^{\lambda,\star}\Rightarrow\delta_{\bar\xi^\star}$, proving
\eqref{eq:limit-2-iid}.
\end{proof}

\begin{remark}[Why the diagonal assumption is needed in the i.i.d.\ case]
\label{rem:iid-zero-temp-caveat}
The diagonality assumption in Proposition~\ref{prop:iid-zero-temp} is not merely technical. In general, the i.i.d.\ family $\{\mu^{\otimes m}:\mu\in\mathcal P_2(\Xi)\}$ cannot represent an arbitrary deterministic batch $\delta_{\xi_1^\star}\otimes\cdots\otimes\delta_{\xi_m^\star}$ unless $\xi_1^\star=\cdots=\xi_m^\star$. Therefore, without a diagonal optimal batch, one cannot expect the zero-temperature limit to recover the \emph{true} joint maximiser of $G$; at best it can recover the restricted optimum $\sup_{\mu\in\mathcal P_2(\Xi)}\mathcal J_m(\mu)$. This suggests that the additional ``extraction'' step used in our numerical experiments is particularly important for the i.i.d. methods.
\end{remark}
\section{Additional Experimental Details}
\label{sec:add-experimental-details}

\subsection{1D Benchmark with Multimodal Observation Model}
\label{sec:add-experimental-details-1d-benchmark}

\paragraph{Methods.} We compared GA with multiple restarts against the WGF with multiple independent chains. We used $200$ restarts and $200$ chains, respectively. For both methods, we used $10{,}000$ iterations and a constant step size of $\gamma=0.1$. For the WGF, we set the temperature $\lambda=0.05$. 

\paragraph{Initialisation.} We considered two initialisations for both methods. For the global initialisation, we drew initial designs uniformly from $[-3.5,3.5]$. For the local initialisation, we used designs uniformly distributed on $[\xi_{\mathrm{loc},2}-0.2,\xi_{\mathrm{loc},2}+0.2]$. 

\paragraph{Evaluation.} For the downstream validation, we computed results using $5{,}000$ simulated trials.

 \subsection{2D Non-Linear Sensor Placement with Multimodal Priors}
 \label{sec:additional-experimental-details-2d-non-linear}
 \paragraph{Methods.} We compared SGA with multiple restarts against the WGF with multiple independent chains. In both cases, we used a projection to ensure that particles remained in the constraint set $\Xi$. We used $200$ restarts and $200$ chains, respectively. For both methods, we used $5{,}000$ iterations and a constant step size of $\gamma=0.05$. For the WGF, we also set $\lambda=0.05$, and used a Gaussian reference law $\rho=\mathcal N(0,\sigma_\rho^2 I_2)$ with $\sigma_\rho=1.0$.  

 \paragraph{Initialisation.} We considered three initialisation regimes for both methods. The first was a local-box initialisation, with $\xi_1\sim\mathrm{Unif}[-4,0]$ and $\xi_2\sim\mathrm{Unif}[-2,2]$, so that the initial designs were sampled near the minor prior mode. The second was a global initialisation, with designs initialised uniformly over the full domain $[-5,5]^2$. The last was an uninformative initialisation, whereby designs were initially sampled from $[-5,-2]^2$, a corner of the domain far from both the local and global maxima of the EIG. 
 
\paragraph{EIG estimation.} We estimated the EIG and its gradient using a nested Monte Carlo (NMC) estimator. During optimisation, both methods used a low-fidelity estimator with $(n_{\mathrm{outer}},n_{\mathrm{inner}})=(20,50)$. Meanwhile, all reported utilities were computed using a high-fidelity estimator with $(n_{\mathrm{outer}},n_{\mathrm{inner}})=(500,1000)$. 

\paragraph{Extraction Step.} For both methods, we reported the best design visited during the final portion of the run, as selected via a common best-of-$n_{\mathrm{eval}}$ extraction procedure. We first screened the last $n_{\mathrm{eval}}^{+}=500$ iterates using the low-fidelity estimator, retaining the top $n_{\mathrm{eval}}^{-}=5$ candidates per chain, and thereby resulting in a total of $n_{\mathrm{eval}}= 200 n_{\mathrm{eval}}^{-} =1000$ total candidates. We then re-evaluated these candidates using the high-fidelity estimator, and chose the maximiser. This approach mitigates selection bias due to Monte Carlo noise, while remaining computationally tractable.

\paragraph{Evaluation.} We computed posterior-uncertainty comparisons using $100$ repetitions, with $5{,}000$ prior samples to approximate the posterior covariance after one observation. Finally, for visualisation, the EIG landscape was approximated on a $400\times 400$ grid using a Monte Carlo estimator with $(n_{\mathrm{outer}},n_{\mathrm{inner}})=(50,100)$. All experiments were run over $5$ random seeds.

\subsection{Batch Design on the Torus}
 \label{sec:additional-experimental-details-batch-design}
\paragraph{Methods.} We compared GA with multiple restarts against our four WGF-based methods (see Section~\ref{sec:implemented-algorithms}). For all methods, we used $5{,}000$ iterations and a constant step size $\gamma=0.05$. For GA, we used $20$ random restarts. For WGF (Joint), we used $20$ chains and temperature $\smash{\lambda_m=\frac{\lambda}{m}}$ with $\lambda=0.1$. For WGF (MF), we used $20$ particles per coordinate, temperature $\smash{\lambda_m=\frac{\lambda}{m}}$, with $\lambda=0.1$, and $K=2$ Monte Carlo partner draws per iteration. For the WGF (MF-IID), we used $20$ particles, temperature $\lambda=0.1$, and $K=2$ partner draws. For WGF (MF-IID-REP), we used the same settings together with
$\smash{r_{\mathbb T}(z)=({d_{\mathbb T}(z)^2+\delta^2})^{-1}}$, where $d_{\mathbb T}(z)$ denotes the wrapped angular difference on the circle, with repulsion strength $\eta=0.2$, repulsion scale $\delta=0.2$, and $K_{\mathrm{rep}}=2$ repulsion samples per particle per iteration. In all particle methods, the reference measure was uniform on the circle, so no additional confining drift term was used.

\paragraph{Initialisations.} We considered two initialisation regimes for all methods. In the global regime, all methods were initialised uniformly on $[-\pi,\pi)$. In the local regime, all methods were initialised from a wrapped Gaussian distribution centred at $-\pi/2$ with standard deviation $0.2$.

\paragraph{Extraction Step.} We extracted deterministic batches using a common best-of-$n$ extraction step with $n_{\mathrm{eval}}=500$ candidate batches. For \texttt{WGF (Joint)}, candidates were selected from post-burn-in chain states, using a burn-in fraction of $0.8$. For \texttt{WGF (MF)}, candidate batches were formed by sampling one particle independently from each coordinate-wise empirical marginal. For \texttt{WGF (MF-IID)} and \texttt{WGF (MF-IID-REP)}, candidate batches were formed by i.i.d.\ sampling from the final empirical design law. The repeated-best-single baseline was constructed by first maximising the single-design EIG over a grid of $4{,}000$ equally spaced angles and then repeating the resulting design $m$ times. Finally, we reported results averaged over 5 random seeds.

\subsection{Pharmacokinetic and FitzHugh--Nagumo Benchmarks}
\label{sec:additional-experimental-details-pk}

\paragraph{Methods.} We compared our methods to the following baselines: \texttt{Uniform}, \texttt{GeometricDRS}, \texttt{BetaDRS}, \texttt{CE (Feasible Grid)}, \texttt{CE (GP)}, \texttt{CE (GP-G)}, \texttt{SGA (Adam)}, and \texttt{Annealed SMC}. These methods are defined as follows:

\begin{itemize}
    \item \texttt{Uniform} consists of evenly spaced times on $[0,T_{\max}]$.
    \item \texttt{GeometricDRS} is a dimension reduction scheme (DRS) \citep{ryan2014towards,overstall2020bayesian} defined by a geometric schedule $t_j=t_0 r^j$, with $t_0\in[0,T_{\max}]$ and $r\in[1,r_{\max}]$. In our implementation, the parameters were selected by dense random search. 
    \item \texttt{BetaDRS} is a dimension reduction scheme (DRS) \citep{ryan2014towards,overstall2020bayesian} defined by a Beta schedule $t_j=T_{\max}F^{-1}(q_j;\alpha_1,\alpha_2)$ at $q_j=j/(m+1)$. In our implementation, the parameters were selected by dense random search. 
    \item \texttt{CE (Feasible Grid)} is a grid-based coordinate-exchange (CE) type method \citep[e.g.,][]{meyer1995coordinate} that maximises the low-fidelity EIG over a one-dimensional grid within the feasible interval for each coordinate $t_i$, namely, $[t_{i-1}+\Delta_{\min},\,t_{i+1}-\Delta_{\min}]$. In our implementation, we used a feasible one-dimensional grid of size $500$.
    \item \texttt{CE (GP)} is a lightweight approximate coordinate exchange (ACE) style algorithm \citep{overstall2020bayesian} which, at each iteration, fits a 1D GP emulator per coordinate, proposes the maximiser of the emulator mean, and always accepts it.  
    \item \texttt{CE (GP-G)} is a lightweight approximate coordinate exchange (ACE) style algorithm \citep{overstall2020bayesian} which, at each iteration, fits a 1D GP emulator per coordinate, proposes the maximiser of the emulator mean, and accepts it only if the corresponding estimate of the EIG increases.\footnote{We also tested the standard ACE algorithm \citep{overstall2020bayesian}, but found that it was outperformed by these heuristics at low simulation budgets.} 
    \item \texttt{SGA (Adam)} obtains a design by stochastically optimising the batch EIG using the Adam optimiser \citep{kingma2015adam}.
    \item \texttt{Annealed SMC} is a sequential Monte Carlo (SMC) scheme \citep[e.g.,][]{delmoral2006smc,chopin2002sequential} that targets $\smash{p_{\beta}(\xi_{1:m})\propto\exp(\beta\,\widehat{\mathrm{EIG}}_m(\xi_{1:m}))}$, with a linear temperature ladder $\smash{\beta\in[0,\frac{m}{\lambda}]}$, ESS-based resampling, and projected Gaussian random-walk mutation steps.
\end{itemize}

To approximately match the computational cost across methods, we tuned a single parameter for each method to target a common wall-clock budget of $7$ seconds, while holding all other hyperparameters fixed. Specifically, we tuned $n_{\mathrm{random}}$ for \texttt{GeometricDRS} and \texttt{BetaDRS}, $n_{\mathrm{sweeps}}$ for \texttt{CE (Feasible Grid)}, \texttt{CE (GP)}, and \texttt{CE (GP-G)}, $n_{\mathrm{steps}}$ for \texttt{SGA (Adam)} and all \texttt{WGF} variants, and $n_{\mathrm{mcmc}}$ for \texttt{Annealed SMC}. 

For the \texttt{WGF}-based methods, the common particle counts were $50$ chains for \texttt{WGF (Joint)} and \texttt{WGF (Joint) (FUSE)}, $10$ particles per coordinate for \texttt{WGF (MF)} and \texttt{WGF (MF) (Sub)}, and $50$ particles for both \texttt{WGF (MF-IID)} and \texttt{WGF (MF-IID-REP)}. We used $K=1$ for mean-field and i.i.d.\ methods. For \texttt{WGF (MF-IID-REP)}, we used inverse quadratic potential
$r(z)=(z^2+\delta_{\mathrm{rep}}^2)^{-1}$, with $\eta_{\mathrm{rep}}=0.01$, $\delta_{\mathrm{rep}}=1.0$, and $K_{\mathrm{rep}}=2$. For all of the constant-step \texttt{WGF}-based methods, as well as for \texttt{SGA (Adam)}, we use a constant step size of $\gamma=0.01$. For \texttt{WGF (Joint) (FUSE)}, we used the adaptive FUSE schedule with $r_{\varepsilon}=10^{-8}$ \citep{sharrock2025tuning}.\footnote{The guarantees in \cite{sharrock2025tuning} only hold under the assumptions that the target measure is log-concave, and that one has access to an unbiased stochastic gradient oracle. In our setting, where neither of these assumptions typically hold, this approach should thus be used with caution, despite its impressive empirical performance in these examples.}

For the baseline methods, the following additional settings were used. \texttt{GeometricDRS} used a logistic random-search parameterisation over $t_0\in[0,T_{\max}]$ and $r\in[1,r_{\max}]$, with $r_{\max}=2.5$. \texttt{CE (Feasible Grid)} used a feasible one-dimensional grid of size $500$. Both \texttt{CE (GP)} and \texttt{CE (GP-G)} used $n_{\mathrm{starts}}=2$ random starts and $R_{\mathrm{train}}=20$ training points when fitting the GP surrogate, a lengthscale of $2.0$, and a one-dimensional candidate grid of size $200$.\footnote{We additionally tested fitting the GP lengthscale adaptively via marginal-likelihood estimation, but observed only negligible changes in performance relative to fixed lengthscales.} For \texttt{Annealed SMC}, we used $20$ particles, a linear temperature ladder with $20$ temperatures, an ESS threshold of $0.7$, and Gaussian random-walk mutation steps with a scale of $0.5$.

\paragraph{Initialisation.}
We used the following initialisations. \texttt{GeometricDRS} sampled latents uniformly on $[-4,4]^2$, before mapping them to $(t_0,r)$. \texttt{BetaDRS} sampled $(\log\alpha_1,\log\alpha_2)$ uniformly over $[\log(0.1),\log(10.0)]^2$, before exponentiating them to $(\alpha_1,\alpha_2)$. The remaining  methods used a uniform initialisation over $[0,T_{\mathrm{max}}]$, before projecting to the feasible region. In particular, \texttt{CE (Feasible Grid)} started from one random design sampled uniformly over $[0,T_{\max}]^m$. \texttt{CE (GP)} and \texttt{CE (GP-G)} sampled points uniformly over $[0,T_{\max}]^m$ at each restart. \texttt{Annealed SMC} initialized $N_{\text{particles}}$ designs uniformly over $[0,T_{\max}]^m$. \texttt{SGA (Adam)} initialized restart designs by sampling uniformly over $[0,T_{\max}]^m$. \texttt{WGF (Joint)} and \texttt{WGF (Joint) (FUSE)} initialized chains uniformly over $[0,T_{\max}]^m$ (projected). \texttt{WGF (MF)} and \texttt{WGF (MF) (Sub)} initialize coordinate-wise particles uniformly over $[0,T_{\max}]$. Finally, \texttt{WGF (MF-IID)} and \texttt{WGF (MF-IID-REP)} initialized scalar particles uniformly over $[0,T_{\max}]$.

\paragraph{EIG Estimation} For all methods, we estimated the EIG and, if required, its gradient, using an NMC estimator. During optimisation, all methods used a low-fidelity estimator with $(n_{\mathrm{outer}},n_{\mathrm{inner}})=(20,50)$. Meanwhile, in-run high-fidelity scoring used $(n_{\mathrm{outer}},n_{\mathrm{inner}})=(500,1000)$. Finally, the reported EIG values were computed using $(n_{\mathrm{outer}},n_{\mathrm{inner}})=(1000,2000)$ and averaged over $20$ independent replications.

\paragraph{Extraction Step} We extracted deterministic batches using a common best-of-$n_{\mathrm{eval}}$ step with $n_{\mathrm{eval}}=50$ candidate batches. For \texttt{WGF (Joint)} and \texttt{WGF (MF)} variants, candidates were selected from the final $2{,}000$ states. For \texttt{WGF (MF-IID)} and \texttt{WGF (MF-IID-REP)}, candidate batches were formed by i.i.d.\ sampling from the final empirical design law. For \texttt{GeometricDRS} and \texttt{BetaDRS}, the candidate set consisted of the random proposals generated during the low-dimensional parameter search. For the coordinate-exchange methods (\texttt{CE (Feasible Grid)}, \texttt{CE (GP)}, and \texttt{CE (GP-G)}), candidates were taken from the designs visited during the coordinate-wise optimisation sweeps. For \texttt{SGA (Adam)}, candidates were selected from the final $2{,}000$ iterates across all restarts. For \texttt{Annealed SMC}, candidates were taken from the final particle population. The \texttt{Uniform} baseline returned its deterministic design directly and therefore did not require an additional extraction step.

\section{Additional Numerical Results}
\label{sec:add-numerics}

\subsection{Pharmacokinetic Benchmark}
\label{sec:add-numerics-pharma}

We here provide additional results for the pharmacokinetic (PK) benchmark. Unless otherwise specified, the experimental setup is identical to that used to obtain the results in Section~\ref{sec:exp:pk-fhn}. 

\subsubsection{Additional Results for Different Batch Sizes}
~ 
\begin{figure}[h!]
\vspace{-8mm}
    \centering
    \includegraphics[width=.82\linewidth]{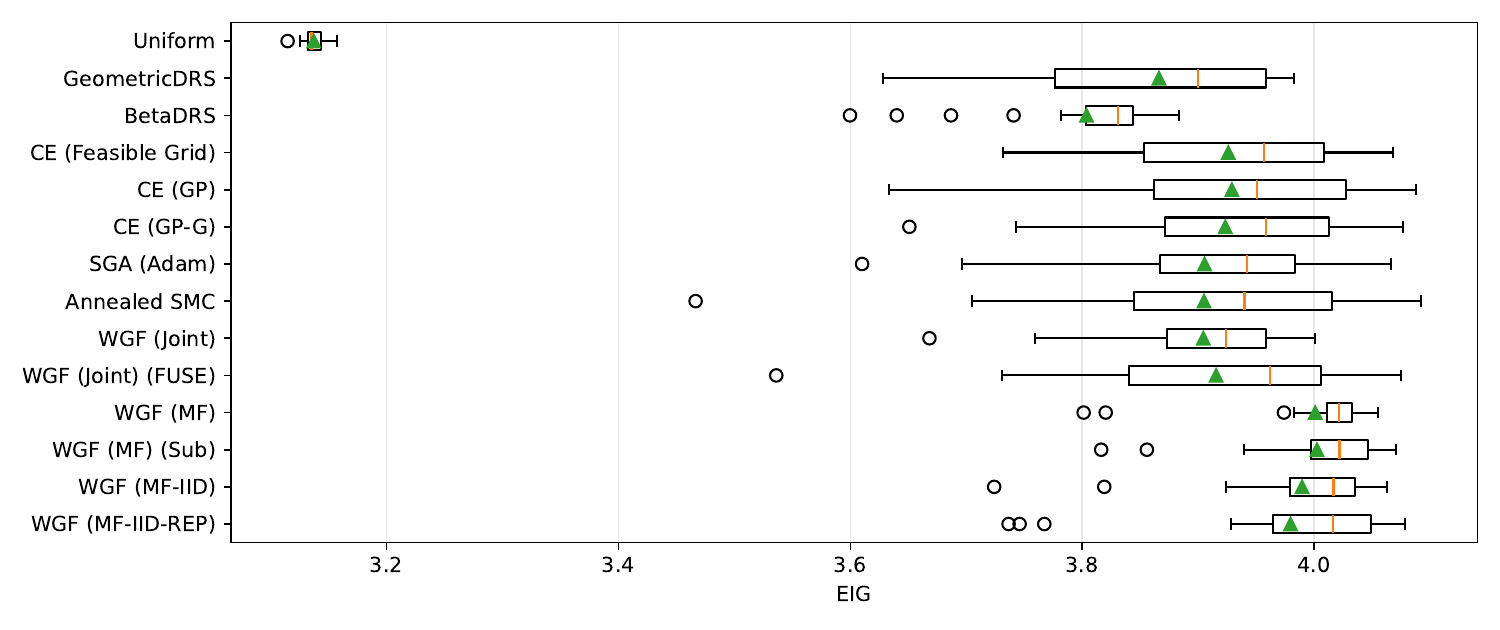}
    \vspace{-4mm}
    \caption{\textbf{EIG summaries for the pharmacokinetic sampling-time benchmark for $m=10$}.}
    \label{fig:app-1}
\vspace{-4mm}
\end{figure}

\begin{figure}[h!]
    \centering
    \includegraphics[width=.82\linewidth]{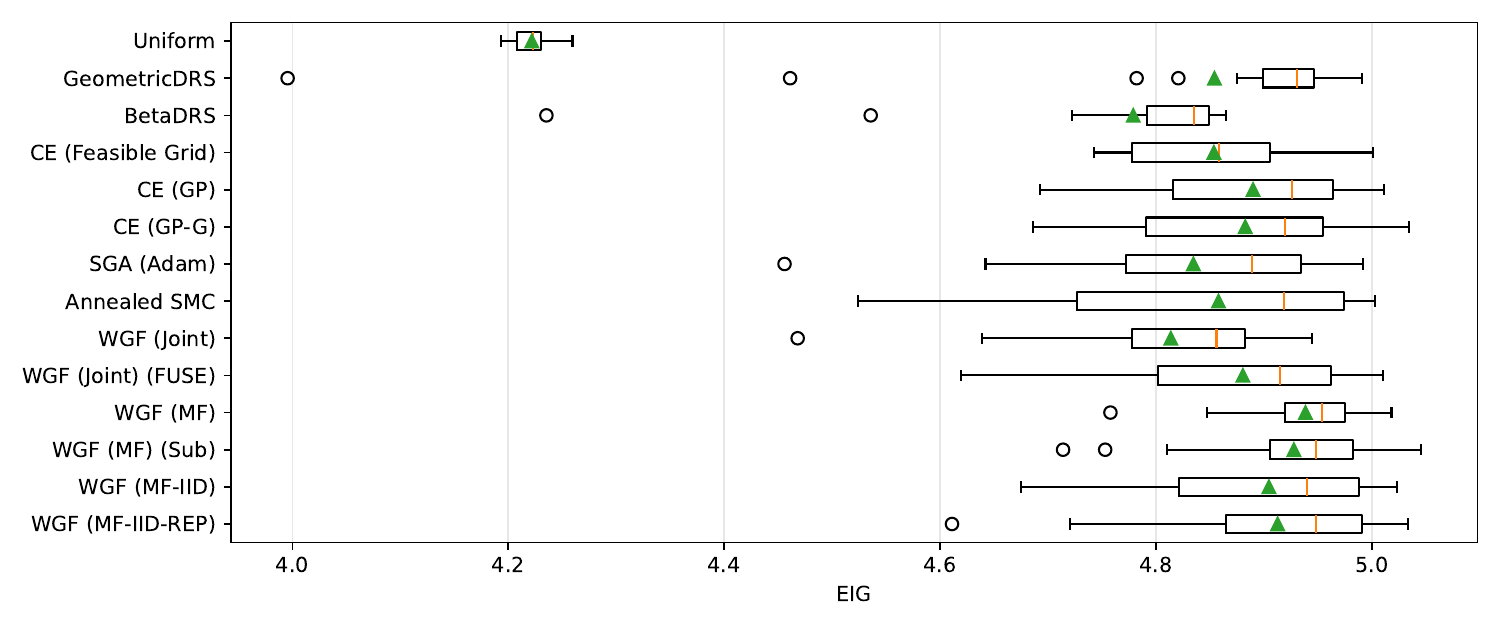}
    \vspace{-4mm}
    \caption{\textbf{EIG summaries for the pharmacokinetic sampling-time benchmark for $m=20$}.}
    \label{fig:app-2}
\vspace{-4mm}
\end{figure}

\subsubsection{Additional Results for Different Step Sizes}

\begin{figure}[h!]
\vspace{-4mm}
    \centering
    \includegraphics[width=.82\linewidth]{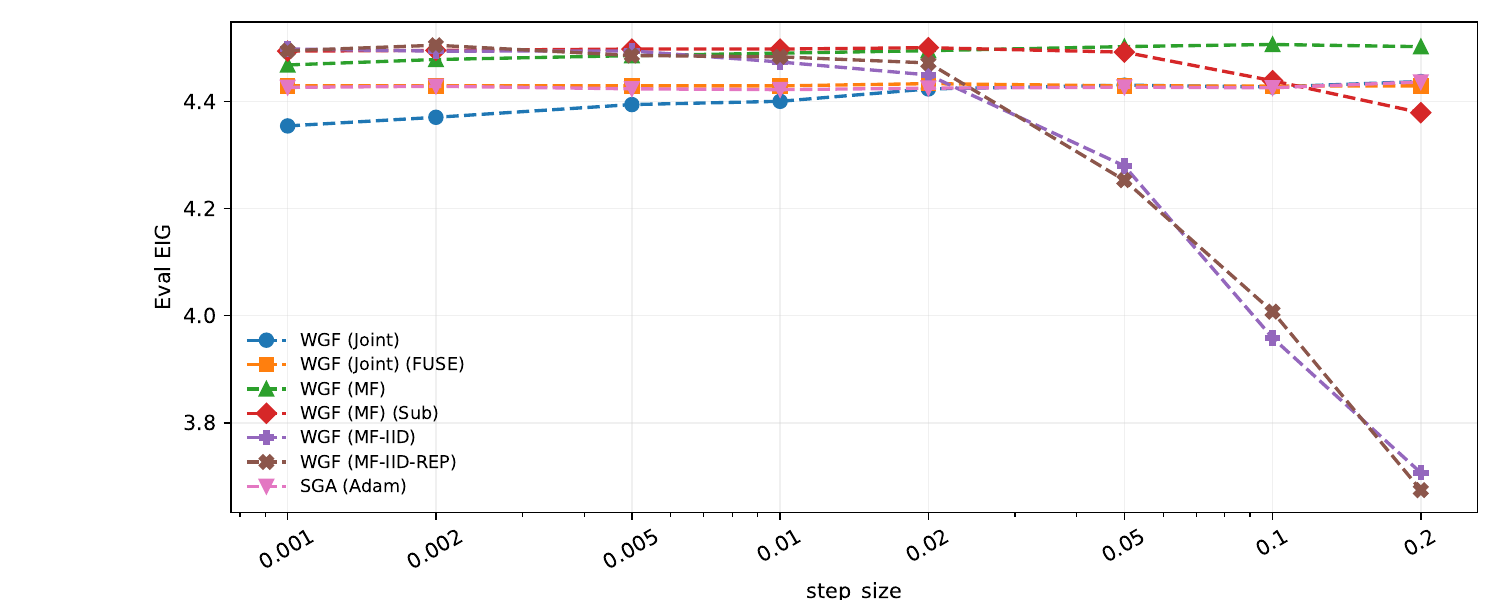}
    \vspace{-2mm}
    \caption{\textbf{EIG versus step-size for the pharmacokinetic sampling-time benchmark}.}
    \label{fig:app-3}
\vspace{-4mm}
\end{figure}

\subsubsection{Additional Results for Different Numbers of Particles}

\begin{figure}[h!]
    \centering
    \includegraphics[width=.84\linewidth]{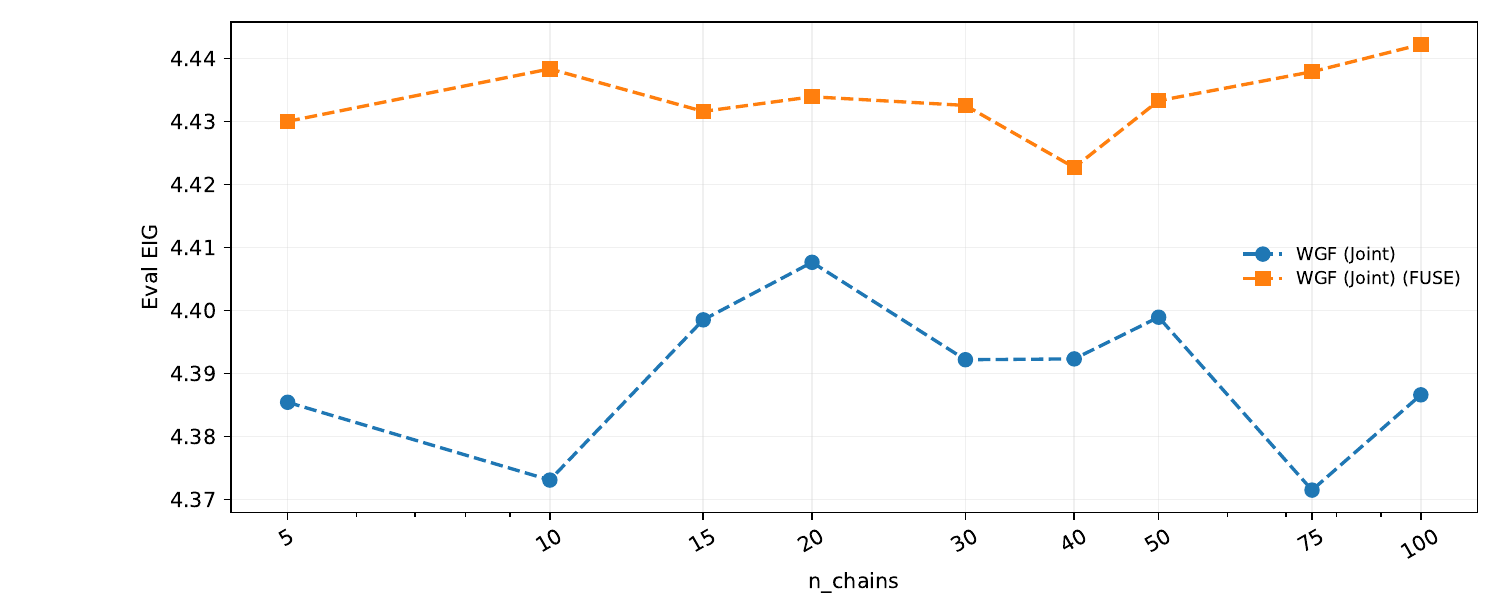}
    \vspace{-2mm}
    \caption{\textbf{EIG versus number of chains used by \texttt{WGF (Joint)} and \texttt{WGF (Joint) (FUSE)} for the pharmacokinetic sampling-time benchmark}.}
    \label{fig:app-4}
\end{figure}

\begin{figure}[h!]
\vspace{-4mm}
    \centering
    \includegraphics[width=.84\linewidth]{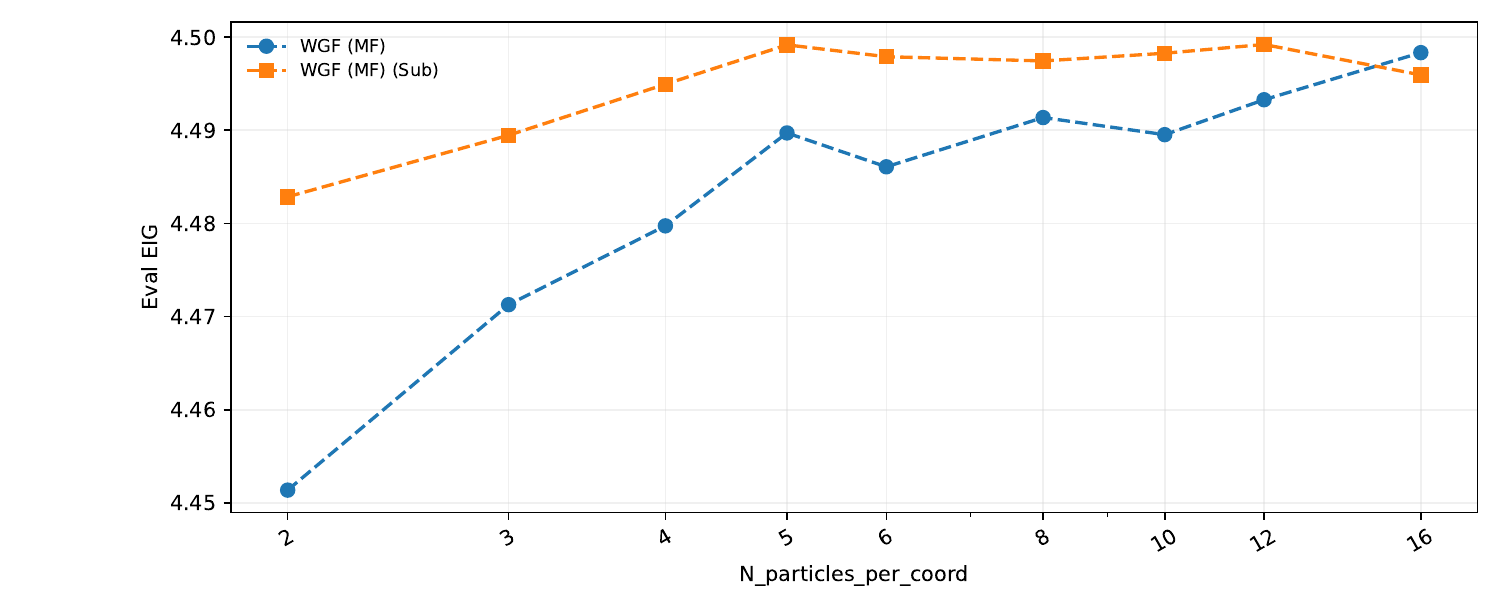}
    \vspace{-2mm}
    \caption{\textbf{EIG versus number of particles per coordinate used by \texttt{WGF (MF)} and \texttt{WGF (MF) (Sub)} for the pharmacokinetic sampling-time benchmark}.}
    \label{fig:app-5}
\end{figure}

\begin{figure}[h!]
\vspace{-4mm}
    \centering
    \includegraphics[width=.84\linewidth]{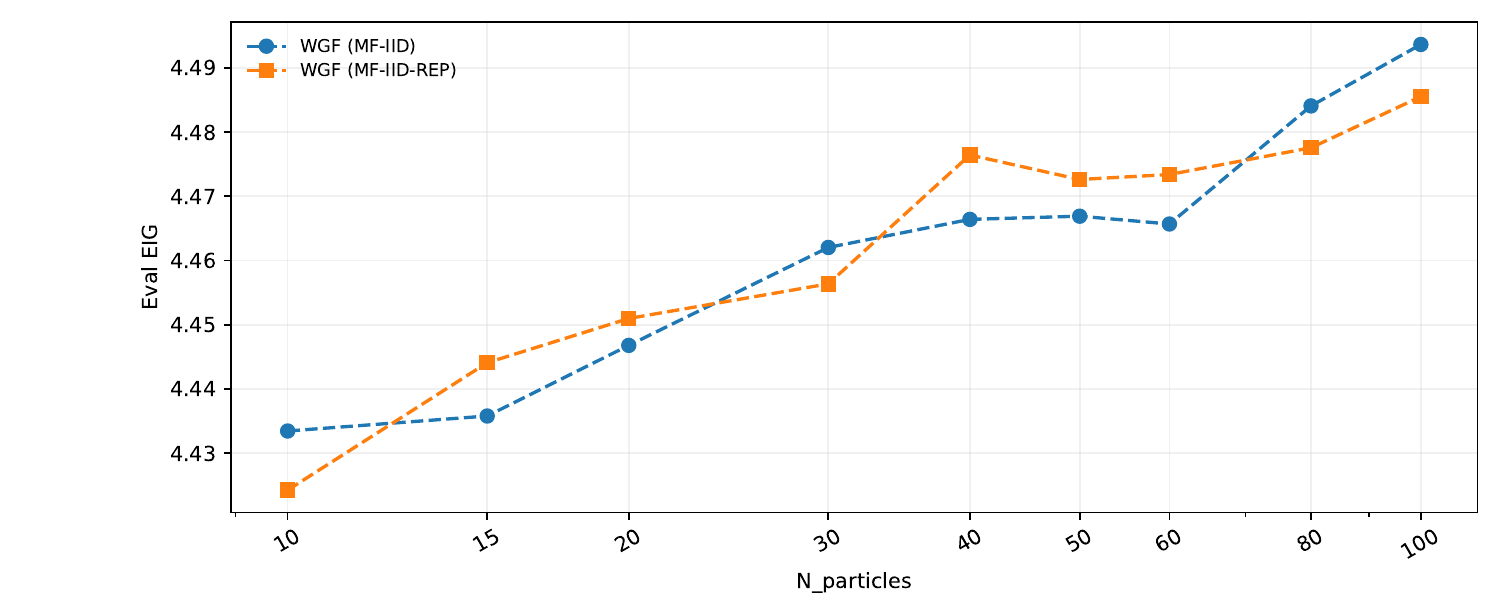}
    \vspace{-2mm}
    \caption{\textbf{EIG versus number of particles used by \texttt{WGF (MF-IID)} and \texttt{WGF (MF-IID-REP)} for the pharmacokinetic sampling-time benchmark}.}
    \label{fig:app-6}
\vspace{-5mm}
\end{figure}

\clearpage
\subsection{FitzHugh--Nagumo Benchmark}
\label{sec:add-numerics-fitzhugh}

We here provide additional results for the pharmacokinetic (PK) benchmark. Unless otherwise specified, the experimental setup is identical to that used to obtain the results in Section~\ref{sec:exp:fhn}. 

\subsubsection{Additional Results for Different Batch Sizes}

\begin{figure}[h!]
\vspace{-2mm}
    \centering
    \includegraphics[width=.84\linewidth]{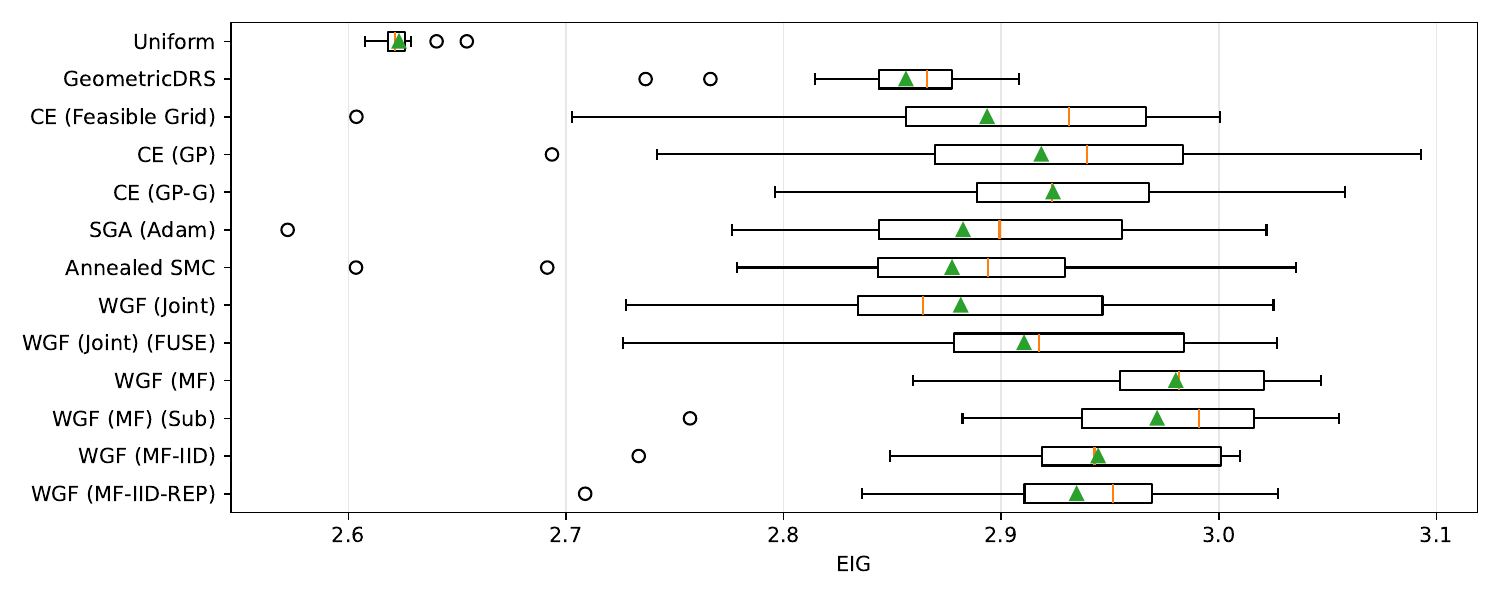}
    \vspace{-4mm}
    \caption{\textbf{EIG summaries for the Fitzhugh--Nagumo sampling-time benchmark for $m=10$}.}
    \label{fig:app-7}
    \vspace{-2mm}
\end{figure}

\begin{figure}[h!]
    \centering
    \includegraphics[width=.84\linewidth]{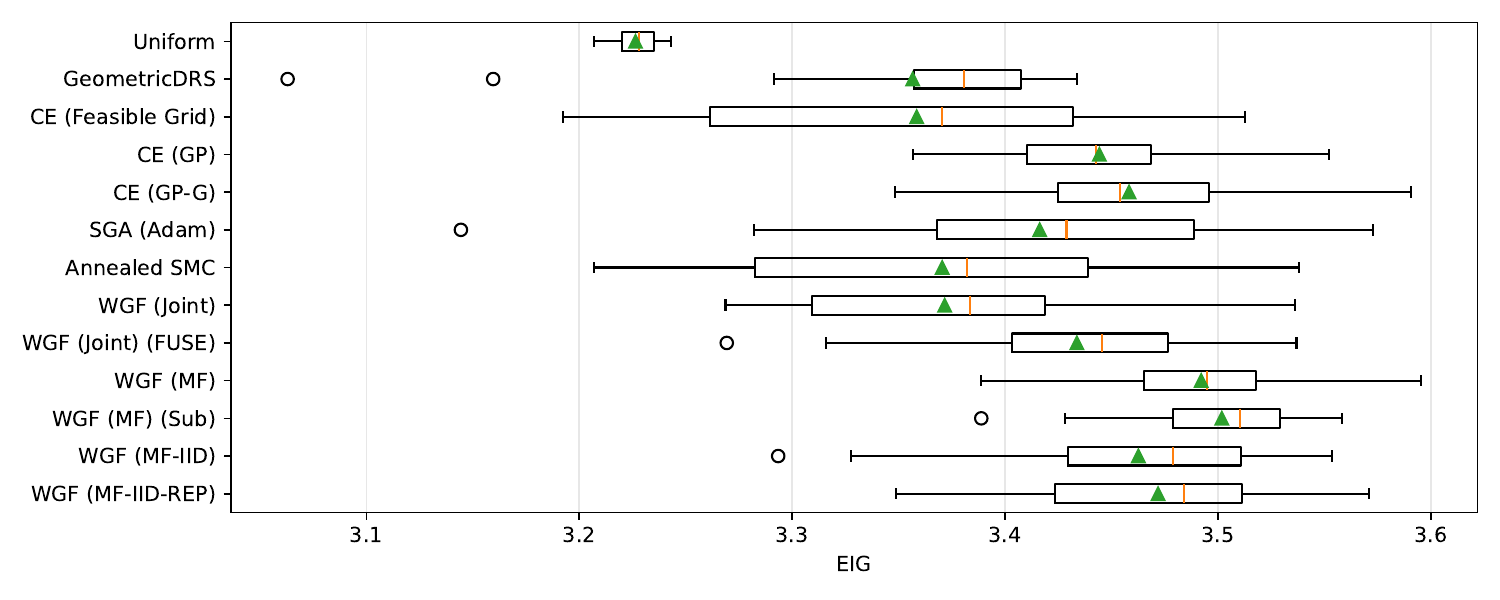}
    \vspace{-4mm}
    \caption{\textbf{EIG summaries for the Fitzhugh--Nagumo sampling-time benchmark for $m=15$}.}
    \label{fig:app-8}
    \vspace{-4mm}
\end{figure}

\subsubsection{Additional Results for Different Step Sizes}

\begin{figure}[h!]
\vspace{-4mm}
    \centering
    \includegraphics[width=.84\linewidth]{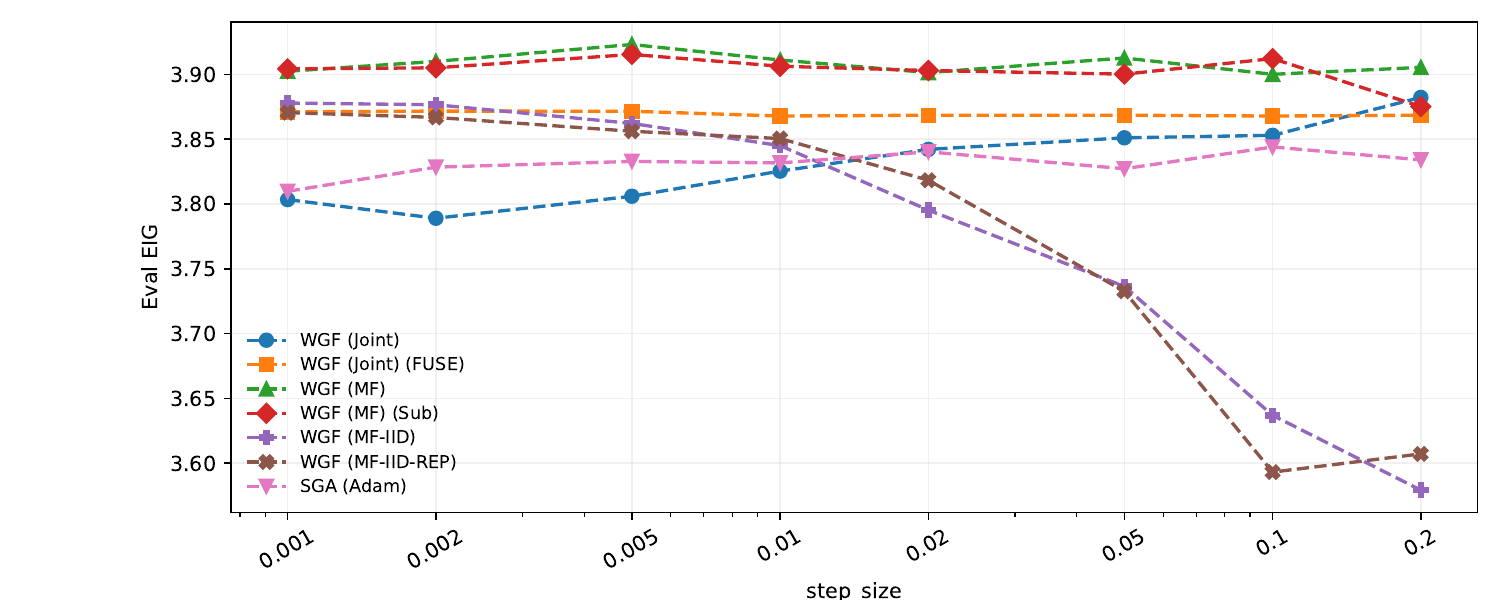}
    \vspace{-2mm}
    \caption{\textbf{EIG versus step-size for the Fitzhugh--Nagumo sampling-time benchmark}. \texttt{WGF (Joint) (FUSE)} does not have a tunable step size parameter, but is included for comparative purposes. }
    \label{fig:app-9}
\vspace{-5mm}
\end{figure}

\clearpage
\subsubsection{Additional Results for Different Numbers of Particles}

\begin{figure}[h!]
    \centering
    \includegraphics[width=.84\linewidth]{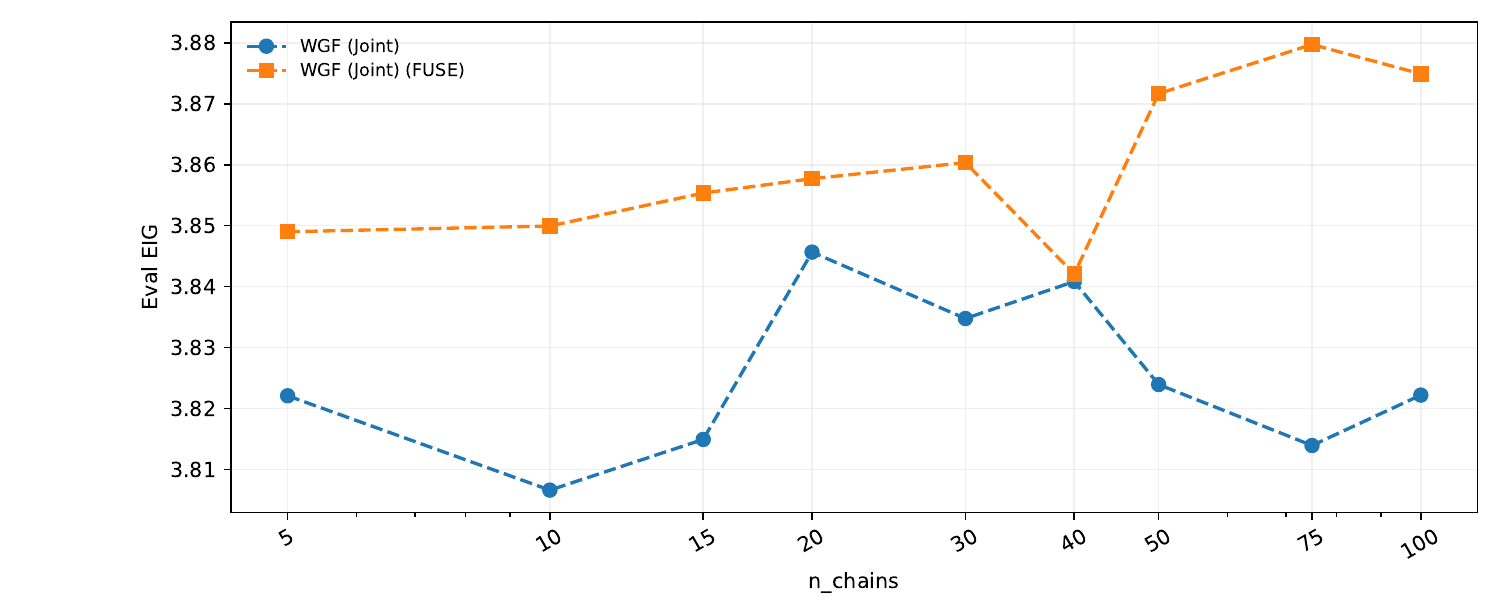}
    \vspace{-2mm}
    \caption{\textbf{EIG versus number of chains used by \texttt{WGF (Joint)} and \texttt{WGF (Joint) (FUSE)} for the Fitzhugh--Nagumo sampling-time benchmark}.}
    \label{fig:app-10}
\end{figure}

\begin{figure}[h!]
\vspace{-4mm}
    \centering
    \includegraphics[width=.84\linewidth]{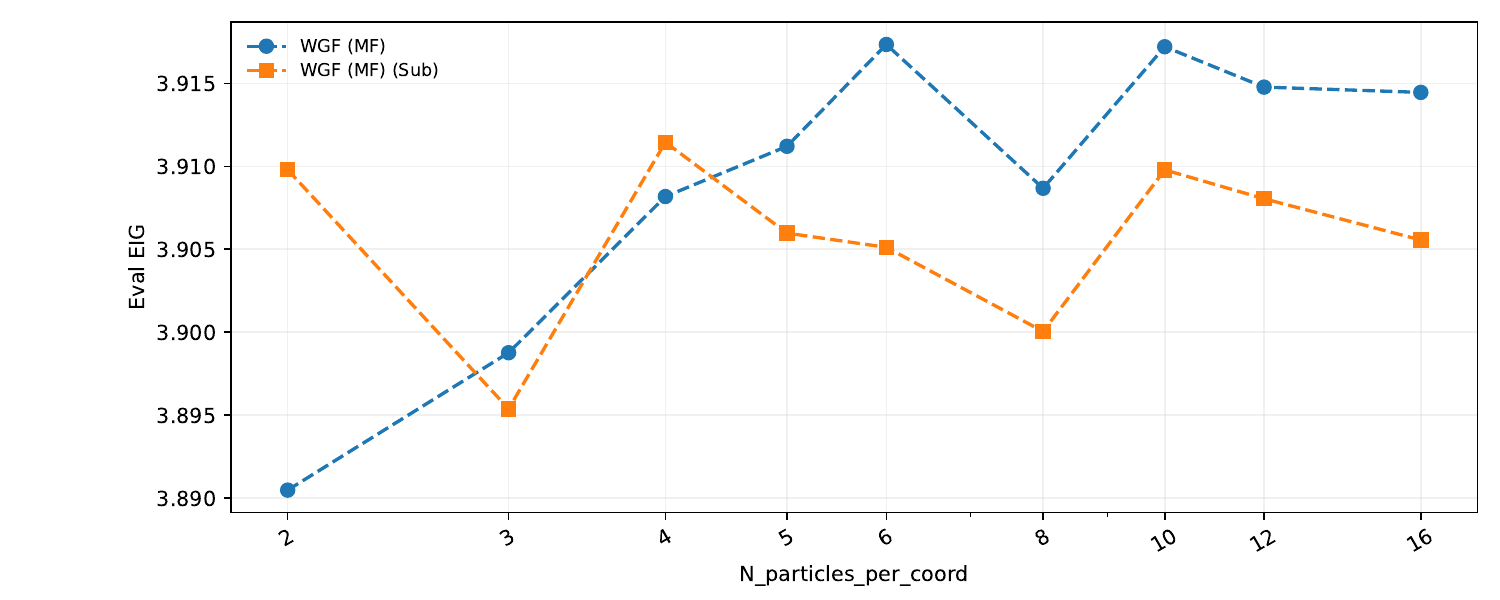}
    \vspace{-2mm}
    \caption{\textbf{EIG versus number of particles per coordinate used by \texttt{WGF (MF)} and \texttt{WGF (MF) (Sub)} for the Fitzhugh--Nagumo sampling-time benchmark}.}
    \label{fig:app-11}
\end{figure}

\begin{figure}[h!]
\vspace{-4mm}
    \centering
    \includegraphics[width=.84\linewidth]{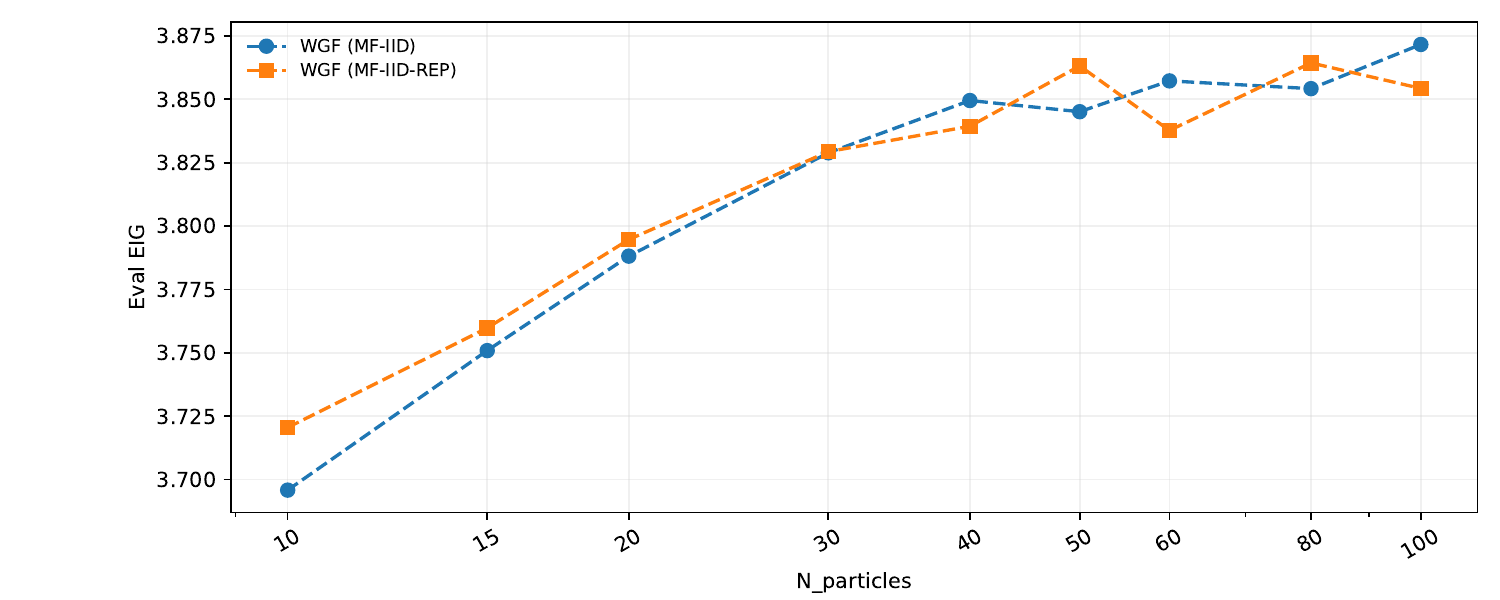}
    \vspace{-2mm}
    \caption{\textbf{EIG versus number of particles used by \texttt{WGF (MF-IID)} and \texttt{WGF (MF-IID-REP)} for the Fitzhugh--Nagumo sampling-time benchmark}.}
    \label{fig:app-12}
\vspace{-5mm}
\end{figure}

\end{document}